\theoremstyle{plain}
\newtheorem{theorem}{Theorem}[section]
\newtheorem{lemma}[theorem]{Lemma}
\newtheorem{corollary}[theorem]{Corollary}
\newtheorem{definition}[theorem]{Definition}
\newtheorem{assumption}[theorem]{Assumption}
\newtheorem{remark}[theorem]{Remark}
\newcommand{\para}[1]{\vspace{0.2cm}\noindent\underline{\textbf{#1}}}
\DeclareMathOperator{\E}{\mathbb{E}}
\DeclareMathOperator{\uO}{\mathcal{O}}
\DeclareMathOperator{\uU}{\mathcal{U}}
\DeclareMathOperator{\F}{\mathcal{F}}
\DeclareMathOperator{\uL}{\mathcal{L}}
\DeclareMathOperator{\uLb}{\mathcal{L}_{\text{signal}}}
\DeclareMathOperator{\uLn}{\mathcal{L}_{\text{noise}}}
\DeclareMathOperator{\X}{\mathcal{X}}
\DeclareMathOperator{\bW}{\mathbf{W}}
\DeclareMathOperator{\bx}{\vb*{x}}
\DeclareMathOperator{\by}{\vb*{y}}
\DeclareMathOperator{\bz}{\vb*{z}}
\let\Pr\relax\DeclareMathOperator{\Pr}{\mathbb{P}}
\DeclareMathOperator{\col}{\text{col}}
\DeclareMathOperator{\aff}{\text{aff}}
\DeclareMathOperator{\relu}{\text{ReLU}}
\DeclareMathOperator{\Lip}{\text{Lip}}
\title{In-Context Learning with Transformers: Softmax Attention Adapts to Function Lipschitzness}
\author{Liam Collins\thanks{Co-first authors, listed in alphabetical order} \thanks{Chandra Family Department of Electrical and Computer Engineering,
The University of Texas at Austin, Austin, TX,  USA.\quad\{liamc@utexas.edu, advaitp@utexas.edu, mokhtari@austin.utexas.edu, sanghavi@mail.utexas.edu, sanjay.shakkottai@utexas.edu\}.}\;,\quad Advait Parulekar$^{*\dagger}$, \quad  Aryan Mokhtari$^\dagger$,\\Sujay Sanghavi$^\dagger$, \quad Sanjay Shakkottai$^\dagger$}
\date{}
\begin{document}

\maketitle

\begin{abstract}
A striking property of transformers is their ability to perform in-context learning (ICL), a machine learning framework in which the learner is presented with a novel context during inference implicitly through some data, and tasked with making a prediction in that context. As such, that learner must adapt to the context without additional training. We explore the role of {\em softmax} attention in an ICL setting where each context encodes a regression task. We show that an attention unit learns a window that it uses to implement a nearest-neighbors predictor adapted to the landscape of the pretraining tasks. Specifically, we show that this window widens with decreasing Lipschitzness and increasing label noise in the pretraining tasks. We also show that on low-rank, linear problems, the attention unit learns to project onto the appropriate subspace before inference. Further, we show that this adaptivity relies crucially on the softmax activation and thus cannot be replicated by the linear activation often studied in prior theoretical analyses.
\end{abstract}



\section{Introduction} \label{sec:introduction}


One of the most compelling behaviors of pretrained transformers is their ability to perform {\em in-context learning (ICL)} \citep{brown2020language}: 
determining how to solve an unseen task simply by making a forward pass on input context tokens. 
Arguably the most critical innovation enabling ICL is the self-attention mechanism \citep{vaswani2017attention}, which 
maps each token in an input sequence to a new token using information from all other tokens. A key design choice in this self-attention architecture is of the activation function that controls how much ``attention" a token pays to other tokens.
{\em Softmax}-activated self-attention (i.e. softmax attention) is most commonly, and successfully, used in practice 
\citep{brown2020language,chowdhery2023palm,Min_2022,rae2021scaling,thoppilan2022lamda}.


A natural approach to explain ICL adopted by the literature is to equate it with classical machine learning algorithms, primarily variants of gradient descent (GD).
Several works have shown that when the ICL tasks are {\em linear} regressions and the 
activation in the attention unit is identity (referred to as {\em linear} attention),  transformers that implement preconditioned GD during ICL are global optima of the pretraining loss, which is the population loss on ICL tasks \citep{ahn2023transformers,mahankali2023one,zhang2023trained}. In particular, the prediction of such transformers with $l$ linear attention layers equals the prediction of a regressor trained with $l$ preconditioned GD steps on the context examples. 
However, since these analyses are limited to linear attention and tasks, they do not explain the widespread success of {\em softmax} attention at ICL. 

More recent work \cite{cheng2023transformers} extends these results by showing that for general regression tasks and any attention activation that is a kernel, ICL equates to training a kernel regressor via functional GD in the  Reproducing Kernel Hilbert Space (RKHS) induced by the activation.
However, this functional GD yields generalization guarantees only when the activation kernel is {\em identical} to a kernel that generates the labels, which  does not apply to the softmax activation, as it is not a kernel. Further, like the aforementioned  studies of the linear setting \citep{ahn2023transformers,zhang2023trained,mahankali2023one}, this analysis 
only shows that
pretraining leads to learning  the {\em covariate} distribution,  while the activation implicitly encodes  the {\em label} distribution needed for accurate predictions.  
Thus, this line of work has not explained the very fundamental question of what {\em softmax} attention learns during pretraining that enables it to perform ICL on a wide variety of downstream tasks.
Motivated by this gap in the literature, we ask the following question.
\begin{figure}[t]
    \centering
    \vspace{-10mm}
    \includegraphics[scale = 0.34]{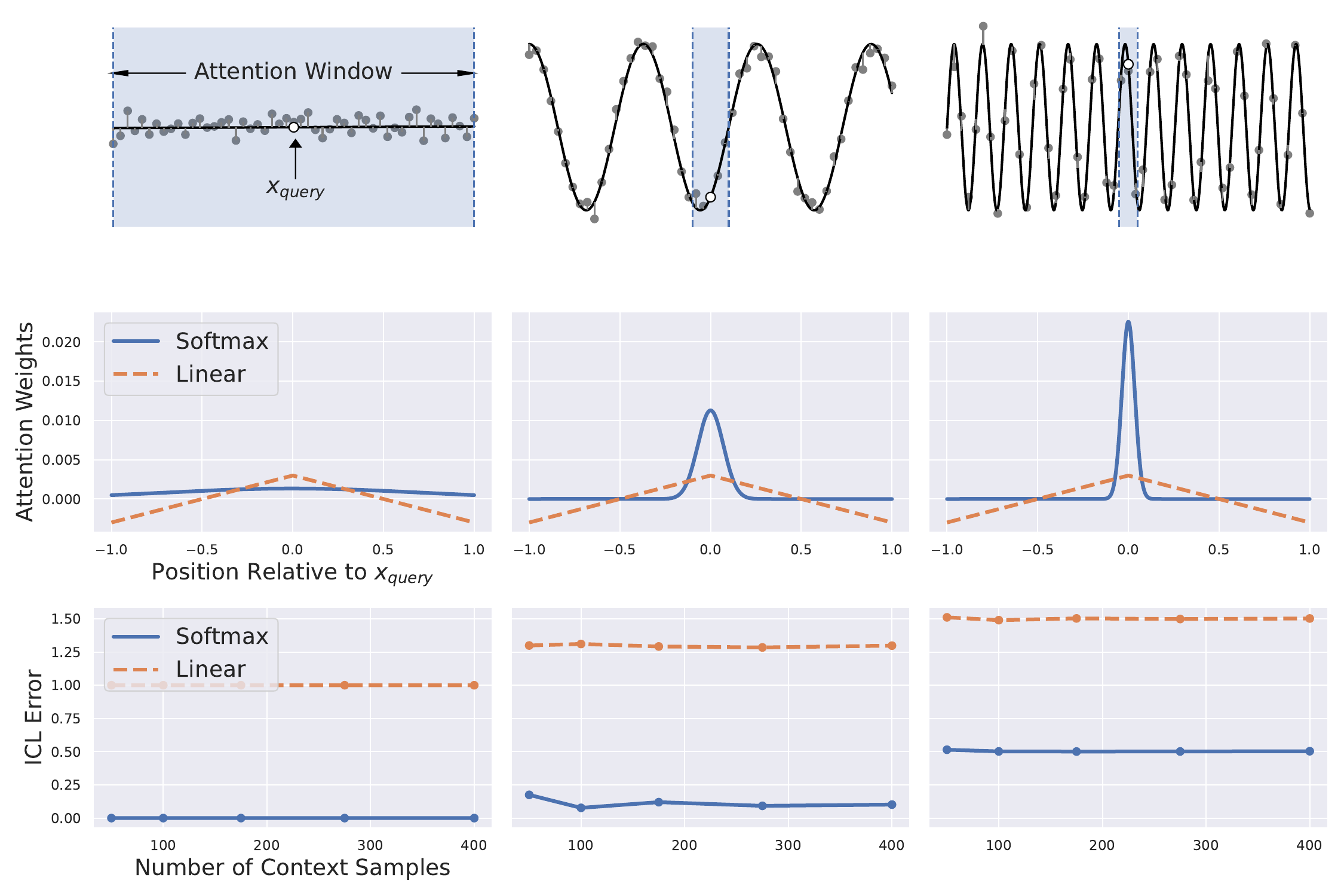}
    \caption{\textbf{Top Row:} The black line denotes the target function over a domain (horizontal axis). The gray dots are noisy training data, and the white dot is a query. From left to right, the Lipschitzness of the target function  grows and the optimal softmax attention window (shaded blue) shrinks. 
 \textbf{Middle Row: }Attention weights -- which determine the attention window -- as a function of the relative position from the query for softmax and linear attention. The softmax weights adjust to the Lipschitzness. \textbf{Bottom Row: } ICL error versus number of context samples for the three settings.  
    \textbf{Adapting to function Lipschitzness leads softmax attention to achieve small error}. Please see Remark \ref{rem:scaling} and Appendix \ref{appendix:sims} for further discussion and details. 
    }
    \label{fig:scaling}
\end{figure}   
\begin{quote}
\centering
    {\em 
    How does \textbf{softmax attention} learn to perform ICL?
    }
\end{quote}
To answer this question, we study general settings in which pretraining and evaluation ICL tasks are regressions that share only {\em Lipschitzness} and {\em label noise variance}. Specifically, the rate at which their ground-truth labels change along particular directions in the input space, and the variance in the label noise, is similar across tasks. 
In such settings, we observe that 
softmax attention acts as a nearest neighbors regressor with an {\em attention window} -- i.e. neighborhood of points around the query that strongly influence, or ``attend to'', the prediction -- that adapts to the pretraining tasks.
Specifically, our main result is as follows:

\begin{quote}\centering
    \textbf{Main Claim:} {Softmax attention performs ICL by calibrating its {\em attention window} to the {\em Lipschitzness} and {\em label noise variance} of the pretraining tasks.}
\end{quote}
    







\noindent\textbf{Outline.} We substantiate the above claim via two streams of analysis. To our knowledge, these are the {first results} showing that softmax attention pretrained on ICL tasks recovers shared structure among the tasks that facilitates ICL on downstream tasks.


\noindent\textbf{(1) Attention window {\em scale} adapts to Lipschitzness and noise variance -- Section \ref{sec:general}.} 
We prove that the pretraining-optimal softmax attention estimator scales its attention window \textit{inversely with the task Lipschitzness} and \textit{jointly with the noise level}  to optimally trade-off bias and variance in its prediction (Theorem \ref{main:general}).
This requires  tight upper and lower bounds on the pretraining ICL loss.
While the upper bounds (Lemma \ref{lem:nonlinearbiasupperbound}) hold for all $L$-Lipschitz tasks, the lower bounds (Lemma \ref{lem:nonlinearbiaslowerbound}) are more challenging and require considering specific classes of tasks. We consider two classes of generalized linear models (GLMs), and obtain lower bounds via novel concentrations for particular functionals on the distribution of the attention weights for tokens distributed on the hypersphere (Corollary \ref{cor:gr}).


\noindent\textbf{(2) Attention window  {\em directions} adapt to {direction-wise Lipschitzness} -- Section \ref{sec:LowRank}.}  We prove  that when the target  function class consists of linear functions that share a common low-dimensional structure, 
the optimal softmax attention weight matrix from pretraining projects the data onto this subspace (Theorem \ref{thm:lin_low_rank}). In other words, softmax attention learns to zero-out the zero-Lipschitzness directions in the ambient data space, and thereby reduces the effective dimension of ICL. We prove this via a careful symmetry-based argument to characterize a particular gradient of the ICL loss as positive (Lemmas \ref{lem:low-rank-bias} and \ref{lem:low-rank-noise}). 
\textbf{Tightness of results.}
Our results highlight the importance of shared Lipschitzness across training and test, as well as the critical role of the softmax activation, to ICL. We show that softmax attention pretrained on the setting from Section \ref{sec:general} in-context learns {\em any} downstream task with {\em similar Lipschitzness} to the pretraining tasks, while changing {\em only the Lipschitzness} of the evaluation tasks degrades performance (Theorem \ref{thm:generalization}) -- implying \textit{learning Lipschitzness is both sufficient {\em and} necessary for generalization}. Further, to emphasize the \textit{necessity of the softmax}, we show that the minimum ICL loss achievable by linear attention  exceeds that achieved by pretrained softmax attention (Theorem \ref{thm:negative-lin}).
We verify all of these results with empirical simulations (Section \ref{sec:general-exps} and Appendix \ref{appendix:sims}).

\noindent\textbf{Notations.} We use (upper-, lower-)case boldface for (matrices, vectors), respectively. We denote the (identity, zero) matrix in $\mathbb{R}^{d\times d}$ as ($\mathbf{I}_d$, $\mathbf{0}_{d\times d}$), respectively, the set of column-orthonormal matrices in $\mathbb{R}^{d\times k}$ as $\mathbb{O}^{d\times k}$, and the (column space, 2-norm) of a matrix $\mathbf{B}$ as ($\col(\mathbf{B})$, $\|\mathbf{B}\|$), respectively.
We indicate the unit hypersphere in $\mathbb{R}^d$ by $\mathbb{S}^{d-1}$ and the uniform distribution over  $\mathbb{S}^{d-1}$ as $\mathcal{U}^{d}$.  We use asymptotic notation ($\uO$, $\Omega$) to hide constants that depend only on the dimension $d$.

\subsection{Additional Related Work} \label{sec:related-work}

Numerous recent works have  {\em constructed}  transformers that can implement  GD and other machine learning algorithms during ICL
\citep{von2023transformers,akyurek2022learning,bai2023transformers,fu2023transformers,giannou2023looped}, but it is unclear whether {\em pretraining} leads to such transformers. \cite{li2023transformers-samet} and \cite{bai2023transformers} provide  generalization bounds for ICL via tools from algorithmic stability and uniform concentration, respectively. \cite{wu2023many} investigate the pretraining statistical complexity of learning a Bayes-optimal predictor for ICL on linear tasks with linear attention.
\cite{xie2021explanation,wang2023large,zhang2023and} study the role of the pretraining data distribution, rather than the learning model, in facilitating
ICL.  \cite{huang2023context} studies the dynamics of a softmax attention unit trained with GD on ICL tasks, but this analysis considers only linear tasks and orthogonal inputs. The connection between ICL with softmax attention and non-parametric regression has been noticed by other works that
analyze the ICL performance of a softmax-like kernel regressor 
\citep{han2023context} and aim to improve upon softmax attention \citep{chen2023primal,tsai2019transformer,nguyen2022fourierformer,han2022designing,deng2023attention} 
rather than explain what it learns during pretraining. Please see Appendix \ref{app:rw} for further discussion of the large body of related works studying the theory of transformers, ICL and kernel regression.

\section{Preliminaries}  \label{sec:formulation}



\para{In-Context Learning (ICL) regression tasks.} 
We study ICL in the regression setting popularized by \cite{garg2022can}, wherein each task is a regression problem in $\mathbb{R}^d$. The context for task $t$  consists of a set of $n$ feature vectors paired with noisy labels $\{\bx^{(t)}_i, f^{(t)}(\bx^{(t)}_i) + \epsilon_{i}^{(t)}\}_{i=1}^n$, where $f^{(t)}: \mathbb{R}^d\rightarrow \mathbb{R}$ generates the ground-truth labels for task $t$ and $\epsilon_{i}^{(t)}$ is label noise. Given this context, the model solves the task if it accurately predicts the label of a query $\bx^{(t)}_{n+1}$.
During pretraining, the model observes many such tasks. 
Then, it is evaluated on a new task with context $\{\bx_i, f^{(*)}(\bx_i^{(*)})+\epsilon_{i}^{(*)}\}_{i=1}^n$ and query $\bx_{n+1}^{(*)}$. 
We emphasize that the model is  trained only on the pretraining tasks, not  the evaluation context.
Unlike traditional supervised learning, which would involve  training  on the context $\{\bx_i, f^{(*)}(\bx_i^{(*)})+\epsilon_{i}^{(*)}\}_{i=1}^n$ in order to predict $f^{(*)}(\bx_{n+1}^{(*)})$,  ICL happens {\em entirely in a forward pass}, so there is no training using labels from $f^{(*)}$. Our inquiry focuses on how ICL is facilitated by the softmax activation in the self-attention unit, which we introduce next.

\para{The Softmax Attention Unit.} 
We consider a single softmax attention head  \\
$H_{SA}(\cdot;\boldsymbol{\theta}) : 
\mathbb{R}^{(d+1)\times (n+1)}\rightarrow \mathbb{R}^{(d+1)\times (n+1)}$ 
parameterized by $\boldsymbol{\theta} \coloneqq \!(\mathbf{W}_K, \mathbf{W}_Q,\mathbf{W}_V)$, where  \\
$\mathbf{W}_K, \mathbf{W}_Q,\mathbf{W}_V \in \mathbb{R}^{(d+1)\times (d+1)}$ are known as key, query, and value weight matrices, respectively.
Intuitively, for a sequence of tokens $\mathbf{Z}= [\bz_1,\dots,\bz_{n+1}]\in \bz^{(d+1)\times (n+1)}$,
the attention layer creates a ``hash map" where the key-value pairs come from key and value embeddings of the input tokens, $\{\bW_K \bz_i: \bW_V \bz_i\}$. Each token $\bz_i$ is interpreted as a query $\bW_Q\bz_i$, and during a pass through the attention layer, this query is matched with the keys $\{\bW_K\bz_j\}_j$ to return an average over the associated values $\{\bW_V\bz_j\}_j$ with a weight determined by the quality of the match (proportional to $e^{(\bW_K \bz_j)^\top (\bW_Q \bz_i)}$). Specifically,
{$H_{SA}( \mathbf{Z}; \boldsymbol{\theta}) = [h_{SA}( \mathbf{z}_1,\mathbf{Z}; \boldsymbol{\theta}), \cdots, h_{SA}( \mathbf{z}_{n+1},\mathbf{Z}; \boldsymbol{\theta})] $, where

\begin{equation}\tag{ATTN}\label{eq:attention}
h_{SA}(\mathbf{z}_i,\mathbf{Z}; \boldsymbol{\theta}) = \frac{\sum_{j=1}^n \left(\mathbf{W}_V \mathbf{z}_j \right)\; e^{(\mathbf{W}_K\mathbf{z}_j)^\top (\mathbf{W}_Q \mathbf{z}_i)}}{\sum_{i=1}^n e^{(\mathbf{W}_K \mathbf{z}_j)^\top (\mathbf{W}_Q \mathbf{z}_i)}} \in \mathbb{R}^{d+1}.
\end{equation}

With slight abuse of notation, we denote $h_{SA}(\mathbf{z}_j) = h_{SA}(\mathbf{z}_j,\mathbf{Z};\boldsymbol{\theta} )$ when it is not ambiguous. To study how this architecture enables ICL, we follow \cite{garg2022can} to formalize  ICL as a regression problem. Below we define the tokenization, pretraining objective and evaluation task.

\para{Tokenization for regression.} 
The learning model encounters token sequences of the form
\begin{align}
    \mathbf{Z} := \begin{bmatrix}
        \bx_{1} & \bx_{2} & \hdots & \bx_{n} & \bx_{n+1} \\
        f(\bx_{1}) + \epsilon_{1} & f(\bx_{2})+ \epsilon_{1}  & \hdots & f(\bx_{n}) + \epsilon_{n} & 0 \\
    \end{bmatrix} \in \mathbb{R}^{(d+1) \times (n+1)}, \label{z}
\end{align}
where the ground-truth labelling function $f$ maps from $\mathbb{R}^d $ to $ \mathbb{R}$ and belongs to some class $\mathcal{F}$, each $\epsilon_{i}$ is mean-zero noise, and the $i$-th input feature vector $\bx_i \in \mathbb{R}^d$ is jointly embedded in the same token with its noisy label $f(\bx_{i}) +\epsilon_{i} \in \mathbb{R}$. We denote this token $\mathbf{z}_{i}$. 
The ICL task is to accurately predict this label given the $n$ context tokens $\{ (\bx_{i}, f(\bx_{i}) + \epsilon_i) \}_{i=1}^n$, where $f$ may vary across sequences. The prediction for the label of the $(n\!+\!1)$-th feature vector is the $(d\!+\!1)$-th element of $h_{SA}(\mathbf{z}_{n+1})$ \citep{cheng2023transformers}, denoted 
$h_{SA}(\mathbf{z}_{n+1})_{d+1}$. Ultimately, the goal is to learn weight matrices such that $h_{SA}(\mathbf{z}_{n+1})_{d+1}$ is likely to approximate the $(n+1)$-th label on a random sequence $\mathbf{Z}$. 

\para{Pretraining protocol.} 
We study  what softmax attention learns when its weight matrices are {\em pretrained} using sequences of the form of \eqref{z}.
These sequences are randomly generated as follows:
\begin{align}
   f \sim D(\mathcal{F}), \quad \bx_1,\dots,\bx_{n+1} \stackrel{\text{i.i.d.}}{\sim} D_{\bx}^{\otimes (n+1)}, \quad \epsilon_1,\dots,\epsilon_{n} \stackrel{\text{i.i.d.}}{\sim} D_\epsilon^{\otimes (n+1) }
\end{align}
where $D(\mathcal{F})$ is a distribution over functions in $\mathcal{F}$, $D_{\bx}$ is a distribution over $\mathbb{R}^d$, and $D_\epsilon$ is a distribution over $\mathbb{R}$ with mean zero and variance $\sigma^2$. The token embedding sequence $\mathbf{Z}$ is then constructed as in \eqref{z}. 
Given this generative model, the pretraining loss of the parameters $\boldsymbol{\theta} = (\mathbf{W}_Q, \mathbf{W}_K, \mathbf{W}_V)$ is the expected squared difference between the prediction of softmax attention and the ground-truth label of the $(n\!+\!1)$-th input feature vector in each sequence, namely
\begin{equation}
\label{eq:loss0}
\bar{\uL}(\boldsymbol{\theta}) := \E_{f, \{\bx_i\}_i, \{\epsilon_{i} \}_{i}} \left(h_{SA}(\mathbf{z}_{n+1})_{d+1}-f(\bx_{n+1})\right)^2.
\end{equation}
We next reparameterize the attention weights to make \eqref{eq:loss0} more interpretable. 
 For the last column of $\mathbf{W}_V$, we show in Appendix \ref{app:preliminaries} that any minimizer of \eqref{eq:loss0} in the settings we consider must have the first $d$ elements of this last column equal to zero. We follow \cite{ahn2023transformers,zhang2023trained,cheng2023transformers}  by setting the first $n$ columns of $\mathbf{W}_V$ to zero. As in \cite{cheng2023transformers}, we fix the $(d\!+\!1,d\!+\!1)$-th element of $\mathbf{W}_V$, here as 1 for simplicity. In the same vein, we follow \cite{ahn2023transformers,cheng2023transformers} by setting the $(d\!+\!1)$-th row and column of $\mathbf{W}_K$ and $\mathbf{W}_Q$ equal to zero. To summarize, the reparameterized weights are:
\begin{align}
    \mathbf{W}_V = \begin{bmatrix}
        \mathbf{0}_{d \times d} & \mathbf{0}_{d\times 1} \\
        \mathbf{0}_{1\times d} & 1 \\
    \end{bmatrix}, \quad  \mathbf{W}_K = \begin{bmatrix}
        \mathbf{M}_K & \mathbf{0}_{d\times 1} \\
        \mathbf{0}_{1\times d} & 0 \\
    \end{bmatrix}, \quad  \mathbf{W}_Q = \begin{bmatrix}
        \mathbf{M}_Q & \mathbf{0}_{d\times 1} \\
        \mathbf{0}_{1\times d} & 0 \\
    \end{bmatrix}
\end{align}
where $\mathbf{M}_K, \mathbf{M}_Q \in \mathbb{R}^{d\times d}$.
Now, since our goal is to reveal properties of minimizers of the pretraining loss, rather than study the dynamics of optimizing the loss, without loss of generality we can define $\mathbf{M} := \mathbf{M}_K^\top \mathbf{M}_Q$ and re-define the pretraining loss \eqref{eq:loss0} as a function of $\mathbf{M}$. Doing so yields:
    \begin{equation}
\tag{ICL}
\label{eq:loss}
\uL(\mathbf{M}) := \E_{f, \{\bx_i\}_i, \{\epsilon_{i} \}_{i}} \left(\frac{\sum_{i=1}^n (f(\bx_i)+ \epsilon_i) \; e^{\bx_i^\top \mathbf{M} \bx_{n+1}}}{\sum_{i=1}^n e^{\bx_i^\top \mathbf{M} \bx_{n+1}}}-f(\bx_{n+1})\right)^2.
\end{equation}

\noindent\para{Interpretation of the pretraining loss.} The loss \eqref{eq:loss} clarifies how softmax attention can be interpreted as a {nearest neighbors regressor}.
When $\bx_i^\top \mathbf{M} \bx_{n+1}$ is a proxy for the distance between $\bx_i$ and $\bx_{n+1}$ (which we formally show in Section \ref{sec:general} as happening under reasonable assumptions), the softmax attention prediction is a convex combination of the noisy labels with weights determined by the closeness of $\bx_i$ to $\bx_{n+1}$, such that the labels of points closer to $\bx_{n+1}$ have larger weight. 
Moreover, the decay in weights on points further from $\bx_{n+1}$ is exponential and controlled by $\mathbf{M}$, which effectively defines a neighborhood, or attention window, of points around $\bx_{n+1}$ whose labels have non-trivial weight.
More formally, we can think of the attention window defined for a query $\bx_{n+1}$ as the set $\texttt{AttnWindow}(\bx_{n+1};\mathbf{M}):= \{\bx: \bx^\top \mathbf{M}\bx_{n+1} = \Omega(1)\}$.
As we have observed in Figure \ref{fig:scaling},
our key insight is that pretrained $\mathbf{M}$ \textbf{scales this attention window with the Lipschitzness of the function class.}
Generally speaking, larger $\mathbf{M}$ entails averaging over a smaller window and incurring less bias due to the function values of distant tokens in the estimate, while smaller $\mathbf{M}$ entails averaging over a larger window, resulting in larger bias due to distant token labels, but a smaller noise variance.  Figure \ref{fig:tradeoff} further depicts this tradeoff.

\noindent \para{Connection to non-parametric estimation and the Nadaraya-Watson estimator.} A nonparametric estimation technique to interpolate between known values of a function is to use a kernel estimator. The Nadarya-Watson (NW) estimator \cite{gyorfi, nadaraya1964estimating, watson1964smooth} is one such estimator, and interpolates the data as
\vspace{-0mm}
\begin{align}
f_{NW}(\bx_{n+1}) = \sum_i \frac{K(x_{n+1},x_i)f(x_i)}{\sum_j K(x_{n+1},x_j)} \nonumber
\vspace{-5mm}
\end{align}
where $K(r) = e^{-r^2/h}$ for some bandwidth $h$.
In Section \ref{appendix:rewriting_as_NW} we show that optimizing the pretraining loss \eqref{eq:loss} reduces to  meta-learning the bandwidth of an NW estimator on a distribution of pretraining tasks. However, to our knowledge, the literature has not determined the optimal bandwidth for the kernel,   
as there has been no analysis of non-asymptotic lower bounds on the loss, which we need to characterize the optimal solution. A close work to ours is \cite{tosatto}, which considers regression on general $L$-Lipschitz tasks, but this analysis provides only a tight upper bound on the loss. 

\begin{remark}[Extreme cases]\label{rem:scaling}
    Consider the following two settings.

    \noindent\textbf{(i) Constant functions.} If each of the functions the attention unit sees in pretraining is constant, as in the \textbf{Left} column of Figure \ref{fig:scaling}, it is best to consider an infinite attention window, that is, take $\mathbf{M} = \mathbf{0}_{d\times d}$ as this results in a uniform average over all the noisy token labels. 

    \noindent\textbf{(ii) Rapidly changing functions.} If the pretraining functions change rapidly, as in the \textbf{Right} column of Figure \ref{fig:scaling}, attending to a distant token might only corrupt the estimate at the target. For example suppose the input tokens are used to construct Voronoi cells on the surface of the hypersphere, and the label for a new token in a cell is the label of the token used to construct that cell. The optimal estimator attends only to the single nearest token since this incurs error only from label noise. 
   \end{remark} 

 \begin{remark}[Softmax advantage] \label{rem:scaling2} To further highlight the utility of the softmax, we compare with linear attention \citep{von2023transformers,zhang2023trained,ahn2023transformers}, whose estimator can be written as $h_{LA}(\bx) = \sum_i (f(\bx_i) +\epsilon_i) \bx_i^\top \mathbf{M}  \bx$, up to a universal scaling due to the value embedding.
    This is again a weighted combination of labels, but one that does not allow for adapting an attention window -- any scaling of $\mathbf{M}$ does not change the relative weights placed on each label -- unlike softmax attention. Please see Figure \ref{fig:scaling} \textbf{(Middle Row)} for a comparison of the weights used in the different estimators. 
\end{remark}






\begin{figure}
    \centering
    \includegraphics[scale = 0.3]{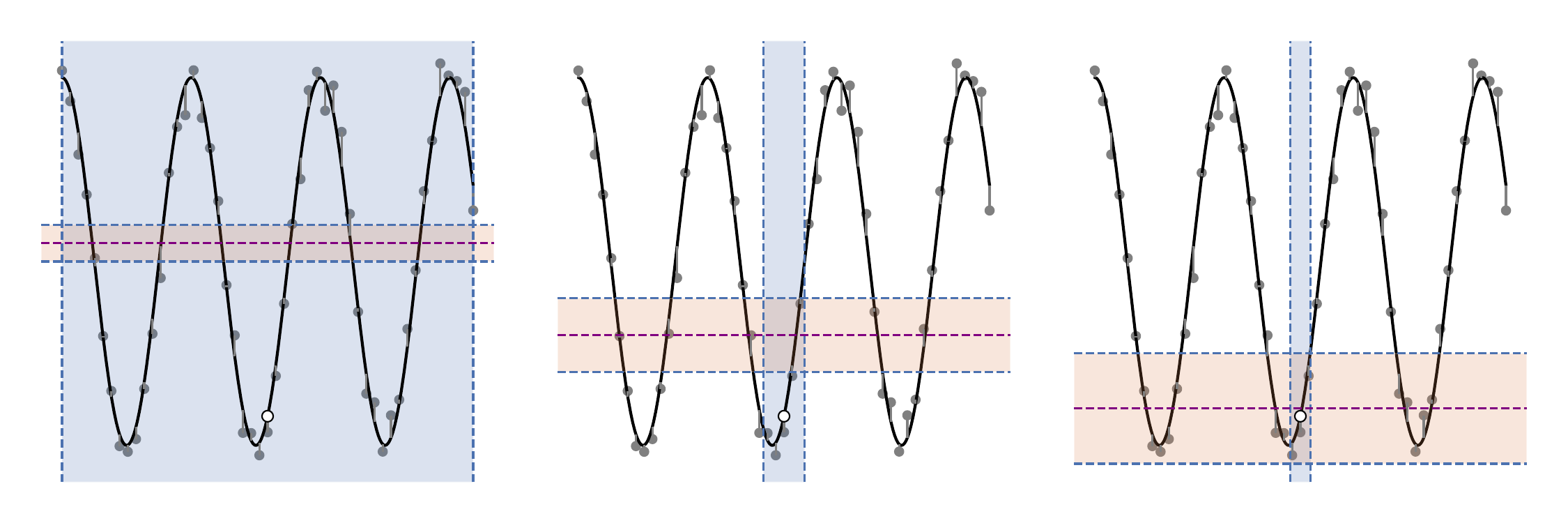}
    \caption{From \textbf{left to right}, as we \textbf{shrink the attention window} (shaded in blue), the estimator has \textbf{lower bias} (the expected value of the estimate, depicted in purple, is closer to the ground-truth label, depicted by the white circle) but \textbf{larger variance} (shaded in tan).}
    \label{fig:tradeoff}
\end{figure}

\section{Pretraining Learns Scale of Attention Window}\label{sec:general}
One of our observations of the attention estimator $h_{SA}$  is that it computes a nearest neighbours regression. We hypothesize that the role of pretraining is to select a neighbourhood within which to select tokens for use in the estimator. In this section we characterize the radius of this neighborhood. 
\begin{definition}[Lipschitzness]\label{def:l}
    A function $f:\mathcal{X}\rightarrow\mathbb{R}$ has Lipschitzness $L$ if $L$ is the smallest number satisfying $ f(\bx)-f(\bx')\leq L\|\bx-\bx'\| $ for all $(\bx,\bx')\in\mathcal{X}^2$.
\end{definition}

The general requirement for the function classes to which our results apply is that the class should be invariant to isometries, each function should be Lipschitz, and the function value at two points should be less correlated as those points get further. These are written formally in Assumption \ref{assumptions}. To be concrete, we work with the following two function classes that satisfy these assumptions (this is shown in Lemmas \ref{lem:linearsatisfiesassumption} and \ref{lem:nonlinearsatisfiesassumptions}) to derive explicit bounds.
\begin{definition}[Affine and ReLU Function Classes]\label{def:f}
The function classes $\mathcal{F}^{\aff}_{L}$ and $\mathcal{F}^{+}_{L}$ are respectively defined as:
\begin{align}
\mathcal{F}^{\aff}_{L} &:= \{ f: f(\bx) = l\;  \mathbf{w}^\top \bx + \ b, \; \mathbf{w}\in \mathbb{S}^{d-1}, b, l \in [-L, L] \}, \nonumber \\ 
\mathcal{F}^{+}_{L} &:= \{ f: f(\bx) = l_1 (\mathbf{w}^\top \bx)_+ + l_2 (-\mathbf{w}^\top \bx)_+ + b, \; \mathbf{w}\in \mathbb{S}^{d-1}, (b,l_1,l_2) \in [-L, L]^2 \}. \nonumber 
\end{align}
$D(\F^{\aff}_{L}), D(\mathcal{F}^{+}_{L})$ are induced by drawing $\mathbf{w}\sim \mathbf{\Sigma}\ \! \mathcal{U}^d$ and $b, l, l_1, l_2 \stackrel{\text{i.i.d.}}{\sim} \text{Unif}([-L, L])$ for some $\mathbf{\Sigma} \succ \mathbf{0}_{d \times d}$. 
 Note that the max Lipschitzness of  any function in these classes is $L$, and $(z)_+ := \max(z,0)$.
\end{definition}
Next, we make the following assumption, similar to \cite{ahn2023transformers}, on the covariate distribution. 
\begin{assumption}[Covariate Distribution]\label{assump:dists}
The covariate distribution satisfies $D_{\bx} = \mathbf{\Sigma}^{-1}\mathcal{U}^d$.
\end{assumption}
Now we are ready to state our main theorem that characterizes minimizers of \eqref{eq:loss}.
\begin{theorem}
\label{main:general}
Let Assumption \ref{assump:dists} hold and tasks $f$ be drawn from \textbf{(Case 1)} $D(\mathcal{F}^{\aff}_{L})$ or \textbf{(Case 2)} 
 $D(\mathcal{F}^{+}_{L})$. For $n=\Omega(1)$ and $\Omega(n^{-d/2}) \leq \sigma^2 \leq \mathcal{O}(nL^2)$, any minimizer of the pretraining loss  \eqref{eq:loss} satisfies\footnote{We further show in Appendix \ref{app:preliminaries} that 
 $\mathbf{M}^*=w_{KQ}\boldsymbol{\Sigma}$ for scalar $w_{KQ}$
 holds for a broad family of rotationally-invariant function classes. 
    } $ \mathbf{M}^* = w_{KQ} \mathbf{\Sigma}$,
    where for $\Lambda\coloneqq \frac{nL^2}{\sigma^2}$, $\alpha\coloneqq \frac{1}{d+4}$ and $\beta \coloneqq \frac{1}{d+2}$:
    \begin{align}
       \textbf{(Case 1)}  & \;\; \Omega\left(\Lambda^{\alpha}\right) \le | w_{KQ} | \le \mathcal{O}\left(\Lambda^{\frac{2\alpha}{1-\beta} }\right),\quad
       \textbf{(Case 2)}  \;\;  \Omega\left(\Lambda^{\beta}\right) \le |w_{KQ}|\le \mathcal{O}\left(\Lambda^{2\beta}\right).
       \nonumber
    \end{align}
\end{theorem}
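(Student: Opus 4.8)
The plan is to reduce the matrix optimization to a scalar one via the nearest-neighbors characterization, split the scalar loss into a signal (bias) and a noise (variance) term, obtain matching power-law upper and lower bounds for each using concentration of weighted functionals of i.i.d.\ points on $\mathbb{S}^{d-1}$, and then read off the admissible range of the window parameter by solving the resulting sandwich inequality. I would begin by checking the hypotheses of Lemma~\ref{lem:assumptionssatisfyidentity} for both families: $D_{\bx}=\mathcal{U}^d$ is rotationally invariant, $D(\mathcal{F}^{\text{aff}}_{L})$ and $D(\mathcal{F}^{+}_{L})$ are rotationally invariant (their direction $\mathbf{w}$ is drawn from $\mathcal{U}^d$), and their correlation kernels $\E_f[f(\bx)f(\bx')]$ depend only on $\bx^\top\bx'$ and are non-increasing in it. The lemma then forces any minimizer to be $\mathbf{M}^{*}=w_{KQ}\mathbf{I}_d$; since $\bx_i,\bx_{n+1}\in\mathbb{S}^{d-1}$ gives $\bx_i^\top\bx_{n+1}=\tfrac12(2-\|\bx_i-\bx_{n+1}\|^2)$, the softmax weights with $\mathbf{M}=w_{KQ}\mathbf{I}_d$ become $p_i\propto e^{-\frac{w_{KQ}}{2}\|\bx_i-\bx_{n+1}\|^{2}}$ and \eqref{eq:loss} collapses to a one-dimensional function $\uL(w)$ of $w:=w_{KQ}$ (for Case~2 one works with $|w|$, the sign being irrelevant by symmetry of $\mathcal{F}^{+}_{L}$). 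Using that the $\epsilon_i$ are i.i.d., mean zero, variance $\sigma^2$, independent of $f$ and of the covariates, the loss decomposes as
\[
\uL(w)\;=\;\underbrace{\E\!\left[\Big(\sum\nolimits_{i} p_i f(\bx_i)-f(\bx_{n+1})\Big)^{2}\right]}_{\uLb(w)}\;+\;\sigma^{2}\,\underbrace{\E\!\left[\sum\nolimits_i p_i^{2}\right]}_{\uLn(w)} .
\]

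The technical core is to bound $\uLb(w)$ and $\uLn(w)$ two-sidedly, uniformly in $w$, which is the content of Corollary~\ref{cor:gr} and Lemmas~\ref{lem:nonlinearbiasupperbound}--\ref{lem:nonlinearbiaslowerbound}. For the noise term $\uLn(w)=\E[\sum_i p_i^2]$ (an inverse-participation statistic of the softmax weights) I would use that sums $\sum_i g(\|\bx_i-\bx_{n+1}\|^2)$ concentrate around $n\,\E[g]$ and that the relevant one-dimensional marginal has a $u^{(d-3)/2}$ singularity at $u=0$, so moment integrals of $e^{-wu/2}$ against it are closed-form Gamma factors in $w$; this gives $\uLn(w)=\Theta(1/n)$ for small $w$, a fixed positive power law $\uLn(w)=\Theta(w^{\gamma(d)}/n)$ for moderate $w$, and $\uLn(w)\to1$ once the window retains $O(1)$ points. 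For the bias term the constant offset $b$ always cancels because $\sum_i p_i=1$. In Case~1 only the linear part survives, $\uLb(w)=\tfrac{\E[l^2]}{d}\,\E\big\|\sum_i p_i\bx_i-\bx_{n+1}\big\|^{2}$, and I would split the weighted-centroid displacement into a deterministic $\Theta(1/w)$ contraction toward the center of the sphere (sphere curvature) plus a mean-zero tangential fluctuation of variance $\Theta(\tfrac1w\uLn(w))$, bounding both with the same tools, so $\uLb$ decays as a fixed negative power of $w$. In Case~2 the ReLU term is affine off its kink hyperplane $\{\mathbf{w}^\top\bx=0\}$: when the query lies at distance $\gtrsim w^{-1/2}$ from it the bias matches Case~1, but on the $\Theta(w^{-1/2})$-probability event that it lies within $w^{-1/2}$ the bias can be $\Theta(Lw^{-1/2})$, contributing an extra, strictly dominant $\Theta(L^{2}/w^{3/2})$ term. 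The different leading bias powers in the two cases are precisely what produce the different exponents $\alpha$ and $\beta$.

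Putting these together, $\uL(w)$ is squeezed between two unimodal functions of the form $c\,L^2 w^{-a}+c'\sigma^2 w^{\gamma}/n$ (plus a $\sigma^2/n$ floor), with $a$ larger in Case~1 than in Case~2. Any minimizer $w^{*}$ of $\uL$ then obeys $\uL_{\mathrm{lo}}(w^{*})\le \uL(w^{*})\le \min_{w}\uL_{\mathrm{hi}}(w)$; substituting the explicit power laws and solving this inequality for $w^{*}$ (the decreasing branch of $\uL_{\mathrm{lo}}$ yields the lower endpoint, the increasing branch the upper one) pins $w^{*}$ to $[\Omega(\Lambda^{\alpha}),\mathcal{O}(\Lambda^{2\alpha/(1-\beta)})]$ in Case~1 and to $[\Omega(\Lambda^{\beta}),\mathcal{O}(\Lambda^{2\beta})]$ in Case~2, the width of each interval being exactly the gap between $\uL_{\mathrm{lo}}$ and $\uL_{\mathrm{hi}}$. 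The hypotheses $n=\Omega(1)$ and $\Omega(n^{-d/2})\le\sigma^2\le\mathcal{O}(nL^2)$ enter only here: they guarantee $\Lambda\ge\Omega(1)$, that these endpoints exceed the implicit constants, and that the optimal window still holds $\gg1$ context points so the ``moderate-$w$'' forms of $\uLn,\uLb$ govern. I expect the main obstacle to be this pair of concentration estimates: obtaining matching constants in the upper and lower bounds on the nonlinear self-normalized functionals $\E[\sum_i p_i^2]$ and $\E\|\sum_i p_i\bx_i-\bx_{n+1}\|^2$ uniformly over the entire range of $w$ (including the transition where the window falls below $O(1)$ points), and, in Case~2, the careful accounting of the $\Theta(L^2/w^{3/2})$ kink contribution.
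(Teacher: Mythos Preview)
Your overall route is the paper's: invoke Lemma~\ref{lem:assumptionssatisfyidentity} to reduce to a scalar $w$, split $\uL(w)=\uLb(w)+\sigma^2\uLn(w)$, bound each two-sidedly via the weighted-sum functionals of Corollary~\ref{cor:gr}, and sandwich the minimizer between the extrema of the upper and lower envelopes. The divergence is in how you control $\uLb$ for Case~2. The paper does \emph{not} analyze the kink directly. Its upper bound is the crude Lipschitz estimate $\uLb\le L^2\bigl(g_1(w)/g_0(w)\bigr)^2=\uO(L^2/w)$ (Lemma~\ref{lem:nonlinearbiasupperbound}), and its lower bound comes from a comparison to the affine class: writing each ReLU as its affine extension on the query's half-space plus an \emph{independent} correction on the complementary half-space, one obtains $\uLb\bigl(D(\F_L^+)\bigr)\ge\tfrac12\,\uLb\bigl(D(\F_L^{\text{aff}})\bigr)=\Omega(L^2/w^2)$ (Lemma~\ref{lem:nonlinearbiaslowerbound}). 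The stated interval $[\Omega(\Lambda^\beta),\uO(\Lambda^{2\beta})]$ is exactly the image of this $[L^2/w^2,\,L^2/w]$ gap: minimizing the upper envelope $L^2/w+\sigma^2 w^{d/2}/n$ sets the optimal loss at $\uO(L^2\Lambda^{-2\beta})$, and then the two pieces of the lower envelope force $w\ge\Omega(\Lambda^\beta)$ and $w\le\uO(\Lambda^{2\beta})$ respectively.

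Your kink heuristic, if it could be sharpened to a genuine $\uLb=\Theta(L^2/w^{3/2})$, would instead pin $w^{*}$ to $\Theta(\Lambda^{2/(d+3)})$ --- strictly inside the stated interval and matching neither endpoint --- so it cannot be the mechanism that ``produces the exponent $\beta$''; your claim that it yields the range $[\Omega(\Lambda^\beta),\uO(\Lambda^{2\beta})]$ is internally inconsistent with a tight $\Theta$ bias estimate. (Establishing the lower-bound direction of that $\Theta$ rigorously would also be delicate, which is presumably why the paper opts for the comparison lemma.) A smaller slip occurs in Case~1: the tangential-fluctuation contribution you correctly identify is $\Theta(L^2 w^{d/2-1}/n)$, which is \emph{increasing} in $w$, so $\uLb$ is not a single negative power of $w$ as you assert; the paper handles this via a case split on $w\gtrless L^2/\sigma^2$ (the hypothesis on $\sigma^2$ forces the minimizer into the regime $w\ge L^2/\sigma^2$, where this term is dominated by the noise), and it is precisely this extra term that makes the Case~1 upper exponent differ from the naive $2\alpha$.
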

Theorem \ref{main:general} shows that optimizing the pretraining population loss in Equation \eqref{eq:loss} leads to attention key-query parameters that scale with the Lipschitzness of the function class, as well as the noise level and number of in-context samples. These bounds align with our observations from Figures \ref{fig:scaling} and \ref{fig:tradeoff}  that softmax attention selects an attention window that shrinks with the function class Lipschitzness, recalling that larger $w_{KQ}$ results in a smaller window. Further, the dependencies of the bounds on $\sigma^2$ and $n$ are also intuitive, since larger noise should encourage wider averaging to average out the noise, and larger $n$ should encourage a smaller window since more samples makes it more likely that there are samples very close to the query. To our knowledge, this is the {\em first result showing that softmax attention learns properties of the task distribution during pretraining that facilitate ICL}. 

\textbf{Learning Lipschitzness is critical to generalization.}
We next give the following generalization result for downstream tasks.
\begin{theorem}
\label{thm:generalization}
    Suppose softmax attention is first pretrained on tasks drawn from $D(\F_L^+)$ and then tested on an arbitrary $L-$Lipschitz task, then the loss on the new task is upper bounded as $\uL \le \uO(\frac{L^2}{\Lambda^{\beta}}).$ Furthermore, if the new task is instead drawn from $D(\F_{L'}^+)$, the loss is lower bounded as $\uL\ge\Omega(\frac{L'^2}{\Lambda^{2\beta}})$ for $L' > L$ and $\uL\ge\Omega(\frac{\Lambda^{\beta d/2}}{n})$ for $L' < L$.
\end{theorem}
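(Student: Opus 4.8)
\emph{Setup and decomposition.} The plan is to reduce the statement to Theorem~\ref{main:general} together with the two-sided loss estimates of Section~\ref{sec:general}. By Theorem~\ref{main:general} (Case~2), any minimizer of the pretraining loss over $D(\F_L^+)$ equals $w^*\mathbf{I}_d$ for a scalar $w^*$ with $\Omega(\Lambda^\beta)\le|w^*|\le\uO(\Lambda^{2\beta})$, where $\Lambda=nL^2/\sigma^2$ and $\beta=\tfrac1{d+2}$. Under Assumption~\ref{assump:dists}, $\bx_i^\top(w^*\mathbf{I}_d)\bx_{n+1}=w^*\big(1-\tfrac12\|\bx_i-\bx_{n+1}\|^2\big)$, so on any test sequence the prediction is $\sum_i p_i\,(f(\bx_i)+\epsilon_i)$ with softmax weights $p_i\propto e^{-\frac{w^*}2\|\bx_i-\bx_{n+1}\|^2}$ summing to $1$ (Lemma~\ref{lem:assumptionssatisfyidentity}). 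Expanding the squared error, conditioning on $\{\bx_i\}$, and using that the noise is independent and mean zero, I would split the loss as
\[
\uL=\E\!\left[\Big(\sum_i p_i\big(f(\bx_i)-f(\bx_{n+1})\big)\Big)^{2}\right]+\sigma^2\,\E\!\left[\sum_i p_i^2\right],
\]
the first term being the \emph{bias} and the second the \emph{noise}. Everything then comes down to inserting the pretrained range of $|w^*|$ into bounds on these two terms.

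\emph{Upper bound.} Fix any $L$-Lipschitz test function $f$. By Definition~\ref{def:l}, $|f(\bx_i)-f(\bx_{n+1})|\le L\|\bx_i-\bx_{n+1}\|$, so the bias term is at most $L^2\,\E[(\sum_i p_i\|\bx_i-\bx_{n+1}\|)^2]$; this quantity depends on $f$ only through $L$ and is precisely what is controlled in the proof of Lemma~\ref{lem:nonlinearbiasupperbound} using the hypersphere concentration estimates (Corollary~\ref{cor:gr}), giving bias $=\uO(L^2/|w^*|)$. The noise term $\sigma^2\E[\sum_i p_i^2]=\uO(\sigma^2|w^*|^{d/2}/n)$ by the same estimates, again with no dependence on $f$. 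Now substitute: $|w^*|\ge\Omega(\Lambda^\beta)$ gives bias $=\uO(L^2/\Lambda^\beta)$, while $|w^*|\le\uO(\Lambda^{2\beta})$ gives noise $=\uO(\sigma^2\Lambda^{\beta d}/n)=\uO(L^2\Lambda^{\beta d-1})=\uO(L^2/\Lambda^{2\beta})$, using $\sigma^2/n=L^2/\Lambda$ and $\beta d-1=-2\beta$. Since $\Lambda=\Omega(1)$ under the assumed range of $\sigma^2$, both are $\uO(L^2/\Lambda^\beta)$, which is the claimed bound; note it is exactly the choice of $\Lambda^{2\beta}$ (rather than a larger power) as the upper endpoint for $|w^*|$ that makes the noise of no higher order than the bias here.

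\emph{Lower bounds.} Now let the test tasks be drawn from $D(\F_{L'}^+)$. Applying the matching lower bound Lemma~\ref{lem:nonlinearbiaslowerbound} with Lipschitz parameter $L'$ and window parameter $|w^*|$ gives
\[
\uL\;\ge\;\Omega\!\left(\frac{L'^2}{|w^*|}\right)+\Omega\!\left(\frac{\sigma^2|w^*|^{d/2}}{n}\right).
\]
If $L'>L$, the pretrained window is too wide for the sharper class: keeping only the bias term and using $|w^*|\le\uO(\Lambda^{2\beta})$ yields $\uL\ge\Omega(L'^2/\Lambda^{2\beta})$. If $L'<L$, the window is too narrow for the smoother class: keeping only the noise term and using $|w^*|\ge\Omega(\Lambda^\beta)$ yields $\uL\ge\Omega(\sigma^2\Lambda^{\beta d/2}/n)=\Omega(\Lambda^{\beta d/2}/n)$, where the last step uses the range of $\sigma^2$ carried over from Theorem~\ref{main:general}.

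\emph{Main difficulty.} The plug-in arithmetic is routine; the substantive content is imported from Section~\ref{sec:general} --- that the softmax weights concentrate sharply enough for the bias and noise to admit tight two-sided bounds in $|w^*|$, which rests on Corollary~\ref{cor:gr}. The one point that is specific to this theorem and needs care is the asymmetry between the two directions: the upper bound transfers to an \emph{arbitrary} $L$-Lipschitz task because the bias estimate only invokes $|f(\bx_i)-f(\bx_{n+1})|\le L\|\bx_i-\bx_{n+1}\|$, whereas the converse must be phrased for the \emph{distribution} $D(\F_{L'}^+)$ and not for a worst-case $L'$-Lipschitz task --- a generic $L'$-Lipschitz $f$ could be locally constant near $\bx_{n+1}$ and so incur no bias at all --- so the lower bound genuinely uses that a random element of $\F_{L'}^+$ varies at rate $\Theta(L')$ across the window in expectation, which is exactly what Lemma~\ref{lem:nonlinearbiaslowerbound} supplies. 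Finally I would check that the constraints $n=\Omega(1)$ and $\Omega(n^{-d/2})\le\sigma^2\le\uO(nL^2)$ inherited from Theorem~\ref{main:general} indeed give $\Lambda=\Omega(1)$, so the simplifications in both directions are valid.
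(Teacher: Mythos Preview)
Your approach is the paper's approach: obtain the range $\Omega(\Lambda^\beta)\le|w^*|\le\uO(\Lambda^{2\beta})$ from Theorem~\ref{main:general}, decompose the test loss into bias and noise, and insert the endpoints of that range into the two-sided estimates of Lemmas~\ref{lem:nonlinearbiasupperbound}--\ref{lem:noisebounds}. The paper packages the upper bound slightly differently, observing that $w\mapsto L^2/w+\sigma^2 w^{d/2}/n$ is convex so its maximum on an interval is at an endpoint, but this is equivalent to your direct substitution.

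One slip to flag: the bias lower bound that Lemma~\ref{lem:nonlinearbiaslowerbound} actually supplies (via Lemma~\ref{lem:linearbiaslowerbound} and Corollary~\ref{cor:overalluLb}) is $\Omega(L'^2/|w^*|^{2})$, not $\Omega(L'^2/|w^*|)$ as you wrote. With the correct exponent and $|w^*|\le\uO(\Lambda^{2\beta})$ you would obtain $\Omega(L'^2/\Lambda^{4\beta})$ rather than the $\Omega(L'^2/\Lambda^{2\beta})$ stated in the theorem; the paper's own one-line derivation in Appendix~\ref{app:generalization} elides exactly this exponent, so the discrepancy is inherited, not introduced by you. Similarly, passing from $\Omega(\sigma^2\Lambda^{\beta d/2}/n)$ to $\Omega(\Lambda^{\beta d/2}/n)$ in the $L'<L$ case tacitly uses $\sigma^2=\Omega(1)$, which the assumed range $\sigma^2\ge\Omega(n^{-d/2})$ does not guarantee --- again matching the paper's level of rigor. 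Your discussion of why the upper bound transfers to an arbitrary $L$-Lipschitz task while the lower bound needs the distributional assumption is a correct and useful clarification that the paper does not spell out.
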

Theorem \ref{thm:generalization} shows that pretraining on $D(\F_L^+)$ yields a model that can in-context learn downstream tasks {\em if and only if} they have similar Lipschitzness as $L$. Thus, learning Lipschitzness is {\em both sufficient and necessary} for ICL. If the evaluation task Lipschitzness is much larger than that  seen in pretraining, the pretrained model will give highly biased estimates. Conversely, if the evaluation Lipschitzness is much lower, the pretrained model will not optimally average the label noise.
{

\textbf{Necessity of Softmax.}
To further emphasize the importance of the softmax in Theorem \ref{main:general}, we next study the performance of an analogous model with the softmax removed. We consider {\em linear self-attention} \citep{von2023transformers,zhang2023trained,ahn2023transformers}, which  replaces the softmax activation with an identity operation. In particular, in the in-context regression setting we study, the prediction of $f(\bx_{n+1})$ by linear attention and the corresponding pretraining loss are given by:
\begin{align}\label{eq:linearattention}
h_{LA}(\bx_{n+1}) &:= \sum_{i=1}^n (f(\bx_i) +\epsilon_i){\bx_i^\top \mathbf{M}\bx_{n+1} }, \nonumber\\
\mathcal{L}_{\text{LA}}(\mathbf{M}) &:= \E_{f, \{\bx_i\}_i, \{\epsilon_{i} \}_{i}} \left( h_{LA}(\bx_{n+1}) -f(\bx_{n+1})\right)^2. \nonumber
\end{align}
As discussed in Remark \ref{rem:scaling}, $h_{LA}(\bx_{n+1})$ cannot adapt an attention window to the problem setting. We show below that this leads it to  large ICL loss when tasks are drawn from $D(\mathcal{F}_L^+)$.
\begin{theorem}[Lower Bound for Linear Attention]\label{thm:negative-lin}
    Consider pretraining on $\mathcal{L}_{\text{LA}}$ with tasks $f$ drawn from 
    $D(\mathcal{F}^{+}_{L})$ and covariates drawn from $\mathcal{U}^d$. Then for all $\mathbf{M}\in \mathbb{R}^{d\times d}$,  $\mathcal{L}_{LA}(\mathbf{M}) = \Omega({L^2})$.
\end{theorem}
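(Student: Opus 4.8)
The plan is to show that linear attention has an irreducible error coming from the constant offset $b$ in the tasks: reproducing a constant $b$ with $h_{LA}$ would require the linear-attention ``weights'' $\bx_i^\top\mathbf{M}\bx_{n+1}$ to sum to $1$, but because the covariates are centered ($D_{\bx}=\mathcal{U}^d$), their expected sum is $0$ — and, as stressed in Remark~\ref{rem:scaling}, no choice of $\mathbf{M}$ can fix this. Concretely, I would write each task drawn from $D(\mathcal{F}^{+}_{L})$ as $f(\bx) = g(\bx) + b$, where $g(\bx) = l_1(\mathbf{w}^\top\bx)_+ + l_2(-\mathbf{w}^\top\bx)_+$ and, by definition of $D(\mathcal{F}^{+}_{L})$, the offset $b\sim\mathrm{Unif}([-L,L])$ is drawn independently of $\mathbf{w}, l_1, l_2$, the covariates $\{\bx_i\}_i$, and the noise $\{\epsilon_i\}_i$.

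Next I set $S := \sum_{i=1}^n \bx_i^\top\mathbf{M}\bx_{n+1}$ and $A := \sum_{i=1}^n (g(\bx_i)+\epsilon_i)\,\bx_i^\top\mathbf{M}\bx_{n+1} - g(\bx_{n+1})$, both of which are independent of $b$, so that
\[
h_{LA}(\bx_{n+1}) - f(\bx_{n+1}) = A + b\,(S-1).
\]
Integrating over $b$ first, using $\E[b]=0$ and $\E[b^2]=L^2/3$, the cross term vanishes and
\[
\mathcal{L}_{LA}(\mathbf{M}) = \E[A^2] + \tfrac{L^2}{3}\,\E[(S-1)^2] \;\ge\; \tfrac{L^2}{3}\,\E[(S-1)^2].
\]
It then remains to lower bound $\E[(S-1)^2]$. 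Conditioning on $\bx_{n+1}$, each $\bx_i$ with $i\le n$ is uniform on $\mathbb{S}^{d-1}$ and hence mean-zero, so $\E[S\mid\bx_{n+1}] = \sum_{i=1}^n \E[\bx_i]^\top\mathbf{M}\bx_{n+1} = 0$; therefore $\E[(S-1)^2\mid\bx_{n+1}] = \E[S^2\mid\bx_{n+1}] + 1 \ge 1$, and taking expectation over $\bx_{n+1}$ gives $\E[(S-1)^2]\ge 1$. Combining, $\mathcal{L}_{LA}(\mathbf{M}) \ge L^2/3 = \Omega(L^2)$ for every $\mathbf{M}\in\mathbb{R}^{d\times d}$, as claimed.

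There is no real technical obstacle here; the content is entirely in choosing the right order of integration — integrate out $b$ before everything else — and in observing that centeredness of $D_{\bx}$ forces $\E[S]=0$ irrespective of $\mathbf{M}$, so the ``total weight'' of linear attention can never be pushed toward $1$. Note in particular that we never need any property of $g$, any bound on $A$, nor the conditions on $n$, $\sigma^2$, or $d$ used elsewhere: the lower bound is unconditional. It is worth remarking in the write-up that the constant $1/3$ is not tight (the minimum, attained at $\mathbf{M}=\mathbf{0}_{d\times d}$, equals $\E[g(\bx_{n+1})^2] + L^2/3$), but the order $\Omega(L^2)$ already establishes the desired separation from the $\mathcal{O}(L^2/\Lambda^{\beta})$ rate of softmax attention in Theorem~\ref{main:general}.
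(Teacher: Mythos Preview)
Your proof is correct and in fact cleaner than the paper's. Both arguments exploit the same phenomenon — the independent offset $b\sim\mathrm{Unif}([-L,L])$ in $D(\mathcal{F}^{+}_{L})$ forces an irreducible $\Omega(L^2)$ error because linear attention cannot reproduce a constant — but the decoupling mechanisms differ. The paper works with the centered function $\overline{f}=f-\E_{\bx}f$ and uses a symmetry argument: since $f$ and its reflection $f'=\E_{\bx}f-\overline{f}$ are identically distributed, one averages the two resulting bounds to obtain $\mathcal{L}_{LA}\ge \E[A^2]+\E[(\E_{\bx}f)^2]$, and then observes $\E[(\E_{\bx}f)^2]\ge L^2/3$. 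You instead isolate $b$ itself (which, unlike $\E_{\bx}f$, is \emph{exactly} independent of the remaining randomness), integrate it out first to get $\mathcal{L}_{LA}=\E[A^2]+\tfrac{L^2}{3}\E[(S-1)^2]$, and then use $\E[\bx_i]=0$ to conclude $\E[(S-1)^2]\ge 1$. Your route avoids the coupling trick entirely, makes the role of the centered covariate distribution explicit through the $\E[S]=0$ step, and sidesteps a subtlety in the paper's argument about whether $\overline{f}$ and $\E_{\bx}f$ are truly independent for the ReLU class. The paper's approach, on the other hand, is slightly more general in that it would apply to any symmetric offset distribution without needing $\E[b]=0$ per se, only the flip symmetry.
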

This lower bound on  $\mathcal{L}_{LA}$ is strictly larger  than the upper bound on $\mathcal{L}$ from Theorem \ref{thm:generalization}, up to factors in $d$, as long as $\frac{\sigma^2}{n}\leq 1$, which holds in all reasonable cases. 
Please see Appendix \ref{app:lower} for the proof.

}

\begin{figure*}[t]
\captionsetup[subfigure]{labelformat=empty}
    \hspace{-4mm}
    \begin{subfigure}{.25\textwidth}
        \includegraphics[height=1.22in]{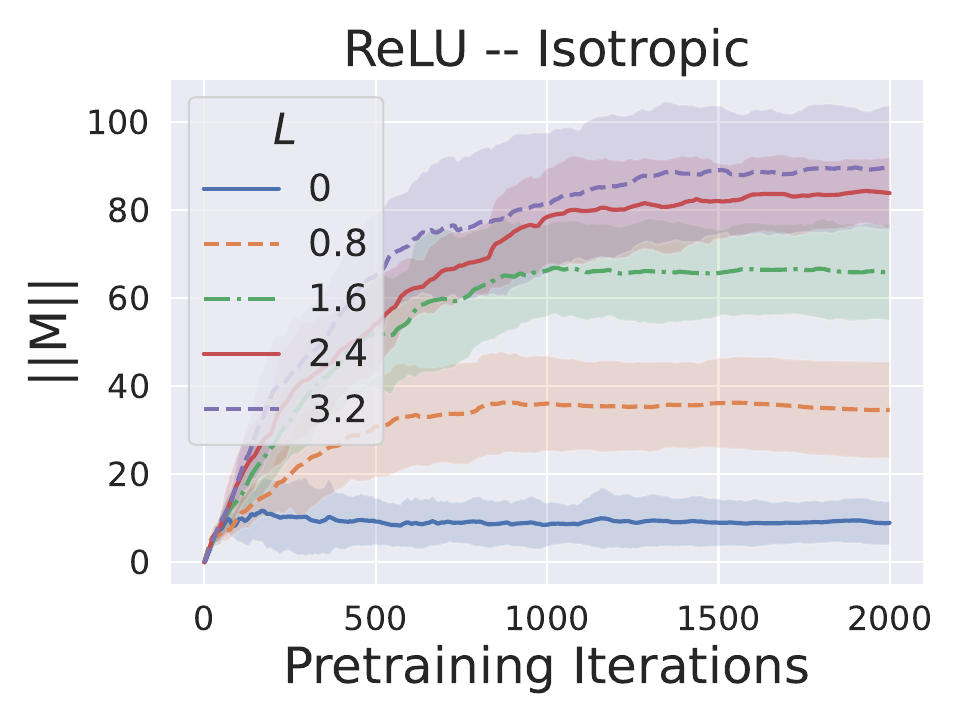}
    \end{subfigure}%
    \begin{subfigure}{.25\textwidth}
        \includegraphics[height=1.22in]{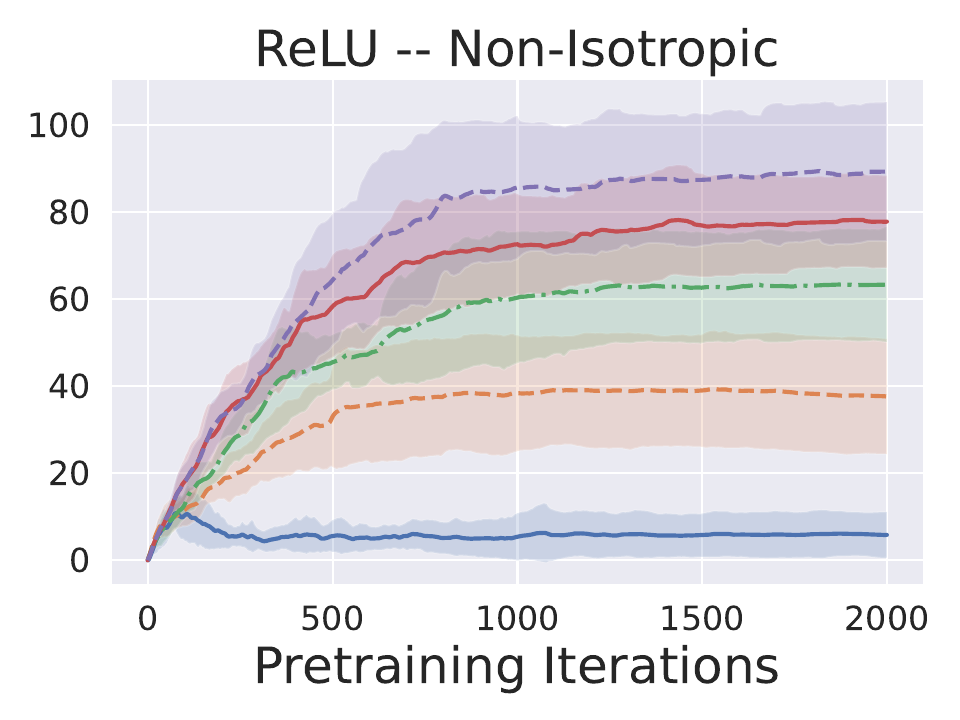}
    \end{subfigure}
    \begin{subfigure}{.25\textwidth}
        \includegraphics[height=1.22in]{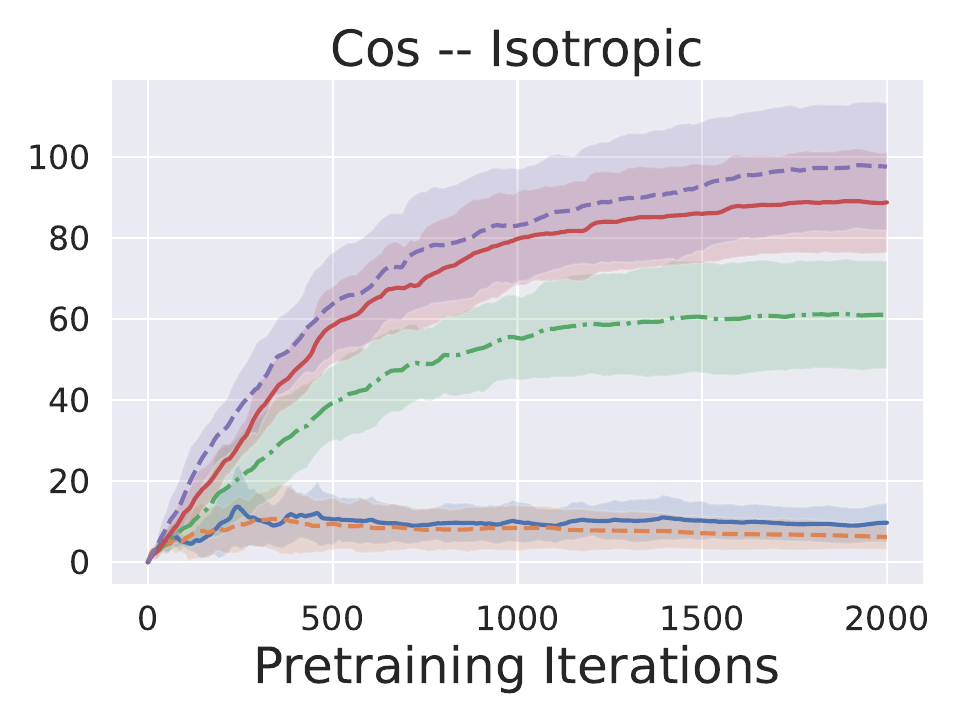}
    \end{subfigure}
    \begin{subfigure}{.25\textwidth}
        \includegraphics[height=1.22in]{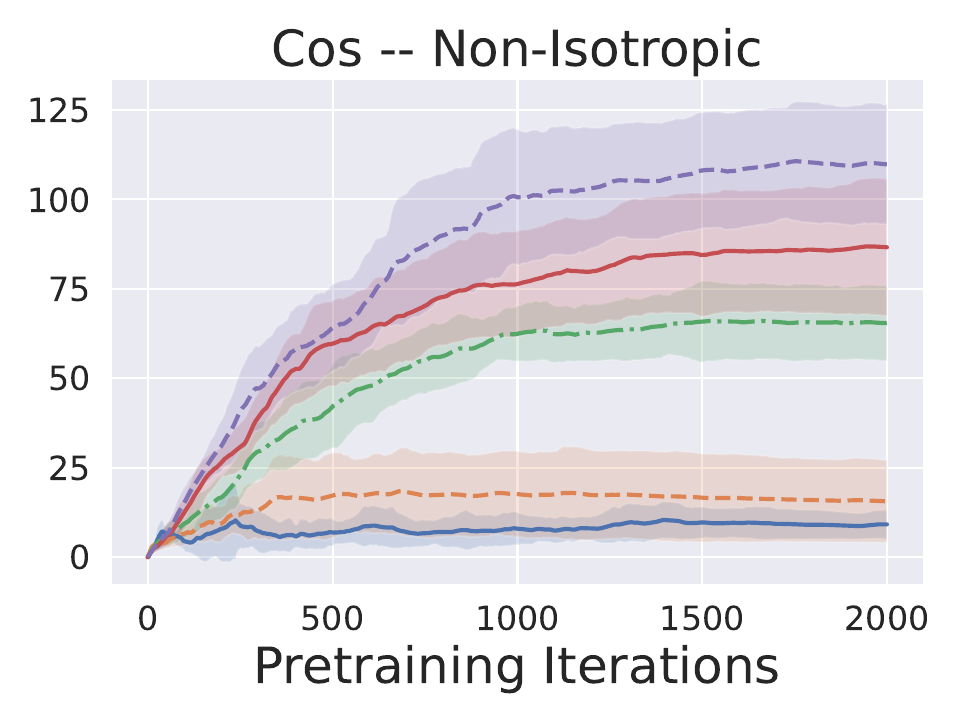}
    \end{subfigure}
    \caption{Spectral norm of $\mathbf{M}$ during pretraining with varying $L$. Each plot shows results for different  task and covariate distributions, with (tasks, covariates) drawn from \textbf{(Left)} ($D(\mathcal{F}_L^+), \mathcal{U}^d$), \textbf{(Middle-Left)} ($D(\mathcal{F}_L^+), \tilde{\mathcal{U}}^d$), \textbf{(Middle-Right)} ($D(\mathcal{F}_L^{\cos}), \mathcal{U}^d$), \textbf{(Right)} ($D(\mathcal{F}_L^{\cos}), \tilde{\mathcal{U}}^d$), where $\tilde{\mathcal{U}}^d$ is a non-isotropic distribution on $\mathbb{S}^{d-1}$ (see Section \ref{sec:general-exps} for its definition).} \label{fig:L}
\end{figure*}

\subsection{Experiments}\label{sec:general-exps}
We next empirically verify our intuitions and results regarding learning the scale of the attention window. 
In all cases we use the Adam optimizer with one task sampled per round, use the noise distribution $ D_\epsilon=\mathcal{N}(0,\sigma^2)$, and run $10$ trials and plot means and standard deviations over these 10 trials. 
Please see Appendix \ref{appendix:sims}  for  full details  as well as additional results.

\para{Ablations over $L$, $\sigma$ and $n$.} We verify whether the relationship between the attention window scale -- i.e. $\|\mathbf{M}\|^{-1}$ -- and $L$, $\sigma$ and $n$ matches our bounds in Theorem \ref{main:general} for the case when tasks are drawn from $D(\mathcal{F}^+_L)$ and the covariates are drawn from $\mathcal{U}^d$, as well as whether these relationships generalize to additional function classes and covariate distributions. We train on tasks drawn from $D(\mathcal{F}^+_L)$ and $D(\mathcal{F}^{\cos}_L)$, where $\mathcal{F}^{\cos}_L:= \{f: f(\bx) = \cos(L \mathbf{w}^\top \bx), \; \mathbf{w}\in \mathbb{S}^{d-1}\}$ and $D(\mathcal{F}^{\cos}_L)$ is induced by sampling $\mathbf{w}\sim \mathcal{U}^d$.  In all cases we  set $d=5$, and use $(L,\sigma,n)=(1,0.01,20)$ if not ablating over these parameters, and vary only one of $\{L,\sigma,n\}$ and no other hyperparameters within each plot.

\para{Attention window scales inversely with $L$.}
Figure \ref{fig:L} shows that $\|\mathbf{M}\|$ indeed increases with $L$ in various settings. In Figure \ref{fig:L}{(Left, Middle-Left)}, tasks are drawn from $D(\mathcal{F}^+_L)$, and in Figure \ref{fig:L}{(Middle-Right, Right)}, they are drawn $D(\mathcal{F}^{\cos}_L)$. In Figure \ref{fig:L}{(Left, Middle-Right)}, each $\bx_i$ is drawn from $\mathcal{U}^d$, whereas in 
Figure \ref{fig:L}{(Middle-Left, Right)}, each $\bx_i$ is drawn from a non-isotropic distribution $\tilde{\mathcal{U}}^d$ on $\mathbb{S}^{d-1}$ defined as follows. First, let $\mathbf{S}_d:= \text{diag}([1,\dots,d])\in \mathbb{R}^{d \times d}$, then $\bx \sim \tilde{\mathcal{U}}^d$ is generated by sampling $\hat{\bx} \sim \mathcal{N}(\mathbf{0}_d, \mathbf{I}_d)$, then computing $ \bx = \frac{\mathbf{S}_d^{1/2} \hat{\bx} }{\| \mathbf{S}_d^{1/2} \hat{\bx} \|} $. 
Note that although larger $L$ implies larger $\|\nabla_{\bx}f(\bx)\|$ on average across $f$, it is not immediately clear that it implies larger $\|\nabla_{\mathbf{M}_{K}}\mathcal{L}(\mathbf{W}_K^\top \mathbf{W}_Q)\|$ nor $\|\nabla_{\mathbf{M}_{Q}}\mathcal{L}(\mathbf{W}_K^\top \mathbf{W}_Q)\|$, so in this sense it is surprising that larger $L$ implies larger pretrained $\mathbf{M}$ (although it is consistent with our intuition and results).

\begin{figure*}[t]
\captionsetup[subfigure]{labelformat=empty}
    \begin{subfigure}{.24\textwidth}
        \includegraphics[height=1.22in]{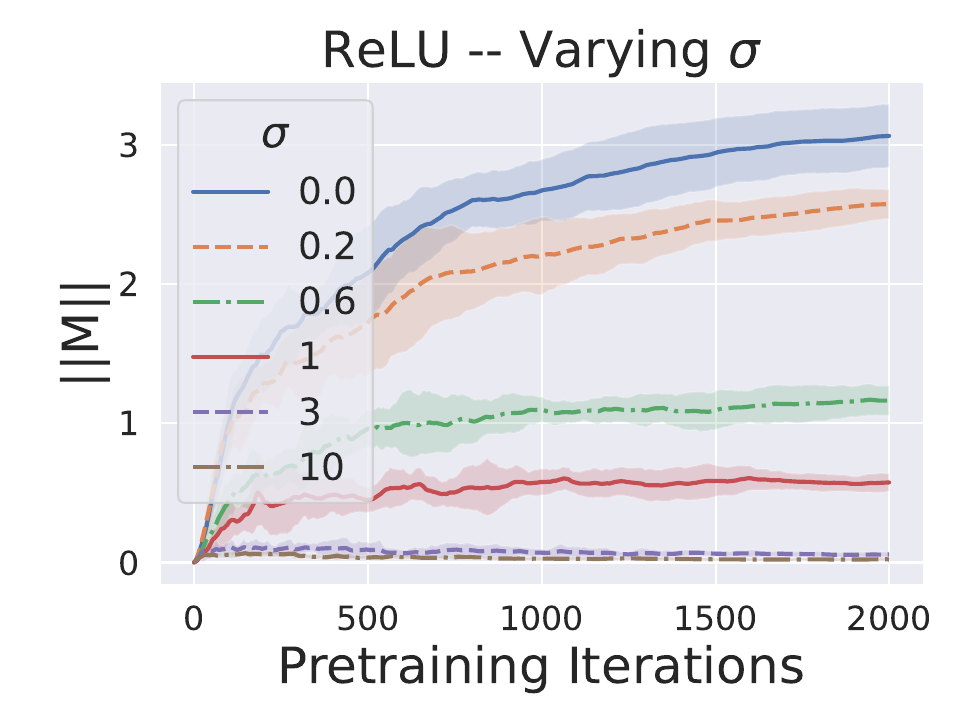}
    \end{subfigure}%
    \begin{subfigure}{.24\textwidth}
        \includegraphics[height=1.22in]{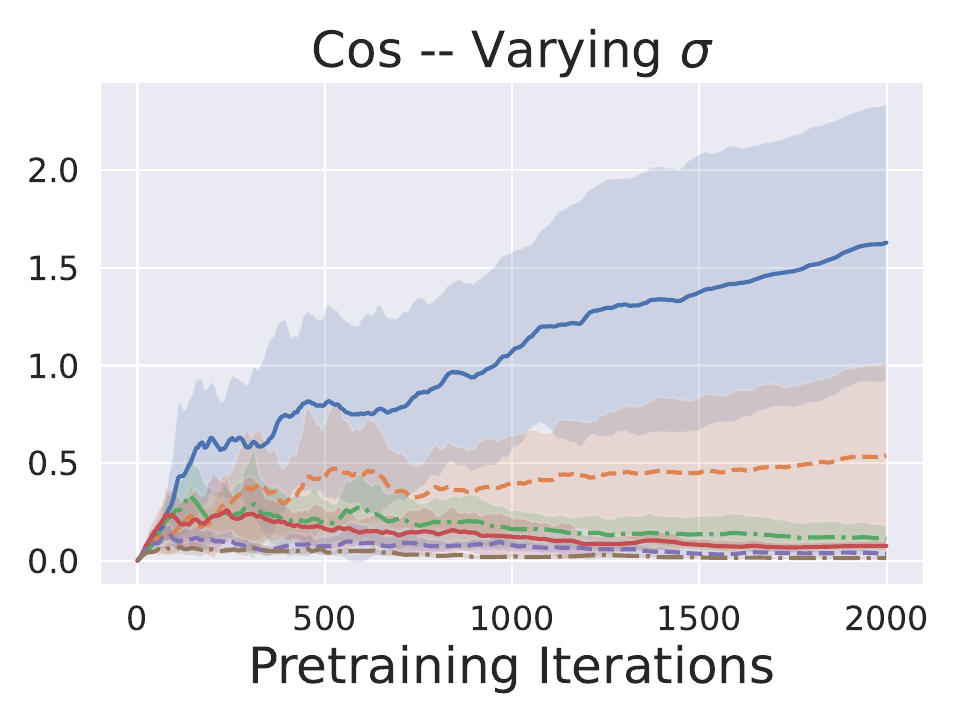}
    \end{subfigure}
    \begin{subfigure}{.24\textwidth}
        \includegraphics[height=1.22in]{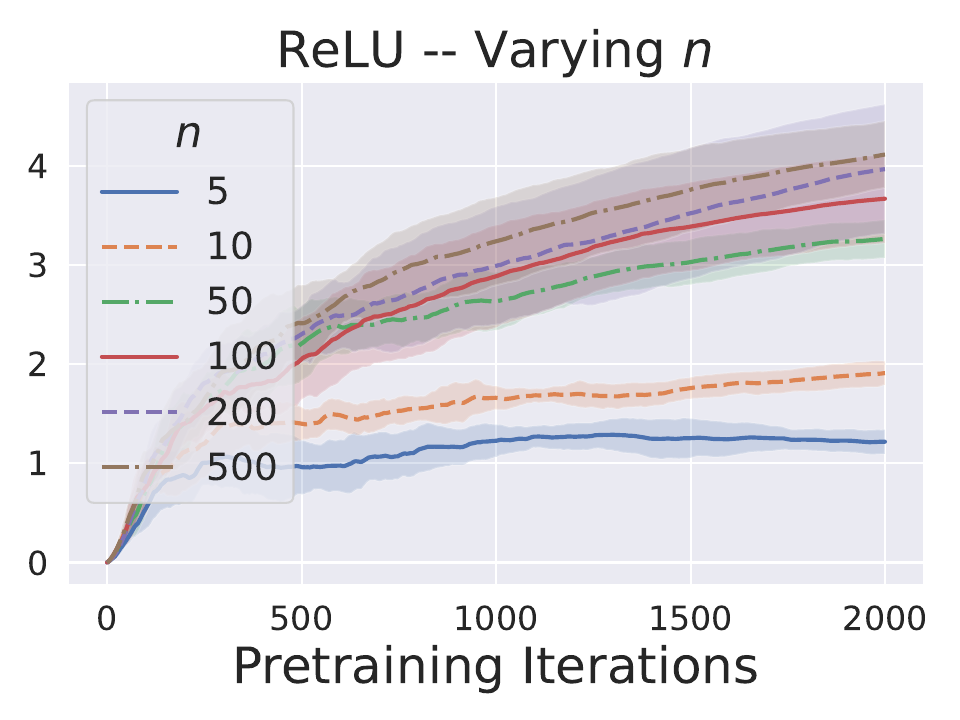}
    \end{subfigure}
    \begin{subfigure}{.24\textwidth}
        \includegraphics[height=1.22in]{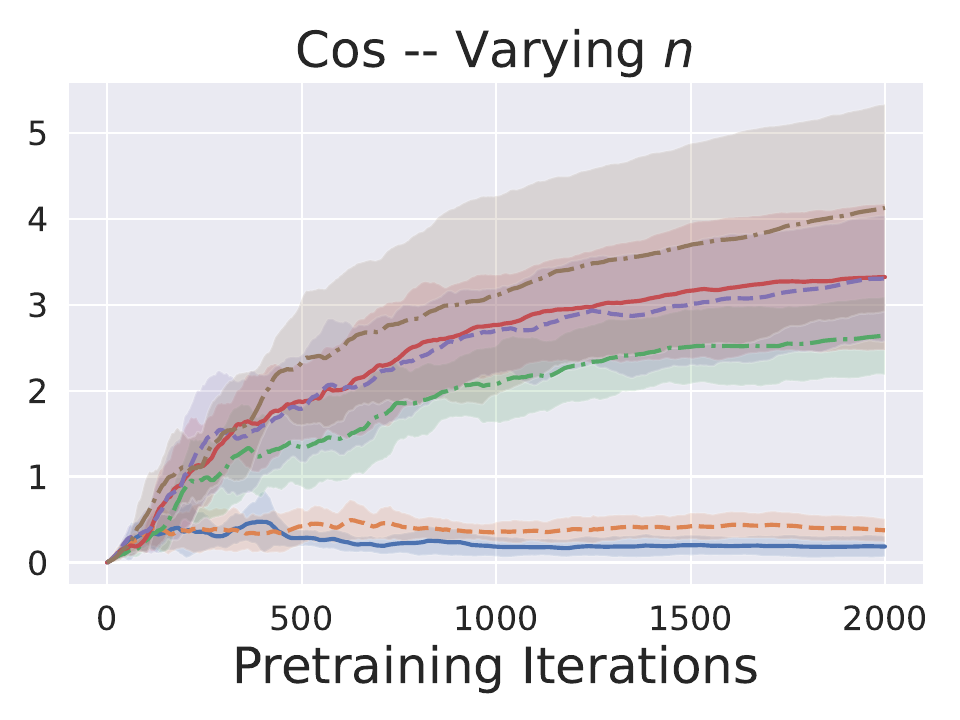}
    \end{subfigure}
    \caption{Spectral norm of $\mathbf{M}$ during pretraining on tasks drawn from $D(\mathcal{F}^+_{1})$ in \textbf{Left, Middle-Right} and $D(\mathcal{F}^{\cos}_{1})$ in \textbf{Middle-Left, Right}. \textbf{Left, Middle-Left} show ablations over the noise standard deviation $\sigma$ and \textbf{Middle-Right, Right} show ablations over the number of context samples $n$.} \label{fig:sigma-n}
\end{figure*}

\para{Attention window scales with $\sigma$, inversely with $n$.} Figure \ref{fig:sigma-n} shows that the dependence of $\|\mathbf{M}\|$ on $\sigma$ and $n$ also aligns with Theorem \ref{main:general}. As expected, $\|\mathbf{M}\|$ increases slower during pretraining for larger $\sigma$ (shown in Figures \ref{fig:sigma-n}{(Left, Middle-Left)}), since more noise encourages more averaging over a larger window to cancel out the noise. 
Likewise, $\|\mathbf{M}\|$ increases faster during pretraining for larger $n$ (shown in Figures \ref{fig:sigma-n}{(Middle-Right, Right)}), since larger $n$ increases the likelihood that there is a highly informative sample in a small attention window.  Here always the covariate distribution is $\mathcal{U}^d$.








\begin{figure}
    \hspace{-3mm}
    \includegraphics[scale = 0.33]{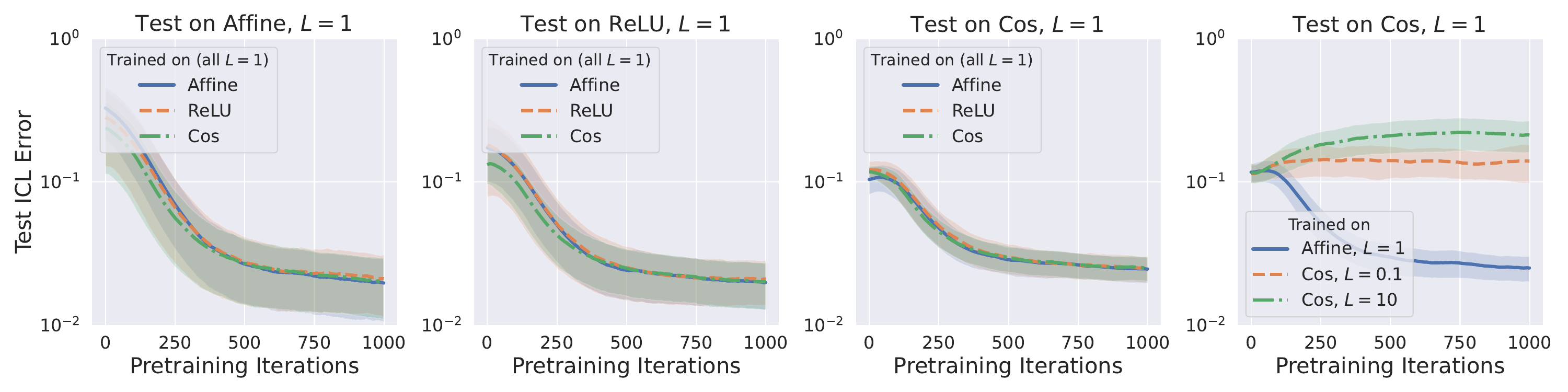}
    \caption{\textbf{Left, Middle-Left, Middle-Right:} The test error for softmax attention as it is trained on the distributions over 1-Lipschitz affine, ReLU, and cosine function  ($D(\mathcal{F}^{\aff}_1)$, $D(\mathcal{F}^{+}_1)$, and $D(\mathcal{F}^{\cos}_1)$, respectively), where the error is evaluated at each pretraining iteration on 5 tasks drawn from the distributions over the 1-Lipschitz (affine, ReLU, cosine) function classes in  (\textbf{Left, Middle-Left, Middle-Right}), respectively. \textbf{Right:} The test error evaluated on tasks drawn from $D(\mathcal{F}^{\cos}_1)$ for three softmax attention trained on tasks drawn from $D(\mathcal{F}^{\aff}_1)$, $D(\mathcal{F}^{\cos}_{0.1})$, and $D(\mathcal{F}^{\cos}_{10})$, respectively. 
    }
    \label{fig:transfer_experiment}
\end{figure}

\para{Learning new tasks in-context.} An implication of our work is that for the function classes we consider,  the \textbf{softmax attention estimator does not adapt to the function class beyond its Lipschitzness}. We have already seen in Figures \ref{fig:L}  and \ref{fig:sigma-n} that the growth of $\|\mathbf{M}\|$ during pretraining is similar across different function classes with the same Lipschitzness, as long as $\sigma$ and $n$ are fixed. Here we verify the conclusion from Theorem \ref{thm:generalization} that for fixed $n$ and $\sigma$, the necessary and sufficient condition for downstream generalization, measured by small ICL error, is that the pretraining and downstream tasks have similar Lipschitzness.
Figure \ref{fig:transfer_experiment} supports this conclusion.
 Here we set $d = 5, n = 200, \sigma = 0.01$ and draw each $\bx_i$ i.i.d. from $\mathcal{U}^d$.
In Figure \ref{fig:transfer_experiment}{(Left, Middle-Left, Middle-Right)}, we train three attention units on tasks drawn from the 1-Lipschitz affine ($D(\F_1^{\aff})$), ReLU ($D(\F_1^+)$), and cosine  ($D(\F_1^{\cos})$) task distributions.
Each plot shows the test ICL error on tasks drawn from a distribution in $\{D(\F_1^{\aff}), D(\F_1^+), D(\F_1^{\cos})$\}. Performance is similar regardless of the pairing of pretraining and test distributions, as the Lipschitzness is the same in all cases, demonstrating that \textbf{pretraining on tasks with appropriate Lipschitzness is sufficient for generalization}.

Moreover, Figure \ref{fig:transfer_experiment}{(Right)} shows that when the Lipschitzness of the pretraining tasks does {\em not} match that of the test tasks, ICL performance degrades sharply, even when the tasks otherwise share similar structure. Here the test task distribution is $D(\mathcal{F}_1^{\cos})$, and the pretraining task distributions are $D(\mathcal{F}_1^{\aff})$, $D(\mathcal{F}_{0.1}^{\text{cos}})$, and $D(\mathcal{F}_{10}^{\text{cos}})$. The only pretraining distribution that leads to downstream generalization is $D(\mathcal{F}_1^{\aff})$  since its Lipschitzness matches that of the downstream tasks, despite the fact that it is not a distribution over cosine functions, unlike the other distributions. Thus, these results lend credence to the idea that in addition to being sufficient, \textbf{pretraining on tasks with appropriate Lipschitzness is necessary for generalization}.

\section{Softmax Attention Learns Direction of Attention Window}\label{sec:LowRank}

Thus far, we have considered distributions over tasks that treat the value of the input data in all directions within the ambient space as equally relevant to its label. However, in practice the ambient dimension of the input data is often much larger than its information content -- the labels may change very little with  many features of the data, meaning that such features are spurious. 
This is generally true of embedded language tokens, whose embedding dimension is typically far larger than the minimum dimension required to store them (logarithmic in the vocabulary size) \citep{brown2020language}.
Motivated by this, we define a notion of ``direction-wise Lipschitzness'' of a function class to allow for analyzing classes that may depend on some directions within the ambient input data space more than others.
\begin{definition}[Direction-wise Lipschitzness of Function Class]
The Lipschitzness of a function class $\mathcal{F}$ with domain $\mathcal{X}\subseteq \mathbb{R}^d$ in the direction $\mathbf{w} \in \mathbb{S}^{d-1}$ is defined as
as the largest Lipschitz constant of all functions in $\mathcal{F}$ over the domain $\mathcal{X}$ projected onto $\mathbf{w}$, that is:
   \begin{align} \Lip_{\mathbf{w}}(\mathcal{F}, \mathcal{X}) := \inf_{L \in \mathbb{R}} \{&L: f(\mathbf{ww}^\top \bx)-f(\mathbf{ww}^\top \bx')\le L| \mathbf{w}^\top \bx- \mathbf{w}^\top \bx'| \; \; \forall \; (\bx,\bx') \in \mathcal{X}^2, f\in \mathcal{F}\}.\nonumber
    \end{align}
\end{definition}
\vspace{-1mm}
Using this definition, we analyze function classes consisting of linear functions 
with parameters lying in a subspace of $\mathbb{R}^d$, as follows:
\begin{definition}[Low-rank Linear Function Class]\label{def:f-low-rank}
The function class $\mathcal{F}^{\text{lin}}_{\mathbf{B}}$ is defined as
$\mathcal{F}^{\text{lin}}_{ \mathbf{B}} := \{ {f}: {f}(\bx) = \mathbf{a}^\top\mathbf{B}^\top \bx, \; \mathbf{a}\in \mathbb{R}^{k}\}$, 
and $D(\F^{\text{lin}}_{\mathbf{B}})$ is induced by drawing $\mathbf{a}\sim \mathcal{U}^k$.
\end{definition}
\vspace{-1mm}
where $\mathbf{B}\in \mathbb{O}^{d\times k}$ is a column-wise orthonormal matrix. 
Since our motivation is settings with low-dimensional structure,
we can think of $k\ll d$.

Let $\mathbf{B}_{\perp}\in \mathbb{O}^{d \times (d-k)}$ denote a matrix whose columns form an orthonormal basis for the subspace perpendicular to  $\col(\mathbf{B})$, and note that the Lipschitzness of $\mathcal{F}_{\mathbf{B}}^{\text{lin}}$ in the direction $\mathbf{w}$ is $L$ if $\mathbf{w}\in \col(\mathbf{B})$ and 0 if $\mathbf{w}\in\col(\mathbf{B}_{\perp})$. 
Observe that any function in $\F^{\text{lin}}_{\mathbf{B}}$ can be learned by projecting the input onto the non-zero Lipschitzness directions, i.e. $\col(\mathbf{B})$, then solving a $k\ll d$-dimensional regression.
To formally study whether softmax attention recovers $\col(\mathbf{B})$, we assume the  covariates are generated as follows.
\begin{assumption}[Covariate Distribution]\label{assump:low-rank}
 There are  fixed constants $c_{\mathbf{u}}\neq 0$ and $-\infty < c_{\mathbf{v}}  < \infty$ s.t. sampling $\bx_i \sim D_{\bx}$ is equivalent to
    $\bx_i = c_{\mathbf{u}} \mathbf{B} \mathbf{u}_i + c_{\mathbf{v}} \mathbf{B}_\perp \mathbf{v}_i$ where $ \mathbf{u}_i \sim \mathcal{U}^k$ and $ \mathbf{v}_i\sim \mathcal{U}^{d-k}$. 
\end{assumption}
Assumption \ref{assump:low-rank} entails that the data is generated by latent variables $\mathbf{u}_i$ and $\mathbf{v}_i$ that determine label-relevant and spurious features. This may be interpreted as a continuous analogue of dictionary learning models studied in feature learning works \citep{wen2021toward,shi2022theoretical}.
We require no finite upper bound  on $|c_{\mathbf{v}}|$ nor  $\frac{1}{|c_{\mathbf{u}}|}$, so the data may be dominated by spurious features.

\begin{theorem}\label{thm:lin_low_rank}
Let $\mathbf{B}\in \mathbb{O}^{d\times k}$ and consider the pretraining population loss \eqref{eq:loss} with $f \sim D(\mathcal{F}^{\text{lin}}_{\mathbf{B}}) $. Suppose 
Assumption \ref{assump:low-rank} holds, as well as at least one of two cases: (Case 1) $\sigma=0$, or (Case 2) $n=2$.
    Then among all $\mathbf{M} \in \mathcal{M}:= \{\mathbf{M}\in \mathbb{R}^{d\times d}: \mathbf{M}=\mathbf{M}^\top, \|\mathbf{B}^\top \mathbf{M}\mathbf{B}\| \leq \frac{1}{c_{\mathbf{u}}^2}\}$,
    the minimizer of the pretraining  population  loss \eqref{eq:loss}
    is $\mathbf{M}^* = c\mathbf{B}\mathbf{B}^\top$ for some $c \in (0,\frac{1}{c_{\mathbf{u}}^2}]$. 
\end{theorem}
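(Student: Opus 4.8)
The plan is to use the orthogonal splitting $\mathbb{R}^d=\col(\mathbf{B})\oplus\col(\mathbf{B}_\perp)$ to write $\mathbf{M}$ in block form, argue that every block touching $\col(\mathbf{B}_\perp)$ vanishes at any minimizer, and then recognize what remains as a rotationally‑invariant linear ICL problem in dimension $k$ to which the reasoning behind Lemma \ref{lem:assumptionssatisfyidentity} applies. Concretely, set $\mathbf{P}:=\mathbf{B}^\top\mathbf{M}\mathbf{B}$, $\mathbf{Q}:=\mathbf{B}^\top\mathbf{M}\mathbf{B}_\perp$ and $\mathbf{R}:=\mathbf{B}_\perp^\top\mathbf{M}\mathbf{B}_\perp$; since $\mathbf{M}=\mathbf{M}^\top$ these three blocks determine $\mathbf{M}$, and the constraint set $\mathcal{M}$ asks only $\|\mathbf{P}\|_2\le 1/c_\mathbf{u}^2$ with $\mathbf{Q},\mathbf{R}$ free. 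Substituting $\bx_i=c_\mathbf{u}\mathbf{B}\mathbf{u}_i+c_\mathbf{v}\mathbf{B}_\perp\mathbf{v}_i$ from Assumption \ref{assump:low-rank},
\[
\bx_i^\top \mathbf{M}\bx_{n+1} = c_\mathbf{u}^2\,\mathbf{u}_i^\top \mathbf{P}\mathbf{u}_{n+1} + c_\mathbf{u}c_\mathbf{v}\big(\mathbf{u}_i^\top \mathbf{Q}\mathbf{v}_{n+1} + \mathbf{u}_{n+1}^\top \mathbf{Q}\mathbf{v}_i\big) + c_\mathbf{v}^2\,\mathbf{v}_i^\top \mathbf{R}\mathbf{v}_{n+1},
\]
while $f(\bx_i)=c_\mathbf{u}\mathbf{a}^\top\mathbf{u}_i$ depends only on the signal latent $\mathbf{u}_i$; hence $\uL$ in \eqref{eq:loss} is a function of $(\mathbf{P},\mathbf{Q},\mathbf{R})$ alone.

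The crux is showing that every minimizer has $\mathbf{Q}=\mathbf{0}$ and $\mathbf{R}=\mathbf{0}$. I would fix $\mathbf{P}$ and prove that $t\mapsto\uL(\mathbf{P},t\mathbf{Q},t\mathbf{R})$ is nondecreasing on $t\ge 0$ for every fixed $(\mathbf{Q},\mathbf{R})$, so that $(\mathbf{0},\mathbf{0})$ is optimal; together with continuity of $\uL$ over the compact set $\{\mathbf{P}:\|\mathbf{P}\|_2\le 1/c_\mathbf{u}^2\}$ this also gives existence of a minimizer. To prove the monotonicity, differentiate in $t$ under the expectation and condition on $(\mathbf{a},\{\mathbf{u}_i\},\{\epsilon_i\})$, leaving an integral over the spurious latents $\{\mathbf{v}_i\}$. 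The key device, following the outline above Lemma \ref{lem:low-rank-bias}, is that $f(\bx_i)$ is invariant under sign flips of the individual $\mathbf{v}_i$ whereas the logits are not: pairing a configuration of the two ``active'' tokens at points $\mathbf{p},\mathbf{q}\in\mathbb{S}^{d-k-1}$ with the three configurations obtained by independently negating $\mathbf{p}$ and $\mathbf{q}$ groups the integrand into quadruples, and although each of the four terms can have either sign, their sum has a fixed sign for every $(\mathbf{p},\mathbf{q})$, which forces the derivative to be $\ge 0$ (and strictly positive whenever $(\mathbf{Q},\mathbf{R})\neq(\mathbf{0},\mathbf{0})$). In Case 1 ($\sigma=0$) only the noiseless (bias) part of the derivative survives, handled by Lemma \ref{lem:low-rank-bias}; in Case 2 ($n=2$) the softmax prediction is the single logistic weight $\frac{e^{\bx_1^\top\mathbf{M}\bx_3}}{e^{\bx_1^\top\mathbf{M}\bx_3}+e^{\bx_2^\top\mathbf{M}\bx_3}}$ applied to each label, which makes the noise contribution tractable and is handled by Lemma \ref{lem:low-rank-noise}. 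I expect identifying the correct grouping of reflected configurations and verifying the sign of the four‑term sum to be the principal difficulty.

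On the slice $\mathbf{Q}=\mathbf{R}=\mathbf{0}$ we have $\mathbf{M}=\mathbf{B}\mathbf{P}\mathbf{B}^\top$, and by the reduction above $\uL$ equals the pretraining loss \eqref{eq:loss} of a $k$‑dimensional linear in‑context regression with covariates $c_\mathbf{u}\mathbf{u}_i$ ($\mathbf{u}_i\sim\mathcal{U}^k$), tasks $\mathbf{a}\sim\mathcal{U}^k$, and logit matrix $c_\mathbf{u}^2\mathbf{P}$. This reduced problem is invariant under $\mathbf{P}\mapsto\mathbf{O}\mathbf{P}\mathbf{O}^\top$ for all $\mathbf{O}\in\mathbb{O}^{k\times k}$ and lies within the rotationally‑invariant linear setting of Lemma \ref{lem:assumptionssatisfyidentity} (via the scaled‑covariance extension noted in the footnote to Theorem \ref{main:general}), whose argument shows that the loss over $\{\mathbf{P}:\|\mathbf{P}\|_2\le 1/c_\mathbf{u}^2\}$ is minimized by some $\mathbf{P}=c\mathbf{I}_k$; thus $\mathbf{M}^*=c\mathbf{B}\mathbf{B}^\top$.

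It remains to locate $c$. Feasibility gives $|c|=\|\mathbf{P}\|_2\le 1/c_\mathbf{u}^2$. To rule out $c\le 0$, set $g(c):=\uL(c\mathbf{B}\mathbf{B}^\top)$ and compute $g'(0)$: at $c=0$ the prediction is the plain average of the noisy labels, and the noise and signal‑noise cross terms drop out of $g'(0)$ by independence and mean‑zero‑ness of $\epsilon$, leaving the derivative of the pure bias, which is strictly negative because infinitesimally up‑weighting the labels of tokens with larger $\mathbf{u}_i^\top\mathbf{u}_{n+1}$ pulls the (linear) prediction toward $f(\bx_{n+1})$. Hence $c=0$ and $c<0$ are suboptimal, so the minimizer lies in $(0,1/c_\mathbf{u}^2]$, which is the assertion of Theorem \ref{thm:lin_low_rank}.
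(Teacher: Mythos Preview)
Your decomposition into blocks and the appeal to Lemma \ref{lem:assumptionssatisfyidentity} on the slice are in the right spirit, but the heart of the argument---eliminating the spurious block---is misidentified relative to what Lemmas \ref{lem:low-rank-bias} and \ref{lem:low-rank-noise} actually do, and the order of the two reductions is not interchangeable.

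The paper does \emph{not} condition on $(\mathbf{a},\{\mathbf{u}_i\},\{\epsilon_i\})$ and integrate over the spurious $\{\mathbf{v}_i\}$, nor does it sign-flip vectors on $\mathbb{S}^{d-k-1}$. It does the opposite: it \emph{freezes} the spurious part by setting $\alpha_i:=e^{c_{\mathbf{v}}^2\mathbf{v}_i^\top\mathbf{Q}\mathbf{v}_{n+1}}$ (only the $\mathbf{B}_\perp$--$\mathbf{B}_\perp$ block is treated; the cross block is dropped as ``WLOG'') and shows that $\partial H/\partial\alpha_1>0$ where $\alpha_1=\max_i\alpha_i$. This derivative is an expectation over the \emph{signal} latents $\{\mathbf{u}_i\}$; the pairing that forces positivity runs over the scalars $a=\mathbf{u}_1^\top\mathbf{u}_{n+1}$ and $b=\mathbf{u}_{i'}^\top\mathbf{u}_{n+1}$ in $[-1,1]$, and the relevant symmetry is the \emph{swap} $(a,b)\leftrightarrow(b,a)$ (augmented in the noise lemma by the diagonal terms $(a,a),(b,b)$), not sign flips of $\mathbf{v}$'s. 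Your proposed map $t\mapsto\uL(\mathbf{P},t\mathbf{Q},t\mathbf{R})$ is a different monotonicity, and the $\mathbf{v}$-flip grouping you describe does not yield a sign-definite integrand; for instance, flipping $\mathbf{v}_{n+1}$ sends every $\alpha_i$ to its reciprocal, equivalent to $t\mapsto -t$, so the loss is even in $t$ and the first derivative at $t=0$ vanishes without any control for $t>0$.

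Relatedly, the paper \emph{first} establishes, for arbitrary $\mathbf{B}_\perp$-block, that the optimal $\mathbf{P}$ is $c\mathbf{I}_k$ with $c>0$ (Lemma \ref{lem:scalar-gen-low-rank}, using Lemma \ref{lem:nonzero} for strict positivity), and only then invokes Lemmas \ref{lem:low-rank-bias} and \ref{lem:low-rank-noise}. Those lemmas are stated for a \emph{scalar} $c$ with $c\le 2$ (bias) or $c\le 1$ (noise); this is precisely where the constraint $\|\mathbf{B}^\top\mathbf{M}\mathbf{B}\|\le 1/c_{\mathbf{u}}^2$ is consumed. Fixing an arbitrary symmetric $\mathbf{P}$ and trying to remove the spurious blocks first, as you propose, does not put you in a position to invoke those lemmas as written.
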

Theorem \ref{thm:lin_low_rank} shows that softmax attention can achieve dimensionality reduction during ICL on any downstream task that has non-zero Lipschitzness only in $\col(\mathbf{B})$ by removing the zero-Lipschitzness features while pretraining on $\mathcal{F}_{\mathbf{B}}^{\text{lin}}$. Removing the zero-Lipschitzness features entails that the nearest neighbor prediction of pretrained softmax attention uses a neighborhood, i.e. attention window, defined strictly by projections of the input onto $\col(\mathbf{B})$.   To our knowledge, this is the {\em first result showing that softmax attention pretrained on ICL tasks recovers a shared low-dimensional structure among the tasks}.  Please see Appendix \ref{appendix:sims} for empirical results verifying that softmax attention indeed recovers low-dimensional structure, even for tasks consisting of (nonlinear) generalized linear models.

\subsection{Experiments} \label{sec:lowrank-exps}

Due to our results in Section \ref{sec:general} showing that softmax attention can learn an appropriate attention window scale when pretrained on nonlinear tasks, we hypothesize 
that it can also learn the appropriate {\em directions} during pretraining on nonlinear tasks.
To test this, we consider tasks drawn from low-rank versions of affine, quadratic and cosine function classes, in particular:
$\mathcal{F}^{\text{aff}}_{\mathbf{B}} := \{ {f}: {f}(\bx) = \mathbf{a}^\top \mathbf{B}^\top \bx + 2, \mathbf{a}\in \mathbb{S}^{k-1} \}$, $\mathcal{F}^{2}_{\mathbf{B}} := \{ {f}: {f}(\bx) = (\mathbf{a}^\top \mathbf{B}^\top \bx)^2, \mathbf{a}\in \mathbb{S}^{k-1} \}$ and  $\mathcal{F}^{\cos}_{\mathbf{B}} := \{{f}: {f}(\bx) = \cos(4\mathbf{a}^\top \mathbf{B}^\top \bx), \mathbf{a}\in \mathbb{S}^{k-1} \}$.
Each task distribution $D(\mathcal{F}^{\text{aff}}_{\mathbf{B}}),  D(\mathcal{F}^{2}_{\mathbf{B}} ), D(\mathcal{F}^{\cos}_{\mathbf{B}})$ is induced by drawing $\mathbf{a}\sim \mathcal{U}^k$.
We train $\mathbf{M}_K$ and $\mathbf{M}_Q$ with Adam with learning rate tuned separately for softmax and linear attention. We set $d=10$, $k=2$, $n=50$, and $\sigma=0.01$. 
We draw
$\{\bx_i\}_{i=1}^{n+1}
$ i.i.d. from a non-uniform distribution on $\mathbb{S}^{d-1}$ for each task, and draw one task per training iteration.
We draw $\mathbf{B}$ randomly at the start of each trial, and repeat each trial 5 times and plots means and standard deviations over the 5 trials.
We capture the extent to which the learned $\mathbf{M}=\mathbf{M}_K^\top\mathbf{M}_Q$ recovers $\col(\mathbf{B})$ via the metric $\rho(\mathbf{M},\mathbf{B}):=\frac{\|\mathbf{B}_\perp^\top \mathbf{MB}_\perp \|_2}{\sigma_{\min}(\mathbf{B}^\top \mathbf{M B})}$,  
where $\sigma_{\min}(\mathbf{A})$ is the minimum singular value of $\mathbf{A}$. 
For test error, we compute the average squared error on 500 random tasks drawn from the same distribution as the (pre)training tasks.
Please see Appendix \ref{appendix:sims} for more details.

\begin{figure*}[t]
\centering
\includegraphics[width=0.99\textwidth]{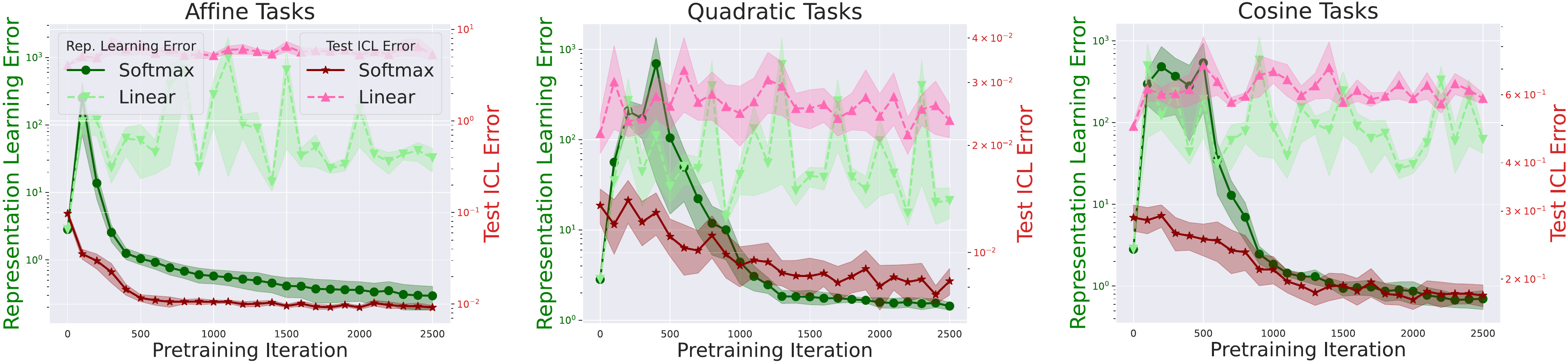}
    \caption{Representation learning error $(\rho(\mathbf{M},\mathbf{B}))$ and test ICL error (mean squared error) during pretraining softmax and linear attention on tasks  from \textbf{Left:} $\mathcal{F}^{\text{aff}}_{\mathbf{B}}$, \textbf{Center:} $\mathcal{F}^{\text{2}}_{\mathbf{B}}$ , and \textbf{Right:} $\mathcal{F}^{\text{cos}}_{\mathbf{B}}$. } \label{fig:low-rank}
\end{figure*}

\noindent\textbf{Results.} Figure \ref{fig:low-rank} shows that softmax attention recovers the low-rank structure when tasks are drawn from each of the three function classes, which leads to test error improving with the quality of the learned subspace. In contrast, linear attention does not learn any meaningful structure in these cases.

\vspace{-2mm}
\section{Conclusion} \label{sec:conclusion}
\vspace{-1mm}
We have presented, to our knowledge, the first results showing that softmax attention learns shared structure among pretraining tasks that facilitates downstream ICL.
Moreover, we have provided empirical evidence suggesting that our conclusions about what softmax attention learns during pretraining generalize to function classes beyond those considered in our analysis. 

\textbf{Limitations and Future Work.} We list the limitations of our analysis here, and hope that future work will improve on these aspects. \textbf{1.} The model we use in this work is an attempt to understand a phenomenon that emerges in LLMs, which is that the output of the model can be `primed' with some examples provided in the context that resembles few-shot learning, even though they are only trained on next token prediction. Establishing a mathematical framework for this remains an interesting avenue for research. \textbf{2.} Our work only considers the output of a single layer of attention. Studying multiple layers is difficult because the effective $\{\bx_i\}$ themselves change from layer to layer.


\section*{Acknowledgements}
L.C., A.P., A.M., S.Sa. and S.Sh. are supported in part by NSF Grants 2127697, 2019844, 2107037, and 2112471, ARO Grant W911NF2110226, ONR Grant N00014-19-1-2566, the Machine Learning Lab (MLL) at UT Austin, and the Wireless Networking and Communications Group (WNCG) Industrial Affiliates Program.

\bibliography{refs}

\begin{thebibliography}{67}
\providecommand{\natexlab}[1]{#1}
\providecommand{\url}[1]{\texttt{#1}}
\expandafter\ifx\csname urlstyle\endcsname\relax
  \providecommand{\doi}[1]{doi: #1}\else
  \providecommand{\doi}{doi: \begingroup \urlstyle{rm}\Url}\fi

\bibitem[Ahn et~al.(2023)Ahn, Cheng, Daneshmand, and Sra]{ahn2023transformers}
Ahn, K., Cheng, X., Daneshmand, H., and Sra, S.
\newblock Transformers learn to implement preconditioned gradient descent for in-context learning, 2023.

\bibitem[Aky{\"u}rek et~al.(2022)Aky{\"u}rek, Schuurmans, Andreas, Ma, and Zhou]{akyurek2022learning}
Aky{\"u}rek, E., Schuurmans, D., Andreas, J., Ma, T., and Zhou, D.
\newblock What learning algorithm is in-context learning? investigations with linear models.
\newblock \emph{arXiv preprint arXiv:2211.15661}, 2022.

\bibitem[Bai et~al.(2023)Bai, Chen, Wang, Xiong, and Mei]{bai2023transformers}
Bai, Y., Chen, F., Wang, H., Xiong, C., and Mei, S.
\newblock Transformers as statisticians: Provable in-context learning with in-context algorithm selection, 2023.

\bibitem[Bhattamishra et~al.(2020)Bhattamishra, Ahuja, and Goyal]{bhattamishra2020ability}
Bhattamishra, S., Ahuja, K., and Goyal, N.
\newblock On the ability and limitations of transformers to recognize formal languages.
\newblock \emph{arXiv preprint arXiv:2009.11264}, 2020.

\bibitem[Boix-Adsera et~al.(2023)Boix-Adsera, Littwin, Abbe, Bengio, and Susskind]{boixadsera2023transformers}
Boix-Adsera, E., Littwin, E., Abbe, E., Bengio, S., and Susskind, J.
\newblock Transformers learn through gradual rank increase, 2023.

\bibitem[Brown et~al.(2020)Brown, Mann, Ryder, Subbiah, Kaplan, Dhariwal, Neelakantan, Shyam, Sastry, Askell, et~al.]{brown2020language}
Brown, T., Mann, B., Ryder, N., Subbiah, M., Kaplan, J.~D., Dhariwal, P., Neelakantan, A., Shyam, P., Sastry, G., Askell, A., et~al.
\newblock Language models are few-shot learners.
\newblock \emph{Advances in neural information processing systems}, 33:\penalty0 1877--1901, 2020.

\bibitem[Chen et~al.(2023)Chen, Tao, Tonin, and Suykens]{chen2023primal}
Chen, Y., Tao, Q., Tonin, F., and Suykens, J.~A.
\newblock Primal-attention: Self-attention through asymmetric kernel svd in primal representation.
\newblock \emph{arXiv preprint arXiv:2305.19798}, 2023.

\bibitem[Cheng et~al.(2023)Cheng, Chen, and Sra]{cheng2023transformers}
Cheng, X., Chen, Y., and Sra, S.
\newblock Transformers implement functional gradient descent to learn non-linear functions in context.
\newblock \emph{arXiv preprint arXiv:2312.06528}, 2023.

\bibitem[Chowdhery et~al.(2023)Chowdhery, Narang, Devlin, Bosma, Mishra, Roberts, Barham, Chung, Sutton, Gehrmann, et~al.]{chowdhery2023palm}
Chowdhery, A., Narang, S., Devlin, J., Bosma, M., Mishra, G., Roberts, A., Barham, P., Chung, H.~W., Sutton, C., Gehrmann, S., et~al.
\newblock Palm: Scaling language modeling with pathways.
\newblock \emph{Journal of Machine Learning Research}, 24\penalty0 (240):\penalty0 1--113, 2023.

\bibitem[Dai et~al.(2022)Dai, Sun, Dong, Hao, Sui, and Wei]{dai2022can}
Dai, D., Sun, Y., Dong, L., Hao, Y., Sui, Z., and Wei, F.
\newblock Why can gpt learn in-context? language models secretly perform gradient descent as meta optimizers.
\newblock \emph{arXiv preprint arXiv:2212.10559}, 2022.

\bibitem[Deng et~al.(2023{\natexlab{a}})Deng, Li, and Song]{deng2023attention}
Deng, Y., Li, Z., and Song, Z.
\newblock Attention scheme inspired softmax regression, 2023{\natexlab{a}}.

\bibitem[Deng et~al.(2023{\natexlab{b}})Deng, Song, and Zhou]{deng2023superiority}
Deng, Y., Song, Z., and Zhou, T.
\newblock Superiority of softmax: Unveiling the performance edge over linear attention.
\newblock \emph{arXiv preprint arXiv:2310.11685}, 2023{\natexlab{b}}.

\bibitem[Edelman et~al.(2022)Edelman, Goel, Kakade, and Zhang]{edelman2022inductive}
Edelman, B.~L., Goel, S., Kakade, S., and Zhang, C.
\newblock Inductive biases and variable creation in self-attention mechanisms.
\newblock In \emph{International Conference on Machine Learning}, pp.\  5793--5831. PMLR, 2022.

\bibitem[Fu et~al.(2023{\natexlab{a}})Fu, Chen, Jia, and Sharan]{fu2023transformers}
Fu, D., Chen, T.-Q., Jia, R., and Sharan, V.
\newblock Transformers learn higher-order optimization methods for in-context learning: A study with linear models.
\newblock \emph{arXiv preprint arXiv:2310.17086}, 2023{\natexlab{a}}.

\bibitem[Fu et~al.(2023{\natexlab{b}})Fu, Guo, Bai, and Mei]{fu2023can}
Fu, H., Guo, T., Bai, Y., and Mei, S.
\newblock What can a single attention layer learn? a study through the random features lens.
\newblock \emph{arXiv preprint arXiv:2307.11353}, 2023{\natexlab{b}}.

\bibitem[Garg et~al.(2022)Garg, Tsipras, Liang, and Valiant]{garg2022can}
Garg, S., Tsipras, D., Liang, P.~S., and Valiant, G.
\newblock What can transformers learn in-context? a case study of simple function classes.
\newblock \emph{Advances in Neural Information Processing Systems}, 35:\penalty0 30583--30598, 2022.

\bibitem[Giannou et~al.(2023)Giannou, Rajput, Sohn, Lee, Lee, and Papailiopoulos]{giannou2023looped}
Giannou, A., Rajput, S., Sohn, J.-y., Lee, K., Lee, J.~D., and Papailiopoulos, D.
\newblock Looped transformers as programmable computers.
\newblock \emph{arXiv preprint arXiv:2301.13196}, 2023.

\bibitem[Guo et~al.(2023)Guo, Hu, Mei, Wang, Xiong, Savarese, and Bai]{guo2023transformers}
Guo, T., Hu, W., Mei, S., Wang, H., Xiong, C., Savarese, S., and Bai, Y.
\newblock How do transformers learn in-context beyond simple functions? a case study on learning with representations.
\newblock \emph{arXiv preprint arXiv:2310.10616}, 2023.

\bibitem[Han et~al.(2023)Han, Wang, Zhao, and Ji]{han2023context}
Han, C., Wang, Z., Zhao, H., and Ji, H.
\newblock In-context learning of large language models explained as kernel regression.
\newblock \emph{arXiv preprint arXiv:2305.12766}, 2023.

\bibitem[Han et~al.(2022)Han, Ren, Nguyen, Nguyen, Ghosh, and Ho]{han2022designing}
Han, X., Ren, T., Nguyen, T.~M., Nguyen, K., Ghosh, J., and Ho, N.
\newblock Designing robust transformers using robust kernel density estimation.
\newblock \emph{arXiv preprint arXiv:2210.05794}, 2022.

\bibitem[Hardy et~al.(1952)Hardy, Littlewood, and P{\'o}lya]{hardy1952inequalities}
Hardy, G., Littlewood, J., and P{\'o}lya, G.
\newblock \emph{Inequalities}.
\newblock Cambridge Mathematical Library. Cambridge University Press, 1952.
\newblock ISBN 9780521358804.
\newblock URL \url{https://books.google.com/books?id=t1RCSP8YKt8C}.

\bibitem[Huang et~al.(2023)Huang, Cheng, and Liang]{huang2023context}
Huang, Y., Cheng, Y., and Liang, Y.
\newblock In-context convergence of transformers.
\newblock \emph{arXiv preprint arXiv:2310.05249}, 2023.

\bibitem[Jelassi et~al.(2022)Jelassi, Sander, and Li]{jelassi2022vision}
Jelassi, S., Sander, M.~E., and Li, Y.
\newblock Vision transformers provably learn spatial structure, 2022.

\bibitem[Kim et~al.(2020)Kim, Papamakarios, and Mnih]{kim2020lipschitz}
Kim, H., Papamakarios, G., and Mnih, A.
\newblock The lipschitz constant of self-attention, 2020.

\bibitem[Knaeble(2015)]{gamma}
Knaeble, B.
\newblock Variations on the projective central limit theorem.
\newblock \emph{https://arxiv.org/pdf/0904.1048.pdf}, 2015.

\bibitem[Li et~al.(2023{\natexlab{a}})Li, Wang, Liu, and Chen]{li2023theoretical}
Li, H., Wang, M., Liu, S., and Chen, P.-Y.
\newblock A theoretical understanding of shallow vision transformers: Learning, generalization, and sample complexity.
\newblock \emph{arXiv preprint arXiv:2302.06015}, 2023{\natexlab{a}}.

\bibitem[Li et~al.(2023{\natexlab{b}})Li, Ildiz, Papailiopoulos, and Oymak]{li2023transformers-samet}
Li, Y., Ildiz, M.~E., Papailiopoulos, D., and Oymak, S.
\newblock Transformers as algorithms: Generalization and stability in in-context learning.
\newblock In \emph{International Conference on Machine Learning}, pp.\  19565--19594. PMLR, 2023{\natexlab{b}}.

\bibitem[Li et~al.(2023{\natexlab{c}})Li, Li, and Risteski]{li2023transformers}
Li, Y., Li, Y., and Risteski, A.
\newblock How do transformers learn topic structure: Towards a mechanistic understanding.
\newblock \emph{arXiv preprint arXiv:2303.04245}, 2023{\natexlab{c}}.

\bibitem[Lieb \& Loss(2001)Lieb and Loss]{lieb2001analysis}
Lieb, E.~H. and Loss, M.
\newblock \emph{Analysis}, volume~14.
\newblock American Mathematical Soc., 2001.

\bibitem[Likhosherstov et~al.(2021)Likhosherstov, Choromanski, and Weller]{likhosherstov2021expressive}
Likhosherstov, V., Choromanski, K., and Weller, A.
\newblock On the expressive power of self-attention matrices.
\newblock \emph{arXiv preprint arXiv:2106.03764}, 2021.

\bibitem[Lin et~al.(2023)Lin, Bai, and Mei]{lin2023transformers}
Lin, L., Bai, Y., and Mei, S.
\newblock Transformers as decision makers: Provable in-context reinforcement learning via supervised pretraining.
\newblock \emph{arXiv preprint arXiv:2310.08566}, 2023.

\bibitem[Liu et~al.(2022)Liu, Ash, Goel, Krishnamurthy, and Zhang]{liu2022transformers}
Liu, B., Ash, J.~T., Goel, S., Krishnamurthy, A., and Zhang, C.
\newblock Transformers learn shortcuts to automata.
\newblock \emph{arXiv preprint arXiv:2210.10749}, 2022.

\bibitem[Mahankali et~al.(2023)Mahankali, Hashimoto, and Ma]{mahankali2023one}
Mahankali, A., Hashimoto, T.~B., and Ma, T.
\newblock One step of gradient descent is provably the optimal in-context learner with one layer of linear self-attention.
\newblock \emph{arXiv preprint arXiv:2307.03576}, 2023.

\bibitem[Min et~al.(2022)Min, Lewis, Zettlemoyer, and Hajishirzi]{Min_2022}
Min, S., Lewis, M., Zettlemoyer, L., and Hajishirzi, H.
\newblock Metaicl: Learning to learn in context.
\newblock In \emph{Proceedings of the 2022 Conference of the North American Chapter of the Association for Computational Linguistics: Human Language Technologies}. Association for Computational Linguistics, 2022.
\newblock \doi{10.18653/v1/2022.naacl-main.201}.
\newblock URL \url{http://dx.doi.org/10.18653/v1/2022.naacl-main.201}.

\bibitem[Nadaraya(1964)]{nadaraya1964estimating}
Nadaraya, E.~A.
\newblock On estimating regression.
\newblock \emph{Theory of Probability \& Its Applications}, 9\penalty0 (1):\penalty0 141--142, 1964.

\bibitem[Nguyen et~al.(2022)Nguyen, Pham, Nguyen, Nguyen, Osher, and Ho]{nguyen2022fourierformer}
Nguyen, T., Pham, M., Nguyen, T., Nguyen, K., Osher, S., and Ho, N.
\newblock Fourierformer: Transformer meets generalized fourier integral theorem.
\newblock \emph{Advances in Neural Information Processing Systems}, 35:\penalty0 29319--29335, 2022.

\bibitem[Olsson et~al.(2022)Olsson, Elhage, Nanda, Joseph, DasSarma, Henighan, Mann, Askell, Bai, Chen, et~al.]{olsson2022context}
Olsson, C., Elhage, N., Nanda, N., Joseph, N., DasSarma, N., Henighan, T., Mann, B., Askell, A., Bai, Y., Chen, A., et~al.
\newblock In-context learning and induction heads.
\newblock \emph{arXiv preprint arXiv:2209.11895}, 2022.

\bibitem[P{\'e}rez et~al.(2021)P{\'e}rez, Barcel{\'o}, and Marinkovic]{perez2021attention}
P{\'e}rez, J., Barcel{\'o}, P., and Marinkovic, J.
\newblock Attention is turing complete.
\newblock \emph{The Journal of Machine Learning Research}, 22\penalty0 (1):\penalty0 3463--3497, 2021.

\bibitem[Prewitt(2003)]{gyorfi}
Prewitt, K.~A.
\newblock A distribution-free theory of nonparametric regression. laszlo gyorfi, michael kohler, adam krzyzak, and harro walk.
\newblock \emph{Journal of the American Statistical Association}, 98\penalty0 (464):\penalty0 1084--1084, 2003.
\newblock \doi{10.1198/jasa.2003.s309}.
\newblock URL \url{https://doi.org/10.1198/jasa.2003.s309}.

\bibitem[Rae et~al.(2021)Rae, Borgeaud, Cai, Millican, Hoffmann, Song, Aslanides, Henderson, Ring, Young, et~al.]{rae2021scaling}
Rae, J.~W., Borgeaud, S., Cai, T., Millican, K., Hoffmann, J., Song, F., Aslanides, J., Henderson, S., Ring, R., Young, S., et~al.
\newblock Scaling language models: Methods, analysis \& insights from training gopher.
\newblock \emph{arXiv preprint arXiv:2112.11446}, 2021.

\bibitem[Ravent{\'o}s et~al.(2023)Ravent{\'o}s, Paul, Chen, and Ganguli]{raventos2023pretraining}
Ravent{\'o}s, A., Paul, M., Chen, F., and Ganguli, S.
\newblock Pretraining task diversity and the emergence of non-bayesian in-context learning for regression.
\newblock \emph{arXiv preprint arXiv:2306.15063}, 2023.

\bibitem[Robbins(1955)]{gamma1}
Robbins, H.
\newblock A remark on stirling's formula.
\newblock \emph{The American Mathematical Monthly}, 62\penalty0 (1):\penalty0 26--29, 1955.
\newblock ISSN 00029890, 19300972.
\newblock URL \url{http://www.jstor.org/stable/2308012}.

\bibitem[Sanford et~al.(2023)Sanford, Hsu, and Telgarsky]{sanford2023representational}
Sanford, C., Hsu, D., and Telgarsky, M.
\newblock Representational strengths and limitations of transformers.
\newblock \emph{arXiv preprint arXiv:2306.02896}, 2023.

\bibitem[Shen et~al.(2023)Shen, Mishra, and Khashabi]{shen2023pretrained}
Shen, L., Mishra, A., and Khashabi, D.
\newblock Do pretrained transformers really learn in-context by gradient descent?
\newblock \emph{arXiv preprint arXiv:2310.08540}, 2023.

\bibitem[Shi et~al.(2022)Shi, Wei, and Liang]{shi2022theoretical}
Shi, Z., Wei, J., and Liang, Y.
\newblock A theoretical analysis on feature learning in neural networks: Emergence from inputs and advantage over fixed features.
\newblock \emph{arXiv preprint arXiv:2206.01717}, 2022.

\bibitem[Song et~al.(2023)Song, Xu, and Yin]{song2023expressibility}
Song, Z., Xu, G., and Yin, J.
\newblock The expressibility of polynomial based attention scheme, 2023.

\bibitem[Tarzanagh et~al.(2023{\natexlab{a}})Tarzanagh, Li, Thrampoulidis, and Oymak]{tarzanagh2023transformers}
Tarzanagh, D.~A., Li, Y., Thrampoulidis, C., and Oymak, S.
\newblock Transformers as support vector machines.
\newblock \emph{arXiv preprint arXiv:2308.16898}, 2023{\natexlab{a}}.

\bibitem[Tarzanagh et~al.(2023{\natexlab{b}})Tarzanagh, Li, Zhang, and Oymak]{tarzanagh2023maxmargin}
Tarzanagh, D.~A., Li, Y., Zhang, X., and Oymak, S.
\newblock Max-margin token selection in attention mechanism, 2023{\natexlab{b}}.

\bibitem[Thoppilan et~al.(2022)Thoppilan, De~Freitas, Hall, Shazeer, Kulshreshtha, Cheng, Jin, Bos, Baker, Du, et~al.]{thoppilan2022lamda}
Thoppilan, R., De~Freitas, D., Hall, J., Shazeer, N., Kulshreshtha, A., Cheng, H.-T., Jin, A., Bos, T., Baker, L., Du, Y., et~al.
\newblock Lamda: Language models for dialog applications.
\newblock \emph{arXiv preprint arXiv:2201.08239}, 2022.

\bibitem[Tian et~al.(2023)Tian, Wang, Zhang, Chen, and Du]{tian2023joma}
Tian, Y., Wang, Y., Zhang, Z., Chen, B., and Du, S.
\newblock Joma: Demystifying multilayer transformers via joint dynamics of mlp and attention.
\newblock \emph{arXiv preprint arXiv:2310.00535}, 2023.

\bibitem[Tosatto et~al.(2021)Tosatto, Akrour, and Peters]{tosatto}
Tosatto, S., Akrour, R., and Peters, J.
\newblock An upper bound of the bias of nadaraya-watson kernel regression under lipschitz assumptions.
\newblock \emph{Stats}, 4\penalty0 (1):\penalty0 1--17, 2021.
\newblock ISSN 2571-905X.
\newblock \doi{10.3390/stats4010001}.
\newblock URL \url{https://www.mdpi.com/2571-905X/4/1/1}.

\bibitem[Trockman \& Kolter(2023)Trockman and Kolter]{trockman2023mimetic}
Trockman, A. and Kolter, J.~Z.
\newblock Mimetic initialization of self-attention layers, 2023.

\bibitem[Tsai et~al.(2019)Tsai, Bai, Yamada, Morency, and Salakhutdinov]{tsai2019transformer}
Tsai, Y.-H.~H., Bai, S., Yamada, M., Morency, L.-P., and Salakhutdinov, R.
\newblock Transformer dissection: a unified understanding of transformer's attention via the lens of kernel.
\newblock \emph{arXiv preprint arXiv:1908.11775}, 2019.

\bibitem[Vaswani et~al.(2017)Vaswani, Shazeer, Parmar, Uszkoreit, Jones, Gomez, Kaiser, and Polosukhin]{vaswani2017attention}
Vaswani, A., Shazeer, N., Parmar, N., Uszkoreit, J., Jones, L., Gomez, A.~N., Kaiser, {\L}., and Polosukhin, I.
\newblock Attention is all you need.
\newblock \emph{Advances in neural information processing systems}, 30, 2017.

\bibitem[von Oswald et~al.(2023{\natexlab{a}})von Oswald, Niklasson, Randazzo, Sacramento, Mordvintsev, Zhmoginov, and Vladymyrov]{von2023transformers}
von Oswald, J., Niklasson, E., Randazzo, E., Sacramento, J., Mordvintsev, A., Zhmoginov, A., and Vladymyrov, M.
\newblock Transformers learn in-context by gradient descent.
\newblock In \emph{International Conference on Machine Learning}, pp.\  35151--35174. PMLR, 2023{\natexlab{a}}.

\bibitem[von Oswald et~al.(2023{\natexlab{b}})von Oswald, Niklasson, Schlegel, Kobayashi, Zucchet, Scherrer, Miller, Sandler, Vladymyrov, Pascanu, et~al.]{von2023uncovering}
von Oswald, J., Niklasson, E., Schlegel, M., Kobayashi, S., Zucchet, N., Scherrer, N., Miller, N., Sandler, M., Vladymyrov, M., Pascanu, R., et~al.
\newblock Uncovering mesa-optimization algorithms in transformers.
\newblock \emph{arXiv preprint arXiv:2309.05858}, 2023{\natexlab{b}}.

\bibitem[Vuckovic et~al.(2020)Vuckovic, Baratin, and des Combes]{vuckovic2020mathematical}
Vuckovic, J., Baratin, A., and des Combes, R.~T.
\newblock A mathematical theory of attention, 2020.

\bibitem[Wang et~al.(2023)Wang, Zhu, and Wang]{wang2023large}
Wang, X., Zhu, W., and Wang, W.~Y.
\newblock Large language models are implicitly topic models: Explaining and finding good demonstrations for in-context learning.
\newblock \emph{arXiv preprint arXiv:2301.11916}, 2023.

\bibitem[Watson(1964)]{watson1964smooth}
Watson, G.~S.
\newblock Smooth regression analysis.
\newblock \emph{Sankhy{\=a}: The Indian Journal of Statistics, Series A}, pp.\  359--372, 1964.

\bibitem[Wei et~al.(2022)Wei, Chen, and Ma]{wei2022statistically}
Wei, C., Chen, Y., and Ma, T.
\newblock Statistically meaningful approximation: a case study on approximating turing machines with transformers.
\newblock \emph{Advances in Neural Information Processing Systems}, 35:\penalty0 12071--12083, 2022.

\bibitem[Wen \& Li(2021)Wen and Li]{wen2021toward}
Wen, Z. and Li, Y.
\newblock Toward understanding the feature learning process of self-supervised contrastive learning.
\newblock In \emph{International Conference on Machine Learning}, pp.\  11112--11122. PMLR, 2021.

\bibitem[Wibisono \& Wang(2023)Wibisono and Wang]{wibisono2023role}
Wibisono, K.~C. and Wang, Y.
\newblock On the role of unstructured training data in transformers' in-context learning capabilities.
\newblock In \emph{NeurIPS 2023 Workshop on Mathematics of Modern Machine Learning}, 2023.

\bibitem[Wu et~al.(2023)Wu, Zou, Chen, Braverman, Gu, and Bartlett]{wu2023many}
Wu, J., Zou, D., Chen, Z., Braverman, V., Gu, Q., and Bartlett, P.~L.
\newblock How many pretraining tasks are needed for in-context learning of linear regression?
\newblock \emph{arXiv preprint arXiv:2310.08391}, 2023.

\bibitem[Xie et~al.(2021)Xie, Raghunathan, Liang, and Ma]{xie2021explanation}
Xie, S.~M., Raghunathan, A., Liang, P., and Ma, T.
\newblock An explanation of in-context learning as implicit bayesian inference.
\newblock \emph{arXiv preprint arXiv:2111.02080}, 2021.

\bibitem[Yun et~al.(2019)Yun, Bhojanapalli, Rawat, Reddi, and Kumar]{yun2019transformers}
Yun, C., Bhojanapalli, S., Rawat, A.~S., Reddi, S.~J., and Kumar, S.
\newblock Are transformers universal approximators of sequence-to-sequence functions?
\newblock \emph{arXiv preprint arXiv:1912.10077}, 2019.

\bibitem[Zhang et~al.(2023{\natexlab{a}})Zhang, Frei, and Bartlett]{zhang2023trained}
Zhang, R., Frei, S., and Bartlett, P.~L.
\newblock Trained transformers learn linear models in-context.
\newblock \emph{arXiv preprint arXiv:2306.09927}, 2023{\natexlab{a}}.

\bibitem[Zhang et~al.(2023{\natexlab{b}})Zhang, Zhang, Yang, and Wang]{zhang2023and}
Zhang, Y., Zhang, F., Yang, Z., and Wang, Z.
\newblock What and how does in-context learning learn? bayesian model averaging, parameterization, and generalization.
\newblock \emph{arXiv preprint arXiv:2305.19420}, 2023{\natexlab{b}}.

\end{thebibliography}
\bibliographystyle{icml2024}

\newpage
\onecolumn
\appendix
\newpage
\tableofcontents
\newpage
\section{Additional Related Work} \label{app:rw}

\textbf{Empirical study of ICL.} 
Several works have 
studied ICL of linear  tasks in the framework introduced by \cite{garg2022can}, and demonstrated that pretrained transformers can mimic the behavior of gradient descent
\citep{garg2022can,akyurek2022learning,von2023transformers,bai2023transformers}, Newton's method \citep{fu2023transformers}, and certain algorithm selection approaches \citep{bai2023transformers,li2023transformers-samet}. \cite{raventos2023pretraining} studied the same linear setting with the goal of understanding the role of pretraining task diversity, while \cite{von2023uncovering} argued via experiments on general auto-regressive tasks that ICL implicitly constructs a learning objective and optimizes it within one forward pass.
Other empirical works have both directly  supported \citep{dai2022can}  and contradicted \citep{shen2023pretrained} the hypothesis that ICL is a gradient-based optimization algorithm via experiments on real ICL tasks, while \cite{olsson2022context} empirically concluded that induction heads with softmax attention are the key mechanism that enables ICL in transformers. Lastly, outside of the context of ICL, \cite{trockman2023mimetic} noticed that the attention parameter matrices of trained transformers are often close to scaled identities in practice, consistent with our findings on the importance of learning a scale to softmax attention training.

\noindent\textbf{Transformer training dynamics.}
\cite{huang2023context} and \cite{tian2023joma} studied the dynamics of softmax attention trained with gradient descent, but assumed orthonormal input features and either linear tasks \citep{huang2023context} or  that the softmax normalization is a fixed constant \citep{tian2023joma}.
\cite{boixadsera2023transformers} proved that softmax attention with diagonal weight matrices incrementally learns features during gradient-based training.
  Other work has shown that trained transformers can learn topic structure \citep{li2023transformers}, spatial structure \citep{jelassi2022vision}, visual features \citep{li2023theoretical} and support vectors \citep{tarzanagh2023transformers,tarzanagh2023maxmargin} in specific settings disjoint from ICL. 


\noindent\textbf{Expressivity of transformers.} Multiple works have shown that transformers with linear \citep{von2023transformers,von2023uncovering},  ReLU \citep{bai2023transformers,fu2023transformers,lin2023transformers}, and softmax  \cite{akyurek2022learning,giannou2023looped} attention are expressive enough to implement general-purpose machine learning algorithms during ICL, including gradient descent. 
A series of works have shown the existence of transformers that recover sparse functions of the input data
\citep{sanford2023representational,guo2023transformers,edelman2022inductive,liu2022transformers}. 
\cite{fu2023can} studied the statistical complexity the learning capabilities of  attention with random weights.
More broadly, 
\cite{perez2021attention,yun2019transformers,bhattamishra2020ability,likhosherstov2021expressive,wei2022statistically,song2023expressibility} have analyzed various aspects of the expressivity of transformers.


\noindent\textbf{Other studies of softmax attention.}
\cite{wibisono2023role} hypothesized that the role of the softmax in attention is to facilitate a mixture-of-experts algorithm amenable to unstructured training data.
\cite{deng2023attention} formulated a softmax regression problem and analyzed the convergence of a stylized algorithm to solve it.
 \cite{han2023context} showed that in a setting with ICL regression tasks a la \citep{garg2022can}, a kernel regressor akin to softmax attention with $\mathbf{M} $ equal to the inverse covariance of $\mathbf{x}$ converges to the Bayes posterior for a new ICL task -- in this setting the conditional distribution of the label given the query and $n$ labelled context samples -- polynomially with the number of context samples, but did not study what softmax attention learns during pretraining. 
\cite{deng2023superiority} also compared softmax and linear attention, but focused on softmax's greater capacity to separate data from two classes.
\cite{vuckovic2020mathematical} and \cite{kim2020lipschitz} investigate the Lipschitz constant of attention rather than what attention learns.

\noindent\textbf{Non-parametric regression.}
Our results imply that pretraining softmax attention reduces to the problem of meta-learning the bandwidth of a Nadaraya-Watson estimator with a Gaussian kernel. However, to our knowledge, the non-parametric regression literature has not addressed this problem. 
The closest work is \cite{tosatto}, which only upper bounds the noiseless loss, and only in the limit $n\to \infty$, whereas 
our result characterizes the optimal bandwidth, which requires upper {and} lower bounds on the noisy loss.

\section{Preliminaries}\label{app:preliminaries}
We first justify our claim that the first $d$ rows of the last column of $\textbf{W}_V$ can be set to $\mathbf{0}_d$ for any optimal choice of parameters.

\begin{lemma}\label{lem:Wvis1}
    If under the function distribution, a function $f$ is equally likely as likely as $-f$, then any optimal solution to $\uL(\bW_V, \bW_K, \bW_Q)$ in \ref{eq:loss0} satisfies $\bW_V = \begin{pmatrix} \mathbf{0}_{d\times d} &\mathbf{0}_{d\times 1}\\ \mathbf{0}_{1\times d} &c\end{pmatrix}$.
\end{lemma}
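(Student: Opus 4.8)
The plan is to exploit the sign symmetry of the function distribution to pair up each sequence $\mathbf{Z}$ with a ``reflected'' sequence and average the losses. Fix $\bW_K, \bW_Q$ and write the last column of $\bW_V$ as $(\mathbf{v}^\top, c)^\top$ with $\mathbf{v}\in\mathbb{R}^d$; the claim is that any minimizer has $\mathbf{v} = \mathbf{0}_d$. The key observation is that the attention weights in \eqref{eq:attention} depend only on $\bW_K, \bW_Q$ and the feature vectors $\{\bx_i\}$, not on the labels, so with $\mathbf{v}$ present the prediction $h_{SA}(\mathbf{z}_{n+1})_{d+1}$ becomes $\sum_j p_j (\mathbf{v}^\top \bx_j + c(f(\bx_j)+\epsilon_j))$ where $p_j \ge 0$, $\sum_j p_j = 1$ are the softmax weights determined by $\{\bx_i\}_{i}$ and $\bx_{n+1}$ alone.

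First I would fix the covariates $\{\bx_i\}_{i=1}^{n+1}$ and the noise $\{\epsilon_i\}_i$, and consider the conditional expectation of the squared error over $f\sim D(\mathcal F)$. Using the hypothesis that $f$ and $-f$ are equally likely, I would symmetrize: for each $f$, the pair $(f,-f)$ contributes
\[
\tfrac12\Big(\sum_j p_j(\mathbf{v}^\top\bx_j + c(f(\bx_j)+\epsilon_j)) - f(\bx_{n+1})\Big)^2 + \tfrac12\Big(\sum_j p_j(\mathbf{v}^\top\bx_j + c(-f(\bx_j)+\epsilon_j)) + f(\bx_{n+1})\Big)^2.
\]
Writing $A := \sum_j p_j \mathbf{v}^\top\bx_j + c\sum_j p_j \epsilon_j$ (the part independent of the sign of $f$) and $B := c\sum_j p_j f(\bx_j) - f(\bx_{n+1})$ (the part that flips sign), this equals $\tfrac12(A+B)^2 + \tfrac12(A-B)^2 = A^2 + B^2$. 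Hence the symmetrized objective splits into a term $A^2$ and a term $B^2$, where $B^2$ does not involve $\mathbf{v}$ at all. So minimizing over $\mathbf{v}$ reduces to minimizing $\E[A^2] = \E\big[(\sum_j p_j\mathbf{v}^\top\bx_j + c\sum_j p_j\epsilon_j)^2\big]$ over $\mathbf{v}$.

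Next I would argue that $\mathbf{v}=\mathbf{0}_d$ minimizes $\E[A^2]$. Since the noise is mean-zero and independent of the covariates, $\E[A^2] = \E\big[(\sum_j p_j \mathbf{v}^\top \bx_j)^2\big] + c^2\sigma^2\,\E\big[\sum_j p_j^2\big]$, and the second term is independent of $\mathbf{v}$. The first term is $\E\big[(\mathbf{v}^\top \mathbf{q})^2\big]$ where $\mathbf{q} := \sum_j p_j \bx_j$, which is $\mathbf{v}^\top \E[\mathbf{q}\mathbf{q}^\top]\mathbf{v} \ge 0$ with equality when $\mathbf{v}=\mathbf{0}_d$ (this is the unique minimizer provided $\E[\mathbf{q}\mathbf{q}^\top]\succ 0$, which holds under the covariate distributions considered; in any case $\mathbf{v}=\mathbf{0}_d$ is always \emph{a} minimizer, which suffices for the ``can be set to $\mathbf{0}_d$'' claim). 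Therefore, replacing the last column of $\bW_V$ by $(\mathbf{0}_d^\top, c)^\top$ never increases the loss, and any optimal solution can be taken to have this form, which is exactly the stated $\bW_V = \begin{pmatrix}\mathbf{0}_{d\times d} & \mathbf{0}_{d\times 1}\\ \mathbf{0}_{1\times d} & c\end{pmatrix}$.

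The main obstacle — really the only subtle point — is handling the interaction between $\mathbf{v}$ and the labels correctly: one must be careful that the symmetrization genuinely decouples the $\mathbf{v}$-dependence from the $f$-dependence, which is why fixing $\{\bx_i\}$ and $\{\epsilon_i\}$ before symmetrizing over $f$ is essential (the weights $p_j$ must be held fixed for the cross terms to cancel). A secondary point is the uniqueness claim: if one wants ``any optimal solution satisfies $\bW_V$ has this form'' rather than merely ``some optimal solution does,'' one needs $\E[\mathbf{q}\mathbf{q}^\top]\succ 0$, which I would verify using Assumption \ref{assump:dists} (or the low-rank covariate assumption) by noting $\mathbf{q}$ has full-dimensional support; alternatively the weaker ``WLOG'' reading is all that the reparameterization in the main text actually requires.
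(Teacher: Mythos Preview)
Your proposal is correct and takes essentially the same symmetrization approach as the paper: both fix the covariates so the softmax weights are held constant, then use the $f\leftrightarrow -f$ symmetry to decouple the $\mathbf{v}$-dependent part from the $f$-dependent part and conclude that setting $\mathbf{v}=\mathbf{0}_d$ never increases the loss. One small difference worth noting: the paper also invokes symmetry of the noise distribution and flips $\epsilon\to-\epsilon$ along with $f\to-f$ (grouping into three pieces $A,B,C$), whereas you keep the noise fixed and instead use its independence and zero mean to kill the cross term in $\E[A^2]$; your route is marginally cleaner since it relies only on the hypotheses actually stated for $D_\epsilon$ (mean zero, finite variance) rather than an additional symmetry assumption. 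Your discussion of uniqueness via $\E[\mathbf{q}\mathbf{q}^\top]\succ 0$ is also more careful than the paper's, which only establishes the ``$\le$'' direction.
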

\begin{proof}
    For readability we write $\beta_i = e^{-w_{KQ}\Vert \bx_i-\bx_{n+1}\Vert^2}{\sum_j e^{-w_{KQ}\Vert \bx_j-\bx_{n+1}\Vert^2}}$
    Suppose $\bW_V = \begin{pmatrix} \mathbf{0}_{d\times d} &\mathbf{v}\\ \mathbf{0}_{1\times d} &c\end{pmatrix}$ was optimal, then the loss can be written 
    $$\uL= \E_{f, \{\bx_i\}} \left[\left( \sum_i c\left(f(\bx_i) + \epsilon_i\right)\beta_i + \sum_i \mathbf{v}^\top \bx_i \beta_i - f(\bx_{n+1})\right)^2\right].$$
    But because $f$ and $-f$ are equally likely, and because the noise is also symmetric about 0, we can write this as
    \begin{align*}
    \uL &= \frac{1}{2}\E_{f, \{\bx_i\}, \{\epsilon_i\}} \left[\left( \sum_i c\left(f(\bx_i) + \epsilon_i\right)\beta_i + \sum_i \mathbf{v}^\top \bx_i \beta_i - f(\bx_{n+1})\right)^2\right]\\
    &\quad + \frac{1}{2}\E_{f, \{\bx_i\}, \{\epsilon_i\}} \left[\left( \sum_i c\left((-f)(\bx_i) - \epsilon_i\right)\beta_i + \sum_i \mathbf{v}^\top \bx_i \beta_i - (-f)(\bx_{n+1})\right)^2\right]
    \end{align*}
    We can couple the noise $\{\epsilon_i\}$ and the data $\{\bx_i\}$ in the two summands above to write this as $$\E \left[(A + B + C)^2 + (-A + B - C)^2\right],$$ where $A = \sum_i cf(\bx_i)\beta_i - f(\bx) = -\left(\sum_i c(-f)(\bx_i)\beta_i\right)$, $B = \sum_i \mathbf{v}^\top \bx_i \beta_i$, and $C = \sum_i c\epsilon_i\beta_i$. We can set $B = 0$ simply by setting $\mathbf{v} = \mathbf{0}_{d\times 1}$, and this has loss
    \begin{align*}
    \uL
    &= \E_{f, \{\bx_i\}} \left[\left( \sum_i c\left(f(\bx_i) + \epsilon_i\right)\beta_i - f(\bx_{n+1})\right)^2\right]\\
    &= \frac{1}{2}\left(\E\left[(A+C)^2 + (-A-C)^2\right]\right)\le \frac{1}{2}\left(\E \left[(A + B + C)^2 + (-A + B - C)^2\right]\right)
    \end{align*}
\end{proof}
In all of the distributions over functions we consider for pretraining, $f$ is equally likely as $-f$, so without loss of generality we set all elements of $\mathbf{W}_V$ besides the $(d+1,d+1)$-th to 0. For simplicity, we set the $(d+1,d+1)$-th element to 1.

\begin{assumption}[Covariate Distribution]
For each token $\bx$, first we draw $\tilde{\bx}$ as $\tilde{\bx} \sim \mathcal{U}^d$. Then $\bx$ is constructed as $\bx =\boldsymbol{\Sigma}^{1/2}\tilde{\bx}$.
\end{assumption}
\begin{definition}[Linear and 2-ReLU Function Classes]
The function classes $\mathcal{F}^{\text{lin}}_{L}$ and $\mathcal{F}^{+}_{L}$ are respectively defined as:
\begin{align}
\mathcal{F}^{\text{lin}}_{L} &:= \{ f_{\mathbf{w}}: f_{\mathbf{w}}(\bx) = l \mathbf{w}^\top \bx + b, \; \mathbf{w}\in \mathbb{S}^{d-1},\; l\in [-L,L] \},\\
\mathcal{F}^{+}_{L} &:= \{ f_{\mathbf{w}}: f_{\mathbf{w}}(\bx) = l_1 \relu(\mathbf{w}^\top \bx) + l_2 \relu(-\mathbf{w}^\top \bx) + b, \; \mathbf{w}\in \mathbb{S}^{d-1} \}.
\end{align}
$D(\F^{\text{lin}}_{L}), D(\mathcal{F}^{+}_{L})$ are induced by drawing $\mathbf{w}\sim \mathcal{N}(\mathbf{0}, \boldsymbol{\Sigma}^{-1})$ and $b, l, l_1, l_2\sim \text{Unif}([-L, L])$.
We say that these classes are $L-$Lipschitz, because the maximum Lipschitz constant for any function in the class is $L$.
\end{definition}

Note that because $\Vert \boldsymbol{\Sigma}^{-1/2}\bx_i\Vert = 1$ always, we have 
\begin{align}
&2\bx_i \mathbf{M} \bx_{n+1} \nonumber \\
&=\Vert \boldsymbol{\Sigma}^{-1/2}\bx_i\Vert^2 + \Vert\boldsymbol{\Sigma}^{1/2}\mathbf{M}\boldsymbol{\Sigma}^{1/2}\boldsymbol{\Sigma}^{-1/2}\bx_{n+1}\Vert^2 - \Vert \boldsymbol{\Sigma}^{-1/2}\bx_i - \boldsymbol{\Sigma}^{1/2}\mathbf{M}\boldsymbol{\Sigma}^{1/2}\boldsymbol{\Sigma}^{-1/2}\bx_{n+1}\Vert^2.\nonumber
\end{align}
Let $\mathbf{M}' = \boldsymbol{\Sigma}^{1/2}\mathbf{M}\boldsymbol{\Sigma}^{1/2}$. This means the attention estimator can be rewritten as
\begin{equation}\label{eq:distance_ICL}
    h_{SA}(\bx):= \sum_i \frac{f(\bx_i)e^{\bx_i^\top \mathbf{M}\bx_{n+1}}}{\sum_j e^{\bx_j^\top \mathbf{M}\bx_{n+1}}} = \sum_i \frac{f(\bx_i)e^{-\Vert \boldsymbol{\Sigma}^{-1/2}\bx_i-\mathbf{M}'\boldsymbol{\Sigma}^{-1/2}\bx_{n+1}\Vert^2}}{\sum_j e^{-\Vert \boldsymbol{\Sigma}^{-1/2}\bx_j-\mathbf{M}'\boldsymbol{\Sigma}^{-1/2}\bx_{n+1}\Vert^2}}
\end{equation}

So the attention a token $\bx_{n+1}$ places on another $\bx_i$ is related to the distance between \\ $\mathbf{M}'\boldsymbol{\Sigma}^{-1/2}\bx_{n+1}$ and $\boldsymbol{\Sigma}^{-1/2}\bx_i$. It is natural to suppose under some symmetry conditions that $\mathbf{M}'$ is best chosen to be a scaled identity matrix so that the attention actually relates to a distance between tokens. Below we discus sufficient conditions for this.

\begin{assumption}\label{assumptions}
The function class $\F$ and distribution $D(\mathcal{F})$ satisfy
    \begin{enumerate}
        \item $|f(\bx)-f(\by)| \le L\Vert \bx-\by\Vert_{\boldsymbol{\Sigma}^{-1}} ~\forall  \bx, \by\in \mathcal{X}^2, f \in \mathcal{F}$
        \item $\E_{f \sim D(\mathcal{F})} \left[ f(\bx)f(\by)\right] = \rho(\bx^\top \by)~\forall \bx, \by\in \mathcal{X}^2,$ for some monotonically increasing $\rho$.
        \item For any isometry $\phi$ preserving the unit sphere, and $f\in \F$, we have $f\circ \phi\in \F$. 
    \end{enumerate}
\end{assumption}

\begin{lemma}\label{lem:assumptionssatisfyidentity}
    Under Assumption \ref{assumptions}, any minimizer of Equation \ref{eq:loss} satisfies $\mathbf{M}^* = w_{KQ}\boldsymbol{\Sigma}^{-1}$ for some scalar $w_{KQ} \geq0$.
\end{lemma}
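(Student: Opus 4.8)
The plan is to exploit the rotational invariance of the problem (Assumption \ref{assumptions}, item 3) together with the monotonicity structure (item 2) to show that the optimal $\mathbf{M}'=\boldsymbol{\Sigma}^{1/2}\mathbf{M}\boldsymbol{\Sigma}^{1/2}$ must commute with every orthogonal map and hence be a scalar multiple of the identity, with the scalar nonnegative because a negative scalar would invert the role of ``near'' and ``far'' tokens and strictly increase the loss. First I would pass to the whitened coordinates: writing $\tilde{\bx}_i=\boldsymbol{\Sigma}^{-1/2}\bx_i\sim\mathcal{U}^d$, equation \eqref{eq:distance_ICL} shows the loss depends on $\mathbf{M}$ only through $\mathbf{M}'$ and takes the form
\[
\uL(\mathbf{M}')=\E\left(\frac{\sum_i f(\boldsymbol{\Sigma}^{1/2}\tilde{\bx}_i)\,e^{-\|\tilde{\bx}_i-\mathbf{M}'\tilde{\bx}_{n+1}\|^2}}{\sum_j e^{-\|\tilde{\bx}_j-\mathbf{M}'\tilde{\bx}_{n+1}\|^2}}-f(\boldsymbol{\Sigma}^{1/2}\tilde{\bx}_{n+1})\right)^2 .
\]
Since $\|\tilde{\bx}_i\|=1$, the exponent equals $2\tilde{\bx}_i^\top\mathbf{M}'\tilde{\bx}_{n+1}-\|\mathbf{M}'\tilde{\bx}_{n+1}\|^2$, and the $\|\mathbf{M}'\tilde{\bx}_{n+1}\|^2$ term cancels between numerator and denominator, so only the bilinear form $\tilde{\bx}_i^\top\mathbf{M}'\tilde{\bx}_{n+1}$ matters.

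Next I would run the symmetrization argument. Fix any orthogonal $\mathbf{U}\in O(d)$. Because $\mathcal{U}^d$ is rotation invariant and, by Assumption \ref{assumptions}(3), the function distribution is invariant under precomposition with the isometry $\bx\mapsto\boldsymbol{\Sigma}^{1/2}\mathbf{U}\boldsymbol{\Sigma}^{-1/2}\bx$ (which preserves the whitened sphere), a change of variables $\tilde{\bx}_i\mapsto\mathbf{U}\tilde{\bx}_i$ shows $\uL(\mathbf{M}')=\uL(\mathbf{U}^\top\mathbf{M}'\mathbf{U})$. Hence the set of minimizers is closed under conjugation by $O(d)$. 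To upgrade this to ``every minimizer is a scaled identity,'' I would argue convexity-style: decompose $\mathbf{M}'=\tfrac12(\mathbf{M}'+\mathbf{M}'^\top)$ plus antisymmetric part and note the antisymmetric part contributes nothing to the quadratic form $\tilde{\bx}^\top\mathbf{M}'\tilde{\bx}_{n+1}$ after symmetrizing in the exchangeable tokens — actually more carefully, the cleanest route is to show the loss is a strictly convex functional of the Gram-type data $\{\tilde{\bx}_i^\top\mathbf{M}'\tilde{\bx}_{n+1}\}$ in an appropriate sense, so that averaging $\mathbf{M}'$ over its $O(d)$-orbit (which yields $\frac{\operatorname{tr}(\mathbf{M}')}{d}\mathbf{I}_d$, the projection onto the commutant) does not increase the loss, with equality only if $\mathbf{M}'$ was already a scaled identity. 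Alternatively, and perhaps more robustly, I would invoke Assumption \ref{assumptions}(2): expanding the square and taking $\E_f$ first, all $f$-dependence enters through $\rho(\bx_i^\top\bx_j)$, and the resulting deterministic function of $\mathbf{M}'$ can be shown monotone in each pairwise ``alignment'' in a way that is optimized by making the attention weights track the true geodesic distance, forcing $\mathbf{M}'\propto\mathbf{I}_d$.

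Finally, the sign: with $\mathbf{M}'=w_{KQ}\mathbf{I}_d$ the weight on token $i$ is $\propto e^{2w_{KQ}\,\tilde{\bx}_i^\top\tilde{\bx}_{n+1}}$, i.e. it is monotincreasing in $\tilde{\bx}_i^\top\tilde{\bx}_{n+1}$ iff $w_{KQ}\geq 0$; since $\rho$ is monotonically increasing (Assumption \ref{assumptions}(2)), closer tokens carry more information about $f(\bx_{n+1})$, so putting more weight on them (i.e. $w_{KQ}\geq0$) is optimal — a short exchange/rearrangement argument comparing $w_{KQ}$ with $-w_{KQ}$ formalizes this. I expect the main obstacle to be the middle step: rigorously concluding that the $O(d)$-invariance of the minimizer set plus the structure of the loss forces each individual minimizer (not just one of them) to be a scaled identity. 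This needs a genuine strict-convexity or strict-monotonicity input rather than pure symmetry — the loss must be shown to strictly decrease under orbit-averaging whenever $\mathbf{M}'$ is not already isotropic, which is where Assumption \ref{assumptions}(2)'s monotonicity of $\rho$ and the exchangeability of the context tokens have to be combined carefully.
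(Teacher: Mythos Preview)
Your whitening step and the observation that $\uL(\mathbf{M}')=\uL(\mathbf{U}^\top\mathbf{M}'\mathbf{U})$ for every $\mathbf{U}\in O(d)$ are correct, but the orbit-averaging route has a real gap, and you already sense it: $\uL$ is \emph{not} convex in $\mathbf{M}'$ (it is a ratio of exponential sums inside a square), so Jensen over the $O(d)$-orbit does not give $\uL\bigl(\tfrac{\operatorname{tr}(\mathbf{M}')}{d}\mathbf{I}_d\bigr)\le\uL(\mathbf{M}')$. Invariance of the minimizer \emph{set} under conjugation does not by itself force each minimizer to be isotropic; you need a strict-improvement mechanism, and convexity is the wrong tool here. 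Your fallback sentence about ``expanding the square and using $\rho$'' is the right instinct but is not yet an argument.

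The paper's proof supplies exactly the missing mechanism, and it is not averaging-based. Fix a single query $\by=\by_{n+1}$ and write $\mathbf{M}'\by=c_\by\,\by'$ with $c_\by=\|\mathbf{M}'\by\|$ and $\by'\in\mathbb{S}^{d-1}$; the claim is that replacing the ``center of attention'' $\by'$ by $\by$ itself (i.e.\ removing the rotation while keeping the scale) \emph{strictly} lowers the loss whenever $\by'\neq\by$. Expanding the square, the first two terms are unchanged by this replacement (rotational invariance of $\{\by_i\}$), and the cross term $-2\,\E\bigl[\sum_i\rho(\by^\top\by_i)\,\alpha_i\bigr]$, where $\alpha_i$ is the softmax weight centered at $c_\by\by'$, is handled by the Hardy--Littlewood rearrangement inequality: both $\rho(\by^\top\cdot)$ and the weight function $\alpha(\cdot)$ are symmetric-decreasing about their respective centers $\by$ and $\by'$, so the integral $\int\rho\,\alpha$ is maximized when the centers coincide. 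This gives strict improvement query-by-query, after which one observes that the optimal scale $c_\by$ is the same for every $\by$ (again by rotational invariance), yielding $\mathbf{M}'=w_{KQ}\mathbf{I}_d$. The same Hardy--Littlewood step, applied to $\by'=-\by$, also delivers $w_{KQ}\ge0$, so your separate sign argument is subsumed. In short: the key idea you are missing is to work pointwise in the query and use rearrangement on the cross term, rather than trying to average $\mathbf{M}'$ globally.
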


\begin{figure}
    \centering
    \includegraphics[scale=0.45, trim = {0 3cm 0 3cm}]{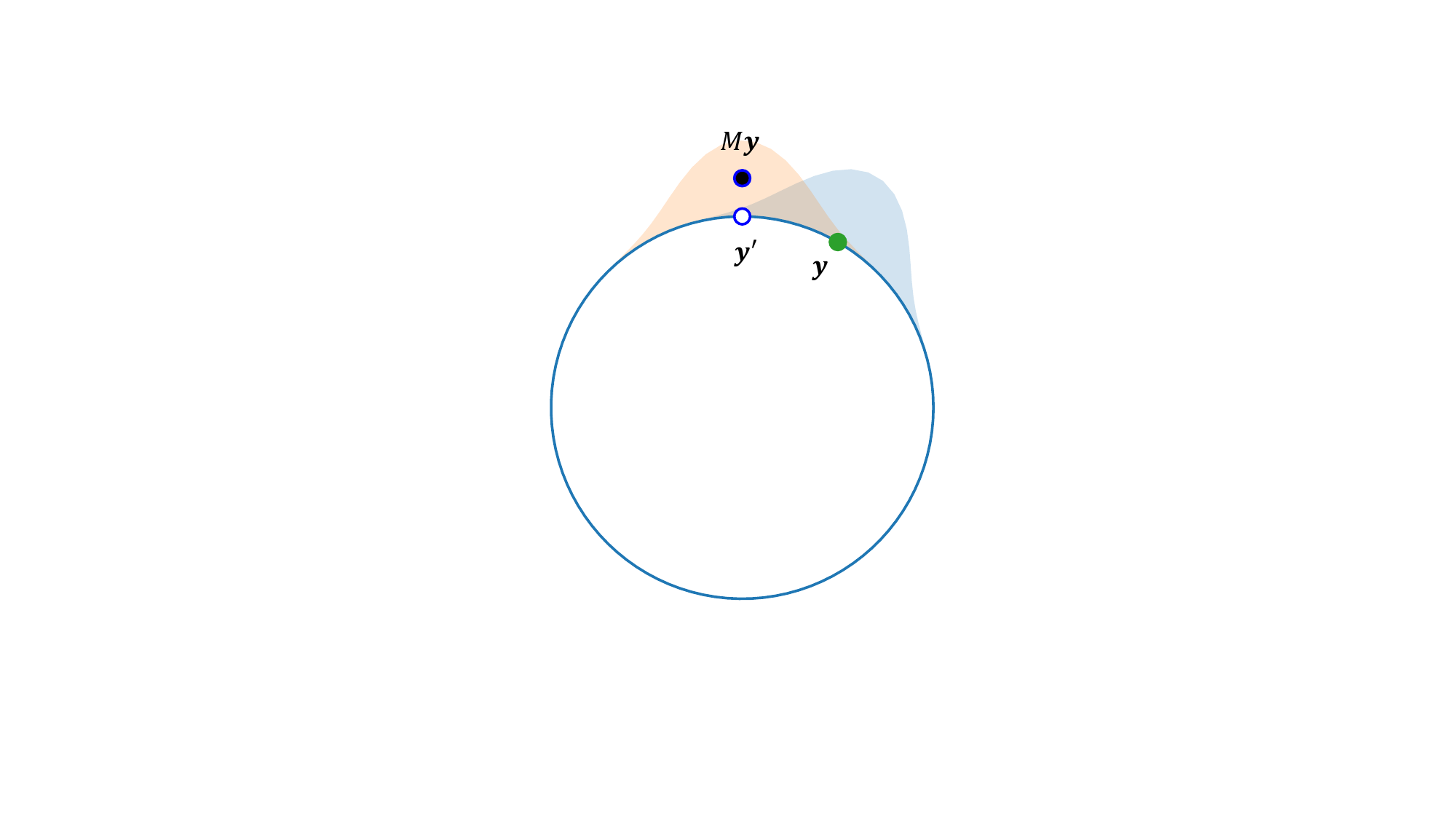}
    \caption{Comparison between using $\mathbf{M}$ and $\omega$ in Lemma \ref{lem:assumptionssatisfyidentity}. Here we denote $\by :=\by_{n+1}$. Under the attention induced by $\mathbf{M}$, the center of attention for $\by$ is actually $\by'$, and the attention weights are depicted by the light orange shading. Under the attention induced by $\omega$, the center of attention for $\by$ is $\by$ and the weights are depicted by the light blue shading. Naturally, using the blue shaded attention should lead to a better estimate of $f(\mathbf{y})$ under mild regularity conditions.}
    \label{fig:scaledidentity}
\end{figure}

\begin{proof}
Let $\{\by_i\} = \{\boldsymbol{\Sigma}^{-1/2}\bx_i\}$. Suppose $\mathbf{M}\by_{n+1} \ne c \by_{n+1}$ for any $c > 0$ for some $\by_{n+1}$. Take $c_{\by_{n+1}} = \Vert \mathbf{M}\by_{n+1}\Vert$ and $\by_{n+1}' = \frac{\mathbf{M}\by_{n+1}}{c_{\by_{n+1}}}$ (the projection of $\by$ onto the sphere). Consider a function $\omega:\mathbb{R}^d\to \mathbb{R}^d$ satisfying $\omega(\by_{n+1}) = c_{\by_{n+1}} \by_{n+1}$. Note that this need not be linear. Let $\phi$ denote a rotation that sends $\by'_{n+1}$ to $\by_{n+1}$.

We  show that $\mathcal{L}(\mathbf{M})> \mathcal{L}(\omega)$, that is, it is favorable to {\em not} rotate $\by_{n+1}$.
We have
\begin{align*} 
\uL(\mathbf{M})&= \E_{f, \by_{n+1},\{\by_i\}} \left[\left(f(\by_{n+1}) - \frac{\sum_{i} f(\by_i)e^{-\Vert\by_i - \mathbf{M} \by_{n+1}\Vert^2}}{\sum_j e^{-\Vert \by_j -\mathbf{M} \by_{n+1}\Vert^2}}\right)^2\right] \\
&= \E_{f, \by_{n+1}, \{\by_i\}} f(\by_{n+1})^2 + \E_{f, \by_{n+1}, \{\by_i\}} \left[\left(\frac{\sum_{i} f(\by_i)e^{-\Vert\by_i- \mathbf{M} \by_{n+1}\Vert^2}}{\sum_j e^{-\Vert \by_j -\mathbf{M} \by_{n+1}\Vert^2}}\right)^2\right]\\
&\hspace{1cm}-2\E_{f, \by_{n+1}, \{\by_i\}} \left[\sum_i \frac{f(\by_{n+1})f(\by_i)e^{-\Vert \by_i - \mathbf{M}\by_{n+1}\Vert^2}}{\sum_j e^{-\Vert \by_j -\mathbf{M}\by_{n+1}\Vert^2}}\right]
\end{align*}
Lets compare this with the loss of $\omega$. For a depiction of this, please see Figure \ref{fig:scaledidentity}
\begin{align*} 
\uL(\omega)&=\E_{f, \by_{n+1}, \{\by_i\}} \left[\left(f(\by_{n+1}) - \frac{\sum_{i} f(\by_i)e^{-\Vert\by_i - \omega(\by_{n+1})\Vert^2}}{\sum_j e^{-\Vert \by_j -\omega(\by_{n+1})\Vert^2}}\right)^2\right] \\
& = \E_{f, \by_{n+1}, \{\by_i\}} f(\by_{n+1})^2 + \E_{f, \by_{n+1}, \{\by_i\}} \left[\left(\frac{\sum_{i} f(\by_i)e^{-\Vert\by_i- \omega(\by_{n+1})\Vert^2}}{\sum_j e^{-\Vert \by_j -\omega(\by_{n+1})\Vert^2}}\right)^2\right]\\
&\hspace{1cm}-2\E_{f, \by_{n+1}, \{\by_i\}} \left[\sum_i \frac{f(\by_{n+1})f(\by_i)e^{-\Vert \by_i - \omega(\by_{n+1})\Vert^2}}{\sum_j e^{-\Vert \by_j -\omega(\by_{n+1})\Vert^2}}\right]
\end{align*}
There are three terms to compare. The first in each is identical. The second is also the same:
\begin{align*}
    &\E_{f, \by_{n+1}, \{\by_i\}} \left[\left(\frac{\sum_{i} f(\by_i)e^{-\Vert\by_i- \mathbf{M} \by_{n+1}\Vert^2}}{\sum_j e^{-\Vert \by_j -\mathbf{M} \by_{n+1}\Vert^2}}\right)^2\right] \nonumber \\
    &= \E_{\by_{n+1}} \E_{f, \{\by_i\}} \left[\left(\frac{\sum_{i} f(\by_i)e^{-\Vert\by_i- \mathbf{M} \by_{n+1}\Vert^2}}{\sum_j e^{-\Vert \by_j -\mathbf{M} \by_{n+1}\Vert^2}}\right)^2\right]\\
    &= \E_{\by_{n+1}} \E_{f, \{\by_i\}} \left[\left(\frac{\sum_{i} f(\by_i)e^{-\Vert\by_i- c_{\by_{n+1}} \by'_{n+1}\Vert^2}}{\sum_j e^{-\Vert \by_j -c_{\by_{n+1}}\by'_{n+1}\Vert^2}}\right)^2\right]\\
    &= \E_{\by_{n+1}} \E_{f, \{\by_i\}} \left[\left(\frac{\sum_{i} f(\phi(\by_i))e^{-\Vert\phi(\by_i)- c_{\by_{n+1}} \by_{n+1}\Vert^2}}{\sum_j e^{-\Vert \phi(\by_j) -c_{\by_{n+1}}\by_{n+1}\Vert^2}}\right)^2\right]&&\text{rotational symmetry of $\{\by_i\}, \by_{n+1}$}\\
    &= \E_{\by_{n+1}} \E_{f, \{\by_i\}} \left[\left(\frac{\sum_{i} f(\by_i)e^{-\Vert\by_i- c_{\by_{n+1}} \by_{n+1}\Vert^2}}{\sum_j e^{-\Vert \by_j -c_{\by_{n+1}}\by_{n+1}\Vert^2}}\right)^2\right]&&\text{rotational symmetry of $\{\by_i\}$}
\end{align*}
The third takes some more work. For any choice of $\{\by_i\}$, let $$\alpha_{\by_{n+1}, \{\by_i\}}(\by_*) = \frac{e^{-\Vert \by_{n+1}-\by_*\Vert^2}}{e^{-\Vert \by_{n+1}-\by_*\Vert^2} + \sum_j e^{-\Vert\by_{n+1}- \by_i\Vert^2}}.$$ We see that $\alpha_{\by_{n+1}, \{\by_i\}}(\by_*)$ varies monotonically with $\by_{n+1}^\top \by_*$ for all $\by_{n+1}, \{\by_i\}$. That is, $$\by_*^\top \by_{n+1} > \by_*'^\top \by_{n+1} \implies \alpha_{\by_{n+1}, \{\by_i\}}(\by_*) > \alpha_{\by_{n+1}, \{\by_i\}}(\by_*'),$$ 
\begin{align*} 
&\E_{f, \by_{n+1}, \{\by_i\}} \left[\sum_i \frac{f(\by_{n+1})f(\by_i)e^{-\Vert \by_i- \mathbf{M}\by_{n+1}\Vert^2}}{\sum_j e^{-\Vert \by_j-\mathbf{M}\by_{n+1}\Vert^2}}\right]\nonumber \\
&= \E_{ \by_{n+1}, \{\by_i\}} \left[\sum_i \frac{\E_f \left[f(\by_{n+1})f(\by_i)\right]e^{-\Vert\by_i- \mathbf{M}\by_{n+1}\Vert^2}}{\sum_j e^{-\Vert \by_j- \mathbf{M}\by_{n+1}\Vert^2}}\right]\\
&= \E_{ \by_{n+1}, \{\by_i\}} \left[\sum_i \frac{\rho(\by_{n+1}^\top \by_i)e^{-\Vert \by_i - \mathbf{M}\by_{n+1}\Vert^2}}{\sum_j e^{-\Vert\by_j-\mathbf{M}\by_{n+1}\Vert^2}}\right]\\
&= n\E_{ \by_{n+1}, \by_*, \{\by_i\}_{i=[n-1]}} \left[\frac{\rho(\by_{n+1}^\top \by_*)e^{-\Vert \by_*- \mathbf{M}\by_{n+1}\Vert^2}}{e^{-\Vert \by_*-\mathbf{M}\by_{n+1}\Vert^2} + \sum_j e^{-\Vert \by_j- \mathbf{M}\by_{n+1}\Vert^2}}\right]\\
&= n\E_{ \by_{n+1}, \by_*, \{\by_i\}_{i=[n-1]}} \left[\rho(\by_{n+1}^\top \by_*)\alpha_{\mathbf{M}\by_{n+1}, \{\by_i\}}(\by_*))\right]\\
&= n\E_{ \by_{n+1}, \by_*, \{\by_i\}_{i=[n-1]}} \left[\rho(\by_{n+1}^\top \by_*)\alpha_{c_{\by_{n+1}}\by', \{\by_i\}}(\by_*))\right]\\
&= n\E_{ \by_{n+1}, \by_*, \{\by_i\}_{i=[n-1]}} \left[\rho(\by_{n+1}^\top \by_*)\alpha_{c_{\by_{n+1}}\by_{n+1}, \{\phi^{-1}(\by_i)\}}(\phi^{-1}(\by_*)))\right]\\
&= n\E_{ \by_{n+1}, \by_*, \{\by_i\}_{i=[n-1]}} \left[\rho(\by_{n+1}^\top \by_*)\alpha_{c_{\by_{n+1}}\by_{n+1}, \{\by_i\}}(\phi^{-1}(\by_*)))\right]\\
\end{align*}
Similarly, we have
\begin{align*} 
&\E_{f, \by_{n+1}, \{\by_i\}} \left[\sum_i \frac{f(\by_{n+1})f(\by_i)e^{-\Vert \by_i- \omega(\by_{n+1})\Vert^2}}{\sum_j e^{-\Vert \by_j-\omega(\by_{n+1})\Vert^2}}\right]\nonumber \\
&= \E_{ \by_{n+1},\{\by_i\}} \left[\sum_i \frac{\E_f \left[f(\by_{n+1})f(\by_i)\right]e^{-\Vert\by_i- c_{\by_{n+1}}\by_{n+1}\Vert^2}}{\sum_j e^{-\Vert \by_j- c_{\by_{n+1}}\by_{n+1}\Vert^2}}\right]\\
&= \E_{ \by_{n+1},\{\by_i\}} \left[\sum_i \frac{\rho(\by_{n+1}^\top \by_i)e^{-\Vert \by_i -c_{\by_{n+1}}\by_{n+1}\Vert^2}}{\sum_j e^{-\Vert\by_j-c_{\by_{n+1}}\by_{n+1}\Vert^2}}\right]\\
&= n\E_{ \by_{n+1},\by_*, \{\by_i\}_{i=[n-1]}} \left[\frac{\rho(\by_{n+1}^\top \by_*)e^{-\Vert \by_*-c_{\by_{n+1}}\by_{n+1}\Vert^2}}{e^{-\Vert \by_*-c_{\by_{n+1}}\by_{n+1}\Vert^2} + \sum_j e^{-\Vert \by_j- c_{\by_{n+1}}\by_{n+1}\Vert^2}}\right]\\
&= n\E_{ \by_{n+1},\by_*, \{\by_i\}_{i=[n-1]}} \left[\rho(\by_{n+1}^\top \by_*)\alpha_{c_{\by_{n+1}}\by_{n+1}, \{\by_i\}}(\by_*))\right]
\end{align*}
Critically, for a given $\by_{n+1}$, $\alpha_{\by, \{\by_i\}}(\by_*)$ can be re-parameterized as \\
$ \alpha_{\by_{n+1}, \{\by_i\}}(\by_*) = \alpha'_{\{\by_i\}}(\by_*-\by_{n+1})$ where $\alpha'_{\{\by_i\}}$ is symmetric about $0$ and decreasing. Similarly, $\rho(\by_{n+1}^\top\by_*)$ can be re-parameterized as $\rho(\by_{n+1}^\top \by_*) = \rho'(\by_*-\by_{n+1})$ where $\alpha', \rho'$ are symmetric decreasing rearrangement (that is, the set of points $\bz$ such that $\rho(\bx) > r$ is a ball about the origin). From Lemma \ref{lem:HardyLittlewood} we then have 
\begin{align*}
&\E_{ \by_{n+1}}\E{\by_*, \{\by_i\}_{i=[n-1]}} \left[\rho(\by_{n+1}^\top \by_*)\alpha_{c_{\by_{n+1}}\by_{n+1}, \{\by_i\}}(\phi^{-1}(\by_*))\right] \\
&\hspace{0.2cm}= \E_{ \by_{n+1}}\E{\by_*, \{\by_i\}_{i=[n-1]}} \left[\rho'(\Vert \by_{n+1}-\by_*\Vert)\alpha_{\{\by_i\}}(\Vert \by_{n+1} - \phi^{-1}\by_*\Vert)\right]\\
&\hspace{0.2cm}< \E_{ \by_{n+1}}\E{\by_*, \{\by_i\}_{i=[n-1]}} \left[\rho'(\Vert \by_{n+1}-\by_*\Vert)\alpha_{\{\by_i\}}(\Vert \by_{n+1} - \by_*\Vert)\right]\\
&\hspace{0.2cm}=\E_{ \by_{n+1}}\E{\by_*, \{\by_i\}_{i=[n-1]}} \left[\rho(\by_{n+1}^\top \by_*)\alpha_{c_{\by_{n+1}}\by_{n+1}, \{\by_i\}}(\by_*)\right]
\end{align*}
So $\uL(\omega)< \uL(\mathbf{M})$. Let $$q(c_{\by_{n+1}}) = \E_{f, \{\by_i\}} \left[\left(f(\by_{n+1}) - \frac{\sum_{i} f(\by_i)e^{-\Vert\by_i - c_{\by_{n+1}} \by_{n+1}\Vert^2}}{\sum_j e^{-\Vert \by_j -c_{\by_{n+1}} \by_{n+1}\Vert^2}}\right)^2\right].$$ 
Observe that $\uL(\omega) = \E_{\by_{n+1}} q(c_{\by_{n+1}})$. We might as well set $\omega$ to be such that $c_{\by_{n+1}}$ is the same for all $\by_{n+1}$ and a minimizer of $q$, so we have $\omega(\by_{n+1}) = c\by_{n+1}$ for all $\by_{n+1}$ which implies $\omega=c\mathbf{I}_d$ for some $c$. Because the optimal $\mathbf{M}'$ is identity, the corresponding optimal $\mathbf{M}$ is $\boldsymbol{\Sigma}^{-1}$.
\end{proof}

\subsection{Rewriting the Loss}\label{appendix:rewriting_as_NW}
As a result of this, we can take $\mathbf{M} = w_{KQ}\boldsymbol{\Sigma}^{-1}$ and write the attention estimator as 
\begin{equation}\label{eq:weighted_distance_ICL}
    h_{SA}(\bx) = \sum_i \frac{f(\bx_i)e^{-w_{KQ}\Vert \boldsymbol{\Sigma}^{-1/2}\bx_i-\boldsymbol{\Sigma}^{-1/2}\bx_{n+1}\Vert^2}}{\sum_j e^{-w_{KQ}\Vert \boldsymbol{\Sigma}^{-1/2}\bx_j-\boldsymbol{\Sigma}^{-1/2}\bx_{n+1}\Vert^2}}
\end{equation}
This allows us to make the transformation $\X\to \boldsymbol{\Sigma}^{-1/2}\X$. This has the effect of making both the data covariance and the induced function class covariance equal to the identity. Essentially, WLOG we will henceforth consider $\boldsymbol{\Sigma} = \mathbf{I}_d$. Henceforth, the estimator will be taken to be 
\begin{equation}
    h_{SA}(\bx) = \sum_i \frac{f(\bx_i)e^{-w_{KQ}\Vert \bx_i-\bx_{n+1}\Vert^2}}{\sum_j e^{-w_{KQ}\Vert \bx_j-\bx_{n+1}\Vert^2}}
\end{equation}
and the loss will be parameterized by $w_{KQ}$ as 
$$\uL(w_{KQ}) = \E_{f, \{\bx_i\}} \left[\left( \sum_i \frac{\left(f(\bx_i) + \epsilon_i\right)e^{-w_{KQ}\Vert \bx_i-\bx_{n+1}\Vert^2}}{\sum_j e^{-w_{KQ}\Vert \bx_j-\bx_{n+1}\Vert^2}} - f(\bx_{n+1})\right)^2\right].$$
Because the noise $\epsilon_i$ is independent of everything else, we can decompose this into two terms, a signal term and a noise term as follows
\begin{align*}
\uL(w_{KQ}) &= \underbrace{\E_{f, \{\bx_i\}} \left[\left( \sum_i \frac{\left(f(\bx_{n+1})-f(\bx_i)\right)e^{-w_{KQ}\Vert \bx_i-\bx_{n+1}\Vert^2}}{\sum_j e^{-w_{KQ}\Vert \bx_j-\bx_{n+1}\Vert^2}}\right)^2\right]}_{\uLb(w_{KQ})}\\
&\hspace{1cm}+ \underbrace{\E_{f, \{\bx_i\}} \left[\left( \sum_i \frac{\epsilon_ie^{-w_{KQ}\Vert \bx_i-\bx_{n+1}\Vert^2}}{\sum_j e^{-w_{KQ}\Vert \bx_j-\bx_{n+1}\Vert^2}} - f(\bx_{n+1})\right)^2\right]}_{\uLn(w_{KQ})}
\end{align*}
We bound the first term in Appendix \ref{app:bias} and the second in Appendix \ref{app:noise}. A useful function that we bound in Lemma \ref{lem:gr} and Corrolary \ref{cor:gr} in Appendix \ref{app:misc} is 
$$g_p(r) = \sum_{i=1}^n \Vert\bx_i-\bx\Vert^p e^{-r\Vert \bx_i^\top -\bx^2\Vert}.$$
We will use this function, particularly for $p=0$ and $1$.

\section{The Signal Term}
\label{app:bias}
The purpose of this section of the Appendix is to obtain upper and lower bounds on $\uLb(w_{KQ})$. Because we work with two different distributions over functions, and because the bounds depend on the distributions, we will make the distribution explicit in the argument to the function

$$\uLb(w_{KQ}; D(\F)) = \E_{f, \{\bx\}} \left(f(\bx_i) - \sum_i \frac{f(\bx_i)e^{-w_{KQ}\Vert \bx_i-\bx_{n+1}\Vert^2}}{\sum_j e^{-w_{KQ}\Vert \bx_j-\bx_{n+1}\Vert^2}}\right)^2$$

As a reminder, we consider the following two distributions over functions. Please see section \ref{appendix:rewriting_as_NW} to see why we have set the covariance of $\textbf{w}$ to be identity.

\begin{definition}[Affine and 2-ReLU Function Classes]
The function classes $\mathcal{F}^{\text{aff}}_L$ and $\mathcal{F}^{+}_L$ are respectively defined as:
\begin{align*}
\mathcal{F}_{L}^{\text{aff}} &:= \{ f: f(\bx) = l \mathbf{w}^\top \bx+b, \; \mathbf{w}\in \mathbb{S}^{d-1} \}, \\
\mathcal{F}_{L}^+ &:= \{ f: f(\bx) = l_1 \relu(\mathbf{w}^\top \bx) + l_2 \relu(-\mathbf{w}^\top \bx)+b, \; \mathbf{w}\in \mathbb{S}^{d-1} \}.
\end{align*}
$D(\F_{L}^{\text{aff}}), D(\F_{L}^+)$ are induced by taking $\mathbf{w}\sim \mathcal{U}^d$, $b, l, l_1, l_2\sim \text{Unif}[-L, L]$.
\end{definition}

First we have the following trivial bound on $\uLb(w_{KQ})$.
\begin{lemma}\label{lem:trivialupperbound}
    For all $w_{KQ}$ we have $\uLb(w_{KQ})\le 4L^2$.
\end{lemma}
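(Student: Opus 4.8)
The plan is to exploit the fact that the softmax attention weights form a convex combination and then apply Jensen's inequality, reducing the bound to a pointwise estimate on label differences coming from Lipschitzness.

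First I would introduce the shorthand $\beta_i := e^{-w_{KQ}\|\bx_i-\bx_{n+1}\|^2}\big/\sum_{j=1}^n e^{-w_{KQ}\|\bx_j-\bx_{n+1}\|^2}$ for $i\in[n]$, and observe that $\beta_i \ge 0$ and $\sum_{i=1}^n \beta_i = 1$, so $(\beta_1,\dots,\beta_n)$ is a probability vector for every realization of $\{\bx_i\}$ and every $w_{KQ}$. With this notation the quantity inside the square defining $\uLb(w_{KQ})$ is exactly $\sum_{i=1}^n \beta_i\big(f(\bx_{n+1})-f(\bx_i)\big)$, a convex combination of the label differences $f(\bx_{n+1})-f(\bx_i)$.

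Next I would apply Jensen's inequality to the convex map $t\mapsto t^2$ to get, pointwise in $f$ and $\{\bx_i\}$,
\begin{equation*}
\Big(\sum_{i=1}^n \beta_i\big(f(\bx_{n+1})-f(\bx_i)\big)\Big)^2 \;\le\; \sum_{i=1}^n \beta_i\,\big(f(\bx_{n+1})-f(\bx_i)\big)^2 .
\end{equation*}
Since the covariates lie on $\mathbb{S}^{d-1}$ (Assumption \ref{assump:dists}, together with the WLOG reduction to $\boldsymbol{\Sigma}=\mathbf{I}_d$ in the preceding subsection), we have $\|\bx_i-\bx_{n+1}\|\le 2$ for all $i$, and since every $f$ in the classes under consideration is $L$-Lipschitz (Definition \ref{def:f}), $\big(f(\bx_{n+1})-f(\bx_i)\big)^2 \le L^2\|\bx_i-\bx_{n+1}\|^2 \le 4L^2$. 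Substituting back, the integrand is bounded by $\sum_{i=1}^n \beta_i\cdot 4L^2 = 4L^2$, and taking expectation over $f$ and $\{\bx_i\}$ preserves the bound, giving $\uLb(w_{KQ})\le 4L^2$.

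There is essentially no obstacle here — this is genuinely a trivial bound, as the lemma's name suggests; the only point requiring a moment's care is invoking that the covariates lie on the unit sphere so that the diameter bound $\|\bx_i-\bx_{n+1}\|\le 2$ applies uniformly. (One could instead bound $|f|$ directly on the sphere — e.g.\ $|f(\bx)|\le |l|+|b|\le 2L$ for $f\in\mathcal{F}_L$ — but this yields the weaker constant $16L^2$, so the Lipschitz route is preferable.)
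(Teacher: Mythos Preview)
Your proof is correct and follows essentially the same route as the paper: write the error as a convex combination of label differences $f(\bx_{n+1})-f(\bx_i)$, then bound each difference by $L\|\bx_i-\bx_{n+1}\|\le 2L$ using Lipschitzness and the unit-sphere constraint. The Jensen step you insert is harmless but not needed---once $|f(\bx_{n+1})-f(\bx_i)|\le 2L$ pointwise, the convex combination itself is bounded in absolute value by $2L$, so its square is at most $4L^2$ directly.
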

\begin{proof}
We have $\uLb(w_{KQ})\le \E \left[\left(\sum \frac{f(\bx_i)-f(\bx_{n+1})\gamma_i}{\sum \gamma_i}\right)^2\right]$ for some positive $\{\gamma_i\}$. By Lipschitzness, $f(\bx_i)-f(\bx_{n+1})\le L\Vert\bx_i-\bx_{n+1}\Vert \le 2L$.
\end{proof}
\subsection{Affine functions}

Here we consider the affine function class $\F_{L}^{\text{aff}}$. First, we note that this class satisfies Assumption \ref{assumptions}. 
\begin{lemma}\label{lem:linearsatisfiesassumption}
    The affine class $\F_{L}^{\text{aff}}$ in Definition \ref{def:f} satisfies Assumption \ref{assumptions}.
\end{lemma}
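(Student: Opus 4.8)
The plan is to verify, one by one, the three items of Assumption~\ref{assumptions} for $\F_L$ together with its sampling law $D(\F_L)$. Write a generic member as $f_{\mathbf{w},l,b}(\bx) = l\,\mathbf{w}^\top\bx + b$ with $\mathbf{w}\in\mathbb{S}^{d-1}$ and $l,b\in[-L,L]$, and recall that under $D(\F_L)$ we draw $\mathbf{w}\sim\mathcal{U}^d$ and $l,b\stackrel{\text{i.i.d.}}{\sim}\mathrm{Unif}([-L,L])$ with $\mathbf{w}$ independent of $(l,b)$; in the relevant regime (after the whitening reduction of Appendix~\ref{app:preliminaries}) we may take $\boldsymbol{\Sigma}=\mathbf{I}_d$, so $\|\cdot\|_{\boldsymbol{\Sigma}^{-1}}$ is just the Euclidean norm. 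Each verification is a short computation; there is no substantive obstacle, and the only place demanding a bit of care is item~2, where one must correctly track the independence of $\mathbf{w},l,b$ and identify $\E[\mathbf{w}\mathbf{w}^\top]$ via symmetry rather than by a direct integral.

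For item~1 (Lipschitzness), fix any $f_{\mathbf{w},l,b}\in\F_L$ and any $\bx,\by$: then $|f_{\mathbf{w},l,b}(\bx)-f_{\mathbf{w},l,b}(\by)| = |l|\,|\mathbf{w}^\top(\bx-\by)| \le |l|\,\|\mathbf{w}\|\,\|\bx-\by\| = |l|\,\|\bx-\by\| \le L\,\|\bx-\by\|$ by Cauchy--Schwarz, $\|\mathbf{w}\|=1$, and $|l|\le L$, so every function in $\F_L$ is $L$-Lipschitz. For item~3 (closure under isometries of the sphere), observe that an isometry $\phi$ of $\mathbb{R}^d$ mapping $\mathbb{S}^{d-1}$ onto itself must be linear orthogonal, $\phi(\bx)=\mathbf{R}\bx$ with $\mathbf{R}^\top\mathbf{R}=\mathbf{I}_d$ (any translation part is ruled out by $\phi(\mathbb{S}^{d-1})=\mathbb{S}^{d-1}$); then $(f_{\mathbf{w},l,b}\circ\phi)(\bx) = l\,\mathbf{w}^\top\mathbf{R}\bx + b = l\,(\mathbf{R}^\top\mathbf{w})^\top\bx + b = f_{\mathbf{R}^\top\mathbf{w},\,l,\,b}(\bx)$, and $\mathbf{R}^\top\mathbf{w}\in\mathbb{S}^{d-1}$ since $\mathbf{R}$ is orthogonal, so $f\circ\phi\in\F_L$.

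For item~2, I would expand $\E_{f\sim D(\F_L)}\!\big[f(\bx)f(\by)\big] = \E\big[(l\mathbf{w}^\top\bx+b)(l\mathbf{w}^\top\by+b)\big]$ and use independence of $\mathbf{w}$, $l$, $b$ together with $\E[l]=\E[b]=0$ and $\E[l^2]=\E[b^2]=L^2/3$: the two cross terms each carry a lone factor of $l$ times a lone factor of $b$ and hence vanish, leaving $\E[l^2]\,\bx^\top\E[\mathbf{w}\mathbf{w}^\top]\by + \E[b^2]$. Rotational invariance of $\mathcal{U}^d$ forces $\E[\mathbf{w}\mathbf{w}^\top]$ to commute with every orthogonal matrix, hence to equal $\tfrac1d\mathbf{I}_d$ (its trace being $\E\|\mathbf{w}\|^2=1$). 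Therefore $\E_{f\sim D(\F_L)}[f(\bx)f(\by)] = \tfrac{L^2}{3d}\,\bx^\top\by + \tfrac{L^2}{3}=:\rho(\bx^\top\by)$, which is affine in $\bx^\top\by$ with strictly positive slope, hence monotonically increasing — precisely item~2.

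Finally, if one prefers to keep $\boldsymbol{\Sigma}$ general rather than invoking the whitening reduction, the same three arguments go through verbatim with $\|\cdot\|$ replaced by $\|\cdot\|_{\boldsymbol{\Sigma}^{-1}}$ and $\mathbf{w}$ drawn from the correspondingly rescaled (rotationally invariant after change of variables) distribution, whose normalization is chosen exactly so that item~1 holds with Lipschitz constant $L$; the correlation computation then produces $\rho$ as an increasing affine function of $(\boldsymbol{\Sigma}^{-1/2}\bx)^\top(\boldsymbol{\Sigma}^{-1/2}\by)$. This completes the verification of all three conditions, so $\F_L$ satisfies Assumption~\ref{assumptions}.
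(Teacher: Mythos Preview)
Your proof is correct and follows essentially the same approach as the paper's: Cauchy--Schwarz for item~1, expanding the product and using independence plus $\E[\mathbf{w}\mathbf{w}^\top]=\tfrac{1}{d}\mathbf{I}_d$ for item~2, and rotational invariance of the law of $\mathbf{w}$ for item~3. If anything, your write-up is more careful than the paper's (you explicitly track the factor $|l|\le L$ in item~1, explain why the cross terms vanish in item~2, and justify why a sphere-preserving isometry must be linear orthogonal in item~3), but the underlying arguments are identical.
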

\begin{proof}
\begin{enumerate}
    \item We have $|f(\bx)-f(\by)| = |\mathbf{w}^\top (\bx-\by)|\le \Vert \mathbf{w}\Vert \Vert \bx-\by\Vert$ by Cauchy-Schwarz.
    \item Because $b$ is independent of $w$, we have $$\E_f\left[f(\bx)f(\by)\right] = \E_{\mathbf{w}}\left[l^2\bx^\top \mathbf{ww}^\top\by + b^2\right] = \E l^2 \frac{\E_{\mathbf{w}} \Vert \mathbf{w}\Vert^2}{d} \bx^\top \by + \frac{L^2}{3}.$$
    \item $\mathbf{w}$ is isotropic, so $\phi(\mathbf{w})$ is also supported by the distribution on $\mathbf{w}$.
\end{enumerate}
\end{proof}

\begin{lemma}\label{lem:linearbiasupperbound}
    For affine functions, the signal term is upper bounded as 
    $$\uLb(w_{KQ}; D(\F_L^{\text{aff}}))\le 
    \begin{cases}
    L^2\uO\left(\frac{1}{w^2_{KQ}} + \frac{w_{KQ}^{\frac{d}{2}-1}}{n} + \frac{1}{n}\right) & w_{KQ} \ge \frac{d+\sqrt{d}}{2}\\
    4L^2 & w_{KQ} < \frac{d+\sqrt{d}}{2}
    \end{cases}
  $$
\end{lemma}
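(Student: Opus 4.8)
## Proof Proposal for Lemma \ref{lem:linearbiasupperbound}

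The plan is to bound the squared signal error of the (true, non-modified) softmax attention estimator when $f(\bx) = l\,\mathbf{w}^\top\bx + b$ with $l\in[-L,L]$ and $\mathbf{w}$ Gaussian. The $b$ term cancels in $f(\bx_{n+1})-f(\bx_i)$, so WLOG $b=0$ and we need to bound $\E[(\sum_i \gamma_i\, l\,\mathbf{w}^\top(\bx_{n+1}-\bx_i))^2]$, where $\gamma_i := e^{-w_{KQ}\|\bx_i-\bx_{n+1}\|^2}/\sum_j e^{-w_{KQ}\|\bx_j-\bx_{n+1}\|^2}$ are the (data-dependent) attention weights summing to $1$. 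Taking the expectation over $l$ and $\mathbf{w}$ first (they are independent of the covariates), this is $\E_{\{\bx_i\}}\big[\tfrac{L^2}{3}\,\|\sum_i \gamma_i(\bx_{n+1}-\bx_i)\|^2\big]$ up to the constant $\E[l^2]\,\E\|\mathbf{w}\|^2/d = \Theta(L^2)$. So the whole problem reduces to controlling the squared norm of the \emph{weighted centroid displacement} $\Delta := \sum_i \gamma_i(\bx_{n+1}-\bx_i) = \bx_{n+1} - \sum_i \gamma_i \bx_i$, i.e.\ how far the softmax-weighted average of the context points is from the query.

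The first key step is to bound $\|\Delta\|$ deterministically (or with high probability) in terms of $w_{KQ}$ and the geometry. By Cauchy–Schwarz and $\sum_i\gamma_i=1$, $\|\Delta\|^2 \le \sum_i \gamma_i \|\bx_{n+1}-\bx_i\|^2$. Now I would split the context points by distance: points with $\|\bx_i-\bx_{n+1}\|^2 \le \tau/w_{KQ}$ contribute at most $\tau/w_{KQ}$ to this sum, while points farther away carry exponentially small weight $e^{-\tau}$ relative to \emph{any} fixed nearby point. The subtlety is that we need at least one nearby point to make the normalization $\sum_j e^{-w_{KQ}\|\bx_j-\bx_{n+1}\|^2}$ not too small; this is exactly the ``cap measure'' estimate — for $w_{KQ} \gtrsim d$, the spherical cap of squared-radius $\Theta(1/w_{KQ})$ around $\bx_{n+1}$ has measure $\Theta(w_{KQ}^{-d/2})$, so in expectation $\Theta(n w_{KQ}^{-d/2})$ of the $n$ points land there (this is where the $w_{KQ}<\tfrac{d+\sqrt d}{2}$ cutoff and the trivial bound of Lemma \ref{lem:trivialupperbound} come in — below the cutoff the cap estimates degrade and we just use $\uLb \le 4L^2$). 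I would use the function $g_p(r) = \sum_i \|\bx_i-\bx_{n+1}\|^p e^{-r\|\bx_i-\bx_{n+1}\|^2}$ and its bounds from Lemma \ref{lem:gr}/Corollary \ref{cor:gr}: indeed $\|\Delta\|^2 \le \sum_i\gamma_i\|\bx_{n+1}-\bx_i\|^2 = g_2(w_{KQ})/g_0(w_{KQ})$, and the claimed terms $\tfrac{1}{w_{KQ}^2}$, $\tfrac{w_{KQ}^{d/2-1}}{n}$, $\tfrac1n$ should fall out of upper-bounding $g_2$ (the $1/w_{KQ}^2$ is the ``center of mass of a ring is $\uO(1/w_{KQ})$ from the sphere'' effect squared, the $w_{KQ}^{d/2-1}/n$ and $1/n$ are finite-sample corrections from $g_0$ concentrating around its mean $\Theta(nw_{KQ}^{-d/2})$).

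Concretely I would carry it out as: (1) reduce to bounding $\E_{\{\bx_i\}}\|\Delta\|^2$ via the independence of $l,\mathbf{w}$ and $\E[l^2]\E\|\mathbf{w}\|^2/d$; (2) write $\|\Delta\|^2 \le g_2(w_{KQ})/g_0(w_{KQ})$ pointwise; (3) invoke Corollary \ref{cor:gr} to control $g_2$ from above and $g_0$ from below, on the high-probability event that $g_0 = \Theta(\E g_0) = \Theta(n w_{KQ}^{-d/2})$; (4) on the low-probability complement, fall back to $\|\Delta\|^2 \le 4$ and Lemma \ref{lem:trivialupperbound} to absorb it into the $\tfrac1n$-type term (the failure probability should itself be $\uO(1/n)$ or exponentially small by a Chernoff/Bernstein bound on the Binomial-like count of points in the cap); (5) assemble the three terms and note that for $w_{KQ}<\tfrac{d+\sqrt d}{2}$ the trivial bound $4L^2$ is what survives. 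The main obstacle is step (3)–(4): making the cap-measure concentration rigorous so that $g_0$ is genuinely $\Omega(n w_{KQ}^{-d/2})$ with enough probability, and tracking the exact polynomial-in-$w_{KQ}$ exponents through $g_2/g_0$ to land precisely on $\tfrac{w_{KQ}^{d/2-1}}{n}$ rather than something weaker — this bookkeeping (and ensuring the constants hidden in $\uO$ depend only on $d$) is the delicate part, but it is exactly what Lemma \ref{lem:gr} and Corollary \ref{cor:gr} are designed to supply.
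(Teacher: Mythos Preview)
Your reduction (step (1)) matches the paper: after integrating out $l$ and $\mathbf{w}$ the problem becomes bounding $\E\|\Delta\|^2$ with $\Delta = \bx_{n+1}-\sum_i\gamma_i\bx_i$. But your core step (2), $\|\Delta\|^2 \le \sum_i\gamma_i\|\bx_{n+1}-\bx_i\|^2 = g_2(w_{KQ})/g_0(w_{KQ})$, is too crude to reach the stated bound. Corollary~\ref{cor:gr} gives $g_2/g_0 = \Theta(1/w_{KQ})$, so your route yields $\uLb \le L^2\uO(1/w_{KQ}+1/n)$ --- this is the \emph{nonlinear} bound of Lemma~\ref{lem:nonlinearbiasupperbound}, not the sharper $1/w_{KQ}^2$ claimed here. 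The Jensen step is exactly where you throw away the cancellation your own ring intuition describes: the weighted \emph{mean} of the $\bx_i$'s sits $\uO(1/w_{KQ})$ from $\bx_{n+1}$ (so $\|\Delta\|^2=\uO(1/w_{KQ}^2)$), whereas the weighted mean of the \emph{squared distances} is $\Theta(1/w_{KQ})$ because the points themselves sit at radius $\Theta(1/\sqrt{w_{KQ}})$.

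The paper recovers the cancellation by decomposing $\Delta$ into its component parallel to $\bx_{n+1}$ and its orthogonal component. The parallel piece is $1-\bx_{n+1}^\top\tilde\bx = \tfrac12 g_2(w_{KQ})/g_0(w_{KQ}) = \uO(1/w_{KQ})$ \emph{before} squaring, yielding $\uO(1/w_{KQ}^2)$ after. The perpendicular piece $\|(\mathbf{I}-\bx_{n+1}\bx_{n+1}^\top)\tilde\bx\|^2$ is where the $w_{KQ}^{d/2-1}/n$ term actually originates (not from a finite-sample correction to $g_0$ as you guessed): exploiting that $(\bx'^\top\bx_i)^2\le 1-(\bx_{n+1}^\top\bx_i)^2$, the paper bounds it by $g_2(2w_{KQ})/g_0(w_{KQ})^2 = \uO(w_{KQ}^{d/2-1}/n)$. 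The $1/n$ term then comes, as you suspected, from the failure event of Corollary~\ref{cor:gr} combined with the trivial bound $4L^2$. So to fix your argument, replace the Jensen step by this parallel/perpendicular split; everything downstream (the use of Corollary~\ref{cor:gr} and the trivial-bound fallback) is as you outlined.
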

\begin{proof}
    In the interest of readability, we will denote $\bx_{n+1}$ as $\bx$. Consider $\tilde \bx$ such that \
    $\tilde \bx = \sum_i \bx_i \frac{e^{-2w_{KQ}\bx_i^\top \bx}}{\sum_j e^{-2w_{KQ}\bx_i^\top \bx}}$. Then our loss is given by $\E \left[l^2\mathbf{w}^\top(\bx - \tilde\bx)\right]^2$. 
    First, since $\mathbf{w}$ is independent of $\bx, \{\bx_i\}$, we have $\E l^2\left(\mathbf{w}^\top (\bx-\tilde \bx)\right)^2=\E l^2\mathbf{ww}^\top (\bx-\tilde \bx)(\bx-\tilde \bx)^\top$, Now $\mathbf{w}$ has a uniformly randomly chosen direction, so its covariance is a multiple of the identity. We have $\E \text{Tr}(\mathbf{ww}^\top) = \E \Vert \mathbf{w} \Vert^2 = \frac{L^2}{3}$, so $\E l^2\mathbf{ww}^\top  = \frac{L^2}{3d}\mathbf{I}_d$. Continuing, $\E \left(\mathbf{w}^\top (\bx-\tilde \bx)\right)^2 = \frac{L^2}{3d}\E \Vert \bx - \tilde\bx\Vert^2$. Take any $\bx' \perp\bx$, we have 
    \begin{align*}
    \E \tilde \bx^\top \bx' 
    &= \E \sum_i \bx_i^\top \bx' \frac{e^{-2w_{KQ}\bx_i^\top \bx}}{\sum_j e^{-2w_{KQ}\bx_i^\top \bx}}\\
    &= \E \sum_i \E [\bx_i^\top \bx'\mid \bx_i^\top] \frac{e^{-2w_{KQ}\bx_i^\top \bx}}{\sum_j e^{-2w_{KQ}\bx_i^\top \bx}} = 0&&\text{iterated expectation and symmetry}
    \end{align*}
    Decomposing $\tilde \bx$ into an orthogonal and a parallel component, we have $\E \Vert \bx - \tilde\bx\Vert^2=\E \Vert \bx - \bx\bx^\top \tilde\bx-\bx'\bx'^\top\tilde\bx\Vert^2$ for some $\bx'\perp\bx$ with $\Vert \bx'\Vert = 1$. But 
    \begin{align}
    &\E \Vert \bx - \bx\bx^\top \tilde\bx-\bx'\bx'^\top\tilde\bx\Vert^2 \nonumber\\
    &= \E \Vert \bx(1-\bx^\top \tilde\bx)\Vert^2 + \E\Vert\bx'\bx'^\top\tilde\bx\Vert^2 - 2\E \bx(1-\bx^\top \tilde \bx)\tilde \bx^\top \bx'\bx'^\top\nonumber\\
    &= \E \Vert \bx(1-\bx^\top \tilde\bx)\Vert^2 + \E\Vert\bx'\bx'^\top\tilde\bx\Vert^2\hspace{1cm} \because \bx^\top \bx' = 0\implies 2\E \bx(1-\bx^\top \tilde \bx)\tilde \bx^\top \bx'\bx'^\top = 0\label{eq:linearupperbound}
    \end{align}
    \textbf{Case 1: }$w_{KQ} \ge \frac{d+\sqrt{d}}{2}$. 

    Consider first the term $\E \Vert \bx(1-\bx^\top \tilde\bx)\Vert^2 = \E (1-\bx^\top \tilde\bx)^2$. Here we have with probability $1-\frac{1}{n}$
    \begin{align}
    &1-\bx^\top\tilde{\bx} = \frac{\sum (1-\bx^\top \bx_i)e^{-w_{KQ}\Vert \bx-\bx_i\Vert^2}}{\sum e^{-w_{KQ}\Vert \bx-\bx_i\Vert^2}}= \frac{g_2(w_{KQ})}{2g_0(w_{KQ})}\nonumber\\
    &\le \frac{\overline{C_b}n\left(\frac{1}{w_{KQ}}\right)^{\frac{d}{2}+1}}{2\underline{C_b}n\left(\frac{1}{w_{KQ}}\right)^{\frac{d}{2}}}\le \frac{\overline{C_b}}{\underline{C_b}}\frac{1}{w_{KQ}}&&\text{Corollary \ref{cor:gr}}\label{eq:linearupperbound1}
    \end{align}
 The other term $\E\Vert\bx'\bx'^\top\tilde\bx\Vert^2 = \E (\bx'^\top\tilde\bx)^2$ is the component of the bias in the direction orthogonal to $\bx$.  
\begin{align*}
&(\bx'^\top \tilde{\bx})^2 = \left(\frac{\sum_i \bx'^\top \bx_ie^{-w_{KQ}\Vert \bx_i-\bx\Vert^2}}{\sum_ie^{-w_{KQ}\Vert \bx_i-\bx\Vert^2}}\right)^2\\
&\quad\le \left(\frac{\sum_i \bx'^\top\bx_i e^{-w_{KQ}\Vert \bx_i-\bx\Vert^2}}{\sum_ie^{-w_{KQ}\Vert \bx_i-\bx\Vert^2}}\right)^2\\
&\quad\le \left(\frac{\sum_i \bx'^\top\bx_i e^{-w_{KQ}\Vert \bx_i-\bx\Vert^2}}{\sum_ie^{-w_{KQ}\Vert \bx_i-\bx\Vert^2}}\right)^2\\
&\quad\le \frac{\sum_i \left(1-(\bx^\top\bx_i)^2\right)e^{-2w_{KQ}\Vert \bx_i-\bx\Vert^2}}{\left(\sum_ie^{-w_{KQ}\Vert \bx_i-\bx_n\Vert^2}\right)^2}&&\text{Popoviciu's Variance inequality}\\
&\quad\le \frac{\sum_i 2\left(1-\bx^\top\bx_i\right)e^{-2w_{KQ}\Vert \bx_i-\bx\Vert^2}}{\left(\sum_ie^{-w_{KQ}\Vert \bx_i-\bx\Vert^2}\right)^2}\\
&\quad \le \frac{\sum_i \Vert\bx_i-\bx\Vert^2e^{-2w_{KQ}\Vert \bx_i-\bx\Vert^2}}{\left(\sum_ie^{-w_{KQ}\Vert \bx_i-\bx\Vert^2}\right)^2}= \frac{g_2(2w_{KQ})}{g_0^2(w_{KQ})}\\
\end{align*}
With probability $1-\frac{1}{n}$, when $w_{KQ}\ge d+\sqrt{d}$ we have
\begin{align}\label{eq:linearupperbound2}
\frac{g_2(2w_{KQ})}{g_0(w_{KQ})^2}&\le \frac{\overline{c_g}n\left(\frac{1}{2w_{KQ}}\right)^{\frac{d}{2}+1}}{\left(\underline{c_g}n\left(\frac{1}{w_{KQ}}\right)^{\frac{d}{2}}\right)^2}\le \frac{\overline{c_g}w_{KQ}^{\frac{d}{2}-1}}{\underline{c_g}^2 2^{\frac{d}{2}+1}n}
\end{align}
Putting together Equations \ref{eq:linearupperbound1} and \ref{eq:linearupperbound2}, we have with probability $1-\frac{1}{n}$,
$$\uLb(w_{KQ}; D(\F_L))\le \uO\left(\frac{L^2}{3d}\left(\frac{1}{w_{KQ}} + \frac{w_{KQ}^{\frac{d}{2}-1}}{n}\right)\right).$$
The signal bias is upper bounded by $4L^2$ always (Lemma \ref{lem:trivialupperbound}). The overall upper-bound on the expectation is
$$\uLb(w_{KQ}; D(\F_L))\le \uO\left(\frac{L^2}{3d}\left(\frac{1}{w_{KQ}} + \frac{w_{KQ}^{\frac{d}{2}-1}}{n} + 4\right)\right).$$

\textbf{Case 2:} $w_{KQ}<\frac{d+\sqrt{d}}{2}$.
We always have $\uL(w_{KQ})\le 4L^2$ from Lemma \ref{lem:trivialupperbound}.
\end{proof}

\begin{lemma}\label{lem:linearbiaslowerbound}
    For affine functions, the signal term is lower bounded as 
    $$\uLb(w_{KQ}; D(\F_L^{\text{aff}}))\ge \begin{cases}
          \Omega\left(\frac{L^2 }{w^2_{KQ}}\right)  & w_{KQ} > \frac{d+\sqrt{d}}{2} \\
          \Omega\left(1\right)  & w_{KQ}<\frac{d+\sqrt{d}}{2}
        \end{cases}.$$
\end{lemma}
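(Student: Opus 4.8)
The plan is to reduce the signal term to a purely geometric quantity---the expected squared distance between the query $\bx_{n+1}$ and the attention-weighted barycentre $\tilde{\bx}$ of the context covariates---and then to lower bound that distance by its component along $\bx_{n+1}$, which has a clean closed form in terms of the functions $g_0,g_2$ controlled by Corollary~\ref{cor:gr}.

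First I would use the structure of the affine class. Writing $p_i := e^{-w_{KQ}\|\bx_i-\bx_{n+1}\|^2}\big/\sum_j e^{-w_{KQ}\|\bx_j-\bx_{n+1}\|^2}$ and $\tilde{\bx} := \sum_i p_i\bx_i$, the offset $b$ in $f(\bx)=l\,\mathbf{w}^\top\bx+b$ cancels since $\sum_i p_i=1$, so $f(\bx_{n+1})-\sum_i p_i f(\bx_i)=l\,\mathbf{w}^\top(\bx_{n+1}-\tilde{\bx})$. Since $(l,\mathbf{w})$ is independent of $\{\bx_i\}$, conditioning on $\{\bx_i\}$ and using $\E[l^2\,\mathbf{w}\mathbf{w}^\top]=c\,L^2\,\mathbf{I}_d$ for a positive constant $c$ depending only on $d$ gives the identity $\uLb(w_{KQ};D(\F_L))=c\,L^2\,\E_{\{\bx_i\}}\|\bx_{n+1}-\tilde{\bx}\|^2$. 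Since $\|\bx_{n+1}\|=1$, I would bound $\|\bx_{n+1}-\tilde{\bx}\|^2\ge\big(\bx_{n+1}^\top(\bx_{n+1}-\tilde{\bx})\big)^2=(1-\bx_{n+1}^\top\tilde{\bx})^2$, and since $1-\bx_{n+1}^\top\bx_i=\tfrac12\|\bx_i-\bx_{n+1}\|^2$ on the sphere, $1-\bx_{n+1}^\top\tilde{\bx}=g_2(w_{KQ})\big/\big(2g_0(w_{KQ})\big)$. So it suffices to prove $\E\big[(g_2(w_{KQ})/g_0(w_{KQ}))^2\big]=\Omega\!\big(1/\max(w_{KQ},\tfrac{d+\sqrt d}{2})^2\big)$; note this is a \emph{lower} bound on the attention-weighted average of $\{\|\bx_i-\bx_{n+1}\|^2\}_i$, i.e.\ the complementary side of Corollary~\ref{cor:gr} to the one used for Lemma~\ref{lem:linearbiasupperbound}.

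Next I would note that $g_2(r)/g_0(r)=\E_{\mu_r}[s]$, where $\mu_r$ puts mass $\propto e^{-r s_i}$ on $s_i:=\|\bx_i-\bx_{n+1}\|^2$; differentiating in $r$ gives $\tfrac{d}{dr}(g_2/g_0)=-\Var_{\mu_r}(s)\le 0$, so $g_2/g_0$ is non-increasing in $r$ for every fixed $\{\bx_i\}$. Hence it is enough to establish the bound for $w_{KQ}\ge w_0:=\tfrac{d+\sqrt d}{2}$, and for $w_{KQ}<w_0$ it then follows by monotonicity from the value at the constant $w_0=\Theta(d)$, yielding the claimed $\Omega(1)$ up to the $d,L$ constants. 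For $w_{KQ}\ge w_0$ I would split on $w_{KQ}$ versus $n^{2/d}$. On the range where Corollary~\ref{cor:gr} applies, $w_0\le w_{KQ}\le c_1 n^{2/d}$, that corollary yields with probability at least $1-1/n$ matching two-sided estimates on $g_0(w_{KQ})$ and $g_2(w_{KQ})$ whose ratio is $\Theta(1/w_{KQ})$, so on that event $(g_2/(2g_0))^2=\Omega(1/w_{KQ}^2)$; since $n=\Omega(1)$ the event has probability bounded away from $0$, giving $\E[(g_2/(2g_0))^2]=\Omega(1/w_{KQ}^2)$. For $w_{KQ}>c_1 n^{2/d}$ I would instead use the trivial bound $g_2(w_{KQ})/g_0(w_{KQ})=\E_{\mu_{w_{KQ}}}[s]\ge\min_i s_i$, the squared distance from $\bx_{n+1}$ to its nearest context point; a union bound over the $n$ points together with the spherical-cap volume estimate shows $\min_i s_i=\Omega(n^{-2/d})=\Omega(1/w_{KQ})$ with probability at least $\tfrac12$, again giving $\E[(g_2/(2g_0))^2]=\Omega(1/w_{KQ}^2)$.

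The main obstacle will be the intermediate window $d\lesssim w_{KQ}\lesssim n^{2/d}$: one must rule out the attention-weighted average of squared distances collapsing toward $0$, i.e.\ rule out the attention mass concentrating at distances $\ll 1/\sqrt{w_{KQ}}$ from the query. This fails precisely because, by the cap-volume growth, a constant fraction of the points within squared distance $1/w_{KQ}$ of $\bx_{n+1}$ lie near the boundary of that cap, which is exactly the information packaged in the two-sided control of $g_0$ and $g_2$ by Corollary~\ref{cor:gr}; the delicate step is extracting the ``lower bound on $g_2$, upper bound on $g_0$'' direction and checking that the relevant concentration event has probability bounded below by an absolute constant. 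The remaining ingredients---the reduction to $g_2/g_0$, the monotonicity argument handling small $w_{KQ}$, and the nearest-neighbour bound handling large $w_{KQ}$---are routine.
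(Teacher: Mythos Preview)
Your core approach---reduce $\uLb$ to $\tfrac{L^2}{3d}\,\E[(1-\bx_{n+1}^\top\tilde\bx)^2]=\tfrac{L^2}{3d}\,\E\big[(g_2(w_{KQ})/(2g_0(w_{KQ})))^2\big]$ via $\E[l^2\mathbf{ww}^\top]=\tfrac{L^2}{3d}\mathbf{I}_d$, and then invoke the two-sided estimates of Corollary~\ref{cor:gr} in the regime $w_{KQ}\ge(d+\sqrt d)/2$ on a high-probability event---is exactly the paper's argument. The differences are in the tails. For small $w_{KQ}$ the paper does not use your monotonicity observation $\tfrac{d}{dr}(g_2/g_0)=-\Var_{\mu_r}(s)\le 0$; instead it splits $w_{KQ}<(d+\sqrt d)/2$ into two further subcases and applies the $g_p(r)=\Theta(ne^{-2r})$ regime of Corollary~\ref{cor:gr} directly in each to obtain $g_2/g_0=\Omega(1)$. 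Your monotonicity route is cleaner and avoids that case analysis, at the cost of needing the $w_{KQ}=w_0$ bound to already hold (which requires $n$ large enough for Corollary~\ref{cor:gr} to apply at $w_0$). For very large $w_{KQ}$ beyond the reach of Corollary~\ref{cor:gr} the paper simply does not treat this regime---the lemma is only ever used downstream under $w_{KQ}^{d/2}\lesssim n/\log n$---so your nearest-neighbour addition is extra. One caveat there: the cap measure satisfies $\sigma_\epsilon=\Theta(\epsilon^{(d-1)/2})$ (Lemma~\ref{lem:capbounds}), so the union bound gives $\min_i s_i=\Omega(n^{-2/(d-1)})$ rather than $\Omega(n^{-2/d})$, leaving a sliver $n^{2/d}\lesssim w_{KQ}\lesssim n^{2/(d-1)}$ uncovered; but this is the same (and for the paper's purposes irrelevant) gap present in the paper's own proof.
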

\begin{proof}
    Similar to Equation (\ref{eq:linearupperbound}), for $\tilde \bx = \sum_i \bx_i \frac{e^{-2w_{KQ}\bx_i^\top \bx}}{\sum_j e^{-2w_{KQ}\bx_i^\top \bx}}$, we have
    $$\uLb(w_{KQ}; D(\F_L^{\text{aff}}))\ge \frac{L^2}{3d}\E \Vert \bx(1-\bx^\top \tilde\bx)\Vert^2 = \frac{L^2}{3d}\E (1-\bx^\top \tilde \bx)^2$$
    Now consider the term $1-\bx^\top \tilde \bx$.
    We have
    $$\frac{\sum (1-\bx^\top \bx_i)e^{-w_{KQ}\Vert \bx-\bx_i\Vert^2}}{\sum e^{-w_{KQ}\Vert \bx-\bx_i\Vert^2}}\ge \frac{g_2(w_{KQ})}{2g_0(w_{KQ})}$$
    \textbf{Case 1: }$w_{KQ} \ge \frac{d+\sqrt{d}}{2}$. 
    Here we have from Corollary \ref{cor:gr}, with probability $1-1/n$
    $$\frac{\sum (1-\bx^\top \bx_i)e^{-w_{KQ}\Vert \bx-\bx_i\Vert^2}}{\sum e^{-w_{KQ}\Vert \bx-\bx_i\Vert^2}}\ge \frac{\underline{C_b}n\left(\frac{1}{w_{KQ}}\right)^{\frac{d}{2}+1}}{2\overline{C_b}n\left(\frac{1}{w_{KQ}}\right)^{\frac{d}{2}}}\ge \frac{\underline{C_b}}{2\overline{C_b}}\frac{1}{w_{KQ}}.$$
    
  With probability $1/n\le \frac{1}{2}$ the lowest we can have is $\uLb(w_{KQ}) = 0$, so overall we have 
  $$\uLb(w_{KQ})\ge \frac{L^2}{24d}\left(\frac{\underline{C_b}}{\overline{C_b}}\frac{1}{w_{KQ}}\right)^2$$
    \textbf{Case 2: }$\frac{d+\sqrt{d}}{4}\le w_{KQ} \le \frac{d+\sqrt{d}}{2}$.
    From Corollary \ref{cor:gr}, with probability $1-\frac{1}{n}$
    $$\frac{\sum (1-\bx^\top \bx_i)e^{-w_{KQ}\Vert \bx-\bx_i\Vert^2}}{\sum e^{-w_{KQ}\Vert \bx-\bx_i\Vert^2}}\ge \frac{\underline{C_b}n\left(\frac{1}{w_{KQ}}\right)^{\frac{d}{2}+1}}{2\overline{C_b}ne^{-2w_{KQ}}}\ge \frac{\underline{C_b}}{2\overline{C_b}}\frac{e^{2w_{KQ}}}{w_{KQ}^{\frac{d}{2}+1}}.$$
    
  With probability $1/n\le \frac{1}{2}$ the lowest we can have is $\uLb(w_{KQ}; D(\F_L^{\text{aff}})) = 0$, so overall we have 
  $$\uLb(w_{KQ}; D(\F_L^{\text{aff}}))\ge \frac{L^2}{24d}\left(\frac{\underline{C_b}}{\overline{C_b}}\frac{e^{2w_{KQ}}}{w_{KQ}^{\frac{d}{2}+1}}\right)^2$$
    \textbf{Case 3: }$\frac{d+\sqrt{d}}{4}> w_{KQ}$.
    From Corollary \ref{cor:gr}, with probability $1-\frac{1}{n}$
    $$\frac{\sum (1-\bx^\top \bx_i)e^{-w_{KQ}\Vert \bx-\bx_i\Vert^2}}{\sum e^{-w_{KQ}\Vert \bx-\bx_i\Vert^2}}\ge \frac{\underline{C_b}ne^{-4w_{KQ}}}{2\overline{C_b}ne^{-2w_{KQ}}}\ge \frac{\underline{C_b}}{2\overline{C_b}}e^{-2w_{KQ}}.$$
    
  With probability $1/n\le \frac{1}{2}$ the lowest we can have is $\uLb(w_{KQ}; D(\F_L^{\text{aff}})) = 0$, so overall we have 
  $$\uLb(w_{KQ}; D(\F_L^{\text{aff}}))\ge \frac{L^2}{24d}\left(\frac{\underline{C_b}}{\overline{C_b}}e^{-2w_{KQ}}\right)^2$$
\end{proof}
    

\begin{corollary}\label{cor:overalluLb}
    Combining the above, we have 
    \begin{equation}
        L^2\uO\left(\frac{1}{\left(w_{KQ}+1\right)^2}\right)\le \uLb(w_{KQ}; D(\F_L^{\text{aff}}))\le L^2\uO\left(\frac{1}{w_{KQ}^2} + \frac{w_{KQ}^{\frac{d}{2}-1}}{n} + \frac{1}{n}\right).
    \end{equation}
\end{corollary}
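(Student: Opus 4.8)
The plan is to obtain Corollary~\ref{cor:overalluLb} purely by gluing together Lemmas~\ref{lem:linearbiasupperbound} and~\ref{lem:linearbiaslowerbound} across the threshold $w_{KQ}=\tfrac{d+\sqrt d}{2}$, and checking that in each regime the piecewise bounds are absorbed into the claimed unified expressions. Throughout I would use that, by Lemma~\ref{lem:assumptionssatisfyidentity}, we may take $w_{KQ}\ge 0$, so in particular $(w_{KQ}+1)^2\ge 1$ and hence $\tfrac{1}{(w_{KQ}+1)^2}\le 1$.

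\emph{Upper bound.} For $w_{KQ}\ge \tfrac{d+\sqrt d}{2}$, Lemma~\ref{lem:linearbiasupperbound} gives exactly $\uLb(w_{KQ};D(\F_L))\le L^2\,\uO\!\big(\tfrac{1}{w_{KQ}^2}+\tfrac{w_{KQ}^{d/2-1}}{n}+\tfrac1n\big)$, so there is nothing further to do there. For $w_{KQ}<\tfrac{d+\sqrt d}{2}$, Lemma~\ref{lem:linearbiasupperbound} only supplies the crude bound $\uLb\le 4L^2$; I would absorb this into the $\tfrac{1}{w_{KQ}^2}$ term, noting that on this range $\tfrac{1}{w_{KQ}^2}>\tfrac{4}{(d+\sqrt d)^2}$ is bounded below by a constant depending only on $d$, so $4L^2\le L^2\,\uO(\tfrac{1}{w_{KQ}^2})$ after enlarging the hidden $d$-dependent constant. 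Taking the larger of the two hidden constants then yields the stated upper bound uniformly in $w_{KQ}\ge 0$.

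\emph{Lower bound.} For $w_{KQ}>\tfrac{d+\sqrt d}{2}$, Lemma~\ref{lem:linearbiaslowerbound} gives $\uLb\ge \Omega(L^2/w_{KQ}^2)$, and since $w_{KQ}^2\le (w_{KQ}+1)^2$ this is at least $\Omega\!\big(L^2/(w_{KQ}+1)^2\big)$. For $w_{KQ}\le \tfrac{d+\sqrt d}{2}$, Lemma~\ref{lem:linearbiaslowerbound} gives $\uLb\ge \Omega(L^2)$ (the $w_{KQ}$-dependent prefactors appearing in its proof all being $\Theta_d(1)$ over this bounded range), and since $\tfrac{1}{(w_{KQ}+1)^2}\le 1$ we get $\uLb\ge \Omega(L^2)\ge \Omega\!\big(L^2/(w_{KQ}+1)^2\big)$. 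Combining the two regimes gives $\uLb\ge L^2\,\Omega\!\big(1/(w_{KQ}+1)^2\big)$ for all $w_{KQ}\ge 0$, which is the claimed lower bound.

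\emph{Main difficulty.} There is essentially no obstacle: the only care needed is bookkeeping the dimension-dependent constants across the transition at $w_{KQ}=\tfrac{d+\sqrt d}{2}$ — in particular making sure the ``bounded-$w_{KQ}$'' contributions $4L^2$ (upper) and $\Omega(L^2)$ (lower) are comparable, up to $d$-only factors, to the reference quantities $L^2/w_{KQ}^2$ and $L^2/(w_{KQ}+1)^2$ on that range, which holds because there $w_{KQ}$ lies in a compact set bounded away from $0$. All substantive content of the corollary already resides in the two lemmas it cites.
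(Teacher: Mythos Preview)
Your proposal is correct and matches the paper, which states the corollary simply as ``combining the above'' (Lemmas~\ref{lem:linearbiasupperbound} and~\ref{lem:linearbiaslowerbound}) without further detail. One minor slip in your closing paragraph: the small-$w_{KQ}$ range $[0,\tfrac{d+\sqrt d}{2})$ is not ``bounded away from $0$,'' but your actual argument never uses this---you correctly rely only on $1/w_{KQ}^2\ge 4/(d+\sqrt d)^2$ for the upper bound and $1/(w_{KQ}+1)^2\le 1$ for the lower bound, both of which hold for all $w_{KQ}\ge 0$ in that range.
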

We can now perturb these bounds in the case ofthe ReLU-based function class $\mathcal{F}_L^+$. 
\subsection{ReLU-based functions}
 Consider the function class
$$\mathcal{F}_L^+ = \{l_1\text{ReLU}(\mathbf{w}^\top \bx) + l_2\text{ReLU}(-\mathbf{w}^\top \bx)+b: \mathbf{w}\in \mathbb{S}^{d-1}, b, l_1, l_2\in [-L,L]\},$$
where $\relu(z):=(z)_+:=\max(z,0)$.
Consider a distributions on $\mathcal{F}_L^+$, namely $D(\F_L^+)$. Let $D(\F_L^+)$ be induced by $\mathbf{w}\sim \uU^{d}, b, l_1, l_2\sim \text{Unif}[-L, L]$. That is, a vector $\mathbf{w}$ is drawn uniformly on the unit hypersphere. Then two norms are selected, $l_1, l_2$, and the overall function is given by $$f_{\mathbf{w}, l_1, l_2}(\bx) = l_1\text{ReLU}(\mathbf{w}^\top \bx) + l_2\text{ReLU}(-\mathbf{w}^\top \bx) + b,$$ so that it follows one affine rule in one halfspace, and another affine rule in the opposite halfspace. Please see section \ref{appendix:rewriting_as_NW} to see why we have set the covariance of $\textbf{w}$ to be identity.

\begin{lemma}\label{lem:nonlinearsatisfiesassumptions}
    The class $\F_L^+$ and distribution $D(\F_L^+)$ defined above satisfy Assumption \ref{assumptions}.
\end{lemma}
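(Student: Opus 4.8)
The plan is to check the three conditions of Assumption~\ref{assumptions} for $\F_L^+$ and $D(\F_L^+)$ in turn, recalling that by the reduction made above we may take $\boldsymbol{\Sigma}=\mathbf{I}_d$ (so $\|\cdot\|_{\boldsymbol{\Sigma}^{-1}}=\|\cdot\|$); the structure parallels that of Lemma~\ref{lem:linearsatisfiesassumption}. Throughout, write a generic $f\in\F_L^+$ as $f(\bx)=g(\mathbf{w}^\top\bx)+b$ with $g(t):=l_1(t)_+ + l_2(-t)_+$, and recall that sampling from $D(\F_L^+)$ amounts to drawing $\mathbf{w}\sim\mathcal{U}^d$ together with $l_1,l_2,b\stackrel{\text{i.i.d.}}{\sim}\text{Unif}[-L,L]$, all jointly independent.

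For Condition~1, observe that $g$ is piecewise linear with slope $l_1$ on $(0,\infty)$ and slope $-l_2$ on $(-\infty,0)$; since $|l_1|,|l_2|\le L$, and for $t_1>0>t_2$ straddling the kink $|g(t_1)-g(t_2)|=|l_1 t_1 + l_2 t_2|\le L(t_1+|t_2|)=L|t_1-t_2|$, the scalar map $g$ is $L$-Lipschitz. Hence $|f(\bx)-f(\by)|=|g(\mathbf{w}^\top\bx)-g(\mathbf{w}^\top\by)|\le L|\mathbf{w}^\top(\bx-\by)|\le L\|\bx-\by\|$ using $\|\mathbf{w}\|=1$ and Cauchy--Schwarz. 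For Condition~3, any isometry $\phi$ of $\mathbb{R}^d$ fixing $\mathbb{S}^{d-1}$ is an orthogonal map $\mathbf{O}$, and $f(\mathbf{O}\bx)=l_1\big((\mathbf{O}^\top\mathbf{w})^\top\bx\big)_+ + l_2\big(-(\mathbf{O}^\top\mathbf{w})^\top\bx\big)_+ + b$ lies in $\F_L^+$ with new direction $\mathbf{O}^\top\mathbf{w}\in\mathbb{S}^{d-1}$ and unchanged $(l_1,l_2,b)$, so $f\circ\phi\in\F_L^+$.

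For Condition~2, expand $f(\bx)f(\by)$ and take expectations: every term linear in exactly one of $l_1,l_2,b$ has expectation zero, the $l_1 l_2$ cross-terms vanish since $\E[l_1]\E[l_2]=0$, and using $\E[l_1^2]=\E[l_2^2]=\E[b^2]=L^2/3$ one gets
\[
\E_f\!\left[f(\bx)f(\by)\right]=\tfrac{L^2}{3}\Big(\E_{\mathbf{w}}\big[(\mathbf{w}^\top\bx)_+(\mathbf{w}^\top\by)_+\big]+\E_{\mathbf{w}}\big[(-\mathbf{w}^\top\bx)_+(-\mathbf{w}^\top\by)_+\big]+1\Big).
\]
Since $-\mathbf{w}$ has the same law as $\mathbf{w}$ under $\mathcal{U}^d$, the two kernel terms coincide, so the right side equals $\tfrac{2L^2}{3}\,\kappa(\bx,\by)+\tfrac{L^2}{3}$ with $\kappa(\bx,\by):=\E_{\mathbf{w}}[(\mathbf{w}^\top\bx)_+(\mathbf{w}^\top\by)_+]$. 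By rotational invariance of $\mathcal{U}^d$, $\kappa$ depends on $\bx,\by\in\mathbb{S}^{d-1}$ only through $\theta:=\arccos(\bx^\top\by)$, and the standard first-order arc-cosine kernel computation (reducing the sphere average to an integral over $\mathrm{span}\{\bx,\by\}$) gives $\kappa=c_d\big(\sin\theta+(\pi-\theta)\cos\theta\big)$ for a constant $c_d>0$. The factor $\psi(\theta):=\sin\theta+(\pi-\theta)\cos\theta$ has $\psi'(\theta)=-(\pi-\theta)\sin\theta\le0$ on $[0,\pi]$, so $\psi$ is nonincreasing in $\theta$; composing with the decreasing map $\bx^\top\by\mapsto\arccos(\bx^\top\by)$ shows $\kappa$ is nondecreasing in $\bx^\top\by$. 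Thus $\rho(t):=\tfrac{2L^2}{3}c_d\,\psi(\arccos t)+\tfrac{L^2}{3}$ is monotonically increasing and $\E_f[f(\bx)f(\by)]=\rho(\bx^\top\by)$, which is Condition~2, completing the argument.

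The only step that is not pure bookkeeping is identifying the first-order arc-cosine kernel $\kappa$, or more precisely establishing its monotonicity in $\bx^\top\by$; if one prefers not to quote the closed form, this monotonicity follows directly by writing the sphere average as a two-dimensional integral over $\mathrm{span}\{\bx,\by\}$ and differentiating in $\theta$, which once again produces the nonpositive factor $-(\pi-\theta)\sin\theta$.
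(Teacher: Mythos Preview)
Your proof is correct. Conditions~1 and~3 match the paper's argument essentially verbatim (piecewise linearity plus continuity for Lipschitzness; orthogonal invariance of $\mathcal{U}^d$ for the symmetry). For Condition~2, however, your route differs from the paper's. The paper conditions on the sign of $(\mathbf{w}^\top\bx)(\mathbf{w}^\top\by)$, observes that the cross-term $l_1l_2$ vanishes by independence, and then argues monotonicity of the remaining conditional expectation via a somewhat involved reparameterization $\xi_\theta(\bx,\by)\to(\overline{\bx+\by},\overline{\bx-\by})$ combined with a ``product of two nonnegative increasing functions is increasing'' step. You instead compute $\E_f[f(\bx)f(\by)]$ directly, recognize the result as an affine function of the first-order arc-cosine kernel $\kappa(\bx,\by)=\E_{\mathbf w}[(\mathbf w^\top\bx)_+(\mathbf w^\top\by)_+]$, reduce to the two-dimensional angular integral to obtain the closed form $c_d\big(\sin\theta+(\pi-\theta)\cos\theta\big)$, and differentiate to get $\psi'(\theta)=-(\pi-\theta)\sin\theta\le0$. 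Your argument is shorter and more transparent: it replaces the bespoke reparameterization with a standard kernel identity whose monotonicity is a one-line derivative check. The paper's approach, on the other hand, avoids quoting the arc-cosine formula and in principle could extend to other radially symmetric activations without recomputing the kernel, though as written it is specific to ReLU anyway.
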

\begin{proof}
\begin{enumerate}
    \item Each function is defined as being piece-wise $L$-Lipschitz, and it is continuous, so it is also $L-$Lipschitz overall.
    \item 
With probability 
$1-2\frac{\arccos(\bx^\top \by)}{\pi}$
the points $\bx$ and $\by$ are such that $(\mathbf{w}^\top \bx)(\mathbf{w}^\top \by) < 0$ (that is, they are on opposite sides of the hyperplane defining the two pieces of the ReLU). Because the bias $b$ is independent of the other parameters, we have as in the proof of Lemma \ref{lem:linearsatisfiesassumption}
\begin{align*} 
\E_f\left[f(\bx)f(\by)\right]
&= \frac{L^2}{3} + \E_{\mathbf{w}}\left[l_1^2\bx^\top \mathbf{w}\mathbf{w}^\top\by\right|(\mathbf{w}^\top \bx)(\mathbf{w}^\top \by) \ge 0]\Pr[(\mathbf{w}^\top \bx)(\mathbf{w}^\top \by) \ge 0] \\
&\hspace{1cm} + \E_{\mathbf{w}}\left[l_1l_2\bx^\top \mathbf{ww}^\top\by\right|(\mathbf{w}^\top \bx)(\mathbf{w}^\top \by) < 0]\Pr[(\mathbf{w}^\top \bx)(\mathbf{w}^\top \by) < 0]\\
&\hspace{0.5cm}= \frac{L^2}{3} + \E_{\mathbf{w}}\left[l_1^2\bx^\top \mathbf{ww}^\top\by\right|\bx^\top \mathbf{ww}^\top\by > 0]\left(2\frac{\arccos(\bx^\top \by)}{\pi}\right) \hspace{0.2cm}\because l_1\perp l_2
\end{align*}
Let $\overline{\bx} = \frac{\bx}{\Vert \bx\Vert}$ for any vector $\bx$. Consider a re-parameterization of the pair $(\bx, \by)$ as $\xi_\theta(\bx, \by) \to (\overline{\bx+\by}, \overline{\bx-\by})$. Because $\bx$ and $\by$ are on the unit sphere, this is a bijection as $$\xi^{-1}_\theta(\bx, \by) = \left(\frac{1+\theta}{2}\bx + \frac{1-\theta}{2}\by,  \frac{1+\theta}{2}\bx - \frac{1-\theta}{2}\by\right).$$ That is, for any $\bx, \by$, $\xi^{-1}_{\bx^\top \by}(\xi_{\bx^\top \by}(\bx, \by)) = (\bx, \by)$. The push-forward of $\xi$ is also uniform, that is for $\bx, \by$ satisfying $\bx^\top \by = \theta$, $\xi_\theta(\bx, \by)$ is distributed as $\uU^{d}\times\uU^{d-1}$. For any $\bx, \by$, let $\xi_\theta^{-1}(\bx, \by) = (\bx_\theta, \by_\theta)$. Then we have $\E_f\left[f(\bx_\theta)f(\by_\theta)\right]$ is a decreasing function of $\theta$. Finally, for $\theta \le \theta'$, $L^2\bx_\theta^\top \mathbf{ww}^\top \by_\theta > L^2\bx_{\theta'}^\top \mathbf{ww}^\top \by_{\theta'}$ so $\bx_\theta^\top \mathbf{ww}^\top \by_\theta < 0\implies \bx_{\theta'}^\top \mathbf{ww}^\top \by_{\theta'} < 0$. The product of two positive increasing functions is itself non-increasing. Since we have both $\E_{\mathbf{w}}\left[L^2\bx^\top \mathbf{ww}^\top\by\right|\bx^\top \mathbf{ww}^\top\by > 0]$ and $\frac{2\arccos(\bx^\top \by)}{\pi}$ are increasing functions of $\bx^\top\by$, we also have $$\E_{\mathbf{w}}\left[L^2\bx^\top \mathbf{ww}^\top\by\right|\bx^\top \mathbf{ww}^\top\by > 0]\left(\frac{2\arccos(\bx^\top \by)}{\pi}\right)$$ is an increasing function of $\bx^\top \by$ since $\E_{\mathbf{w}}\left[L^2\bx^\top \mathbf{ww}^\top\by\right|\bx^\top \mathbf{ww}^\top\by > 0] \ge 0$ and $\left(\frac{2\arccos(\bx^\top \by)}{\pi}\right)\ge 0$.
    \item $\mathbf{w}$ is distributed uniformly on the hypersphere, so $\phi(\mathbf{w})$ is also also distributed uniformly on the hypersphere for any isometry $\phi$ that preserves the origin.
\end{enumerate}
\end{proof}

\begin{lemma}\label{lem:nonlinearbiasupperbound}
The signal term is upper bounded as 
    $$\uLb(w_{KQ}; D(\F_L^+))\le 
    \begin{cases}
    L^2\uO\left(\frac{1}{w_{KQ}}+\frac{1}{n}\right) & w_{KQ} \ge \frac{d+\sqrt{d}}{2}\\
    4L^2 & w_{KQ} < \frac{d+\sqrt{d}}{2}
    \end{cases}
  $$
\end{lemma}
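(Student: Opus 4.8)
The plan is to bound the signal (bias) term pathwise, using only the fact that every $f\in\F_L^+$ is globally $L$-Lipschitz (item 1 of Lemma \ref{lem:nonlinearsatisfiesassumptions}), together with the concentration of the weight functionals $g_p$ established in Corollary \ref{cor:gr}. The conceptual point is that the rate here is $\uO(L^2/w_{KQ})$ rather than the $\uO(L^2/w_{KQ}^2)$ of the linear case because, for ReLU tasks, the leading-order part of $\sum_i\beta_i(f(\bx_{n+1})-f(\bx_i))$ does not cancel by symmetry — the kink destroys it — so the crude Lipschitz bound is what we settle for, and it turns out to be exactly enough.

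Concretely, for $w_{KQ}<(d+\sqrt d)/2$ the claim is immediate from the trivial bound $\uLb\le 4L^2$ (Lemma \ref{lem:trivialupperbound}). For $w_{KQ}\ge(d+\sqrt d)/2$, I would write $\bx:=\bx_{n+1}$ and $\beta_i:=e^{-w_{KQ}\|\bx_i-\bx\|^2}/\sum_j e^{-w_{KQ}\|\bx_j-\bx\|^2}$, so that $\sum_i\beta_i=1$ and the integrand of $\uLb$ equals $\big(\sum_i\beta_i(f(\bx)-f(\bx_i))\big)^2$. Lipschitzness gives $|f(\bx)-f(\bx_i)|\le L\|\bx-\bx_i\|$ for every $f$ and $i$, so by the triangle inequality $\big|\sum_i\beta_i(f(\bx)-f(\bx_i))\big|\le L\sum_i\beta_i\|\bx-\bx_i\| = L\,g_1(w_{KQ})/g_0(w_{KQ})$, where $g_p(r):=\sum_i\|\bx_i-\bx\|^p e^{-r\|\bx_i-\bx\|^2}$. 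Then I would invoke Corollary \ref{cor:gr} with $p=0$ and $p=1$: for $w_{KQ}\ge(d+\sqrt d)/2$, with probability at least $1-1/n$ over $\{\bx_i\}$ — an event that does not involve $f$ — one has $g_1(w_{KQ})=\uO(n\,w_{KQ}^{-(d+1)/2})$ and $g_0(w_{KQ})=\Omega(n\,w_{KQ}^{-d/2})$, hence $g_1/g_0=\uO(w_{KQ}^{-1/2})$ and the squared integrand is $\uO(L^2/w_{KQ})$, uniformly in $f$.

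Splitting the expectation over this good event and its complement then closes the argument: on the good event the contribution is $\uO(L^2/w_{KQ})$; on the complement (probability $\le 1/n$) the pathwise bound $\big(\sum_i\beta_i(f(\bx)-f(\bx_i))\big)^2\le 4L^2$ (using $\|\bx-\bx_i\|\le 2$) contributes at most $4L^2/n$. Adding, $\uLb(w_{KQ};D(\F_L^+))\le L^2\,\uO(1/w_{KQ}+1/n)$, as claimed.

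I do not expect a serious obstacle within this lemma: the only real technical input is Corollary \ref{cor:gr}, which is already in hand, and the rest is Lipschitzness, a triangle/Jensen inequality, and routine bookkeeping of the $1/n$ failure event (which forces the case split at $w_{KQ}=(d+\sqrt d)/2$, where the $g_p$ concentration kicks in). If anything, the subtle step is conceptual rather than computational — resisting the temptation to reproduce the delicate cancellation used for $\F_L$ (the vanishing of the mean of the first-order part of $\sum_i\beta_i(1-\bx^\top\bx_i)$ and the Popoviciu-type variance bound on the component orthogonal to $\bx$), since those cancellations genuinely fail at the ReLU boundary; the matching lower bound (Lemma \ref{lem:nonlinearbiaslowerbound}) will confirm that $\uO(L^2/w_{KQ})$, not $\uO(L^2/w_{KQ}^2)$, is the right order.
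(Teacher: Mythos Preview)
Your proposal is correct and follows essentially the same approach as the paper: both bound the integrand pathwise via Lipschitzness to obtain $(L\,g_1(w_{KQ})/g_0(w_{KQ}))^2$, invoke Corollary~\ref{cor:gr} to get $g_1/g_0=\uO(w_{KQ}^{-1/2})$ with probability $1-1/n$, and absorb the failure event into a $4L^2/n$ contribution using Lemma~\ref{lem:trivialupperbound}. Your additional commentary on why the rate is $\uO(L^2/w_{KQ})$ rather than $\uO(L^2/w_{KQ}^2)$ is a nice conceptual gloss that the paper does not spell out.
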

\begin{proof}
    
We have
\begin{align*}
\uLb(w_{KQ}; D) 
&= \E_{f, \{\bx_i\}}\left(\frac{\sum_i \left(f(\bx_i)-f(\bx_n)\right)e^{-w_{KQ}\Vert \bx_i-\bx_n\Vert^2}}{\sum_ie^{-w_{KQ}\Vert \bx_i-\bx_n\Vert^2}}\right)^2\\
&\quad \le \E_{f, \{\bx_i\}}\left(\frac{\sum_i L\Vert\bx_i-\bx_n\Vert e^{-w_{KQ}\Vert \bx_i-\bx_n\Vert^2}}{\sum_ie^{-w_{KQ}\Vert \bx_i-\bx_n\Vert^2}}\right)^2\\
&\quad \le \left(L\frac{g_1(w_{KQ})}{g_0(w_{KQ})}\right)^2
\end{align*}
With probability $1-\frac{1}{n}$, when $w_{KQ}\ge \frac{d+\sqrt{d}}{2}$ we have
\begin{align*}
\frac{g_1(w_{KQ})}{g_0(w_{KQ})}&\le \frac{\overline{C_b}n\left(\frac{1}{w_{KQ}}\right)^{\frac{d+1}{2}}}{\underline{C_b}n\left(\frac{1}{w_{KQ}}\right)^{\frac{d}{2}}}\le \frac{\overline{C_b}}{\underline{C_b}} \left(\frac{1}{w_{KQ}}\right)^{\frac{1}{2}}
\end{align*}
We always have $\uLb(w_{KQ})\le 4L^2$ from Lemma \ref{lem:trivialupperbound}. So the overall upper bound is
$$\uLb(w_{KQ}; D)\le L^2\left(\frac{1}{w_{KQ}} + \frac{4}{n}\right)$$
For $w_{KQ}\ge \frac{d+\sqrt{d}}{2}$, as before, we always have $\uLb(w_{KQ}; D)\le 4L^2$.
\end{proof}

\begin{lemma}\label{lem:nonlinearbiaslowerbound}
The signal term is lower bounded as 
    $$\uLb(w_{KQ}; D(\F_L^+))\ge \uLb(w_{KQ}; D(\F_L^{\text{aff}}))/2$$
\end{lemma}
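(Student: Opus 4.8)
The plan is to split each two-piece ReLU function into an even component (a multiple of $|\mathbf{w}^\top\bx|$) and an odd component (a multiple of $\mathbf{w}^\top\bx$), recognize that the odd component is exactly a linear function from $\F_L$ with a rescaled slope, and then reduce the whole comparison to a ratio of second moments of the slope parameters. First I would recall that, writing $\beta_i := e^{-w_{KQ}\|\bx_i-\bx_{n+1}\|^2}/\sum_j e^{-w_{KQ}\|\bx_j-\bx_{n+1}\|^2}$ so that $\sum_i\beta_i=1$, the signal term is $\uLb(w_{KQ};D(\F)) = \E_{f,\{\bx_i\}}\big[(\sum_i\beta_i(f(\bx_{n+1})-f(\bx_i)))^2\big]$, and in particular the bias constant $b$ always cancels because $\sum_i\beta_i=1$. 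For $f\in\F_L^+$, I would use the elementary identity $l_1(z)_+ + l_2(-z)_+ = \tfrac{l_1+l_2}{2}|z| + \tfrac{l_1-l_2}{2}z$ to write $f(\bx) = a\,|\mathbf{w}^\top\bx| + c\,\mathbf{w}^\top\bx + b$ with $a:=\tfrac{l_1+l_2}{2}$ and $c:=\tfrac{l_1-l_2}{2}$. Setting $A := \sum_i\beta_i(|\mathbf{w}^\top\bx_{n+1}|-|\mathbf{w}^\top\bx_i|)$ and $B := \sum_i\beta_i\,\mathbf{w}^\top(\bx_{n+1}-\bx_i)$ — both functions of $(\mathbf{w},\{\bx_i\})$ only — this gives $\sum_i\beta_i(f(\bx_{n+1})-f(\bx_i)) = aA + cB$, hence $\uLb(w_{KQ};D(\F_L^+)) = \E[a^2A^2] + 2\E[ac\,AB] + \E[c^2B^2]$.

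Next I would observe that $(l_1,l_2)$, and therefore $(a,c)$, is independent of $(\mathbf{w},\{\bx_i\})$, that $\E[ac] = \tfrac14\E[l_1^2-l_2^2] = 0$ so the cross term vanishes, and that $\E[a^2] = \E[c^2] = \tfrac14(\E l_1^2 + \E l_2^2) = \tfrac{L^2}{6}$. This yields $\uLb(w_{KQ};D(\F_L^+)) = \tfrac{L^2}{6}(\E[A^2]+\E[B^2]) \ge \tfrac{L^2}{6}\E[B^2]$, using $A^2\ge 0$. On the other hand, for $f\in\F_L$ we have $f(\bx)=l\,\mathbf{w}^\top\bx + b$ with $\mathbf{w}$ and the covariates drawn from the same distributions as in the $\F_L^+$ case and $l$ independent of them with $\E l^2 = \tfrac{L^2}{3}$; hence $\sum_i\beta_i(f(\bx_{n+1})-f(\bx_i)) = lB$ and $\uLb(w_{KQ};D(\F_L)) = \E[l^2]\E[B^2] = \tfrac{L^2}{3}\E[B^2]$. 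Combining the two gives $\uLb(w_{KQ};D(\F_L^+)) \ge \tfrac{L^2}{6}\E[B^2] = \tfrac12\cdot\tfrac{L^2}{3}\E[B^2] = \tfrac12\,\uLb(w_{KQ};D(\F_L))$, which is the claim.

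There is no serious obstacle here; the only thing to be careful about is the distributional bookkeeping — checking that the weight vector $\mathbf{w}$ and the covariates $\{\bx_i\}$ carry identical laws under $D(\F_L)$ and $D(\F_L^+)$ so that $B$ has the same distribution in both computations, and that the slope parameters are independent of $(\mathbf{w},\{\bx_i\})$, both of which are immediate from the definitions. The constant $\tfrac12$ arises exactly as the ratio $\E[((l_1-l_2)/2)^2]/\E[l^2] = (L^2/6)/(L^2/3)$ between the second moment of the odd-part slope and that of the linear-class slope; I would also remark that the inequality is in fact strict with slack $\tfrac{L^2}{6}\E[A^2]$, the contribution of the even (absolute-value) part, which is why the nonlinear bias dominates the linear one.
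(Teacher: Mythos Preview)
Your proof is correct and takes a genuinely different route from the paper's. The paper argues geometrically: for each draw it extends $f$ to the affine function $f_{\bx,\text{aff}}$ that agrees with $f$ on the halfspace containing the query, splits the context tokens by which side of the hyperplane $\{\mathbf{w}^\top\cdot=0\}$ they lie on, and then applies Cauchy--Schwarz across the two groups. This yields a lower bound of the form $(\sqrt{P}-\sqrt{q})^2 + q$ where $P=\uLb(w_{KQ};D(\F_L))$ and $q$ is the squared contribution from tokens on the far side; minimizing over the unknown $q$ gives $P/2$.

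Your argument is cleaner: by writing $l_1(z)_+ + l_2(-z)_+ = a|z| + cz$ you separate each function into an even and an odd part \emph{before} any geometry enters, and then the cross term drops out purely from $\E[ac]=0$ and the independence of $(l_1,l_2)$ from $(\mathbf{w},\{\bx_i\})$. The constant $\tfrac12$ then appears transparently as the moment ratio $\E[c^2]/\E[l^2]=(L^2/6)/(L^2/3)$. Your approach also makes the slack explicit (it is $\tfrac{L^2}{6}\E[A^2]$), whereas the paper's minimization over $q$ deliberately discards information about what $q$ actually is. The paper's decomposition, on the other hand, would adapt more readily to function classes that are not built from two ReLUs with independent, symmetric slopes, since it only uses that the restriction to one halfspace is distributed as a member of $\F_L$.
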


\begin{proof}
    Again for readability we will write $\bx_{n+1}$ as $\bx$. For any $f\in \F_{L}^+$ let $f_{\bx, \text{aff}}$ denote the corresponding affine function that is equal to $f$ in the halfspace containing $\bx$, that is if $f(\bx') = l_1\text{ReLU}(\mathbf{w}^\top \bx') + l_2\text{ReLU}(-\mathbf{w}^\top \bx')+b$, and WLOG $\mathbf{w}^\top\bx' > 0$, then $f_{\bx, \text{aff}}(\bx') = l_1\mathbf{w}^\top \bx'+b$. 
    Note that $f_{\bx, \text{aff}}$ comes from a $\mathbf{w}$ selected from the unit sphere and $b, l\in [-L, L]$ exactly as $f\sim D(\F_L)$, so it is actually statistically indistinguishable from a sample from $D(\mathcal{F}_L^{\text{aff}})$, the distribution over affine functions in Definition \ref{def:f} (and the object of Lemma \ref{lem:linearbiaslowerbound}). The error of the nonlinear estimator can be written as 
    $$\E_{f, \bx, \{\bx_i\}} \left[\left(\sum_i f(\bx_i)\gamma_i - f(\bx_{n})\right)^2\right]$$
    where $\gamma_i = \frac{e^{-w_{KQ}\Vert \bx-\bx_i\Vert_{\boldsymbol{\Sigma}^{-1}}^2}}{\sum_j e^{-w_{KQ}\Vert \bx-\bx_j\Vert_{\boldsymbol{\Sigma}^{-1}}^2}}$
    Let us compare the two errors due to the two functions. Let $A = \{i: (\bx_i^\top \mathbf{w})(\bx^\top \mathbf{w}) < 0\}$ denote the set of points on the opposite side to $\bx$ of the hyperplane defining  the function.
    \begin{align*}
        &\uLb(w_{KQ}; D(\mathcal{F}_L^+))\\
        &=\E_{f, \bx, \{\bx_i\}} \left[\left(\sum_i f(\bx_i)\gamma_i - f(\bx)\right)^2\right] \\
        &= \E_{f, \bx, \{\bx_i\}} \left[\left(\sum f_{\bx, \text{aff}}(\bx_i)\gamma_i + \sum_{i\in A} \left(f(\bx_i)-f_{\bx, \text{aff}}(\bx_i)\right)\gamma_i - f(\bx)\right)^2\right]\\
        &= \E_{\bx, \{\bx_i\}} \E_{f} \left[\left(\sum_{i\not\in A} f_{\bx, \text{aff}}(\bx_i)\gamma_i -f_{\bx, \text{aff}}(\bx)\right)^2\right]+\E_{f} \left[\left(\sum_{i\in A} f(\bx_i)\gamma_i)\right)^2\right] \\
        &= \E_{\bx, \{\bx_i\}} \E_{f} \left[\left(\sum_{i} f_{\bx, \text{aff}}(\bx_i)\gamma_i -f_{\bx, \text{aff}}(\bx)\right)^2\right] + \E_f \left[\left(\sum_{i\in A} f_{\bx, \text{aff}}(\bx_i)\gamma_i\right)^2\right]\\
        &\hspace{2cm}-2\E_f \left(\sum f_{\bx, \text{aff}}(\bx_i)\gamma_i -f_{\bx, \text{aff}}(\bx)\right)\left(\sum_{i\in A} f_{\bx, \text{aff}}(\bx_i)\gamma_i\right) + \E_{f}\left[\left(\sum_{i\in A} f(\bx_i)\gamma_i)\right)^2\right]\\
        &\ge  \E_{f, \bx, \{\bx_i\}} \left[\left(\sum f_{\bx, \text{aff}}(\bx_i)\gamma_i -f_{\bx, \text{aff}}(\bx)\right)^2\right] + \E_{f, \bx, \{\bx_i\}} \left(\sum_{i\in A} f_{\bx, \text{aff}}(\bx_i)\gamma_i\right)^2\\
        &\hspace{2cm}-2\sqrt{\E_{f, \bx, \{\bx_i\}} \left[\left(\sum f_{\bx, \text{aff}}(\bx_i)\gamma_i -f_{\bx, \text{aff}}(\bx)\right)^2\right]\E_{f, \bx, \{\bx_i\}}\left[\left(\sum_{i\in A} f_{\bx, \text{aff}}(\bx_i)\gamma_i\right)^2\right]} \\
        &\hspace{2cm} + \E_{f,\bx, \{\bx_i\}}\left(\sum_{i\in A} f(\bx_i)\gamma_i)\right)^2\\
    \end{align*}
    Here the third equality holds because $f(\bx_i)$ is independent of $f_{\bx, \text{aff}}(\bx_j)$ if $i\in A, j\not\in A$.
    
    Let $q = \E_{f,\bx, \{\bx_i\}}\left(\sum_{i\in A} f(\bx_i)\gamma_i)\right)^2 = \E_{f,\bx, \{\bx_i\}}\left(\sum_{i\in A} f_{\bx, \text{aff}}(\bx_i)\gamma_i)\right)^2$. Then from the above we have
    $$\E_{f, \bx, \{\bx_i\}} \left[\left(\sum_i f(\bx_i)\gamma_i - f(\bx)\right)^2\right] \ge (\uLb(w_{KQ}; D(\F_L))(w_{KQ})-q)^2 + q^2,$$ which has minimum at $q=\uLb(w_{KQ}; D(\F_L))/2$, 
    completing the proof.
\end{proof}

\section{Bounds on Noise Variance}
\label{app:noise}
In this section we obtain upper and lower bounds on the variance of the estimator due to label noise. There are three relevant parameters: $d$, the ambient dimension of the data; $w_{KQ}$, the scaling induced by the attention layer; and $n$, the number of tokens. 
Recall that the noise term is 
$$\uLn(w_{KQ}) = \E_{f, \{\bx_i\}} \left[\left( \sum_i \frac{\epsilon_ie^{-w_{KQ}\Vert \bx_i-\bx_{n+1}\Vert^2}}{\sum_j e^{-w_{KQ}\Vert \bx_j-\bx_{n+1}\Vert^2}}\right)^2\right]$$
Because the $\epsilon_i$ are independent, this can further be simplified as
$$\uLn(w_{KQ}) = \sigma^2\E_{\{\bx_i\}} \left[ \sum_i \frac{e^{-2w_{KQ}\Vert \bx_i-\bx_{n+1}\Vert^2}}{\left(\sum_j e^{-w_{KQ}\Vert \bx_j-\bx_{n+1}\Vert^2}\right)^2}\right]$$

\begin{lemma}\label{lem:noisebounds}
    The noise term is bounded for $d+\sqrt{d} \le w_{KQ}\le \left(\frac{n}{45\sqrt{d}\log n}\right)^{\frac{2}{d}}$ as 
    $$\Omega\left(\frac{\sigma^2w_{KQ}^{\frac{d}{2}}}{n}\right)\le \uLn(w_{KQ})\le \uO\left(\frac{\sigma^2\left(1+w_{KQ}^{\frac{d}{2}}\right)}{n}\right).$$
\end{lemma}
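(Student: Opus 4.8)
The argument begins from the identity for the noise term derived above: writing $g_0(r) := \sum_{i=1}^n e^{-r\|\bx_i-\bx_{n+1}\|^2}$ and letting $p_i := e^{-w_{KQ}\|\bx_i-\bx_{n+1}\|^2}/g_0(w_{KQ})$ denote the softmax weights, one has
\[
\uLn(w_{KQ}) \;=\; \sigma^2\,\E_{\{\bx_i\}}\!\left[\frac{g_0(2w_{KQ})}{g_0(w_{KQ})^2}\right] \;=\; \sigma^2\,\E_{\{\bx_i\}}\!\left[\sum_{i=1}^n p_i^2\right].
\]
Since $p_i\in[0,1]$ and $\sum_i p_i=1$, Cauchy--Schwarz gives $\tfrac1n\le\sum_i p_i^2\le 1$ pointwise, so already $\sigma^2/n\le\uLn(w_{KQ})\le\sigma^2$; the whole job is to sharpen this to the claimed $w_{KQ}$-dependent bounds, and the only randomness that must be understood is the empirical mass $g_0$.

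The plan is to feed Corollary~\ref{cor:gr} the two scales $r=w_{KQ}$ and $r=2w_{KQ}$ (both exceed $d+\sqrt d$ by the hypothesis $w_{KQ}>d+\sqrt d$), obtaining constants $\underline{c},\overline{c}$ depending only on $d$ such that, on an event $E$ with $\Pr(E)\ge 1-\uO(1/n)$ (union over the two scales),
\[
\underline{c}\,n\,r^{-d/2}\;\le\;g_0(r)\;\le\;\overline{c}\,n\,r^{-d/2},\qquad r\in\{w_{KQ},2w_{KQ}\}.
\]
On $E$ this sandwiches $\sum_i p_i^2 = g_0(2w_{KQ})/g_0(w_{KQ})^2$ between $\tfrac{\underline{c}}{2^{d/2}\overline{c}^2}\cdot\tfrac{w_{KQ}^{d/2}}{n}$ and $\tfrac{\overline{c}}{2^{d/2}\underline{c}^2}\cdot\tfrac{w_{KQ}^{d/2}}{n}$. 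For the upper bound I split $\E[\,\cdot\,]=\E[\,\cdot\,;E]+\E[\,\cdot\,;E^c]$: the first term is $\uO(w_{KQ}^{d/2}/n)$ by the display above, while for the second I use the deterministic bound $\sum_i p_i^2\le 1$ together with $\Pr(E^c)=\uO(1/n)$ to bound it by $\uO(1/n)$, yielding $\uLn(w_{KQ})\le\uO\big(\sigma^2(1+w_{KQ}^{d/2})/n\big)$; the additive ``$1$'' is precisely the price of the rare atypical point configurations. For the lower bound I discard the nonnegative $E^c$-contribution and use $\Pr(E)\ge\tfrac12$ (valid once $n$ exceeds an absolute constant, consistent with $n=\Omega(1)$) to get $\uLn(w_{KQ})\ge\sigma^2\Pr(E)\cdot\Omega(w_{KQ}^{d/2}/n)=\Omega(\sigma^2 w_{KQ}^{d/2}/n)$.

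The only genuinely substantive input is Corollary~\ref{cor:gr} -- the concentration of the exponentially-weighted count $g_0(r)$ of the $n$ i.i.d.\ spherical points around the query, established via Lemma~\ref{lem:gr} -- and everything downstream of it is bookkeeping. The single point that needs care is the event split: on $E^c$ the ratio $g_0(2w_{KQ})/g_0(w_{KQ})^2$ is not uniformly bounded, since $g_0(w_{KQ})$ can be exponentially small, but because $\sum_i p_i^2\le 1$ holds always and $\Pr(E^c)=\uO(1/n)$, this contributes only the harmless $\uO(\sigma^2/n)$ term appearing in the stated upper bound.
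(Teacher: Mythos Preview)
Your proposal is correct and follows essentially the same approach as the paper: both arguments write $\uLn(w_{KQ})=\sigma^2\,\E[g_0(2w_{KQ})/g_0(w_{KQ})^2]$, invoke Corollary~\ref{cor:gr} at the two scales $w_{KQ}$ and $2w_{KQ}$ to get the $\Theta(w_{KQ}^{d/2}/n)$ bound on a high-probability event, and cap the complementary event by the deterministic bound $\sum_i p_i^2\le 1$. Your write-up is in fact more explicit than the paper's about the event split and about why the additive ``$+1$'' appears in the upper bound.
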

\begin{proof}
We have $$\uLn(w_{KQ})= \sigma^2\E\left[\sum_i \frac{e^{-2w_{KQ}\Vert \bx_i-\bx_n\Vert^2}}{\left(\sum_j e^{-w_{KQ}\Vert \bx_j-\bx_n\Vert^2}\right)^2}\right]= \sigma^2\E\left[\frac{g_0(2w_{KQ})}{g_0(w_{KQ})^2}\right].$$

Using Lemma \ref{cor:gr}, we have with probability at least $1-\frac{1}{n}$
\begin{align*}\frac{g_0(2w_{KQ})}{g_0(w_{KQ})^2}\le \frac{\overline{c_n}n\left(\frac{1}{w_{2KQ}}\right)^\frac{d}{2}}{\left(\underline{c_n}n\left(\frac{1}{w_{KQ}}\right)^\frac{d}{2}\right)^2}\le \frac{\overline{c_n}}{\underline{c_n}^2}\frac{w_{KQ}^{\frac{d}{2}}}{n}
\end{align*}
and similarly
\begin{align*}\frac{g_0(2w_{KQ})}{g_0(w_{KQ})^2}\ge \frac{\underline{c_n}n\left(\frac{1}{w_{2KQ}}\right)^\frac{d}{2}}{\left(\overline{c_n}n\left(\frac{1}{w_{KQ}}\right)^\frac{d}{2}\right)^2}\le \frac{\underline{c_n}}{\overline{c_n}^2}\frac{w_{KQ}^{\frac{d}{2}}}{n}
\end{align*}
Finally, in the worst case, we have $0\le \uLn(w_{KQ})\le  1.$
\end{proof}
Finally, we show that the noise term is monotonic in $w_{KQ}$.
\begin{lemma}\label{lem:noiseismonotonic}
    $\uLn(w) > \uLn(w') \iff w > w'$
\end{lemma}

\begin{proof}
    Let $a_i = e^{-w'\Vert x_i - x_{n+1}\Vert^2}, b_i =  e^{-\left(w-w'\right)\Vert x_i - x_{n+1}\Vert^2}$. The result follows from Lemma \ref{lem:rearrangment} because $\{a_i\}$ and $\{b_i\}$ satisfy $a_i > a_j \iff b_i > b_j\iff \Vert x_i-x_{n+1}\Vert < \Vert x_j-x_{n+1}\Vert$.
\end{proof}

\section{Optimizing the Loss}
\label{app:full}
\begin{figure}
    \centering
    \includegraphics[scale = 0.45, trim = {2cm 0 2cm 0}, clip]{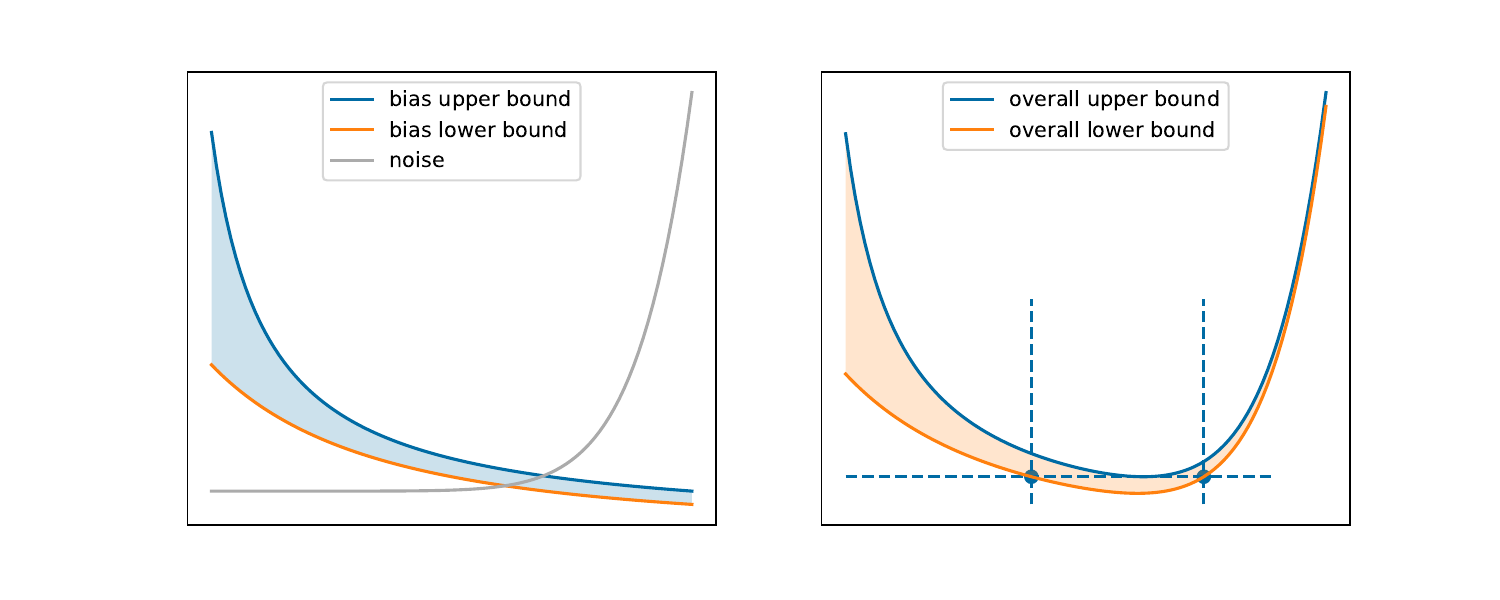}
    \caption{\textbf{Left: }Rough upper and lower bounds for the bias term (shaded region), along with the noise variance (gray). \textbf{Right: }Overall upper and lower bound for the in-context loss. The horizontal dashed line establishes an upper bound for the optimal loss, while the vertical dashed lines establish lower and upper bounds for the parameter $w_{KQ}$ that can attain the optimal loss.}
    \label{fig:proof_sketch}
\end{figure}
For the nonlinear function class $\F_L^+$, we have the following.
\begin{theorem}\label{thm:overalllinear}
    Suppose the functions seen in pretraining are drawn from $D(\F_L^+)$ as in Definition \ref{def:f}, the covariates are drawn as Assumption \ref{assump:dists}, $n = \Omega\left(\frac{L\log n}{\sigma}\right)^d$ and $n^{\frac{2}{d+2}} = \Omega(1)$, then the optimal $\mathbf{M}$ satisfies
    \begin{equation}
        \mathbf{M} = w_{KQ} \mathbf{I}_d
    \end{equation}
    where $w_{KQ}$ satisfies
   \begin{equation} 
   \Omega\left(\left(nL^2\right)^{\frac{1}{d+2}}\right) \le w_{KQ}\le \uO\left(\left(nL^2\right)^{\frac{2}{d+2}}\right).
   \end{equation}
\end{theorem}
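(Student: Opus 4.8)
The plan is to reduce the matrix problem to a scalar bias--variance trade-off and then optimize. First I would note that, by Lemma \ref{lem:nonlinearsatisfiesassumptions}, the class $\F_L^+$ together with its associated distribution $D(\F_L^+)$ satisfies Assumption \ref{assumptions}; hence Lemma \ref{lem:assumptionssatisfyidentity} applies and every minimizer of \eqref{eq:loss} has the form $\bM^{*}=w_{KQ}\boldsymbol{\Sigma}^{-1}$, which under Assumption \ref{assump:dists} ($\boldsymbol{\Sigma}=\mathbf{I}_d$) is simply $\bM^{*}=w_{KQ}\mathbf{I}_d$ for some scalar $w_{KQ}\ge 0$. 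Using the independence of the label noise, the loss splits as $\uL(w_{KQ})=\uLb(w_{KQ})+\uLn(w_{KQ})$, and it remains to characterize the minimizer of this one-dimensional function.

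Next I would sandwich $\uL$. Writing $\Lambda\coloneqq nL^2/\sigma^2$, Lemma \ref{lem:nonlinearbiasupperbound} gives $\uLb(w_{KQ})\le L^2\uO(1/w_{KQ}+1/n)$ once $w_{KQ}\ge(d+\sqrt d)/2$ (and $\uLb\le 4L^2$ unconditionally), while Lemma \ref{lem:noisebounds} gives $\uLn(w_{KQ})=\Theta(\sigma^2 w_{KQ}^{d/2}/n)$ up to an additive $\uO(\sigma^2/n)$ for $w_{KQ}>d+\sqrt d$. Since the hypothesis $L^2/\sigma^2\ge 1/n$ forces $\sigma^2/n\le L^2/n$, both the additive noise term and the $L^2/n$ bias term are dominated by $L^2/w_{KQ}$ in the relevant range, so there $\uL(w_{KQ})\le \uO(L^2/w_{KQ})+\uO(\sigma^2 w_{KQ}^{d/2}/n)$. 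For the matching lower bounds I would combine Lemma \ref{lem:nonlinearbiaslowerbound} with Corollary \ref{cor:overalluLb} to get $\uL(w_{KQ})\ge \uLb(w_{KQ})\ge L^2\Omega(1/(w_{KQ}+1)^2)$, and for large $w_{KQ}$ also $\uL(w_{KQ})\ge \uLn(w_{KQ})\ge \Omega(\sigma^2 w_{KQ}^{d/2}/n)$.

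Then I would optimize the surrogate $L^2/w+\sigma^2 w^{d/2}/n$: its minimizer is the balance point $w_0\asymp \Lambda^{2/(d+2)}$, at which both terms equal $\Theta(L^2\Lambda^{-2/(d+2)})$. Here the hypotheses are used purely for bookkeeping --- $dn^{d/2}\ge L^2/\sigma^2\ge 1/n$ and $n^{2/d}=\Omega(1)$ are exactly what is needed to place $w_0$ in the common regime of validity of Lemmas \ref{lem:nonlinearbiasupperbound}, \ref{lem:nonlinearbiaslowerbound} and \ref{lem:noisebounds} (in particular $w_0\ge(d+\sqrt d)/2$ and $w_0$ not too large relative to $n$) and to make the lower-order terms negligible there. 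Evaluating the established upper bound at $w_0$ yields $\min_{w\ge 0}\uL(w)\le \uL(w_0)\le \uO(L^2\Lambda^{-2/(d+2)})$.

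Finally I would pin down the true minimizer $w_{KQ}^{*}$ by contradiction against this optimal-value bound. If $w_{KQ}^{*}$ were as small as $o(\Lambda^{1/(d+2)})$, the bias lower bound would give $\uLb(w_{KQ}^{*})\ge L^2\Omega(1/(w_{KQ}^{*}+1)^2)=\omega(L^2\Lambda^{-2/(d+2)})$, exceeding $\uL(w_0)$ --- impossible --- so $w_{KQ}^{*}=\Omega(\Lambda^{1/(d+2)})$. Symmetrically, if $w_{KQ}^{*}$ were as large as $\omega(\Lambda^{2/(d+2)})$ (in particular $>d+\sqrt d$, so Lemma \ref{lem:noisebounds} applies), the noise lower bound would give $\uLn(w_{KQ}^{*})\ge \Omega(\sigma^2 (w_{KQ}^{*})^{d/2}/n)=\omega(L^2\Lambda^{-2/(d+2)})$, again exceeding $\uL(w_0)$, so $w_{KQ}^{*}=\uO(\Lambda^{2/(d+2)})$; this gives the claimed two-sided bound (matching the theorem's statement with $\sigma$ absorbed into the normalization). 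The asymmetry of the exponents $1/(d+2)$ versus $2/(d+2)$ is forced by the gap between the $1/w$ bias upper bound valid for $\F_L^+$ and the $1/w^2$ bias lower bound inherited from the linear class $\F_L$. The heavy lifting is not in this assembly but upstream, in the concentration estimates for $g_p(r)$ (Corollary \ref{cor:gr}) that drive the bias and noise bounds and in transferring the bias lower bound from $\F_L$ to $\F_L^+$ (Lemma \ref{lem:nonlinearbiaslowerbound}); at this stage the only real care required is checking that the hypotheses keep $w_0$ --- and hence $w_{KQ}^{*}$ --- inside the common window where all the cited bounds hold simultaneously.
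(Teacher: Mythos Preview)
Your proposal is correct and follows essentially the same route as the paper: reduce to a scalar via Lemmas \ref{lem:nonlinearsatisfiesassumptions} and \ref{lem:assumptionssatisfyidentity}, sandwich $\uL(w_{KQ})$ using Lemmas \ref{lem:nonlinearbiasupperbound}, \ref{lem:nonlinearbiaslowerbound} (combined with Corollary \ref{cor:overalluLb}) and \ref{lem:noisebounds}, evaluate the upper bound at its minimizer $w_0\asymp\Lambda^{2/(d+2)}$, and then compare the bias and noise lower bounds against this optimal value to bracket $w_{KQ}^*$. The paper writes the last step as direct inequality manipulations rather than as a contradiction argument, but the content is identical, including the observation that the asymmetric exponents $1/(d+2)$ versus $2/(d+2)$ arise from the gap between the $1/w$ bias upper bound and the $1/w^2$ bias lower bound.
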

\begin{proof}

We consider three regions in which the optimal value could potentially lie and see that only the third region is viable.

\textbf{Case 1.} $w_{KQ}\le d+\sqrt{d}$: In this case, the signal term lower bounds the optimal loss by Lemma \ref{lem:linearbiaslowerbound} as $\Omega(1)$.

\textbf{Case 2.} $w_{KQ} > \Omega\left(\frac{n}{\log n}\right)^{\frac{2}{d}}$. In this case, the noise term lower bounds the optimal loss. From Lemma \ref{lem:noiseismonotonic} we know that the noise term is non-decreasing in $w_{KQ}$ so in the range $w_{KQ} > \Omega(\left(\frac{n}{\log n}\right)^{\frac{2}{d}}$ is lower bounded by $\uLn(w_{KQ})$ at $w_{KQ} = \Omega(\left(\frac{n}{\log n}\right)^{\frac{2}{d}}$, which is $\Omega\left(\frac{\sigma^2}{\log n}\right)$.

\textbf{Case 3.} $d+\sqrt{d} \le w_{KQ}\le \Omega\left(\frac{n}{\log n}\right)^{\frac{2}{d}}$
By combining Lemmas \ref{lem:nonlinearbiasupperbound}, \ref{lem:nonlinearbiaslowerbound}, and \ref{lem:noisebounds} , we obtain the following overall bound on the loss:
$$\underline {c} \left(\frac{L^2}{\left(w_{KQ}+1\right)^2} + \frac{\sigma^2w_{KQ}^{\frac{d}{2}}}{n}\right)\le \uL(w_{KQ})\le \overline {c} \left(\frac{L^2}{w_{KQ}} +\sigma^2\frac{w_{KQ}^{\frac{d}{2}}}{n} + \frac{\sigma^2 + L^2}{n}\right)$$
for some constants $\overline c, \underline c$ that only depend on $d$.
In the range $w_{KQ}\ge d+\sqrt{d}$, we have $w_{KQ} > 1$ and $w_{KQ}\le n$, so the upper bound can be relaxed as $\uLn(w_{KQ})\le 2\overline{c} \left(\frac{L^2}{n} + \frac{\sigma^2w_{KQ}^\frac{d}{2}}{n}\right)$, which is minimized at $w_{KQ} = \left(\frac{nL^2}{\sigma^2d}\right)^{\frac{2}{d+2}}$. Here it is upper bounded by $4\overline{c}\left(\frac{dL^d\sigma^2}{n}\right)^{\frac{2}{d+2}}$. We note first of all that for large enough $n$ (as long as $n=\Omega\left(\frac{\sigma \log n}{L}\right)^d$ and $n^{\frac{2}{d+2}} = \Omega(1)$) this is lower than the lower bounds we got in \textbf{Case 1} and \textbf{Case 2}, so this is indeed the region of global optimal solution. 
From Lemma \ref{lem:nonlinearbiaslowerbound} we have $\uLn(w_{KQ})\ge \frac{L^2}{w_{KQ}^2} + \sigma^2\frac{w_{KQ}^{\frac{d}{2}}}{n}\ge \frac{L^2}{w_{KQ}^2}$ which gives
\begin{align*} 
    &\underline{c} \frac{L^2}{w_{KQ}^2}\le\uLn(w_{KQ})\le 4\overline{c}L^2\left(\frac{\sigma^2d}{nL^2}\right)^{\frac{2}{d+2}}\\
    \implies &\left(\frac{nL^2}{d\sigma^2}\right)^{\frac{1}{d+2}}\sqrt{\frac{\underline {c}}{4\overline{c}}} \le w_{KQ}
    \end{align*}
    for the upper bound, we similarly also have $\uLn(w_{KQ})\ge \frac{L^2}{w_{KQ}^2} + \sigma^2\frac{w_{KQ}^{\frac{d}{2}}}{n}\ge \sigma^2\frac{w_{KQ}^{\frac{d}{2}}}{n}$ which gives
    \begin{align*}
    &4\overline{c}\left(\frac{dL^d\sigma^2}{n}\right)^{\frac{2}{d+2}}\ge \sigma^2\frac{w_{KQ}^{\frac{d}{2}}}{n}\\
    \implies &w_{KQ}\le \left(\frac{nL^2}{\sigma^2}\right)^{\frac{2}{d+2}}\left(4\frac{\overline{c}}{\underline{c}}d^{\frac{2}{d+2}}\right)^{\frac{2}{d}}
    \end{align*}
    Of course, for this to not be vacuous we need 
    $$\left(\frac{nL^2}{\sigma^2}\right)^{\frac{2}{d+2}}\left(4\frac{\overline{c}}{\underline{c}}d^{\frac{2}{d+2}}\right)^{\frac{2}{d}} \le \left(\frac{1}{45\sqrt{d}}\frac{n}{\log n}\right)^{\frac{2}{d}}.$$
    We will again hide constants that depend only on $d$ and write this as
    \begin{align*}
    c_1 \left(\frac{nL^2}{\sigma^2}\right)^{\frac{2}{d+2}}\le c_2 \left(\frac{n}{\log n}\right)^{\frac{2}{d}}\\
    \end{align*}
    which is true as long as $n > \left(\frac{L\log n}{\sigma}\right)^d$.
    \end{proof}

For the affine function class $\F^{\text{aff}}_L$, we have the following
\begin{theorem}\label{thm:overallnonlinear}
    If the functions seen in pretraining are drawn from $D(\F^{\text{aff}}_L)$ as in Definition \ref{def:f}, and the noise variance $\sigma^2$ and Liphscitz constant $L$ satisfies $n\ge \left(\frac{L\log^2 n}{\sigma}\right)^{d+2}$, and $n^{\frac{2}{d}}\ge \Omega(1)$, and the covariates are drawn as Assumption \ref{assump:dists}, the optimal  $\mathbf{M}$ satisfies
    \begin{equation}
        \mathbf{M} = w_{KQ}\mathbf{I}_d
    \end{equation}
    where $w_{KQ}$ satisfies
   \begin{equation} 
   \Omega\left(\left(nL^2\right)^{\frac{1}{d+4}}\right) \le w_{KQ}\le \uO\left(\left(nL^2\right)^{\frac{2(d+2)}{d(d+4)}}\right).
   \end{equation}
\end{theorem}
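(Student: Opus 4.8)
The plan is to mirror the proof of Theorem~\ref{thm:overalllinear} (the ReLU case), replacing the ReLU bias estimates with the affine/linear bias bounds of Corollary~\ref{cor:overalluLb}. First I would reduce the problem to a scalar optimization: Lemma~\ref{lem:linearsatisfiesassumption} verifies that $D(\F^{\text{aff}}_L)$ satisfies Assumption~\ref{assumptions} (Lipschitzness, a monotone correlation kernel $\rho$, and closure under isometries of the sphere), so Lemma~\ref{lem:assumptionssatisfyidentity} applies and every minimizer of \eqref{eq:loss} has the form $\mathbf{M}^*=w_{KQ}\boldsymbol{\Sigma}^{-1}$ with $w_{KQ}\ge 0$; after the change of variables $\X\mapsto\boldsymbol{\Sigma}^{-1/2}\X$ of the ``Rewriting the Loss'' subsection we may take $\boldsymbol{\Sigma}=\mathbf{I}_d$, giving $\mathbf{M}^*=w_{KQ}\mathbf{I}_d$. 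It then remains only to characterize the optimal scalar, i.e.\ to minimize the one-dimensional map $w_{KQ}\mapsto\uL(w_{KQ})$.

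Next I would sandwich the loss. Using independence of the label noise, $\uL(w_{KQ})=\uLb(w_{KQ})+\uLn(w_{KQ})$. Corollary~\ref{cor:overalluLb} bounds the signal term between $L^2\,\Omega\bigl(1/(w_{KQ}+1)^2\bigr)$ and $L^2\,\uO\bigl(1/w_{KQ}^2+w_{KQ}^{d/2-1}/n+1/n\bigr)$, and Lemma~\ref{lem:noisebounds} bounds the noise term between $\Omega\bigl(\sigma^2 w_{KQ}^{d/2}/n\bigr)$ and $\uO\bigl(\sigma^2(1+w_{KQ}^{d/2})/n\bigr)$ for $w_{KQ}>d+\sqrt d$. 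Adding, there are $d$-dependent constants with $\underline c\bigl(L^2/(w_{KQ}+1)^2+\sigma^2 w_{KQ}^{d/2}/n\bigr)\le\uL(w_{KQ})\le\overline c\bigl(L^2/w_{KQ}^2+L^2 w_{KQ}^{d/2-1}/n+\sigma^2 w_{KQ}^{d/2}/n+(\sigma^2+L^2)/n\bigr)$ on $\{w_{KQ}>d+\sqrt d\}$. For $w_{KQ}\le(d+\sqrt d)/2$ the bias lower bound of Lemma~\ref{lem:linearbiaslowerbound} shows $\uL(w_{KQ})$ is at least a ($d$-dependent) constant, which under the hypotheses strictly exceeds the value attained in the regime $w_{KQ}>d+\sqrt d$; hence the minimizer lies in the latter regime.

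Then I would optimize the upper bound and invert the lower one. The upper-bound expression is the sum of the decreasing term $L^2/w_{KQ}^2$, the two increasing terms $L^2 w_{KQ}^{d/2-1}/n$ and $\sigma^2 w_{KQ}^{d/2}/n$ (whose ratio is $\sigma^2 w_{KQ}/L^2$), and the $w_{KQ}$-free term $(\sigma^2+L^2)/n$. The hypothesis $n^{2/(d+2)}\ge L^2/\sigma^2$ is exactly what makes $\sigma^2 w_{KQ}^{d/2}/n$ the governing increasing term at its balance with $L^2/w_{KQ}^2$ (at $w_{KQ}\asymp(nL^2/\sigma^2)^{2/(d+4)}$), and the hypotheses $L^2/\sigma^2\ge 1/n$ together with $n^{2/d}=\Omega(1)$ make $(\sigma^2+L^2)/n$ negligible against the balanced value $\asymp L^2(nL^2/\sigma^2)^{-4/(d+4)}$; this gives $\min_w\uL(w)\le\overline\ell$ for an explicit $d$-dependent $\overline\ell$ polynomial in $nL^2$ (within the stated range of $\sigma$). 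Since any minimizer $w_{KQ}^*$ has $\uL(w_{KQ}^*)\le\overline\ell$, the lower sandwich yields $\underline c\,L^2/(w_{KQ}^*+1)^2\le\overline\ell$, hence $w_{KQ}^*\ge\Omega\bigl((nL^2)^{1/(d+4)}\bigr)$, and $\underline c\,\sigma^2(w_{KQ}^*)^{d/2}/n\le\overline\ell$, hence $w_{KQ}^*\le\uO\bigl((nL^2)^{2(d+2)/(d(d+4))}\bigr)$; a final check that this upper value is compatible with $w_{KQ}^*>d+\sqrt d$ and with the validity window of Lemma~\ref{lem:noisebounds} (using $n^{2/d}=\Omega(1)$) completes the argument.

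The main obstacle is the extra ``pseudo-noise'' term $L^2 w_{KQ}^{d/2-1}/n$ in the affine bias upper bound (Lemma~\ref{lem:linearbiasupperbound}), which is absent in the ReLU case and competes with the genuine noise term $\sigma^2 w_{KQ}^{d/2}/n$: deciding which of the two controls the minimizer, and thereby extracting the precise exponents $\tfrac{1}{d+4}$ and $\tfrac{2(d+2)}{d(d+4)}$ under the hypothesis $n^{2/(d+2)}\ge L^2/\sigma^2$, is the delicate bookkeeping step. A secondary subtlety is that the bias upper and lower bounds ($\asymp 1/w_{KQ}^2+w_{KQ}^{d/2-1}/n$ versus $\asymp 1/(w_{KQ}+1)^2$) are not tight, which is precisely why the conclusion is a (wide) interval for $w_{KQ}$ rather than a sharp value.
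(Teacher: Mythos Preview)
Your proposal is correct and follows essentially the same route as the paper: reduce to a scalar via Lemmas~\ref{lem:linearsatisfiesassumption} and~\ref{lem:assumptionssatisfyidentity}, sandwich $\uL(w_{KQ})$ using Corollary~\ref{cor:overalluLb} and Lemma~\ref{lem:noisebounds}, minimize the upper envelope, and invert the lower one. The only cosmetic difference is in handling the ``pseudo-noise'' term $L^2 w_{KQ}^{d/2-1}/n$: the paper argues by contradiction (assuming $w_{KQ}\le L^2/\sigma^2$ forces the upper-bound minimizer to $n^{2/(d+2)}$, which violates $n^{2/(d+2)}\ge L^2/\sigma^2$), whereas you verify directly that at the balance point $(nL^2/\sigma^2)^{2/(d+4)}$ the genuine noise term dominates under the same hypothesis; the two are equivalent.
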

\begin{proof}
Again we work with three cases.

\textbf{Case 1.} $w_{KQ}\le d+\sqrt{d}$. Again in this case we have a lower bound to the signal term of $\Omega(1)$.

\textbf{Case 2.} $w_{KQ}\ge \Omega\left(\frac{n}{\log n}\right)^{\frac{2}{d}}$. Again we have a lower bound of $\Omega\left(\frac{\sigma^2}{\log n}\right)$

\textbf{Case 3.} $d+\sqrt{d}\le w_{KQ}\le \Omega\left(\frac{n}{\log n}\right)^{\frac{2}{d}}$
    Combining Lemmas \ref{lem:linearbiasupperbound}, \ref{lem:linearbiaslowerbound}, \ref{lem:noisebounds} is 
    $$\underline {c}\left( \frac{L^2}{\left(w_{KQ}+1\right)^2} + \sigma^2\frac{w_{KQ}^{\frac{d}{2}}}{n}\right)\le \uL(w_{KQ})\le \overline c \left(\frac{L^2}{w_{KQ}^2} + \sigma^2\frac{w_{KQ}^{\frac{d}{2}}}{n} + L^2\frac{w_{KQ}^{\frac{d}{2}-1}}{n} + \frac{L^2+\sigma^2}{n}\right)$$
    We will minimize the upper bound. First suppose $\frac{L^2}{\sigma^2}\ge w_{KQ}$ for the $w_{KQ}$ that minimizes the upper bound. Then we have 
    $$\uL(w_{KQ})\le \overline c \left(\frac{L^2}{w_{KQ}^2} + \frac{\sigma^2}{n} + 2L^2\frac{w_{KQ}^{\frac{d}{2}-1}}{n}\right)$$
    This upper bound is minimized at $w_{KQ} = n^{\frac{2}{d+2}}$. However, this contradicts the constraint that $w_{KQ}\le \frac{L^2}{\sigma^2}$, when $n^{\frac{2}{d+2}}\ge \frac{L^2}{\sigma^2}$, as we assume.
    So we have $w_{KQ}\ge \frac{L^2}{\sigma^2}$ for the minimizer. This means the upper bound is no more than
    $$\uL(w_{KQ})\le \overline c \left(\frac{L^2}{w_{KQ}^2} + \sigma^2\frac{2w_{KQ}^{\frac{d}{2}}}{n} +\frac{\sigma^2+L^2}{n}\right)$$
    This upper bound is minimized at $w_{KQ} = \left(\frac{nL^2}{\sigma^2d}\right)^{\frac{2}{d+4}}$ where it is upper bounded by $$\uLn(w_{KQ})\le 4L^2\overline{c}\left(\frac{\sigma^2d}{nL^2}\right)^{\frac{2}{d+4}} + \frac{L^2}{n}\le 5L^2\overline{c}\left(\frac{\sigma^2d}{nL^2}\right)^{\frac{2}{d+4}} .$$ whenever $n \ge \frac{L^2}{\sigma^2}$.
    We see that 
    \begin{align*} 
    \uL(w_{KQ})\ge &\underline {c} \left(\frac{L^2}{w_{KQ}^2} + \sigma^2\frac{w_{KQ}^{\frac{d}{2}}}{n}\right)\ge \underline {c} \frac{L^2}{w_{KQ}^2}\\
    \implies &\underline {c} \frac{L^2}{w_{KQ}^2}\le 5L^2\overline{c}\left(\frac{\sigma^2d}{nL^2}\right)^{\frac{2}{d+4}}\\
    \implies &\left(\frac{nL^2}{\sigma^2}\right)^{\frac{1}{d+4}}\sqrt{\frac{\underline {c}}{5\overline{c}}} \left(\frac{1}{d}\right)^{\frac{1}{d+4}} \le w_{KQ}
    \end{align*}
    for the upper bound, we similarly also have
    \begin{align*}
    \uL(w_{KQ})\ge&\underline {c} \left(\frac{L^2}{w_{KQ}^2} + \sigma^2\frac{w_{KQ}^{\frac{d}{2}}}{n}\right)\ge \underline{c}\sigma^2\frac{w_{KQ}^{\frac{d}{2}}}{n}\\
    \implies &w_{KQ}\le \left(\frac{nL^2}{\sigma^2}\right)^{\frac{2(d+2)}{d(d+4)}}\left(5\frac{\overline{c}}{\underline{c}}d^{\frac{2}{d+4}}\right)^{\frac{2}{d}}
    \end{align*}
    Of course, for this to not be vacuous we need 
    $$\left(\frac{nL^2}{\sigma^2}\right)^{\frac{2(d+2)}{d(d+4)}}\left(5\frac{\overline{c}}{\underline{c}}d^{\frac{2}{d+4}}\right)^{\frac{2}{d}} \le \left(\frac{1}{45\sqrt{d}}\frac{n}{\log n}\right)^{\frac{2}{d}}.$$
    We will again hide constants that depend only on $d$ and write this as
    \begin{align*}
    c_1 \left(\frac{nL^2}{\sigma^2}\right)^{\frac{2(d+2)}{d(d+4)}}\le c_2 \left(\frac{n}{\log n}\right)^{\frac{2}{d}}\\
    \end{align*}
    which again is true as long as $n= \Omega\left(\frac{L\log^2 n}{\sigma}\right)^{d+2}$
    \end{proof}

\subsection{Generalization Bounds} \label{app:generalization}
We conclude this section with a proof of the generalization error on a new $L-$Lipschitz task.

\begin{theorem}
    Suppose our attention is first pretrained on tasks drawn from $D(\F_L^+)$ and then tested on an arbitrary $L-$Lipschitz task, then the loss on the new task is upper bounded as $\uL \le \uO\left(\frac{L^2}{\Lambda^{\beta}}\right).$ Furthermore, if the new task is instead drawn from $D(\F_{L'}^+)$, the loss is lower bounded as $\uL\ge \min\{\Omega(\frac{L'^2}{\Lambda^{2\beta}}), \Omega(\frac{\Lambda^{\beta d/2}}{n})\}$
\end{theorem}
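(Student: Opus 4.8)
The plan is to transport the parameter scaling of Theorem~\ref{main:general} to a downstream task by re‑running the bias/noise split of the ICL loss at the \emph{pretrained} matrix, now with the test task's labels in place of the pretraining distribution. Pretraining on $D(\F^+_L)$ is \textbf{Case 2} of Theorem~\ref{main:general}: the class $\F^+_L$ satisfies Assumption~\ref{assumptions}, so Lemma~\ref{lem:assumptionssatisfyidentity} applies, and by the $l_1\!\leftrightarrow\! l_2$ symmetry of $\F^+_L$ the loss depends only on $|w_{KQ}|$, so we may take $w_{KQ}\ge 0$; hence any pretraining minimizer is $\mathbf{M}^* = w^*\mathbf{I}_d$ with $\Omega(\Lambda^{\beta})\le w^*\le\uO(\Lambda^{2\beta})$, where $\Lambda = nL^2/\sigma^2$ and $\beta = 1/(d+2)$. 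Fixing this $\mathbf{M}^*$ and any test task $f$ with i.i.d.\ mean‑zero noise, independence of the noise gives, exactly as in Appendix~\ref{app:preliminaries}, $\uL = \uLb(w^*) + \uLn(w^*)$, where $\uLb$ is the squared bias of the softmax‑weighted average of the test labels and $\uLn = \sigma^2\,\E[\sum_i\gamma_i^2]$ with $\gamma_i$ the normalized softmax weights.

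For the upper bound, let $f$ be an arbitrary $L$‑Lipschitz function. Then $|f(\bx_i)-f(\bx_{n+1})|\le L\|\bx_i-\bx_{n+1}\|$, so $\uLb(w^*)\le L^2\,\E[(g_1(w^*)/g_0(w^*))^2]$, and the $g_1/g_0$ estimate behind Lemma~\ref{lem:nonlinearbiasupperbound} (Corollary~\ref{cor:gr}, plus the trivial $\le 2L$ bound on the failure event) yields $\uLb(w^*)\le L^2\,\uO(1/w^* + 1/n)$; Lemma~\ref{lem:noisebounds} yields $\uLn(w^*)\le\uO(\sigma^2(w^*)^{d/2}/n)$. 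Substituting $w^*\ge\Omega(\Lambda^\beta)$ into the bias term and $w^*\le\uO(\Lambda^{2\beta})$ into the noise term, and using $\Lambda=\Omega(1)$ together with the standing range $\Omega(n^{-d/2})\le\sigma^2\le\uO(nL^2)$, one checks that the extra $L^2/n$ bias term and the noise term $\sigma^2\Lambda^{\beta d}/n = L^2\Lambda^{\beta d - 1}$ are both $\uO(L^2/\Lambda^{\beta})$ (the latter since $\beta(d+1)\le 1$). Hence $\uL\le\uO(L^2/\Lambda^{\beta})$, which is the first claim.

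For the lower bound, let $f\sim D(\F^+_{L'})$ and again write $\uL = \uLb(w^*)+\uLn(w^*)$. The lower bound of Lemma~\ref{lem:noisebounds} gives $\uLn(w^*)\ge\Omega(\sigma^2(w^*)^{d/2}/n)\ge\Omega(\sigma^2\Lambda^{\beta d/2}/n)$ on substituting $w^*\ge\Omega(\Lambda^\beta)$ — matching the claimed $\Omega(\Lambda^{\beta d/2}/n)$ up to the noise scale — and this is the dominant bound precisely when $L'<L$, since then the pretrained window is narrower than is optimal for the less‑sharp test functions and the estimator over‑averages the noise. When instead $L'>L$ the window is too wide and the bias dominates: $\uL\ge\uLb(w^*;D(\F^+_{L'}))$, and one bounds this below by a multiple of $L'^2$ over a power of $w^*$ and then substitutes $w^*\le\uO(\Lambda^{2\beta})$ to obtain $\Omega(L'^2/\Lambda^{2\beta})$. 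To get a bias lower bound of the required order one must sharpen the reduction of Lemma~\ref{lem:nonlinearbiaslowerbound}: rather than dominating the ReLU bias by the linear bias, restrict to test queries $\bx_{n+1}$ lying in the $\Theta(1/\sqrt{w^*})$‑width slab around the kink hyperplane $\{\mathbf{w}^\top\bx = 0\}$ of the sampled function — an event of probability $\Omega(1/\sqrt{w^*})$, since $\mathbf{w}^\top\bx_{n+1}$ has density bounded away from $0$ near the origin — on which the window straddles both affine pieces and the average cannot cancel the $\Omega(L')$ discontinuity in slope.

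I expect this last point to be the main obstacle: Lemma~\ref{lem:nonlinearbiaslowerbound} as stated (reduction to the linear class) is too weak by a power of $w^*$ for the asserted $\Lambda^{2\beta}$ exponent, so one must lower‑bound $\uLb(\cdot;D(\F^+_{L'}))$ directly via the kink‑slab argument above and track its $w^*$‑dependence carefully. A secondary, bookkeeping obstacle is verifying that across the whole feasible interval $w^*\in[\Omega(\Lambda^\beta),\uO(\Lambda^{2\beta})]$ the quoted term is indeed the larger of $\uLb$ and $\uLn$ in each regime, which reduces to comparing $(L'/L)^2\,\Lambda^{d/(2(d+2))}$ with $1$.
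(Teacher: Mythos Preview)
Your approach is the same as the paper's: split the test loss into $\uLb+\uLn$ at the pretrained scalar $w^*$, invoke the bias bounds (Lemmas~\ref{lem:nonlinearbiasupperbound},~\ref{lem:nonlinearbiaslowerbound}) and the noise bounds (Lemma~\ref{lem:noisebounds}), and substitute the $w^*$-range from Theorem~\ref{main:general}. For the upper bound the paper's proof is a one-line variant of yours: it notes that $L^2/w+\sigma^2 w^{d/2}/n$ is convex in $w$ for $d\ge 2$, so over $w^*\in[\Omega(\Lambda^\beta),\uO(\Lambda^{2\beta})]$ the worst case is at an endpoint, and then checks both endpoints to arrive at $\uO(L^2/\Lambda^\beta)$.

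For the lower bound the paper's proof is literally the single sentence ``bounded below as $\Omega(L'^2/\Lambda^{2\beta})$ and $\Omega(\Lambda^{\beta d/2}/n)$'' --- i.e.\ it just substitutes, with no sharpening of the bias lemma. Your observation that Lemma~\ref{lem:nonlinearbiaslowerbound} only yields $\uLb\ge\Omega(L'^2/w^{*2})$, hence $\Omega(L'^2/\Lambda^{4\beta})$ at the upper end of the interval rather than the stated $\Omega(L'^2/\Lambda^{2\beta})$, is correct and is \emph{not} addressed in the paper. The kink-slab argument you outline is not in the paper; if you pursue it, note that a slab of probability $\Theta(1/\sqrt{w^*})$ with in-slab squared bias $\Theta(L'^2/w^*)$ gives $\Omega(L'^2/w^{*3/2})$, still short of $\Omega(L'^2/w^*)$, so you would need to either widen the slab or extract a $\Theta(L'/\sqrt{w^*})$ bias contribution along the $\mathbf{w}$-direction over the full sphere (the ReLU kink makes $\sum_i\gamma_i(\mathbf{w}^\top\bx_i)_+$ differ from $(\mathbf{w}^\top\bx_{n+1})_+$ by $\Theta(1/\sqrt{w^*})$ generically, not just in a thin slab). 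Your flagging of the missing $\sigma^2$ in the noise lower bound is likewise accurate; the paper's proof does not comment on it.
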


\begin{proof}
    We know from Theorem \ref{thm:overallnonlinear} that $\Omega(\Lambda^\beta)\le w_{KQ}\le \uO(\Lambda^{2\beta})$. The upper bound for $\uL(w_{KQ})$, which is $\uO(\frac{L^2}{w_{KQ}} + \frac{w_{KQ}^\frac{d}{2}}{n})$, is a convex function for $d \ge 2$, so in any range it attains its maximum value at the extreme points. We can check the cases to see that this is $\uO(\max\{\frac{L^2}{\Lambda^\beta}+\frac{\Lambda^{d\beta/2}}{n}, \frac{L^2}{\Lambda^{2\beta}} + \frac{\Lambda^{d\beta}}{n}\}) = \uO(\frac{L^2}{\Lambda^\beta}+\frac{\Lambda^{d\beta/2}}{n} + \frac{L^2}{\Lambda^{2\beta}} + \frac{\Lambda^{d\beta}}{n}) = \uO(\frac{L^2}{\Lambda^\beta})$ for large enough $n$.

    Now consider testing on a new task from $D(F_{L'}^+)$. The ICL loss for $\Omega\left(\Lambda^{\beta}\right)\le w_{KQ}\le \uO\left(\Lambda^{2\beta}\right)$ is bounded below as $\Omega(\frac{L'^2}{\Lambda^{2\beta}})$ and $\Omega(\frac{\Lambda^{\beta d/2}}{n})$.
\end{proof}

The implication of this is that if $L' \gg L$, the error scales as $\left(L'\right)^2$ rather than $\left(L'\right)^{\frac{2d}{d+2}}$ while for $L' \ll L$, the error is lower bounded by a constant.
\section{Lower Bound for Linear Attention}\label{app:lower}

In this section we prove Theorem \ref{thm:negative-lin}. 

\begin{lemma}\label{lem:even}
    Consider the function distributions $D(\F_L)$ and $D(\F_L^+)$ described in Definition \ref{def:f}. We have $\mathcal{L}_{LA}\ge \Omega(L^2)$, that is, the ICL error is lower bounded as $\Omega(L^2)$.
\end{lemma}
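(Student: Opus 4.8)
The plan is to show that the linear attention estimator \eqref{eq:linearattention} cannot track the intercept term that appears in every function of $\mathcal{F}_L$ and of $\mathcal{F}_L^+$, and that this one deficiency already costs $\Omega(L^2)$ error for \emph{every} $\mathbf{M}$. The point is that $h_{LA}(\bx_{n+1}) = \sum_{i=1}^n (f(\bx_i)+\epsilon_i)\,\bx_i^\top \mathbf{M}\bx_{n+1}$ is a fixed data-dependent linear functional of the labels whose ``weights'' $\bx_i^\top\mathbf{M}\bx_{n+1}$ have expectation zero whenever the covariates are centered (as under Assumption \ref{assump:dists}), so it has no way to produce a nonzero constant output uniformly across contexts.

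Concretely, I would first write each task as $f = f_0 + b$, where $b \sim \mathrm{Unif}[-L,L]$ is the intercept, drawn independently of the slope parameters ($l,\mathbf{w}$ for $\mathcal{F}_L$, resp.\ $l_1,l_2,\mathbf{w}$ for $\mathcal{F}_L^+$), of the covariates $\{\bx_i\}_{i=1}^{n+1}$, and of the label noise, with $\E[b]=0$ and $\E[b^2] = L^2/3$. Substituting into $h_{LA}$ and setting $S := \sum_{i=1}^n \bx_i^\top\mathbf{M}\bx_{n+1}$, one gets $h_{LA}(\bx_{n+1}) = h_0 + bS$, where $h_0$ collects all $b$-free terms, so the residual splits as $h_{LA}(\bx_{n+1}) - f(\bx_{n+1}) = \big(h_0 - f_0(\bx_{n+1})\big) + b\,(S-1)$. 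Since $b$ is mean-zero and independent of $h_0$, of $S$, and of $f_0(\bx_{n+1})$, expanding the square annihilates the cross term and yields $\mathcal{L}_{LA}(\mathbf{M}) = \E\big[(h_0 - f_0(\bx_{n+1}))^2\big] + \E[b^2]\,\E[(S-1)^2] \ge \tfrac{L^2}{3}\,\E[(S-1)^2]$. Finally, each $\bx_i$ is mean-zero and independent of $\bx_{n+1}$, so $\E[S] = n\,\E[\bx_1]^\top\mathbf{M}\,\E[\bx_{n+1}] = 0$, whence $\E[(S-1)^2] = \E[S^2] + 1 \ge 1$ and therefore $\mathcal{L}_{LA}(\mathbf{M}) \ge L^2/3 = \Omega(L^2)$ for all $\mathbf{M}$, uniformly over both task families.

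I do not anticipate a serious obstacle; the only step requiring care is justifying that the cross terms vanish, which rests on the intercept $b$ being drawn independently of the covariates, of the remaining function parameters, and of the noise, so that it is independent of the entire $b$-free part of the residual. As a complementary route specific to $\mathcal{F}_L^+$, one could instead exploit that flipping the sign of the query negates $h_{LA}$ while leaving the even component $\tfrac{l_1+l_2}{2}|\mathbf{w}^\top\bx|$ of the target unchanged, forcing error on that even part as well (this is the ``even'' phenomenon in the lemma's name); but the intercept argument already delivers the stated bound for both $D(\mathcal{F}_L)$ and $D(\mathcal{F}_L^+)$, so I would present it as the main proof.
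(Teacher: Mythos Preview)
Your proof is correct and reaches the same lower bound $L^2/3$ as the paper, but by a somewhat cleaner route. Both arguments hinge on the intercept. The paper isolates $\E_{\bx} f$ (which equals $b$ for $\mathcal{F}_L^{\text{aff}}$ and $b$ plus a mean-zero term in $l_1,l_2$ for $\mathcal{F}_L^+$) and then uses a coupling trick: it pairs $f$ with its reflection $f' = 2\E_{\bx} f - f$, notes that $f$ and $f'$ are identically distributed, and averages the two squared residuals to obtain $\mathcal{L}_{LA} \ge \E[A^2]+\E[B^2] \ge \E[(\E_{\bx}f)^2] \ge L^2/3$. You instead isolate only the explicit intercept parameter $b$, use its independence from all other randomness to kill the cross term directly, and then invoke the additional observation $\E[S]=0$ (from the centered covariates) to force $\E[(S-1)^2]\ge 1$. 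Your argument is more elementary---no coupling or distributional symmetry of $\overline f$ is needed---and makes the mechanism (linear attention cannot output a nonzero constant in expectation) fully transparent. The paper's version is slightly more general in that it would apply to any class with $\overline{f}$ symmetric and independent of $\E_{\bx}f$, even without an explicitly drawn intercept parameter, but that generality is not actually used for the two families in the statement.
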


\begin{proof}
    We start by decomposing the ICL loss into a bias dependent term and a cenetered term. For $f \in \F_{L} \in \{\F_L^{\text{aff}}, \F_L^+\}$, let $\overline f$ denote the centered function $f-\E_{\bx} f$. Let $f'$ denote the flip of $f$ about its expected value, so $f' = \E_x f - \overline f$. We observe that $\overline f$ is independent of $\E_x f$.
    For linear attention, we have, for $f \sim D(\mathcal{F}_L)$
\begin{align}
\mathcal{L}_{\text{LA}}(\mathbf{M}) &= \E_{f, \{\bx_i\}_i, \{\epsilon_{i} \}_{i}} \left[\left( h_{LA}(\bx_{n+1}) -f(\bx_{n+1})\right)^2 \right] \nonumber \\
&= \E_{f, \{\bx_i\}_i, \{\epsilon_{i} \}_{i}} \left[\left( \sum_{i=1}^n \left((f(\bx_i) +\epsilon_i){\bx_i^\top \mathbf{M}\bx_{n+1} }\right) -f(\bx_{n+1})\right)^2 \right] \nonumber \\
&=  \E_{f, \{\bx_i\}_i, \{\epsilon_{i} \}_{i}} \left[\left(\sum_{i=1}^n\left(\overline f(\bx_i){\bx_i^\top \mathbf{M}\bx_{n+1} } +\epsilon_i{\bx_i^\top \mathbf{M}\bx_{n+1} } + \E_x f{\bx_i^\top \mathbf{M}\bx_{n+1} }\right)-f(\bx_{n+1})\right)^2\right] \nonumber \\\
&=  \E_{f, \{\bx_i\}_i, \{\epsilon_{i} \}_{i}} \left[\left(\sum_{i=1}^n\left(\overline f(\bx_i){\bx_i^\top \mathbf{M}\bx_{n+1} } +\epsilon_i{\bx_i^\top \mathbf{M}\bx_{n+1} }\right) -f(\bx_{n+1})\right)^2\right] \\
&\hspace{1cm}+ \E_{f, \bx, \{\bx_i\}} \left[\left(\sum_i \left(\E_x f\right) \bx_i^\top \mathbf{M}\bx_{n+1}\right)^2\right]\nonumber \\
&\ge  \E_{f, \{\bx_i\}_i, \{\epsilon_{i} \}_{i}} \left[\left(\sum_{i=1}^n\left(\overline f(\bx_i){\bx_i^\top \mathbf{M}\bx_{n+1} } +\epsilon_i{\bx_i^\top \mathbf{M}\bx_{n+1} }\right) -\overline f(\bx_{n+1}) - \E_x f\right)^2\right]
\end{align}
By symmetry, this is also equal to the same expression using $f'$ instead of $f$, since $f$ and $f'$ are distributed identically. Besides, $\E_x f = \E_x f'$ and $\epsilon$ is symmetric about the origin, so
\begin{align*}
\mathcal{L}_{\text{LA}}(\mathbf{M}) &\ge \E_{f, \{\bx_i\}_i, \{\epsilon_{i} \}_{i}} \left[\left(\sum_{i=1}^n \left(f'(\bx_i){\bx_i^\top \mathbf{M}\bx_{n+1} } +\epsilon_i{\bx_i^\top \mathbf{M}\bx_{n+1} }\right) -f'(\bx_{n+1}) - \E_x f'\right)^2\right]\\
&= \E_{f, \{\bx_i\}_i, \{\epsilon_{i} \}_{i}} \left[\left(\sum_{i=1}^n \left(f'(\bx_i){\bx_i^\top \mathbf{M}\bx_{n+1} } -\epsilon_i{\bx_i^\top \mathbf{M}\bx_{n+1} }\right) -f'(\bx_{n+1}) - \E_x f\right)^2\right]\\
&= \E_{f, \{\bx_i\}_i, \{\epsilon_{i} \}_{i}} \left[\left(-\left(\sum_{i=1}^n \left(\overline f(\bx_i){\bx_i^\top \mathbf{M}\bx_{n+1} } +\epsilon_i{\bx_i^\top \mathbf{M}\bx_{n+1} } \right)-\overline f(\bx_{n+1})\right)- \E_x f\right)^2\right]
\end{align*}
Let $A = \sum_{i=1}^n \left(\overline f(\bx_i){\bx_i^\top \mathbf{M}\bx_{n+1} } +\epsilon_i{\bx_i^\top \mathbf{M}\bx_{n+1} } \right)-\overline f(\bx_{n+1})$ and $B = \E_x f$. Then we see that $\uL_{LA}(\mathbf{M})\ge \frac{1}{2}\E (A+B)^2 +\frac{1}{2}\E(-A+B)^2 = \E A^2 + \E B^2$. Meanwhile, $\E\left(\E_x f\right)^2$ is just the variance of the signal term in $D(\F_L^{\text{aff}})$ or $D(\F_L^+)$, which is $\frac{L^2}{3}$. So $\uL_{LA}(\mathbf{M})\ge \frac{L^2}{3}$
\end{proof}

\section{Bounds for $g_p(r)$}
\label{app:misc}
The purpose of this section is to obtain upper and lower bounds on 
$$g_p(r) = \sum_{i=1}^n \Vert\bx_i-\bx\Vert^p e^{-r\Vert \bx_i^\top -\bx\Vert^2}$$
for $p = 0,1/2,1$. For this, we will need high probability upper and lower bounds on the number of points in a spherical cap under a uniform distribution over the hypersphere. Consider $n$ points $\{\bx_i\}$ drawn uniformly from $\sigma_{d-1}$, the uniform measure over $S_{d-1}$, the $d-$dimensional hypersphere. The measure of the $\epsilon-$ spherical cap around $\bx\in S_{d-1}$, $C(\epsilon, \bx) = \{\bx': \bx'^\top \bx > 1-\epsilon\}$ is denoted by $\sigma_\epsilon$. 

\subsection{Bounds on Spherical Caps}
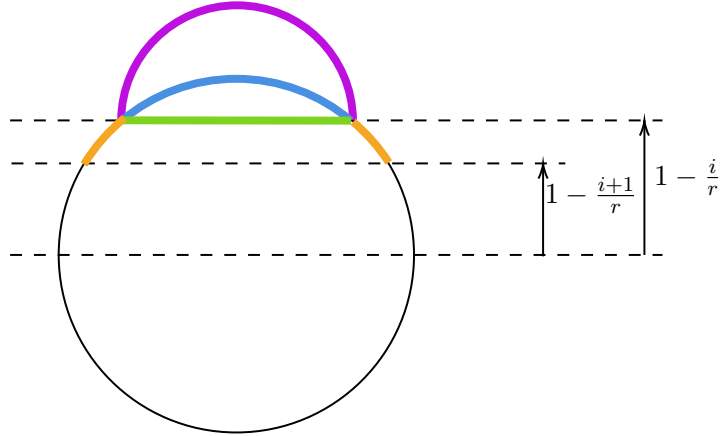
\begin{figure*}[htb]
\centering
\tikzset{every picture/.style={line width=0.75pt}} 

\begin{tikzpicture}[x=0.75pt,y=0.75pt,yscale=-0.7,xscale=0.7]

\draw   (95,219) .. controls (95,148.31) and (152.31,91) .. (223,91) .. controls (293.69,91) and (351,148.31) .. (351,219) .. controls (351,289.69) and (293.69,347) .. (223,347) .. controls (152.31,347) and (95,289.69) .. (95,219) -- cycle ;
\draw  [dash pattern={on 4.5pt off 4.5pt}]  (60,122) -- (530,122) ;
\draw  [dash pattern={on 4.5pt off 4.5pt}]  (61,153) -- (460,153) ;
\draw  [dash pattern={on 4.5pt off 4.5pt}]  (60,219) -- (530,219) ;
\draw    (517,219) -- (517,124) ;
\draw [shift={(517,122)}, rotate = 90] [color={rgb, 255:red, 0; green, 0; blue, 0 }  ][line width=0.75]    (10.93,-3.29) .. controls (6.95,-1.4) and (3.31,-0.3) .. (0,0) .. controls (3.31,0.3) and (6.95,1.4) .. (10.93,3.29)   ;
\draw    (444,220) -- (444,155) ;
\draw [shift={(444,153)}, rotate = 90] [color={rgb, 255:red, 0; green, 0; blue, 0 }  ][line width=0.75]    (10.93,-3.29) .. controls (6.95,-1.4) and (3.31,-0.3) .. (0,0) .. controls (3.31,0.3) and (6.95,1.4) .. (10.93,3.29)   ;
\draw  [draw opacity=0][line width=3]  (140,122) .. controls (140.27,76.12) and (177.55,39) .. (223.5,39) .. controls (269.38,39) and (306.61,76) .. (307,121.79) -- (223.5,122.5) -- cycle ; \draw  [color={rgb, 255:red, 189; green, 16; blue, 224 }  ,draw opacity=1 ][line width=3]  (140,122) .. controls (140.27,76.12) and (177.55,39) .. (223.5,39) .. controls (269.38,39) and (306.61,76) .. (307,121.79) ;  
\draw  [draw opacity=0][line width=3]  (140.76,121.92) .. controls (163,103.24) and (191.69,92) .. (223,92) .. controls (254.46,92) and (283.27,103.35) .. (305.55,122.17) -- (223,220) -- cycle ; \draw  [color={rgb, 255:red, 74; green, 144; blue, 226 }  ,draw opacity=1 ][line width=3]  (140.76,121.92) .. controls (163,103.24) and (191.69,92) .. (223,92) .. controls (254.46,92) and (283.27,103.35) .. (305.55,122.17) ;  
\draw [color={rgb, 255:red, 126; green, 211; blue, 33 }  ,draw opacity=1 ][line width=3]    (140.76,121.92) -- (305.55,122.17) ;
\draw  [draw opacity=0][line width=3]  (113.23,153.77) .. controls (120.93,140.99) and (130.82,129.67) .. (142.36,120.33) -- (222.5,219.5) -- cycle ; \draw  [color={rgb, 255:red, 245; green, 166; blue, 35 }  ,draw opacity=1 ][line width=3]  (113.23,153.77) .. controls (120.93,140.99) and (130.82,129.67) .. (142.36,120.33) ;  
\draw  [draw opacity=0][line width=3]  (307.1,123.06) .. controls (317.07,131.44) and (325.76,141.3) .. (332.8,152.32) -- (224.5,221.5) -- cycle ; \draw  [color={rgb, 255:red, 245; green, 166; blue, 35 }  ,draw opacity=1 ][line width=3]  (307.1,123.06) .. controls (317.07,131.44) and (325.76,141.3) .. (332.8,152.32) ;  

\draw (522,148) node [anchor=north west][inner sep=0.75pt]    {$1-\frac{i}{r}$};
\draw (443,161) node [anchor=north west][inner sep=0.75pt]    {$1-\frac{i+1}{r}$};

\end{tikzpicture}
\caption{The surface area of the purple hemisphere is used to upper bound the surface area of $C(\frac{i}{r})$, while the \textit{volume} of the green hypersphere is used as a lower bound. Points in the orange region are $S_{i+1}\setminus S_i$, and their count is $N_{i+1}-N_i$.}\label{fig:hypersphere}
\end{figure*}

\begin{lemma}\label{lem:capbounds}
The area of the spherical cap $C(\epsilon)$, $\sigma_\epsilon$ is bounded as 
$$\frac{(2\epsilon-\epsilon^2)^{\frac{d-1}{2}}}{\sqrt{2d\pi}} \le \sigma_\epsilon \le (2\epsilon-\epsilon^2)^{\frac{d}{2}}\le \left(2\epsilon\right)^{\frac{d-1}{2}}e^{-\epsilon d/4}$$
\end{lemma}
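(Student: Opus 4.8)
The plan is to reduce $\sigma_\epsilon$ to a one–dimensional integral in the ``latitude'' coordinate $t=\bx^{\top}\bx'$ and then sandwich that integral from both sides. Writing a point of $\mathbb{S}^{d-1}$ as $\bx'=t\,\bx+\sqrt{1-t^2}\,\mathbf{u}$ with $\mathbf{u}\in\mathbb{S}^{d-2}$ orthogonal to $\bx$, the uniform probability measure disintegrates so that
\[
\sigma_\epsilon=\frac{\int_{1-\epsilon}^{1}(1-t^2)^{\frac{d-3}{2}}\,dt}{\int_{-1}^{1}(1-t^2)^{\frac{d-3}{2}}\,dt}.
\]
Equivalently — and this is the picture behind Figure~\ref{fig:hypersphere} — the cap is the radial graph $t=\sqrt{1-\|\mathbf{y}\|^2}$ over the $(d-1)$–dimensional ball $\{\mathbf{y}\perp\bx:\|\mathbf{y}\|\le\rho\}$ of radius $\rho:=\sqrt{2\epsilon-\epsilon^2}$, so its (unnormalized) surface area is $\int_{\|\mathbf{y}\|\le\rho}(1-\|\mathbf{y}\|^2)^{-1/2}\,d\mathbf{y}$; this lies between $\mathrm{Vol}_{d-1}(B_\rho)$ (the ``green'' ball, from $(1-\|\mathbf{y}\|^2)^{-1/2}\ge 1$) and the area of a radius-$\rho$ hemisphere (the ``purple'' cap, from $(1-\|\mathbf{y}\|^2)^{-1/2}\le \rho\,(\rho^2-\|\mathbf{y}\|^2)^{-1/2}$, valid since $\rho\le 1$). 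Either route yields two–sided bounds of the shape $c_d\,\rho^{\,d-1}\le\sigma_\epsilon\le C_d\,\rho^{\,d-1}$ with $c_d,C_d$ explicit ratios of Gamma functions / sphere surface areas.

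Next I would pin down the dimension–dependent constants. In the numerator $\int_{1-\epsilon}^1(1-t^2)^{\frac{d-3}{2}}dt$ substitute $s=1-t$, so the integrand becomes $(s(2-s))^{\frac{d-3}{2}}$; since $s\mapsto s(2-s)$ is increasing on $[0,1]$ with $s(2-s)\le 2\epsilon-\epsilon^2$ for $s\le\epsilon$, one gets the upper estimate $\epsilon(2\epsilon-\epsilon^2)^{\frac{d-3}{2}}\le(2\epsilon-\epsilon^2)^{\frac{d-1}{2}}$ (using $\epsilon\le 2\epsilon-\epsilon^2$), while $s(2-s)\ge s(2-\epsilon)$ gives the lower estimate $\tfrac{(2\epsilon-\epsilon^2)^{(d-1)/2}}{d-1}$. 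The normalizing integral equals $\sqrt{\pi}\,\Gamma(\tfrac{d-1}{2})/\Gamma(\tfrac d2)$, and Gautschi's (or Wendel's) inequality pins the Gamma ratio between $\sqrt{2/d}$ and a quantity within a universal constant of it, so the denominator is $\Theta(\sqrt{2\pi/d})$. Combining these gives $\sigma_\epsilon\ge (2\epsilon-\epsilon^2)^{(d-1)/2}/\sqrt{2\pi d}$ — the left inequality — and $\sigma_\epsilon\le (2\epsilon-\epsilon^2)^{(d-1)/2}$ up to the appropriate constant; the stated middle bound $(2\epsilon-\epsilon^2)^{d/2}$ then comes out in the relevant regime $\epsilon=\Omega(1/d)$, which is exactly the range in which Corollary~\ref{cor:gr} invokes it (recall $w_{KQ}\ge\tfrac{d+\sqrt d}{2}$). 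The final inequality is then immediate: $2\epsilon-\epsilon^2=2\epsilon(1-\epsilon/2)\le 2\epsilon$, $(1-\epsilon/2)^{d/2}\le e^{-\epsilon d/4}$, and $(2\epsilon)^{1/2}\le 1$ for $\epsilon\le\tfrac12$.

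The main obstacle is bookkeeping on the constants rather than any conceptual step: converting the honest estimate $c_d\,\rho^{d-1}\le\sigma_\epsilon\le C_d\,\rho^{d-1}$ into the clean, symmetric-looking statement of the lemma requires (i) a sharp enough Gamma–ratio bound that the factor $1/\sqrt{2d\pi}$ on the left is valid for every $d\ge 2$, and (ii) tracking for which range of $\epsilon$ the right-hand exponent may be relaxed from $\tfrac{d-1}{2}$ to $\tfrac d2$. Everything else — the disintegration of the uniform measure, the graph-area computation matching Figure~\ref{fig:hypersphere}, and the elementary one-variable inequalities on $s(2-s)$ and $(1-\epsilon/2)^{d/2}$ — should be routine.
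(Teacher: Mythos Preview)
Your proposal is correct and, in its geometric branch, is exactly the paper's argument: the paper bounds the cap from below by $V_{d-1}(B_\rho)/A_d(1)$ and from above by $\tfrac12 A_d(\rho)/A_d(1)=\rho^{d-1}/2$ (precisely the green-ball/purple-hemisphere comparison of Figure~\ref{fig:hypersphere}), and then applies the Gamma-ratio inequality of Lemma~\ref{lem:Gammabounds}(2) to obtain the constant $1/\sqrt{2d\pi}$ on the left. Your parallel latitude-integral derivation is an equivalent analytic route that makes the constants more explicit; it buys you nothing new for the lower bound but does clarify the upper one. In particular, your caveat about the middle exponent is well taken: the hemisphere comparison --- and the paper's own proof --- only yields $(2\epsilon-\epsilon^2)^{(d-1)/2}$, not $(2\epsilon-\epsilon^2)^{d/2}$, so the $d/2$ in the displayed statement is a slight over-claim that your regime restriction $\epsilon=\Omega(1/d)$ (which is the only range where the lemma is invoked) correctly accounts for.
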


\begin{proof}
We derive a lower bound as follows. We replace the surface area of a spherical cap in $S_{d-1}$ with a $d-1$ dimensional ball of the same boundary. Let $V_d$ denote the volume of a $d$ dimensional ball (that is, $V_3(r) = \frac{4}{3}\pi r^3$), and let $A_d$ denote the surface area of a $d$ dimensional sphere (so $A_3(a) = 4\pi r^2$). It is known that
$$V_d(r) = \frac{\pi^{\frac{d}{2}}}{\Gamma(\frac{d}{2}+1)}r^d,\text{ and }A_d(r) = \frac{2\pi^{\frac{d}{2}}}{\Gamma(\frac{d}{2})}r^{d-1}.$$
Then we have 
\begin{align*}
\sigma_\epsilon
&\ge \frac{V_{d-1}\left((1-(1-\epsilon)^2)^\frac{1}{2}\right)}{A_{d}(1)}\\
&=\frac{(1-(1-\epsilon)^2)^{\frac{d-1}{2}}}{2\sqrt{\pi}} \frac{\Gamma(\frac{d}{2})}{\Gamma(\frac{d+1}{2})}\\
&\ge \frac{(1-(1-\epsilon)^2)^{\frac{d}{2}}}{\sqrt{d\pi}}&&\text{Lemma \ref{lem:Gammabounds}}\\
&= \frac{(2\epsilon-\epsilon^2)^{\frac{d-1}{2}}}{\sqrt{2d\pi}}
\end{align*}

The upper bound is similar. This time we replace the cap with the surface of a hemisphere with the same boundary. We have
\begin{align*}
\sigma_\epsilon \le \frac{A_{d}\left((1-(1-\epsilon)^2)^\frac{1}{2}\right)}{2A_{d}(1)}=\frac{(1-(1-\epsilon)^2)^{\frac{d-1}{2}}}{2}\le (2\epsilon-\epsilon^2)^{\frac{d-1}{2}}
\end{align*}
\end{proof}

We will also need upper and lower bounds on a discretized version of the incomplete gamma function. 
\begin{definition}\label{def:incompletediscretegamma}
Denote by $\gamma(d, \alpha, m)$ the expression $\gamma(d, \alpha, m) = \sum_{i=1}^{m} i^de^{-\alpha i}$.
\end{definition}
We have the following
\begin{lemma}\label{lem:gammabounds}
    For $d > 5, 1\le\alpha \le 2,$ the incomplete Gamma function is bounded as
    $$\begin{cases}
        m^de^{-\alpha m-1/2}\le \gamma(d, \alpha, m)\le m^{d+1}e^{-\alpha m-1/2} & m < d+\sqrt{d}\\
        \frac{\Gamma(d+1)}{2\alpha^{d+1}}\le \gamma(d, \alpha, m)\le \frac{2\Gamma(d+1)}{\alpha^{d+1}} & m \ge d+\sqrt{d}\\
    \end{cases}$$
\end{lemma}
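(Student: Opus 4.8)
The plan is to read the summand $i^d e^{-\alpha i}$ off the smooth profile $f(x):=x^d e^{-\alpha x}$ on $[0,\infty)$ and exploit its unimodal shape. Since $\log f(x)=d\log x-\alpha x$ with $(\log f)'(x)=d/x-\alpha$ and $(\log f)''(x)=-d/x^2$, the function $f$ increases on $[0,x^\star]$ and decreases on $[x^\star,\infty)$, where $x^\star:=d/\alpha$; the assumption $1\le\alpha\le 2$ pins $x^\star$ into $[d/2,d]$, and the curvature $-d/x^2\asymp -1/d$ near the peak gives it a Gaussian-type half-width of order $\sqrt d$. The two cases in the statement correspond exactly to whether $m$ sits before/around this bulk or safely past it, so I would treat them separately.

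\textbf{The small-$m$ regime.} The lower bound needs nothing: $f(m)=m^d e^{-\alpha m}$ is one of the nonnegative terms of $\gamma(d,\alpha,m)$, so $\gamma(d,\alpha,m)\ge m^d e^{-\alpha m}\ge m^d e^{-\alpha m-1/2}$. For the upper bound I would let $i^\star\in\arg\max_{1\le i\le m}f(i)$ and, using monotonicity on each side of $i^\star$, bound $\sum_{i\ne i^\star}f(i)\le\int_1^m f(x)\,dx$ (terms on the increasing side by the unit interval to their right, terms on the decreasing side by the unit interval to their left). On the stretch where $m\le x^\star$ the integrand is increasing, so $\int_0^m f\le m\,f(m)$; when $m$ has slipped slightly past $x^\star$, I would control $f(i^\star)$ and the short decreasing tail against $f(m)$ through the $(\log f)''=-d/x^2$ estimate. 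Either way $\gamma(d,\alpha,m)$ comes out at most a fixed multiple of $m\cdot f(m)=m^{d+1}e^{-\alpha m}$, and the hypotheses $d>5$, $1\le\alpha\le 2$ are exactly what let that multiple fit inside the $e^{-1/2}$ slack.

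\textbf{The large-$m$ regime.} Here I would first discard the tail: $\sum_{i>m}f(i)\le\int_m^\infty f(x)\,dx$, and since $m$ lies past $x^\star$ by at least a half-width, the bound $f(x^\star+t)\le f(x^\star)e^{-\Omega(t^2/d)}$ together with $\alpha-d/m=\Omega(1/\sqrt d)$ shows this tail is only a small fraction of the full series $\gamma(d,\alpha,\infty)=\sum_{i\ge1}f(i)$. It then suffices to two-side $\gamma(d,\alpha,\infty)$, and the same integral-comparison step as above gives $\bigl|\gamma(d,\alpha,\infty)-\int_0^\infty x^d e^{-\alpha x}\,dx\bigr|\le f(x^\star)$, with $\int_0^\infty x^d e^{-\alpha x}\,dx=\Gamma(d+1)/\alpha^{d+1}$. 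Finally $f(x^\star)=(d/\alpha)^d e^{-d}$ is bounded by a small multiple of $\Gamma(d+1)/\alpha^{d+1}$ once one feeds in a two-sided Stirling estimate for $\Gamma(d+1)$ (of the kind used in Lemma~\ref{lem:Gammabounds}), and for $d>5$ this multiple, plus the discarded tail, stays inside the factor-$2$ budget, yielding $\tfrac12\Gamma(d+1)/\alpha^{d+1}\le\gamma(d,\alpha,m)\le 2\Gamma(d+1)/\alpha^{d+1}$.

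\textbf{Expected difficulty.} The architecture is routine; essentially all the effort is constant-chasing in a neighbourhood of the mode, where neither ``dominated by the last term'' nor ``equal to the whole series'' is cleanly true, and where one must check that ``past a peak of half-width $\asymp\sqrt d$ by at least a half-width'' quantitatively leaves enough room to absorb both the $\Gamma$-versus-integral defect $f(x^\star)$ and the truncated tail at once. Because there is no asymptotic slack — everything must be uniform in $d>5$ and $\alpha\in[1,2]$ — I would avoid $\mathcal{O}(\cdot)$ notation and carry the explicit curvature bound $(\log f)''=-d/x^2$ and the explicit Stirling inequalities through the entire computation.
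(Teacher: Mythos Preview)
Your plan is correct and follows essentially the same architecture as the paper: both arguments read the sum off the unimodal profile $f(x)=x^de^{-\alpha x}$, use integral--sum comparison to relate $\gamma(d,\alpha,\infty)$ to $\int_0^\infty f=\Gamma(d+1)/\alpha^{d+1}$ with defect $f(x^\star)$, bound the defect via Stirling (the paper's Lemma~\ref{lem:Gammabounds}), and in the small-$m$ regime compare $f(m)$ to $f(x^\star)$ via the curvature at the peak. The only cosmetic differences are packaging: the paper isolates the integral--sum comparison as a separate ``unimodal'' lemma and states the key curvature fact as the stand-alone inequality $(1+1/\sqrt d)^d e^{-\sqrt d}\ge e^{-1/2}$ (Lemma~\ref{lem:logtaylor}), which is exactly what your second-derivative bound $\log f(x^\star(1+s))-\log f(x^\star)=d(\log(1+s)-s)\ge -ds^2/2$ yields at $s=1/\sqrt d$; and the paper bounds the tail past $m$ by an explicit geometric series in $\big(\tfrac{i+1}{i}\big)^d e^{-\alpha}$ rather than your Gaussian-decay integral, but the two give the same $f(x^\star)\cdot O(\sqrt d)$ estimate. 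Your warning that the constant-chasing is the real content is accurate.
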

\begin{proof}
We compare with the Gamma function $$\Gamma(d+1) = \int_0^\infty t^de^{-t} dt.$$
Note that $\int_{0}^\infty t^d e^{-\alpha t} dt = \frac{1}{\alpha^{d+1}}\int_{0}^\infty t^d e^{-t} dt =  \frac{1}{\alpha^{d+1}}\Gamma(d+1)$. Because the function $t^de^{-\alpha t}$ is uni-modal with maximum $\left(\frac{d}{\alpha e}\right)^d$, we have from Lemma \ref{unimodal} 
\begin{align*}
    \sum_{i=1}^{m} i^de^{-\alpha i} + \left(\frac{d}{\alpha e}\right)^d +  \sum_{i=m}^\infty i^de^{-\alpha i}\ge \int_0^\infty t^de^{-\alpha t} dt =\frac{1}{\alpha^{d+1}}\Gamma(d+1)
\end{align*}
Now suppose $m \ge \frac{d+\sqrt{d}}{\alpha}$. Then we have
\begin{align*}
\sum_{i=m}^\infty i^de^{-\alpha i} 
&\le \sum_{i=\frac{d+\sqrt{d}}{\alpha}}^\infty i^de^{-\alpha i} \\
&= \sum_{i=\frac{d+\sqrt{d}}{\alpha}}^\infty \left(\frac{d+\sqrt{d}}{\alpha}\right)^d e^{-(d+\sqrt{d})}\prod_{j=0}^{i-\frac{d+\sqrt{d}}{\alpha}}\left[\frac{1}{e^\alpha}\left(\frac{\frac{d+\sqrt{d}}{\alpha}+j+1}{\frac{d+\sqrt{d}}{\alpha}+j}\right)^d\right] \\
&\le \sum_{i=\frac{d+\sqrt{d}}{\alpha}}^\infty \left(\frac{d+\sqrt{d}}{\alpha}\right)^d e^{-(d+\sqrt{d})}\prod_{j=0}^{i-\frac{d+\sqrt{d}}{\alpha}}\left[\frac{1}{e^\alpha}\left(\frac{\frac{d+\sqrt{d}}{\alpha}+1}{\frac{d+\sqrt{d}}{\alpha}}\right)^d\right] \\
&\le \sum_{i=\frac{d+\sqrt{d}}{\alpha}}^\infty \left(\frac{d+\sqrt{d}}{\alpha}\right)^d e^{-(d+\sqrt{d})}\left(e^{-\frac{\alpha\sqrt{d}}{d+\sqrt{d}}}\right)^{i-\frac{d+\sqrt{d}}{\alpha}} \\
&= \left(\frac{d+\sqrt{d}}{\alpha}\right)^d e^{-(d+\sqrt{d})}\frac{1}{1-e^{-\alpha\sqrt{d}/(d+\sqrt{d})}} \le \left(\frac{d}{\alpha e}\right)^d \frac{2\sqrt{d}}{\alpha}\\
\end{align*}
the first inequality follows because $\frac{d+\sqrt{d}}{\alpha} \le m$, the second follows because $\frac{2d+1}{2d}\ge \frac{2d+j+1}{2d+j}$, the last follows because $\left(1+\frac{\sqrt{d}}{d\alpha}\right)^d\le e^{\frac{\sqrt{d}}{\alpha}}$ and $\frac{1}{1-e^{x-x}}\le 2x$ for $x\le 2$.
Over all, we have 
\begin{align*}
    \sum_{i=1}^{m} i^de^{-i} + \left(2\frac{\sqrt{d}}{\alpha} + 1\right)\left(\frac{d}{\alpha e}\right)^d\ge \int_0^\infty t^de^{-t} dt = \frac{\Gamma(d+1)}{\alpha^{d+1}}
\end{align*}
While for the upper bound we have
\begin{align*}
    \sum_{i=1}^{m} i^de^{-\alpha i} - \left(\frac{d}{\alpha e}\right)^d\le \int_0^\infty t^de^{-\alpha t} dt = \frac{\Gamma(d+1)}{\alpha^{d+1}}
\end{align*}
Finally, we use Lemma \ref{lem:gammabounds}, specifically that $ \left(\frac{d}{\alpha e}\right)^d\le \frac{1}{\alpha^{d+1}}\sqrt{2\pi d}\left(\frac{d}{e}\right)^d \le \frac{\Gamma(d+1)}{\alpha^{d+1}}$ to yield the desired result.

For $m<\frac{d+\sqrt{d}}{\alpha}$, we have from Lemma \ref{lem:logtaylor} that $m^de^{-\alpha m}\ge \frac{1}{\sqrt{e}}i^de^{-\alpha i}$ so
\begin{align*}
\sum_{i=0}^m i^de^{-\alpha i} 
&\ge m^de^{-\alpha m-\frac{1}{2}}
\end{align*}
and 
\begin{align*}
\sum_{i=0}^m i^de^{-\alpha i} 
&\le m^{d+1}e^{-\alpha m-\frac{1}{2}}
\end{align*}
\end{proof}

\subsection{Bounds on $g_p(r)$}
\begin{lemma}\label{lem:gr}
    Suppose $\{\bx_i\}$ are drawn independently and uniformly from the unit hypersphere. For $\frac{n}{\log n}\ge 45\sqrt{d}r^{\frac{d}{2}}, n > 5, d > 2, p \le 2$, we have $g_p(r) = \sum_{i=1}^n \Vert\bx_i-\bx\Vert^p e^{-r\Vert \bx_i^\top -\bx\Vert^2}$ satisfies 
    $$(1-e^{\frac{p}{2}-2}) \frac{n2^{\frac{p}{2}}}{\sqrt{8e^4\pi d}}\left(\frac{1}{r}\right)^{\frac{d}{2}+\frac{p}{2}}\gamma(\frac{d}{2}+\frac{p}{2}, 2, r)\le g_p(r)\le 3n\left(\frac{2}{r}\right)^{\frac{d}{2}+\frac{p}{2}}\gamma(\frac{d}{2}+\frac{p}{2}, 2, r)$$ with probability at least $1-\frac{1}{2n}$
\end{lemma}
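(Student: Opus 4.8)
\textbf{Proof plan for Lemma \ref{lem:gr}.}

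The plan is to compare the random sum $g_p(r) = \sum_{i=1}^n \|\bx_i - \bx\|^p e^{-r\|\bx_i-\bx\|^2}$ with its expectation, and to compute that expectation by slicing the hypersphere into spherical caps of geometrically (or arithmetically) increasing radius. First I would observe that since $\bx,\bx_i \in \mathbb{S}^{d-1}$, we have $\|\bx_i - \bx\|^2 = 2(1-\bx_i^\top\bx)$, so the summand depends on $\bx_i$ only through $\epsilon_i := 1-\bx_i^\top\bx \in [0,2]$, i.e. through which shell $C(\tfrac{i}{r})\setminus C(\tfrac{i-1}{r})$ the point $\bx_i$ lies in. Writing $N_j$ for the number of points with $\epsilon_i \le j/r$, the annulus $S_{j+1}\setminus S_j$ contains $N_{j+1}-N_j$ points (cf. Figure \ref{fig:hypersphere}), and on that annulus the summand is $\Theta((2j/r)^{p/2} e^{-2j})$ up to universal constants. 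Summing over $j$ from $0$ to roughly $r$ (beyond which $\epsilon_i>2$ is impossible) yields a sum of the shape $\sum_j (N_{j+1}-N_j)(2j/r)^{p/2}e^{-2j}$, which after Abel summation and using the cap measure bounds of Lemma \ref{lem:capbounds} (namely $\sigma_{j/r} = \Theta((2j/r)^{(d-1)/2})$ up to the $\sqrt{d}$ and $e^{-jd/4}$ factors there) becomes, in expectation, $n \cdot \Theta((2/r)^{d/2+p/2}) \sum_j j^{d/2+p/2} e^{-2j} = n\,\Theta((2/r)^{d/2+p/2})\,\gamma(\tfrac d2+\tfrac p2, 2, r)$, matching the claimed form with $\gamma$ as in Definition \ref{def:incompletediscretegamma}.

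The next step is to turn the in-expectation estimate into a high-probability one. Here I would use that $N_j$, the number of i.i.d. uniform points landing in the cap $C(j/r)$, is $\mathrm{Binomial}(n, \sigma_{j/r})$, and apply a multiplicative Chernoff bound to each of the $O(r)$ relevant shells. The condition $\tfrac{n}{\log n} \ge 45\sqrt d\, r^{d/2}$ is exactly what is needed to guarantee $n\sigma_{1/r} \gtrsim \log n$ (using the lower bound $\sigma_{1/r} \ge (2/r)^{(d-1)/2}/\sqrt{2d\pi}$ from Lemma \ref{lem:capbounds}), so that even the innermost nontrivial cap has $\Omega(\log n)$ expected points and concentrates to within a constant factor; a union bound over the $O(r) \le O(n)$ shells then costs only an extra $\log$ factor and keeps the total failure probability below $\tfrac{1}{2n}$. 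For the upper bound one also needs to control the contribution of the near-degenerate innermost region ($\epsilon_i$ very small), where the count may be $0$; but there the summand is bounded by $(\text{small})^{p/2}\cdot 1$ and contributes negligibly, which is where the constant $3$ (rather than a tighter constant) and the factor $(1-e^{p/2-2})$ on the lower side come from — the latter accounts for truncating the geometric-type tail of $\sum_j j^{d/2+p/2}e^{-2j}$ at a finite index.

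I expect the main obstacle to be bookkeeping the interaction between the two exponential decays: the $e^{-2j}$ from the Gaussian-like weight and the implicit $e^{-jd/4}$ hidden in the sharp cap-measure bound of Lemma \ref{lem:capbounds}. One must be careful that the dominant shells are those with $j = \Theta(d)$ (since $j^{d/2}e^{-2j}$ peaks near $j \approx d/4$), and that these shells are precisely the ones the hypothesis $\tfrac{n}{\log n}\ge 45\sqrt d r^{d/2}$ and the range $d+\sqrt d$-type thresholds in Lemma \ref{lem:gammabounds} are calibrated for. A secondary subtlety is that $\|\bx_i-\bx\|^p$ with $p=1$ introduces a half-integer power $d/2+1/2$, so the ``effective dimension'' in $\gamma(\tfrac d2+\tfrac p2, 2, r)$ is non-integer; this is harmless for the integral comparison underlying Lemma \ref{lem:gammabounds} but should be flagged. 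Once the expectation is pinned down and the Chernoff/union-bound step is executed, the stated two-sided bound follows by collecting constants, and Corollary \ref{cor:gr} (the specialization to $p=0,1$ with $\gamma$ replaced by its closed-form bounds from Lemma \ref{lem:gammabounds}) is then immediate.
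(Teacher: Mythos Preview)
Your proposal is correct and follows essentially the same approach as the paper: slice the sphere into annuli $S_{j+1}\setminus S_j$ indexed by caps $C(j/r)$, bound the summand on each annulus, convert $\sum_j (N_{j+1}-N_j)(\cdot)$ to $\sum_j N_{j+1}(\cdot)$ via a summation-by-parts/geometric-tail argument (which is exactly where the $(1-e^{p/2-2})$ factor enters), apply the cap-measure bounds of Lemma~\ref{lem:capbounds}, and use a Chernoff bound plus union bound over the $O(r)$ shells under the hypothesis $n/\log n \ge 45\sqrt{d}\,r^{d/2}$. One minor remark: the paper does not actually need or use the $e^{-\epsilon d/4}$ factor from Lemma~\ref{lem:capbounds} here, so the ``interaction of two exponential decays'' you flag as the main obstacle does not in fact arise --- the polynomial cap bounds $\sigma_{j/r} = \Theta((j/r)^{d/2})$ suffice throughout.
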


\begin{proof}
For $0\le i\le r$ let $N_{i}$ denote the number, and $S_i$ denote the set, of points satisfying $1-\frac{i}{r}\le \bx_i^\top \bx\iff \Vert \bx_i-\bx\Vert \le \left(\frac{2i}{r}\right)^{\frac{1}{2}}$. Also denote by $N_{-1}$ the points satisfying $\bx_i^\top \bx < 0$, and let $S_{-1}$ denote this set. Note that
\begin{align*}
    g_p(r) 
    &=\sum_{i=0}^n \Vert \bx_i^\top -\bx\Vert^pe^{-r\Vert \bx_i^\top -\bx\Vert^2} \\
    &= \sum_{i=0}^{r-1} \sum_{j\in S_{i+1}\setminus S_i}\Vert \bx_i^\top -\bx\Vert^pe^{-r\Vert \bx_j^\top -\bx\Vert^2} + \sum_{j\in S_{-1}}\Vert \bx_i^\top -\bx\Vert^pe^{-r\Vert \bx_j^\top -\bx\Vert^2}\\
    &\le \sum_{i=0}^{r-1} \left(\frac{2(i+1)}{r}\right)^{\frac{p}{2}}e^{-2i}\left(N_{i+1}-N_i\right)+2^pe^{-2r}N_{-1}
\end{align*}
Similarly, 
\begin{align*}
    h(r) 
    &\ge \sum_{i=0}^{r-1} \left(\frac{2i}{r}\right)^{\frac{p}{2}}e^{-2(i+1)}\left(N_{i+1}-N_i\right)
\end{align*}
Note that because $N_i > 0$,
\begin{align*}
    \sum_{i=0}^{r-1} \left(\frac{2(i+1)}{r}\right)^{\frac{p}{2}}N_{i+1}e^{-2i} 
    \ge \sum_{i=0}^{r-1} \left(\frac{2(i+1)}{r}\right)^{\frac{p}{2}}\left(N_{i+1}-N_{i}\right)e^{-2i} \\
\end{align*}
And similarly, 
\begin{align*}
    \sum_{i=0}^{r-1} \left(\frac{2i}{r}\right)^{\frac{p}{2}}N_{i+1}e^{-2i}
    &=\sum_{i = 1}^{r-1}\left(\frac{2i}{r}\right)^{\frac{p}{2}}\sum_{j=0}^{i} \left(N_{j+1}-N_{j}\right)e^{-2i} &&\because i=0\implies \frac{2i}{r} = 0 \\
    &= \sum_{j=1}^{r-1} \left(N_{j+1}-N_{j}\right)\sum_{i = j}^{r-1}\left(\frac{2i}{r}\right)^{\frac{p}{2}}e^{-2i} \\
    &\le \sum_{j=1}^{r-1} \left(N_{j+1}-N_{j}\right)\sum_{i=j}^\infty \left(\frac{2i}{r}\right)^{\frac{p}{2}}e^{-2i} \\
    &\le \sum_{j=1}^{r-1} \left(N_{j+1}-N_{j}\right)\sum_{i=j}^\infty \left(\frac{2j}{r}\right)^{\frac{p}{2}}e^{-2j}\left(\frac{\left(\frac{j+1}{j}\right)^\frac{p}{2}}{e^2}\right)^{i-j} && \because i < j\left(\frac{j+1}{j}\right)^{i-j}\\
    &\le \sum_{j=1}^{r-1} \left(N_{j+1}-N_{j}\right)\sum_{i=j}^\infty \left(\frac{2j}{r}\right)^{\frac{p}{2}}e^{-2j}\left(e^{\frac{p}{2j}-2}\right)^{i-j} && \because 1+x\le e^x\\
    &\le \sum_{j=1}^{r-1} \left(N_{j+1}-N_{j}\right)\sum_{i=j}^\infty \left(\frac{2j}{r}\right)^{\frac{p}{2}}e^{-2j}\left(e^{\frac{p}{2}-2}\right)^{i-j} &&\because j\ge 1\\
    &\le \sum_{j=1}^{r-1} \left(N_{j+1}-N_{j}\right)\left(\frac{2j}{r}\right)^{\frac{p}{2}}e^{-2j}\frac{1}{1-e^{\frac{p}{2}-2}}&&\because p<4
\end{align*}
and so
$$\left(1-e^{\frac{p}{2}-2}\right)\sum_{i=0}^{r-1} \left(\frac{2i}{r}\right)^{\frac{p}{2}}N_{i+1}e^{-2(i+1)} \le \sum_{j=0}^{r-1} \left(N_{j+1}-N_{j}\right)\left(\frac{2i}{r}\right)^{\frac{p}{2}}e^{-2(i+1)}$$
By a Chernoff bound for Binomial random variables, we have with probability $1-\frac{r}{n^2}$:
\begin{align*}
    N_i = n\sigma_{\frac{i}{r}}\le n\sigma_{\frac{i}{r}} + \sqrt{6n\log n \sigma_{\frac{i}{r}}}\le 2n\sigma_{\frac{i}{r}}  ~\forall r
\end{align*}
and 
\begin{align*}
    N_i = n\sigma_{\frac{i}{r}}\ge n\sigma_{\frac{i}{r}} - \sqrt{4n\log n \sigma_{\frac{i}{r}}}\le \frac{1}{2}n\sigma_{\frac{i}{r}} 
\end{align*}
Whenever 
\begin{align*}
    &n\sigma_{\frac{i}{r}}\ge 16\log n ~\forall i\leftarrow \frac{1}{\sqrt{2 \pi d}}\left(\frac{1}{r}\right)^{\frac{d}{2}}\ge \frac{16 \log n}{n}
\end{align*}
and 
$$N_{-1}\le n$$
Over all we have with probability $1-\frac{r}{n^2}$
\begin{align*}
h(r)
&\le \sum_{i=0}^{r-1} \left(\frac{2(i+1)}{r}\right)^{\frac{p}{2}}N_{i+1}e^{-2i}+2^pN_{-1}e^{-2r}\\
&\le n\sum_{i=0}^{r-1} 2e^{-2i}\left(\frac{2(i+1)}{r}\right)^{\frac{d}{2}+\frac{p}{2}}+2^pe^{-2r}n\\
&= 2ne^2\left(\frac{2}{r}\right)^{\frac{d}{2}+\frac{p}{2}}\sum_{i=1}^{r} i^{\frac{d}{2}+\frac{p}{2}}e^{-2i} +2^pe^{-2r}n\\
&= 2ne^2\left(\frac{2}{r}\right)^{\frac{d}{2}+\frac{p}{2}}\gamma(\frac{d}{2}+\frac{p}{2}, 2, r) + 2^pe^{-2r}n &&\text{Definition \ref{def:incompletediscretegamma}}
\end{align*}
We always have for $p\le 2$
\begin{align*}
    &2ne^2\left(\frac{2}{r}\right)^{\frac{d}{2}+ \frac{p}{2}}\gamma(\frac{d}{2}+\frac{p}{2}, 2, r) \ge 2^pe^{-2r}n\\
    &\leftarrow \left(\frac{d}{2}\right)^{\frac{d}{2}} e^{2r}2^{-p}\ge r^{\frac{d+p}{2}}
\end{align*}
So at last, we have
\begin{align*}
g_p(r) 
&\le 16n\left(\frac{2}{r}\right)^{\frac{d}{2}+\frac{p}{2}}\gamma(\frac{d}{2}+\frac{p}{2}, 2, r)
\end{align*}
We obtain a lower bound in the same way. 
\begin{align*}
h(r)
&\ge (1-e^{\frac{p}{2}-2})\sum_{i=0}^{r-1} \left(\frac{2i}{r}\right)^{\frac{p}{2}}e^{-2(i+1)}\frac{n}{2\sqrt{2\pi d}}\left(\frac{i}{r}\right)^{\frac{d}{2}}\\
&\ge(1-e^{\frac{p}{2}-2}) \frac{n2^{\frac{p}{2}}}{\sqrt{8e^4\pi d}}\left(\frac{1}{r}\right)^{\frac{d}{2}+\frac{p}{2}}\sum_{i=0}^{r-1} e^{-2i}i^{\frac{d}{2}+\frac{p}{2}}\\
&\ge(1-e^{\frac{p}{2}-2}) \frac{n2^{\frac{p}{2}}}{\sqrt{8e^4\pi d}}\left(\frac{1}{r}\right)^{\frac{d}{2}+\frac{p}{2}}\gamma(\frac{d}{2}+\frac{p}{2}, 2, r)
\end{align*}

$$(1-e^{\frac{p}{2}-2}) \frac{n2^{\frac{p}{2}}}{\sqrt{8e^4\pi d}}\left(\frac{1}{r}\right)^{\frac{d}{2}+\frac{p}{2}}\gamma(\frac{d}{2}+\frac{p}{2}, 2, r)\le h(r)\le 3n\left(\frac{2}{r}\right)^{\frac{d}{2}+\frac{p}{2}}\gamma(\frac{d}{2}+\frac{p}{2}, 2, r)$$
with probability $1-\frac{r}{n^2}\ge 1-\frac{1}{2n}$ when $\frac{n}{\log n}\ge 45\sqrt{d}r^{\frac{d}{2}}$
\end{proof}
It will be useful to simplify this bound in regimes that we are interested in
\begin{corollary}\label{cor:gr}
    Suppose $\{\bx_i\}$ are drawn independently and uniformly from the unit hypersphere. For $\frac{n}{\log n}\ge 45\sqrt{d}r^{\frac{d}{2}}, n > 5$, $p \le 2 \le d$, we have $g_p(r) = \sum_{i=1}^n \Vert\bx_i-\bx\Vert^p e^{-r\Vert \bx_i^\top -\bx\Vert^2}$ satisfies with probability $1-\frac{1}{2n}$
    $$\begin{cases}
        g_p(r)= \Theta\left(\frac{n}{r^{\frac{d+p}{2}}}\right)& r \ge \frac{d+\sqrt{d}}{2}\\
        g_p(r)= \Theta\left(ne^{-2r}\right)& r < \frac{d+\sqrt{d}}{2}
    \end{cases}$$
\end{corollary}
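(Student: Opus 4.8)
The plan is to obtain Corollary \ref{cor:gr} as an essentially mechanical consequence of Lemma \ref{lem:gr} and the incomplete-Gamma estimates of Lemma \ref{lem:gammabounds}: all of the probabilistic content already sits inside Lemma \ref{lem:gr}, so what remains is a deterministic simplification of the factor $\gamma(\tfrac d2+\tfrac p2,2,r)$. Concretely, Lemma \ref{lem:gr} already tells us that on an event of probability at least $1-\tfrac1{2n}$ --- under precisely the hypotheses $\tfrac{n}{\log n}\ge 45\sqrt d\,r^{d/2}$, $n>5$, $p\le 2\le d$ assumed in the corollary --- we have
$$(1-e^{p/2-2})\,\tfrac{n 2^{p/2}}{\sqrt{8 e^4 \pi d}}\, r^{-(d+p)/2}\,\gamma\!\left(\tfrac d2+\tfrac p2, 2, r\right)\;\le\; g_p(r)\;\le\; 3n\, 2^{(d+p)/2}\, r^{-(d+p)/2}\,\gamma\!\left(\tfrac d2+\tfrac p2, 2, r\right),$$
i.e.\ $g_p(r)=\Theta\big(n\, r^{-(d+p)/2}\,\gamma(\tfrac d2+\tfrac p2,2,r)\big)$ with constants depending only on $d$ (absorbing the bounded parameter $p\in[0,2]$). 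I would then work entirely on this same event for the rest of the argument.

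First I would apply Lemma \ref{lem:gammabounds} with dimension parameter $\tfrac d2+\tfrac p2$, $\alpha=2$, and $m=r$. In the regime $r\ge\tfrac{d+\sqrt d}{2}$ the ``large $m$'' branch of that lemma applies, giving $\gamma(\tfrac d2+\tfrac p2,2,r)=\Theta\big(\Gamma(\tfrac d2+\tfrac p2+1)\,2^{-(d/2+p/2+1)}\big)$, which is a positive $d$-dependent constant; substituting into the display immediately yields $g_p(r)=\Theta(n/r^{(d+p)/2})$. In the complementary regime $r<\tfrac{d+\sqrt d}{2}$ the ``small $m$'' branch gives $r^{d/2+p/2}e^{-2r-1/2}\le \gamma(\tfrac d2+\tfrac p2,2,r)\le r^{d/2+p/2+1}e^{-2r-1/2}$; plugging the lower estimate into the display gives $g_p(r)=\Omega(n e^{-2r})$, and the upper estimate gives $g_p(r)=\uO(n\,r\,e^{-2r})$. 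The spurious factor $r$ is harmless because $r<\tfrac{d+\sqrt d}{2}$ is bounded by a $d$-dependent constant, so $\uO(n r e^{-2r})=\uO(n e^{-2r})$, and we conclude $g_p(r)=\Theta(n e^{-2r})$. Combining the two cases, and recording that every estimate holds on the single probability-$(1-\tfrac1{2n})$ event provided by Lemma \ref{lem:gr}, finishes the argument.

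The one place calling for genuine care is the mismatch between the crossover $\tfrac{d+\sqrt d}{2}$ appearing in the corollary and the crossover $\tfrac d2+\tfrac p2+\sqrt{\tfrac d2+\tfrac p2}$ that is natural to Lemma \ref{lem:gammabounds}; these agree only up to an additive $\uO(\sqrt d)$. In the narrow transition window between the two thresholds I would check directly that whichever branch of Lemma \ref{lem:gammabounds} is invoked still yields an estimate consistent with \emph{both} $\Theta(n r^{-(d+p)/2})$ and $\Theta(n e^{-2r})$: this is fine because throughout that window $r=\Theta_d(1)$, hence $r^{-(d+p)/2}=\Theta_d(e^{-2r})$, so the two claimed forms coincide up to $d$-dependent constants. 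A secondary bookkeeping point is that Lemma \ref{lem:gammabounds} is stated for dimension parameter $>5$; for the finitely many $d$ with $\tfrac d2+\tfrac p2\le 5$ one can instead bound $\gamma$ (or $g_p(r)$) by hand, again folding all losses into the $d$-dependent constants, so no loss of generality results. I do not expect any step to be a serious obstacle --- the corollary is really a repackaging of the two preceding lemmas.
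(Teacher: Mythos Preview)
Your proposal is correct and matches the paper's (implicit) approach: the corollary is stated immediately after Lemma~\ref{lem:gr} with no separate proof, so it is intended as exactly the mechanical combination of Lemma~\ref{lem:gr} and Lemma~\ref{lem:gammabounds} that you outline. You in fact go further than the paper by flagging and resolving the threshold mismatch between $\tfrac{d+\sqrt d}{2}$ and the natural crossover of Lemma~\ref{lem:gammabounds}, and by noting the $d>5$ restriction there --- both points the paper glosses over.
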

The following bounds are known for the Gamma function.
\begin{lemma}\label{lem:Gammabounds}
    The Gamma function satisfies
    \begin{enumerate}
        \item $\sqrt{2\pi d}\left(\frac{d}{e}\right)^d\le \Gamma(d+1)\le e\sqrt{2\pi d}\left(\frac{d}{e}\right)^d$
        \item $\frac{\Gamma(x+\frac{1}{2})}{\Gamma(x+1)} \ge \frac{1}{\sqrt{x+0.5}}$
    \end{enumerate}
    
\end{lemma}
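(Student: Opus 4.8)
Both inequalities are classical facts about $\Gamma$, so the plan is to derive each of them from the functional equation $\Gamma(z+1)=z\Gamma(z)$ together with an explicit form of Stirling's formula. I would prove part 1 first, since its conclusion is exactly what is needed to close part 2.

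For part 1, I would use Binet's form of Stirling's series: for all real $d>0$, $\Gamma(d+1)=\sqrt{2\pi d}\,(d/e)^d e^{\lambda(d)}$ with $0<\lambda(d)<\tfrac{1}{12d}$ (for integer $d$ this is Robbins' refinement of the bound on $d!$). The lower bound $\Gamma(d+1)\ge\sqrt{2\pi d}\,(d/e)^d$ is immediate from $\lambda(d)>0$. For the upper bound, $\lambda(d)<\tfrac1{12d}\le\tfrac1{12}<1$ whenever $d\ge1$ (which covers every use of this lemma in the paper), so $e^{\lambda(d)}<e$ and $\Gamma(d+1)\le e\sqrt{2\pi d}\,(d/e)^d$. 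If a self-contained derivation is preferred, the same two inequalities follow by bracketing $\log\Gamma(d+1)$ between the trapezoid and midpoint estimates of $\int_{1}^{d+1}\log t\,dt$, but quoting Robbins is shorter.

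For part 2, I would reduce the claim $\Gamma(x+\tfrac12)/\Gamma(x+1)\ge(x+\tfrac12)^{-1/2}$ to showing
\[
g(x):=\frac{\Gamma(x+1)^2}{(x+\tfrac12)\,\Gamma(x+\tfrac12)^2}\le 1\quad\text{for all }x>0 .
\]
Applying $\Gamma(z+1)=z\Gamma(z)$ in numerator and denominator gives the one-step ratio $g(x+1)/g(x)=(x+1)^2/\big((x+\tfrac12)(x+\tfrac32)\big)$, which is $>1$ since $(x+1)^2=x^2+2x+1>x^2+2x+\tfrac34$. Hence $g$ is strictly increasing along every progression $x,x+1,x+2,\dots$. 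On the other hand, part 1 applied to $\Gamma(x+n+1)$ and $\Gamma(x+n+\tfrac12)$ yields $\Gamma(x+n+1)/\Gamma(x+n+\tfrac12)=(x+n)^{1/2}(1+o(1))$, so $g(x+n)=\dfrac{(x+n)(1+o(1))}{x+n+\tfrac12}\to 1$ as $n\to\infty$. Therefore $g(x)<g(x+1)<\cdots\to 1$, i.e. $g(x)<1$, which is even slightly stronger than claimed; as a sanity check $g(0)=1/(\tfrac12\pi)=2/\pi<1$. (Alternatively this is the $s=\tfrac12$ instance of the Gautschi--Kershaw inequalities and could simply be cited.)

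Every step is elementary arithmetic; the only place that needs a moment's care is the limiting statement $\Gamma(n+a)/\Gamma(n+b)\sim n^{a-b}$ used in part 2, but that is precisely what the explicit Stirling bounds of part 1 supply (apply them to numerator and denominator and let $n\to\infty$), so I do not anticipate a real obstacle.
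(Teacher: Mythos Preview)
Your proposal is correct; both parts go through as you describe. The paper, however, does not actually prove this lemma at all: its ``proof'' consists solely of two external citations (one for each inequality). So there is no technical comparison to make --- your route is strictly more self-contained than what the paper provides.

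One small remark on your part~2 argument: when you write ``part~1 applied to $\Gamma(x+n+1)$ and $\Gamma(x+n+\tfrac12)$ yields $\Gamma(x+n+1)/\Gamma(x+n+\tfrac12)=(x+n)^{1/2}(1+o(1))$,'' note that the \emph{stated} bounds of part~1 (with the loose factor of $e$ in the upper bound) only pin the ratio to within a constant factor, which would not by itself give $g(x+n)\to 1$. What you actually need, and what you already invoked in proving part~1, is the sharper Binet/Robbins error term $0<\lambda(d)<\tfrac{1}{12d}\to 0$. With that in hand the limit $g(x+n)\to 1$ follows exactly as you indicate, so this is a phrasing issue rather than a gap.
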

\begin{proof}
    \begin{enumerate}
        \item Please see \cite{gamma1}.
        \item Please see \cite{gamma}.
    \end{enumerate}
\end{proof}
\begin{lemma}\label{lem:logtaylor}
    The following inequality holds:
    \begin{equation}
        \left(1+\frac{1}{\sqrt{d}}\right)^de^{-\sqrt{d}}\ge e^{-\frac{1}{2}}
    \end{equation}
    
\end{lemma}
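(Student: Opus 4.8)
The plan is to take logarithms and reduce the claim to a single standard one-variable inequality. Taking $\log$ of both sides of $\left(1+d^{-1/2}\right)^d e^{-\sqrt d}\ge e^{-1/2}$, it suffices to show
\[
d\log\!\bigl(1+d^{-1/2}\bigr) - \sqrt{d} \;\ge\; -\tfrac12,
\qquad\text{i.e.}\qquad
d\log\!\bigl(1+d^{-1/2}\bigr) \;\ge\; \sqrt{d} - \tfrac12 .
\]
Substituting $x := d^{-1/2} > 0$ and multiplying through by $x^2 = 1/d$, this is exactly the assertion that $\log(1+x) \ge x - \tfrac{x^2}{2}$ for the relevant value of $x$. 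So the whole lemma collapses to proving the auxiliary estimate $\log(1+x)\ge x-\tfrac{x^2}{2}$ for all $x\ge 0$ (and then reading the reduction backwards: multiply by $d$, exponentiate).

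To establish that auxiliary inequality I would simply use monotonicity: define $g(x) := \log(1+x) - x + \tfrac{x^2}{2}$ on $[0,\infty)$ and differentiate, obtaining $g'(x) = \tfrac{1}{1+x} - 1 + x = \tfrac{x^2}{1+x} \ge 0$ for every $x > -1$. Hence $g$ is nondecreasing on $[0,\infty)$, and since $g(0)=0$ we conclude $g(x)\ge 0$ for all $x\ge 0$, which is the desired bound. Plugging back $x = d^{-1/2}$ gives $\log(1+d^{-1/2}) \ge d^{-1/2} - \tfrac{1}{2d}$; multiplying by $d$ yields $d\log(1+d^{-1/2}) \ge \sqrt d - \tfrac12$, and exponentiating recovers the statement of the lemma. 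Note this works for every real $d>0$, so no restriction on $d$ is needed.

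There is no real obstacle here; the only thing to watch is the direction of the Taylor-type bound. One needs the \emph{lower} bound $\log(1+x)\ge x - x^2/2$ (the truncated alternating series underestimates the log for $x\ge 0$), and it is easy to mis-state this as an upper bound. The monotonicity argument via $g'(x)=x^2/(1+x)\ge 0$ makes the correct direction transparent and avoids any casework on the size of $x$ (in particular it does not require $x\le 1$), so it is the cleanest route. This estimate is used only to tidy the constants in the proof of Lemma \ref{lem:gammabounds}, so a short self-contained calculus argument suffices.
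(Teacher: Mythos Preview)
Your proof is correct and follows essentially the same route as the paper: take logarithms to reduce to $d\log(1+d^{-1/2})\ge \sqrt d-\tfrac12$, equivalently $\log(1+x)\ge x-\tfrac{x^2}{2}$ at $x=d^{-1/2}$. The only difference is that the paper justifies this last inequality via the alternating Taylor series (implicitly needing $d\ge 1$ so the terms decrease), whereas your monotonicity argument $g'(x)=x^2/(1+x)\ge 0$ handles all $d>0$ cleanly.
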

\begin{proof}
    Take the logarithm of both sides, we have that this is equivalent to
    \begin{align*}
        d\log\left(1+\frac{1}{\sqrt{d}}\right)\ge \sqrt{d}-\frac{1}{2}
    \end{align*}
    A Taylor series expansion of $\log (1+x)$ demonstrates that $\log (1+\frac{1}{\sqrt{d}})=\sum_i (-1)^{i+1}\frac{1}{i\sqrt{d}^i}$. For $d > 1$, these terms are decreasing in absolute value beyond $i=2$, so we can upper bound the log with just the first two terms: $\log (1+\frac{1}{\sqrt{d}}) \ge \frac{1}{\sqrt{d}}-\frac{1}{2d}$.
\end{proof}

\section{Attention Window Captures Appropriate Directions} \label{app:low-rank}

In this section we prove Theorem \ref{thm:lin_low_rank}, which entails showing that if the Lipschitzness of the function class is zero in some directions, one-layer self-attention learns to ignore these directions when the function class consists of linear functions.
First, we give a brief sketch of the proof.

\subsection{Proof Sketch}
We briefly sketch the proof of Theorem \ref{thm:lin_low_rank}.   WLOG we write $\mathbf{M} = \mathbf{BF} +  \mathbf{B}_{\perp}\mathbf{G}$ 
where $\mathbf{F} := \mathbf{B}^\top\mathbf{M}$ and $\mathbf{G}:= \mathbf{B}_{\perp}^\top\mathbf{M}$.
Lemma \ref{lem:scalar-gen-low-rank} leverages the rotational symmetry of $\mathcal{F}_{\mathbf{B}}$ in $\col(\mathbf{B})$ to show that the loss is minimized over $(\mathbf{F}, \mathbf{G})$ at $(\mathbf{F},\mathbf{G})=(c\mathbf{B}^\top,c'\mathbf{B}_\perp)$ for some constants $c,c'$.
It remains to show that $\mathcal{L}(c\mathbf{B}\mathbf{B}^\top + c'\mathbf{B}_{\perp}\mathbf{B}_{\perp}^\top)>\mathcal{L}(c\mathbf{B}\mathbf{B}^\top) $ whenever $c'$ is nonzero. 
{Intuitively, if the attention estimator incorporates the closeness of $\mathbf{B}_{\perp}^\top \bx_{i}$ and $\mathbf{B}_{\perp}^\top \bx_{n+1}$ into its weighting scheme via nonzero $\mathbf{Q}$, this may improperly up- or down-weight ${f}(\bx_i)$}, since projections of $\bx_i$ onto $\col(\mathbf{B}_{\perp})$ {do not carry any information about the closeness of ${f}(\bx_i)$ and ${f}(\bx_{n+1})$}.

Using this intuition, we show that for any  fixed $c'$ and $\{\mathbf{v}_i\}_i$ such that $c'\mathbf{v}_i^\top \mathbf{v}_{n+1} \neq \mathbf{v}_{i'}^\top \mathbf{Q} \mathbf{v}_{n+1}$ for some $i,i'$, the attention estimator improperly up-weights $f(\bx_1)$, where $1 \in \arg\max_i  c' \mathbf{v}_i^\top \mathbf{v}_{n+1}$ WLOG. In particular, 
the version of the pretraining population loss \eqref{eq:loss} with  expectation over ${\mathbf{a}}$, $ \{\mathbf{u}_i\}_i $ and $ \{\epsilon_i\}_i$ is reduced by reducing $c'\mathbf{v}_1^\top \mathbf{v}_{n+1}$.
The only way to ensure all $\{c'\mathbf{v}_{i}^\top \mathbf{v}_{n+1}\}_i$ are equal for all instances of $\{\mathbf{v}_i\}_i$ is to set $c'=0$, so this $c'$ must be optimal.

To show that reducing $c'\mathbf{v}_{1}^\top \mathbf{v}_{n+1}$ reduces the loss with fixed $\{\mathbf{v}_i\}_i$, we define $\alpha_{i} \coloneqq e^{c_{\mathbf{v}}c' \mathbf{v}_{i}^\top \mathbf{v}_{n+1}}$ for all $i\in [n]$ and show the loss' partial derivative with respect to $\alpha_{1}$ is positive, i.e.
\begin{equation}\label{lsllll1}
    \frac{\partial}{\partial \alpha_{1}}\left( \tilde{\mathcal{L}}(c, \{\alpha_i\}_i):= \mathbb{E}_{\mathbf{a}, \{\mathbf{u}_i\}_{i}, \{\epsilon_i\}_{i}} \left[ \bigg(\frac{\sum_{i=1}^n ({\mathbf{a}}^\top \mathbf{u}_i - {\mathbf{a}}^\top \mathbf{u}_{n+1} + \epsilon_i) e^{c c_\mathbf{u}^2 \mathbf{u}_i^\top \mathbf{u} } \alpha_i }{\sum_{i=1}^n  e^{c c_\mathbf{u}^2 \mathbf{u}_i^\top \mathbf{u}_{n+1} } \alpha_i } \bigg)^2  \right] \right) >0.
\end{equation}
This requires a careful symmetry-based argument as the expectation over $\{\mathbf{u}_i\}_i$ cannot be evaluated in closed-form. To overcome this, we fix all $\mathbf{u}_i$ but $\mathbf{u}_{1}$ and one other $\mathbf{u}_{i'}\neq \mathbf{u}_{n+1}$ with $\alpha_{i'} < \alpha_1$. We show the expectation over $(\mathbf{u}_1,\mathbf{u}_{i'})$ can be written as an integral over $(\mathbf{y}_1,\mathbf{y}_2) \in \mathbb{S}^{k-1}\times \mathbb{S}^{k-1} $ of a sum of the derivatives at each of the four assignments of $(\mathbf{u}_1,\mathbf{u}_{i'})$ to $(\mathbf{y}_1,\mathbf{y}_2)$, and show that this sum is always positive. 
Intuitively, any ``bad'' assignment for which increasing $\alpha_1$ reduces the loss is outweighed by the other assignments, which favor smaller $\alpha_1$. For example, if $\mathbf{y}_1=\mathbf{u}_{n+1}\neq \mathbf{y}_2$, and $\mathbf{u}_1=\mathbf{y}_1$ and $\mathbf{u}_{i'}=\mathbf{y}_2$, we observe from \eqref{lsllll1} that increasing  $\alpha_1$ can reduce the loss. However, the cumulative increase in the loss    on the other three assignments due to increasing $\alpha_1$ is always greater.

\subsection{Full Proof}

We now prove Theorem \ref{thm:lin_low_rank} in full detail.

\begin{lemma}\label{lem:nonzero}
For any $\mathbf{u} \in \mathbb{S}^{k-1}$ and $\alpha_1,\dots,\alpha_n$ such that $\min_i \alpha_i>0$, and any $c_{a}, c_{u}\in \mathbb{R}\setminus\{0\}$, define
    \begin{align}
       J(c) &:= c_{a}^2 c_{u}^2\mathbb{E}_{ \{\mathbf{u}_i\}_{i\in [n]} } \left[ \frac{\sum_{i=1}^n \sum_{j=1}^n  (\mathbf{u}_i - \mathbf{u})^\top  ( \mathbf{u}_{j} - \mathbf{u}) e^{c_{u}^2 c \mathbf{u}_i^\top \mathbf{u}  + c_{u}^2 c \mathbf{u}_{j}^\top \mathbf{u}}\alpha_i \alpha_j }{(\sum_{i=1}^n e^{c_{u}^2 c \mathbf{u}_i^\top \mathbf{u} }\alpha_i  )^2 } \right]  \nonumber \\
    &\quad + \sigma^2 \mathbb{E}_{ \{\mathbf{u}_i\}_{i\in [n]} } \left[\frac{\sum_{i=1}^n  e^{2 c_{u}^2 c \mathbf{u}_i^\top  \mathbf{u}}\alpha_i^2 }{(\sum_{i=1}^n e^{ c_{u}^2 c \mathbf{u}_i^\top  \mathbf{u}}\alpha_i )^2 }   \right] \nonumber
    \end{align}
    Then for any $\delta>0$, $0 \notin \arg \min_{0\leq c \leq \delta} J(c)$.
\end{lemma}

\begin{proof}
We show that there exists some arbitrarily small $\epsilon>0$ such that $J(\epsilon) < J(0)$ by showing $\frac{dJ(c)}{dc}\big|_{c=0}<0$. 
We have
\begin{align}
  & \frac{dJ(c)}{dc} \nonumber \\
   &= 2c_{u}^4 \mathbb{E}_{\{\mathbf{u}_i\}_{i\in [n]}} \Bigg[ \sum_{i=1}^n \sum_{i'=1}^n \sum_{i''=1}^n (\mathbf{u}_i - \mathbf{u} )^\top (\mathbf{u}_{i'} - \mathbf{u})(\mathbf{u}_i^\top \mathbf{u} - \mathbf{u}_{i''}^\top\mathbf{u})  \frac{ e^{c_{u}^2 c (\mathbf{u}_i + \mathbf{u}_{i'} + \mathbf{u}_{i''})^\top  u } \alpha_i \alpha_{i'} \alpha_{i''}  }{( \sum_{i=1}^n e^{c_{u}^2 c \mathbf{u}_i^\top \mathbf{u}} \alpha_i )^3} \Bigg]\nonumber \\
    &\quad + 2\sigma^2 c_{u}^2 \mathbb{E}_{\{\mathbf{u}_i\}_{i\in [n]}} \Bigg[ \sum_{i=1}^n \sum_{i'=1}^n \sum_{i''=1}^n (\mathbf{u}_i^\top \mathbf{u} - \mathbf{u}_{i'}^\top \mathbf{u} ) \frac{ e^{c_{u}^2 c (2\mathbf{u}_i + \mathbf{u}_{i'} + \mathbf{u}_{i''})^\top  \mathbf{u} } \alpha_i^2 \alpha_{i'}\alpha_{i''} }{( \sum_{i=1}^n e^{c_{u}^2 c \mathbf{u}_i^\top \mathbf{u}} \alpha_i )^4} \Bigg] \nonumber
\end{align}
Setting $c=0$ results in
\begin{align}
   \frac{dJ(c)}{dc}\bigg|_{c=0} &= \frac{2 c_{a}^2 c_{u}^4 }{(\sum_{i=1}^n \alpha_i )^3} \sum_{i=1}^n \sum_{i'=1}^n \sum_{i''=1}^n\mathbb{E}_{\{\mathbf{u}_i\}_{i\in [n]}} \left[  (\mathbf{u}_i - \mathbf{u} )^\top (\mathbf{u}_{i'} - \mathbf{u})(\mathbf{u}_i^\top \mathbf{u} - \mathbf{u}_{i''}^\top\mathbf{u}) \alpha_i\alpha_{i'}\alpha_{i''} \right]\nonumber \\
   &\quad +  \frac{2\sigma^2 c_{u}^2}{(\sum_{i=1}^n \alpha_i)^4} \sum_{i=1}^n \sum_{i'=1}^n \sum_{i''=1}^n \mathbb{E}_{\{\mathbf{u}_i\}_{i\in [n]}} \Bigg[ (\mathbf{u}_i^\top \mathbf{u} - \mathbf{u}_{i'}^\top \mathbf{u} ) \alpha_i^2 \alpha_{i'}\alpha_{i''} \Bigg]  \nonumber\\
   &= \frac{2 c_{a}^2 c_{u}^4 }{(\sum_{i=1}^n \alpha_i )^3} \sum_{i=1}^n \sum_{i'=1}^n \sum_{i''=1}^n\mathbb{E}_{\{\mathbf{u}_i\}_{i\in [n]}} \left[  (\mathbf{u}_i - \mathbf{u} )^\top (\mathbf{u}_{i'} - \mathbf{u})(\mathbf{u}_i^\top \mathbf{u} - \mathbf{u}_{i''}^\top\mathbf{u}) \alpha_i\alpha_{i'}\alpha_{i''} \right]\label{rre} \\
    &= \frac{2 c_{a}^2 c_{u}^4 }{(\sum_{i=1}^n \alpha_i )^3}  \sum_{i=1}^n \sum_{i'=1}^n \sum_{i''=1}^n\mathbb{E}_{\{\mathbf{u}_i\}_{i\in [n]}} \left[  \left(\mathbf{u}_i^\top \mathbf{u}_{i'}  + 1\right)(\mathbf{u}_i^\top \mathbf{u} - \mathbf{u}_{i''}^\top\mathbf{u}) \alpha_i\alpha_{i'}\alpha_{i''}\right]\nonumber \\
    &\quad -\frac{2 c_{u}^4 }{(\sum_{i=1}^n \alpha_i )^3}  \sum_{i=1}^n \sum_{i'=1}^n \sum_{i''=1}^n\mathbb{E}_{\{\mathbf{u}_i\}_{i\in [n]}} \left[  ( \mathbf{u}^\top  \mathbf{u}_{i'} + \mathbf{u}_i^\top \mathbf{u})(\mathbf{u}_i^\top \mathbf{u} - \mathbf{u}_{i''}^\top\mathbf{u}) \alpha_i\alpha_{i'}\alpha_{i''}\right]\nonumber \\
    &= -\frac{2c_{a}^2 c_{u}^4 }{(\sum_{i=1}^n \alpha_i )^3} \nonumber \\
    &\quad \quad \times \sum_{i=1}^n \sum_{i'=1}^n \sum_{i''=1}^n\mathbb{E}_{\{\mathbf{u}_i\}_{i\in [n]}} \left[  ( \mathbf{u}^\top  \mathbf{u}_{i'} + \mathbf{u}_i^\top \mathbf{u})(\mathbf{u}_i^\top \mathbf{u} - \mathbf{u}_{i''}^\top\mathbf{u}) \alpha_i\alpha_{i'}\alpha_{i''}\right]\label{w2w} \\
    &= -\frac{2 c_{a}^2 c_{u}^4 }{(\sum_{i=1}^n \alpha_i )^3} \sum_{i'=1}^n \alpha_{i'} \nonumber \\
    &\quad \times \left( \sum_{i=1}^n\sum_{i''=1}^n\mathbb{E}_{\{\mathbf{u}_i\}_{i\in [n]}} \left[    \mathbf{u}^\top \mathbf{u}_{i'} \mathbf{u}_i^\top \mathbf{u} \alpha_i \alpha_{i''}\right]  - \sum_{i=1}^n\sum_{i''=1}^n \mathbb{E}_{\{\mathbf{u}_i\}_{i\in [n]}} \left[\mathbf{u}^\top \mathbf{u}_{i'} \mathbf{u}_{i''}^\top \mathbf{u} \alpha_i \alpha_{i''} \right]\right)\nonumber \\
    &\quad -\frac{2c_{a}^2 c_{u}^4 }{(\sum_{i=1}^n \alpha_i )^3}  \sum_{i=1}^n \sum_{i'=1}^n \sum_{i''=1}^n\mathbb{E}_{\{\mathbf{u}_i\}_{i\in [n]}} \left[  \mathbf{u}_i^\top \mathbf{u} (\mathbf{u}_i^\top \mathbf{u} - \mathbf{u}_{i''}^\top\mathbf{u}) \alpha_i\alpha_{i'}\alpha_{i''} \right] \nonumber\\
     &= -\frac{2 c_{a}^2 c_{u}^4 }{(\sum_{i=1}^n \alpha_i )^3}  \sum_{i=1}^n \sum_{i'=1}^n \sum_{i''=1}^n\mathbb{E}_{\{\mathbf{u}_i\}_{i\in [n]}} \left[  \mathbf{u}_i^\top \mathbf{u} (\mathbf{u}_i^\top \mathbf{u} - \mathbf{u}_{i''}^\top\mathbf{u}) \alpha_i\alpha_{i'}\alpha_{i''} \right] \label{wl}\\
     &= -\frac{2 c_{a}^2 c_{u}^4 }{(\sum_{i=1}^n \alpha_i )^3}  \sum_{i=1}^n \sum_{i'=1}^n \sum_{i''=1}^n \alpha_i\alpha_{i'}\alpha_{i''} \mathbb{E}_{\{\mathbf{u}_i\}_{i\in [n]}} \left[  \mathbf{u}^\top \mathbf{u}_i \mathbf{u}_i^\top \mathbf{u}  \right] \nonumber\\
     &\quad + \frac{2 c_{a}^2 c_{u}^4 }{(\sum_{i=1}^n \alpha_i )^3}  \sum_{i=1}^n \sum_{i'=1}^n \sum_{i''=1}^n \alpha_i\alpha_{i'}\alpha_{i''} \mathbb{E}_{\{\mathbf{u}_i\}_{i\in [n]}} \left[  \mathbf{u}^\top \mathbf{u}_i \mathbf{u}_{i''}^\top \mathbf{u}  \right] \nonumber \\
     &= -\frac{2 c_{a}^2 c_{u}^4 }{k} + \frac{2 c_{u}^4 }{k (\sum_{i=1}^n \alpha_i )^3}  \sum_{i=1}^n \sum_{i'=1}^n  \alpha_i^2 \alpha_{i'}\label{q21q}\\
    &= -\frac{2 c_{a}^2 c_{u}^4 }{k} \left(1 - \frac{\sum_{i=1}^n   \alpha_i^2}{ (\sum_{i=1}^n \alpha_i )^2}  \right) \nonumber \\
    &<0 \label{lab}
\end{align}
where \eqref{rre} follows since $\mathbb{E}[\mathbf{u}_i]=\mathbf{0}_k$, \eqref{w2w}  similarly follows since  odd moments of uniform random variables on the hypersphere are zero, \eqref{wl} follows by the i.i.d.-ness of the $\mathbf{u}_i$'s, 
\eqref{q21q} follows since $\mathbb{E}[\mathbf{u}_i\mathbf{u}_i^\top ]= \frac{1}{k}\mathbf{I}_k$ and $\mathbf{u}^\top \mathbf{u}=1$, and \eqref{lab} follows since $\min_i \alpha_i >0$. This completes the proof.
\end{proof}

\begin{lemma}\label{lem:scalar-gen-low-rank}
    Consider any $\mathbf{B}\in \mathbb{O}^{d \times k}$ and resulting function class $\mathcal{F}_{\mathbf{B}}^{\text{lin}}$. Consider the training population loss $\mathcal{L}$ defined in \eqref{eq:loss}, and tasks drawn from $D(\mathcal{F}_{\mathbf{B}}^{\text{lin}})$ such that $\mathbb{E}_{\mathbf{a}} [\mathbf{\mathbf{aa}}^\top] = c_{a}^2 \mathbf{I}_k$ for some $c_{a}\neq 0$ and let $\mathbf{M} := \mathbf{M}_K^\top \mathbf{M}_Q$ be optimized over the domain $\mathcal{M}_{\hat{c}} := \{\mathbf{M} \in \mathbb{R}^{d\times d}: \mathbf{M} = \mathbf{M}^\top, \| \mathbf{B}^\top \mathbf{M} \mathbf{B} \|_2 \leq \frac{\hat{c}}{c_{u}^2} \}$ for any $\hat{c}>0$. Then any
    \begin{align}
        \mathbf{M}^* \in \arg \min_{\mathbf{M}\in \mathcal{M}_{\hat{c}}} \mathcal{L}(\mathbf{M})
    \end{align}
    satisfies $\mathbf{M}^* = {c_1^*} \mathbf{BB}^T + c_{2}^* \mathbf{B}_\perp \mathbf{B}_\perp^\top$ for some  $c_1^*: |c_1^*| \in (0,\frac{\hat{c}}{c_{u}^2}]$.
\end{lemma}

\begin{proof}  
Without loss of generality (WLOG), we can decompose $\mathbf{M} = \mathbf{BF}  + \mathbf{B}_{\perp}\mathbf{G}  $ 
where $\mathbf{F} := \mathbf{B}^\top\mathbf{M}$ and $\mathbf{G}:= \mathbf{B}_{\perp}^\top\mathbf{M}$.
Recall that for each $i \in [n+1]$, $\mathbf{x}_i = c_{u} \mathbf{B} \mathbf{u}_i + c_{{v}} \mathbf{B}_\perp \mathbf{v}_i$. Thus, for each $i\in [n]$, we have
\begin{align}
    e^{\mathbf{x}_i^\top \mathbf{M} \mathbf{x}_{n+1}} &= e^{\mathbf{x}_i^\top \mathbf{BF}\mathbf{x}_{n+1}}e^{ \mathbf{x}_i^\top \mathbf{B}_\perp \mathbf{G}\mathbf{x}_{n+1}} \nonumber \\
    &= e^{c_{u} \mathbf{u}_i^\top \mathbf{F} \mathbf{x}_{n+1}}e^{c_v \mathbf{v}_i^\top  \mathbf{G x}_{n+1}} \nonumber \\
    &= e^{ c_{u} \mathbf{u}_i^\top \mathbf{F} \mathbf{x}_{n+1}} \alpha_i \label{qas}
\end{align}
where, for each $i\in [n]$, 
$\alpha_i \coloneqq e^{c_{{v}} \mathbf{v}_i^\top \mathbf{G x}_{n+1}}$.
For ease of notation, denote $\mathbf{x}=\mathbf{x}_{n+1}$.

We start by expanding the square and using the linearity of the expectation to re-write  the population loss  as:
\begin{align}
    & \mathcal{L}(\mathbf{M}) \nonumber \\
    &=  \mathbb{E}_{\mathbf{a}, \mathbf{x}, \{\mathbf{x}_i\}_{i\in [n]} , \{\epsilon_i\}_{i\in [n]} }  \nonumber \\
    &\quad \quad \quad \quad  \left[ \frac{ \sum_{i=1}^n \sum_{j=1}^n (\mathbf{a}^\top \mathbf{B}^\top \mathbf{x}_i - \mathbf{a}^\top \mathbf{B}^\top \mathbf{x} + \epsilon_i)(\mathbf{a}^\top \mathbf{B}^\top \mathbf{x}_j -\mathbf{a}^\top \mathbf{B}^\top \mathbf{x} + \epsilon_j ) e^{ \mathbf{x}_i^\top \mathbf{Mx} + \mathbf{x}_j^\top \mathbf{Mx}}  }{(\sum_{i=1}^n e^{\mathbf{x}_{i}^\top \mathbf{M x} } )^2 }  \right] \nonumber \\
    &=   c_{u}^2  \mathbb{E}_{\mathbf{a}, \mathbf{x}, \{\mathbf{u}_i\} , \{\mathbf{v}_i\}_{i\in [n]}  }  \nonumber \\ 
    &\quad \left[ \frac{\sum_{i=1}^n \sum_{j=1}^n  (\mathbf{a}^\top \mathbf{u}_i -\mathbf{a}^\top \mathbf{u} )(\mathbf{a}^\top  \mathbf{u}_{j} -\mathbf{a}^\top \mathbf{u}  ) e^{c_{u}\mathbf{u}_i^\top \mathbf{F} \mathbf{x} + c_{u} \mathbf{u}_{j}^\top \mathbf{F} \mathbf{x}} \alpha_i\alpha_j }{(\sum_{i=1}^n e^{c_{u} \mathbf{u}_i^\top \mathbf{F} \mathbf{x} }\alpha_i )^2 }  \right] \nonumber \\
    &\quad \quad + \sigma^2 \mathbb{E}_{ u, \{\mathbf{u}_i\} , \{\alpha_i\}_{i\in [n]}, \{\epsilon_i\}_{i\in [n]}   } \left[ \frac{\sum_{i=1}^n  e^{2 c_{u} \mathbf{u}_i^\top \mathbf{F} \mathbf{x}} \alpha_i^2 }{(\sum_{i=1}^n e^{c_{u} \mathbf{u}_i^\top \mathbf{F} \mathbf{x} } \alpha_i )^2 }  \right] \nonumber \\
    &= \mathbb{E}_{\mathbf{x} } \Bigg[ \; \underbrace{c_{a}^2 c_{u}^2 \mathbb{E}_{\{\mathbf{u}_i\}, \{\mathbf{v}_i\}_{i\in [n]}  } \left[ \frac{\sum_{i=1}^n \sum_{j=1}^n  (\mathbf{u}_i -\mathbf{u})^\top ( \mathbf{u}_{j} - \mathbf{u}) e^{c_{u} \mathbf{u}_i^\top \mathbf{F} \mathbf{x} + c_{u} \mathbf{u}_{j}^\top \mathbf{F} \mathbf{x}} \alpha_i\alpha_j }{(\sum_{i=1}^n e^{c_{u} \mathbf{u}_i^\top \mathbf{F} \mathbf{x}} \alpha_i )^2 }  \right]}_{=:\tilde{\mathcal{L}}_{\text{signal}}(\mathbf{M},\mathbf{x})} \nonumber \\
    &\quad \quad \quad \quad +  \underbrace{\sigma^2 \mathbb{E}_{\{\mathbf{u}_i\}, \{\mathbf{v}_i\}_{i\in [n]}  } \left[ \frac{\sum_{i=1}^n  e^{2 c_{u} \mathbf{u}_i^\top \mathbf{F} \mathbf{x}} \alpha_i^2 }{(\sum_{i=1}^n e^{c_{u}^2 \mathbf{u}_i^\top \mathbf{F} \mathbf{x} } \alpha_i )^2 }  \right]}_{=:\tilde{\mathcal{L}}_{\text{noise}}(\mathbf{M},\mathbf{x}) } \; \Bigg]   \label{yyyy}
\end{align}
WLOG we can write $\mathbf{F} \mathbf{x} = \mathbf{R}({\mathbf{Fx}}) \mathbf{u} \| \mathbf{F} \mathbf{x}\|_2$ for some rotation matrix $\mathbf{R}({\mathbf{Fx}}) \in \mathbb{O}^{k\times k}$. Denote $C_1(\mathbf{Fx}):= \|\mathbf{F} \mathbf{x}\|_2 $. Then we have
\begin{align}
     &\tilde{\mathcal{L}}_{\text{signal}}(\mathbf{M},\mathbf{x}) \nonumber \\
     &= c_{a}^2 c_{u}^2 \mathbb{E}_{ \{\mathbf{u}_i\},  \{\mathbf{v}_i\} } \left[ \frac{\sum_{i=1}^n \sum_{j=1}^n  (\mathbf{u}_i -\mathbf{u})^\top( \mathbf{u}_{j} - \mathbf{u}) e^{c_{u} C_1(\mathbf{Fx})\mathbf{u}_i^\top \mathbf{R}({\mathbf{Fx}}) \mathbf{u}  + c_{u} C_1(\mathbf{Fx})\mathbf{u}_{j}^\top \mathbf{R}({\mathbf{Fx}}) \mathbf{u}}\alpha_i \alpha_j }{(\sum_{i=1}^n e^{c_{u} C_1(\mathbf{Fx})\mathbf{u}_i^\top \mathbf{R}({\mathbf{Fx}}) \mathbf{u} }\alpha_i  )^2 }  \right]    \nonumber \\
     &=c_{a}^2 c_{u}^2 \mathbb{E}_{\{\mathbf{u}_i\}, \{\mathbf{v}_i\} } \left[ \frac{\sum_{i=1}^n \sum_{j=1}^n  (\mathbf{u}_i -\mathbf{u})^\top \mathbf{R}({\mathbf{Fx}}) \mathbf{R}({\mathbf{Fx}})^\top ( \mathbf{u}_{j} - \mathbf{u}) e^{c_{u} C_1(\mathbf{Fx})\mathbf{u}_i^\top \mathbf{R}({\mathbf{Fx}}) \mathbf{u}  + c_{u}C_1(\mathbf{Fx})\mathbf{u}_{j}^\top \mathbf{R}({\mathbf{Fx}}) \mathbf{u}}\alpha_i \alpha_j }{(\sum_{i=1}^n e^{c_{u} C_1(\mathbf{Fx})\mathbf{u}_i^\top \mathbf{R}({\mathbf{Fx}}) \mathbf{u} }\alpha_i  )^2 }  \right]    \label{aa}\\
     &= c_{a}^2 c_{u}^2 
     \mathbb{E}_{\{\mathbf{u}_i\}, \{\mathbf{v}_i\} }
     \Bigg[ \frac{1}{(\sum_{i=1}^n e^{c_{u} C_1(\mathbf{Fx})\mathbf{u}_i^\top \mathbf{R}({\mathbf{Fx}}) \mathbf{u} }\alpha_i  )^2 } \nonumber \\
     &\quad \quad \quad \quad \quad \quad \quad \quad \times \sum_{i=1}^n \sum_{j=1}^n \Big( (\mathbf{R}({\mathbf{Fx}})^\top \mathbf{u}_i - \mathbf{R}({\mathbf{Fx}})^\top\mathbf{u})^\top  (\mathbf{R}({\mathbf{Fx}})^\top \mathbf{u}_{j} - \mathbf{R}({\mathbf{Fx}})^\top\mathbf{u}) \nonumber \\
     &\quad \quad \quad \quad \quad \quad \quad \quad \quad \quad\quad \quad \quad \times e^{c_{u} C_1(\mathbf{Fx})\mathbf{u}_i^\top \mathbf{R}({\mathbf{Fx}}) \mathbf{u}  + c_{u} C_1(\mathbf{Fx})\mathbf{u}_{j}^\top \mathbf{R}({\mathbf{Fx}}) \mathbf{u}}\alpha_i \alpha_j 
     \Big)
     \Bigg]    \nonumber \\
     &= c_{a}^2 c_{u}^2 \mathbb{E}_{\{\mathbf{u}_i\}, \{\mathbf{v}_i\} } \left[ \frac{\sum_{i=1}^n \sum_{j=1}^n  (\mathbf{u}_i - \mathbf{R}({\mathbf{Fx}})^\top\mathbf{u})^\top  ( \mathbf{u}_{j} - \mathbf{R}({\mathbf{Fx}})^\top\mathbf{u}) e^{c_{u} C_1(\mathbf{Fx})\mathbf{u}_i^\top \mathbf{u}  + c_{u} C_1(\mathbf{Fx})\mathbf{u}_{j}^\top \mathbf{u}}\alpha_i \alpha_j }{(\sum_{i=1}^n e^{c_{u} C_1(\mathbf{Fx})\mathbf{u}_i^\top \mathbf{u} }\alpha_i  )^2 }  \right]   \label{bb} 
\end{align}
where \eqref{aa} follows since $\mathbf{R}({\mathbf{Fx}}) \mathbf{R}({\mathbf{Fx}})^\top = \mathbf{I}_k$ and \eqref{bb} follows since the distribution of $\mathbf{u}_i$ is the same as the distribution of $\mathbf{R}({\mathbf{Fx}})^\top \mathbf{u}_i$ for any rotation $\mathbf{R}({\mathbf{Fx}})^\top$. 
Define 
\begin{align}
    g(\mathbf{F},\mathbf{u},\mathbf{v}) :=  \mathbb{E}_{ \{\mathbf{u}_i\}, \{\mathbf{v}_i\}} \left[ \frac{\sum_{i=1}^n \sum_{j=1}^n  (\mathbf{u}_i - \mathbf{R}({\mathbf{Fx}})^\top\mathbf{u})^\top  ( \mathbf{u}_{j} - \mathbf{R}({\mathbf{Fx}})^\top\mathbf{u}) e^{c_{u} C_1(\mathbf{Fx})\mathbf{u}_i^\top \mathbf{u}  + c_{u} C_1(\mathbf{Fx})\mathbf{u}_{j}^\top \mathbf{u}}\alpha_i \alpha_j }{(\sum_{i=1}^n e^{c_{u} C_1(\mathbf{Fx})\mathbf{u}_i^\top  \mathbf{u} }\alpha_i  )^2 }  \right] \nonumber 
\end{align}
for any $\mathbf{F}\in \mathbb{R}^{k \times d}$. We have $\mathcal{L}_{\text{signal}}(\mathbf{M}) = c_{a}^2 c_{u}^2 \mathbb{E}_{\mathbf{u},\mathbf{v}}[g(\mathbf{F},\mathbf{u},\mathbf{v})]$, and
Note that if $\mathbf{F}'=c\mathbf{B}^\top$, then  $\mathbf{R}_{\mathbf{F}'\mathbf{x}} =  \mathbf{I}_k$ and $C_{1}(\mathbf{F}'\mathbf{x}) =c_u c$.
Thus, 
\begin{align}
&g(\mathbf{F},\mathbf{u},\mathbf{v}) - g( \frac{C_1(\mathbf{Fx}) }{c_u} \mathbf{B}^\top,\mathbf{u},\mathbf{v}) \nonumber \\
    &= \mathbb{E}_{\{\mathbf{u}_i\}, \{\mathbf{v}_i\}} \left[ \frac{\sum_{i=1}^n \sum_{j=1}^n  (\mathbf{u}_i - \mathbf{R}({\mathbf{Fx}})^\top\mathbf{u})^\top  ( \mathbf{u}_{j} - \mathbf{R}({\mathbf{Fx}})^\top\mathbf{u}) e^{c_{u} C_1(\mathbf{Fx})\mathbf{u}_i^\top \mathbf{u}  + c_{u} C_1(\mathbf{Fx})\mathbf{u}_{j}^\top \mathbf{u}}\alpha_i \alpha_j }{(\sum_{i=1}^n e^{c_{u} C_1(\mathbf{Fx})\mathbf{u}_i^\top \mathbf{u} }\alpha_i  )^2 }  \right]\nonumber \\
    &\quad \quad - \mathbb{E}_{\{\mathbf{u}_i\}, \{\mathbf{v}_i\}} \left[ \frac{\sum_{i=1}^n \sum_{j=1}^n  (\mathbf{u}_i - \mathbf{u})^\top  ( \mathbf{u}_{j} - \mathbf{u}) e^{c_{u} C_1(\mathbf{Fx})\mathbf{u}_i^\top \mathbf{u}  + c_{u} C_1(\mathbf{Fx})\mathbf{u}_{j}^\top \mathbf{u}}\alpha_i \alpha_j }{(\sum_{i=1}^n e^{c_{u} C_1(\mathbf{Fx})\mathbf{u}_i^\top \mathbf{u} }\alpha_i  )^2 }  \right] \nonumber \\
    &= \mathbb{E}_{\{\mathbf{u}_i\}, \{\mathbf{v}_i\}} \nonumber \\
    &\quad \quad \left[ \frac{\sum_{i=1}^n \sum_{j=1}^n  ( \mathbf{u}_i^\top \mathbf{u} -  \mathbf{u}_i^\top \mathbf{R}({\mathbf{Fx}})^\top \mathbf{u} +  \mathbf{u}_{j}^\top \mathbf{u} - \mathbf{u}_{j}^\top  \mathbf{R}({\mathbf{Fx}})^\top\mathbf{u}) e^{c_{u} C_1(\mathbf{Fx})\mathbf{u}_i^\top \mathbf{u}  + c_{u} C_1(\mathbf{Fx})\mathbf{u}_{j}^\top \mathbf{u}}\alpha_i \alpha_j }{(\sum_{i=1}^n e^{c_{u} C_1(\mathbf{Fx})\mathbf{u}_i^\top \mathbf{u} }\alpha_i  )^2 }  \right]\nonumber \\
    &= 2 \mathbb{E}_{\{\mathbf{u}_i\}, \{\mathbf{v}_i\}} \left[ \frac{\sum_{i=1}^n (\mathbf{u}_i^\top \mathbf{u} -  \mathbf{u}_i^\top \mathbf{R}({\mathbf{Fx}})^\top \mathbf{u} ) e^{c_{u} C_1(\mathbf{Fx})\mathbf{u}_i^\top \mathbf{u} } \alpha_i \sum_{j=1}^n e^{c_{u} C_1(\mathbf{Fx})\mathbf{u}_{j}^\top \mathbf{u}}\alpha_j }{(\sum_{i=1}^n e^{c_{u} C_1(\mathbf{Fx})\mathbf{u}_i^\top \mathbf{u} }\alpha_i  )^2 }  \right]\nonumber \\
    &= 2 \mathbb{E}_{\{\mathbf{u}_i\}, \{\mathbf{v}_i\}} \left[ \frac{\sum_{i=1}^n (\mathbf{u}_i^\top \mathbf{u} -  \mathbf{u}_i^\top \mathbf{R}({\mathbf{Fx}})^\top \mathbf{u} ) e^{c_{u} C_1(\mathbf{Fx})\mathbf{u}_i^\top \mathbf{u} } \alpha_i }{\sum_{i=1}^n e^{c_{u} C_1(\mathbf{Fx})\mathbf{u}_i^\top \mathbf{u} }\alpha_i   }  \right]\nonumber \\
    &=  2 (\mathbf{u}^\top - \mathbf{u}^\top \mathbf{R}({\mathbf{Fx}})) \mathbb{E}_{\{\mathbf{u}_i\}, \{\mathbf{v}_i\} }\left[ \frac{\sum_{i=1}^n \mathbf{u}_i e^{c_{u} C_1(\mathbf{Fx})\mathbf{u}_i^\top \mathbf{u} } \alpha_i }{\sum_{i=1}^n e^{c_{u} C_1(\mathbf{Fx})\mathbf{u}_i^\top \mathbf{u} }\alpha_i   }  \right]\label{rmax} 
\end{align}
Define $\hat{\mathbf{u}}\coloneqq \mathbb{E}_{\{\mathbf{u}_i\}, \{\mathbf{v}_i\} }\left[ \frac{\sum_{i=1}^n \mathbf{u}_i e^{c_{u} C_1(\mathbf{Fx})\mathbf{u}_i^\top \mathbf{u} } \alpha_i }{\sum_{i=1}^n e^{c_{u} C_1(\mathbf{Fx})\mathbf{u}_i^\top \mathbf{u} }\alpha_i   }  \right] $ and
WLOG write $\mathbf{u}_i = \mathbf{p}_{\mathbf{u}_i} + \mathbf{q}_{\mathbf{u}_i}$, where $\mathbf{p}_{\mathbf{u}_i}\coloneqq \mathbf{uu}^\top \mathbf{u}_i$ and $\mathbf{q}_{\mathbf{u}_i}\coloneqq  (\mathbf{I}_k - \mathbf{uu}^\top) \mathbf{u}_i$. Note that for any $\mathbf{u}_i= \mathbf{p}_{\mathbf{u}_i}+ \mathbf{q}_{\mathbf{u}_i}$,  $\mathbf{u}_i':= \mathbf{p}_{\mathbf{u}_i}- \mathbf{q}_{\mathbf{u}_i}$ occurs with equal probability, and flipping $\mathbf{q}_{\mathbf{u}_i}$ does not change any exponent or $\alpha_i$ in \eqref{rmax}. Thus
\begin{align}
    \hat{\mathbf{u}} &= \mathbb{E}_{\{(\mathbf{p}_{\mathbf{u}_i}, \mathbf{q}_{\mathbf{u}_i})\}_{i\in [n]}, \{\mathbf{v}_i\}} \left[ \frac{\sum_{i=1}^n (\mathbf{p}_{\mathbf{u}_i} + \mathbf{q}_{\mathbf{u}_i}) e^{c_{u} C_1(\mathbf{Fx})\mathbf{u}_i^\top \mathbf{u} } \alpha_i }{\sum_{i=1}^n e^{c_{u} C_1(\mathbf{Fx})\mathbf{u}_i^\top \mathbf{u} }\alpha_i   }  \right] \nonumber \\
    &= \tfrac{1}{2} \mathbb{E}_{\{(\mathbf{p}_{\mathbf{u}_i}, \mathbf{q}_{\mathbf{u}_i})\}_{i}, \{\mathbf{v}_i\} } \left[ \frac{\sum_{i=1}^n (2 \mathbf{p}_{\mathbf{u}_i} + \mathbf{q}_{\mathbf{u}_i} - \mathbf{q}_{\mathbf{u}_i}) e^{c_{u} C_1(\mathbf{Fx})\mathbf{u}_i^\top \mathbf{u} } \alpha_i }{\sum_{i=1}^n e^{c_{u} C_1(\mathbf{Fx})\mathbf{u}_i^\top \mathbf{u} }\alpha_i   }  \right] \nonumber \\
    &= \mathbb{E}_{\{\mathbf{p}_{\mathbf{u}_i}\}_{i}, \{\mathbf{v}_i\} } \left[ \frac{\sum_{i=1}^n \mathbf{p}_{\mathbf{u}_i} e^{c_{u} C_1(\mathbf{Fx})\mathbf{u}_i^\top \mathbf{u} } \alpha_i }{\sum_{i=1}^n e^{c_{u} C_1(\mathbf{Fx})\mathbf{u}_i^\top \mathbf{u} }\alpha_i   }  \right] \label{prp} \\
    &= \tilde{c} \;  \mathbf{u} \nonumber
\end{align}
where $ \tilde{c}:=   \mathbb{E}_{\{\mathbf{u}_i\}, \{\mathbf{v}_i\} }\left[ \frac{\sum_{i=1}^n \mathbf{u}_i^\top \mathbf{u} \; e^{c_{u} C_1(\mathbf{Fx})\mathbf{u}_i^\top \mathbf{u} } \alpha_i }{\sum_{i=1}^n e^{c_{u} C_1(\mathbf{Fx})\mathbf{u}_i^\top \mathbf{u} }\alpha_i   }  \right]$. Note that for any $\mathbf{u}_i$, $-\mathbf{u}_i$ occurs with equal probability, so
\begin{align}
   \tilde{c} &=  \sum_{i=1}^n  \mathbb{E}_{\{\mathbf{u}_i\}, \{\mathbf{v}_i\} }\left[ \frac{\mathbf{u}^\top \mathbf{u}_i \; e^{c_{u} C_1(\mathbf{Fx})\mathbf{u}_i^\top \mathbf{u} } \alpha_i }{\sum_{j=1}^n e^{c_{u} C_1(\mathbf{Fx})\mathbf{u}_{j}^\top \mathbf{u} }\alpha_j   }  \right]  \nonumber \\
   &= \tfrac{1}{2}   \sum_{i=1}^n  \mathbb{E}_{\{\mathbf{u}_i\}, \{\mathbf{v}_i\} } \Bigg[ \frac{\mathbf{u}_i^\top \mathbf{u} \; e^{c_{u} C_1(\mathbf{Fx})\mathbf{u}_i^\top \mathbf{u} } \alpha_i }{ e^{c_{u} C_1(\mathbf{Fx})\mathbf{u}_i^\top \mathbf{u} }\alpha_i + \sum_{j=1, j\neq i}^n e^{c_{u} C_1(\mathbf{Fx})\mathbf{u}_{j}^\top \mathbf{u} }\alpha_j   } \nonumber \\
   &\quad \quad \quad \quad \quad \quad \quad \quad \quad \quad \quad \quad \quad  - \frac{\mathbf{u}_i^\top \mathbf{u} \; e^{-c_{u} C_1(\mathbf{Fx})\mathbf{u}_i^\top \mathbf{u} } \alpha_i }{ e^{- c_{u} C_1(\mathbf{Fx})\mathbf{u}_i^\top \mathbf{u} }\alpha_i + \sum_{j=1, j\neq i}^n e^{c_{u} C_1(\mathbf{Fx})\mathbf{u}_{j}^\top \mathbf{u} }\alpha_j   }   \Bigg]  \nonumber \\
   &=   \tfrac{1}{2}   \sum_{i=1}^n  \mathbb{E}_{\{\mathbf{u}_i\}, \{\mathbf{v}_i\} } \Bigg[ \mathbf{u}_i^\top \mathbf{u} \Bigg( \frac{e^{c_{u} C_1(\mathbf{Fx})\mathbf{u}_i^\top \mathbf{u} } \alpha_i }{ e^{c_{u} C_1(\mathbf{Fx})\mathbf{u}_i^\top \mathbf{u} }\alpha_i + \sum_{j=1, j\neq i}^n e^{c_{u} C_1(\mathbf{Fx})\mathbf{u}_{j}^\top \mathbf{u} }\alpha_j   } \nonumber \\
   &\quad \quad \quad \quad \quad \quad \quad \quad \quad \quad \quad \quad \quad - \frac{ e^{-c_{u} C_1(\mathbf{Fx})\mathbf{u}_i^\top \mathbf{u} } \alpha_i }{ e^{- c_{u} C_1(\mathbf{Fx})\mathbf{u}_i^\top \mathbf{u} }\alpha_i + \sum_{j=1, j\neq i}^n e^{c_{u} C_1(\mathbf{Fx})\mathbf{u}_{j}^\top \mathbf{u} }\alpha_j   }  \Bigg) \Bigg]. 
\end{align}
Since $\alpha_i>0$ and $\bar{c}_{u,v}>0$ by definition, $e^{c_{u} C_1(\mathbf{Fx})\mathbf{u}_i^\top \mathbf{u} }\alpha_i$ is monotonically increasing in $\mathbf{u}_i^\top \mathbf{u}$. Also,  $f(x)\coloneqq \frac{x}{x+c}$ is monotonically increasing for $x> 0$ for all $c>0$. Thus we have that
\begin{align}
    &\mathbf{u}_i^\top \mathbf{u} > 0 \nonumber \\
    &\iff  \left( \frac{e^{c_{u} C_1(\mathbf{Fx})\mathbf{u}_i^\top \mathbf{u} } \alpha_i }{ e^{c_{u} C_1(\mathbf{Fx})\mathbf{u}_i^\top \mathbf{u} }\alpha_i + \sum_{j=1, j\neq i}^n e^{c_{u} C_1(\mathbf{Fx})\mathbf{u}_{j}^\top \mathbf{u} }\alpha_j   } - \frac{ e^{-c_{u} C_1(\mathbf{Fx})\mathbf{u}_i^\top \mathbf{u} } \alpha_i }{ e^{- c_{u} C_1(\mathbf{Fx})\mathbf{u}_i^\top \mathbf{u} }\alpha_i + \sum_{j=1, j\neq i}^n e^{c_{u} C_1(\mathbf{Fx})\mathbf{u}_{j}^\top \mathbf{u} }\alpha_j   }  \right) > 0,
\end{align}
and thereby $\tilde{c}>0$.
Therefore, $\arg \max_{\mathbf{u}' \in \mathbb{S}^{k-1}} (\mathbf{u}')^\top \hat{\mathbf{u}}  = \mathbf{u}$, in particular $\mathbf{u}^\top \hat{\mathbf{u}} >  \mathbf{u}^\top \mathbf{R}({\mathbf{Fx}})^\top \hat{\mathbf{u}}$ whenever $\mathbf{R}({\mathbf{Fx}})\mathbf{u}\neq \mathbf{u}$, so \eqref{rmax} is strictly positive if $\mathbf{R}({\mathbf{Fx}})\mathbf{u}\neq \mathbf{u}$. 
Thus, for any $\mathbf{u},\mathbf{v}$ such that $\mathbf{R}({\mathbf{Fx}})\mathbf{u}\neq \mathbf{u}$, $g(\mathbf{F},\mathbf{u},\mathbf{v}) > g(\frac{C_1(\mathbf{Fx}) }{c_u} \mathbf{B}^\top,\mathbf{u},\mathbf{v})$. Also, for any $\mathbf{u},\mathbf{v}$ such that $\mathbf{R}({\mathbf{Fx}})\mathbf{u}= \mathbf{u}$, $g(\mathbf{F},\mathbf{u},\mathbf{v}) = g(\frac{C_1(\mathbf{Fx}) }{c_u} \mathbf{B}^\top,\mathbf{u},\mathbf{v})$.


\vspace{2mm}



Next we need to account for $\tilde{\mathcal{L}}_{\text{noise}}(\mathbf{M},\mathbf{x})$. Again writing $\mathbf{Fx} = \mathbf{R}({\mathbf{Fx}}) \mathbf{u} \| \mathbf{F} \mathbf{x} \|_2$ and $C_1(\mathbf{Fx})= \| \mathbf{F} \mathbf{x} \|_2$ 
and using the rotational invariance of $\mathbf{u}_i$, we obtain
\begin{align}
  \mathcal{L}_{\text{noise}}(\mathbf{M}) &= \sigma^2 \mathbb{E}_{\mathbf{x}, \{\mathbf{x}_i\}_{i\in [n]} } \left[ \frac{\sum_{i=1}^n  e^{2 c_{u} \mathbf{u}_i^\top \mathbf{F} \mathbf{x}} \alpha_i^2 }{(\sum_{i=1}^n e^{c_{u} \mathbf{u}_i^\top \mathbf{F} \mathbf{x} } \alpha_i )^2 }  \right] \nonumber \\
  &= \sigma^2 \mathbb{E}_{\mathbf{u},\mathbf{v}, \{\mathbf{u}_i\},  \{\mathbf{v}_i\} }\left[ \frac{\sum_{i=1}^n  e^{2 c_{u} C_1(\mathbf{Fx})\mathbf{u}_i^\top \mathbf{R}({\mathbf{Fx}}) \mathbf{u}}\alpha_i^2 }{(\sum_{i=1}^n e^{ c_{u} C_1(\mathbf{Fx})\mathbf{u}_i^\top \mathbf{R}({\mathbf{Fx}}) \mathbf{u}}\alpha_i )^2 }  \right] \nonumber \\
  &= \sigma^2 \mathbb{E}_{\mathbf{u},\mathbf{v}, \{\mathbf{u}_i\},  \{\mathbf{v}_i\} }\left[ \frac{\sum_{i=1}^n  e^{2 c_{u} C_1(\mathbf{Fx})\mathbf{u}_i^\top  \mathbf{u}}\alpha_i^2 }{(\sum_{i=1}^n e^{ c_{u} C_1(\mathbf{Fx})\mathbf{u}_i^\top  \mathbf{u}}\alpha_i )^2 }  \right] \label{bgb}
\end{align}
where \eqref{bgb} follows using the rotational invariance of $\mathbf{u}_i$.
So, returning to \eqref{yyyy}, we have 
\begin{align}
    \mathcal{L}(\mathbf{M})&= \mathbb{E}_{\mathbf{x}}[ \tilde{\mathcal{L}}_{\text{signal}}(\mathbf{BF} + \mathbf{B}_{\perp}\mathbf{G},\mathbf{x})  + \tilde{\mathcal{L}}_{\text{noise}}(\mathbf{BF} + \mathbf{B}_{\perp}\mathbf{G},\mathbf{x})   ] \nonumber \\
&\geq 
\mathbb{E}_{\mathbf{u},\mathbf{v}}\Bigg[ c_{a}^2 c_{u}^2 \mathbb{E}_{ \{\mathbf{u}_i\},  \{\mathbf{v}_i\}} \left[ \frac{\sum_{i=1}^n \sum_{j=1}^n  (\mathbf{u}_i - \mathbf{u})^\top  ( \mathbf{u}_{j} - \mathbf{u}) e^{c_{u} C_1(\mathbf{Fx})\mathbf{u}_i^\top \mathbf{u}  + c_{u} C_1(\mathbf{Fx})\mathbf{u}_{j}^\top \mathbf{u}}\alpha_i \alpha_j }{(\sum_{i=1}^n e^{c_{u} C_1(\mathbf{Fx})\mathbf{u}_i^\top \mathbf{u} }\alpha_i  )^2 }  \right] \nonumber \\
&\quad\quad \quad  + \sigma^2 \mathbb{E}_{\{\mathbf{u}_i\},  \{\mathbf{v}_i\} }\left[ \frac{\sum_{i=1}^n  e^{2 c_{u} C_1(\mathbf{Fx})\mathbf{u}_i^\top  \mathbf{u}}\alpha_i^2 }{(\sum_{i=1}^n e^{ c_{u} C_1(\mathbf{Fx})\mathbf{u}_i^\top  \mathbf{u}}\alpha_i )^2 }  \right]  \Bigg] \label{bbb}
\end{align}
where \eqref{bbb} is strict if
$\mathbf{R}({\mathbf{Fx}})\mathbf{u}\neq \mathbf{u}$ for some $\mathbf{u},\mathbf{v}$,
which is equivalent to saying that $\mathbf{F} \notin \{c'\mathbf{B}^\top, c'>0\}$.

\vspace{2mm}

Next, recall that we have defined $\alpha_i \coloneqq e^{c_v \mathbf{v}_i^\top \mathbf{Gx}}$. Using a similar argument as earlier, by the rotational invariance of the $\mathbf{v}_i$'s, for any fixed $\mathbf{x}$, we can write $\alpha_i = e^{ c_v C_2(\mathbf{G}\mathbf{x}) \mathbf{v}_i^\top \mathbf{e}_1}$ where  $C_2(\mathbf{Gx}) \coloneqq \|\mathbf{G}\mathbf{x}\|_2$ and $\mathbf{e}_1$ is the first standard basis vector. 

\vspace{2mm}

Next, for $c_1,c_2 \geq 0$ and some fixed $\mathbf{u},\mathbf{v}$, define
\begin{align}
 H(\mathbf{u},\mathbf{v}, c_1, c_2) &\coloneqq   c_{a}^2 c_{u}^2\mathbb{E}_{ \{\mathbf{u}_i\}, \{\mathbf{v}_i\} }\left[ \frac{\sum_{i=1}^n \sum_{j=1}^n  (\mathbf{u}_i - \mathbf{u})^\top  ( \mathbf{u}_{j} -  \mathbf{u}) e^{c_{u} c_1 \mathbf{u}_i^\top \mathbf{u}  + c_{u} c_1 \mathbf{u}_{j}^\top \mathbf{u}}e^{ c_v c_2 \mathbf{v}_i^\top \mathbf{e}_1 + c_v c_2 \mathbf{v}_j^\top \mathbf{e}_1}  }{(\sum_{i=1}^n e^{c_{u} c_1 \mathbf{u}_i^\top \mathbf{u} }e^{ c_v c_2 \mathbf{v}_i^\top \mathbf{e}_1}  )^2 } \right]  \nonumber \\
    &\quad + \sigma^2 \mathbb{E}_{ \{\mathbf{u}_i\}, \{\mathbf{v}_i\}} \left[\frac{\sum_{i=1}^n  e^{2 c_{u} c_1 \mathbf{u}_i^\top  \mathbf{u}}e^{ 2c_v c_2 \mathbf{v}_i^\top \mathbf{e}_1}  }{(\sum_{i=1}^n e^{ c_{u} c_1 \mathbf{u}_i^\top  \mathbf{u}}e^{ c_v c_2 \mathbf{v}_i^\top \mathbf{e}_1}  )^2 }   \right]
\end{align}
and let 
\begin{align}
    (C_1^*(\mathbf{u},\mathbf{v}), C_2^*(\mathbf{u},\mathbf{v})) \in \arg \min_{(c_1,c_2): 0\leq c_1 \leq \frac{\hat{c}}{c_{u}^2}, c_2\geq 0}  H(\mathbf{u},\mathbf{v},c_1,c_2)
\end{align}
Since $H$ does not vary with $\mathbf{v}$, we have $(C_1^*(\mathbf{u},\mathbf{v}), C_2^*(\mathbf{u},\mathbf{v})) = (C_1^*(\mathbf{u}), C_2^*(\mathbf{u}))$ WLOG. In fact, $H$ does not vary with $\mathbf{u}$ either, due to the rotational invariance of the $\mathbf{u}_i$'s. So, we have $(C_1^*(\mathbf{u},\mathbf{v}), C_2^*(\mathbf{u},\mathbf{v})) = (C_1^*, C_2^*)$ WLOG, i.e. there is a single pair $(C_1^*, C_2^*)$ that minimizes $H(\mathbf{u},\mathbf{v},c_1,c_2)$ over $c_1,c_2$ for all 
$\mathbf{u}\in \mathbb{S}^{k-1}$ and $\mathbf{v}\in \mathbb{S}^{d-k-1}$. 

Thus $\mathbf{F}^* = C_1^* \mathbf{B}^\top$ and $\mathbf{G}^*$ satisfies $\|\mathbf{G}^*\mathbf{x}\| = C_2^*$ for all $\mathbf{x}$, which implies, using also that $\mathbf{M}$ is symmetric, that $\mathbf{G}^* = C_2^* \mathbf{B}_{\perp}^\top$.
\end{proof}

\begin{lemma}\label{lem:low-rank-bias}
Consider any $\boldsymbol{\alpha} \coloneqq[\alpha_1,\dots,\alpha_n]$ such that
$\alpha_1 = \max_i \alpha_i$ and $\alpha_1 > \min_i \alpha_i > 0$. Further, let $c \in (0,2]$. Define
  \begin{align}
      H_{\text{signal}}(\mathbf{u},\boldsymbol{\alpha}) := \mathbb{E}_{ \{\mathbf{u}_i\}_{i\in [n]} } \left[  \frac{\sum_{i=1}^n \sum_{j=1}^n  (\mathbf{u}_i - \mathbf{u})^\top  ( \mathbf{u}_{j} - \mathbf{u}) e^{c \mathbf{u}_i^\top \mathbf{u}  + c \mathbf{u}_{j}^\top \mathbf{u}}\alpha_i \alpha_j }{(\sum_{i=1}^n e^{c \mathbf{u}_i^\top \mathbf{u} }\alpha_i  )^2 }\right].
  \end{align}
Then $$\frac{\partial H_{\text{signal}}(\mathbf{u},\boldsymbol{\alpha})}{\partial \alpha_1} > 0.$$
\end{lemma}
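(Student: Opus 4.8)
## Proof Plan for Lemma \ref{lem:low-rank-bias}

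The goal is to show that the partial derivative of $H_{\text{signal}}(\mathbf{u},\boldsymbol{\alpha})$ with respect to $\alpha_1$ is strictly positive, where $\alpha_1$ is (a) the largest of the $\alpha_i$'s and (b) strictly larger than the smallest. The plan is to reduce the computation of $\partial H_{\text{signal}}/\partial \alpha_1$ to a statement about only \emph{two} of the latent variables --- $\mathbf{u}_1$ and one other index $\mathbf{u}_{i'}$ with $\alpha_{i'} < \alpha_1$ --- by conditioning on all remaining $\mathbf{u}_j$. Concretely, first I would write $H_{\text{signal}}$ as $\mathbb{E}_{\{\mathbf{u}_j\}_j}[\Phi(\{\mathbf{u}_j\}_j)]$ where $\Phi$ is the explicit ratio inside the expectation, differentiate under the expectation (justified since the denominator is bounded below away from zero on the compact sphere and $\boldsymbol\alpha$ is fixed with positive entries), and collect the terms in $\partial\Phi/\partial\alpha_1$: there is a numerator-differentiation term (the $i=1$ or $j=1$ summands) and a denominator-differentiation term (a $-2$ times the full quantity $\times\, e^{c\mathbf{u}_1^\top\mathbf{u}}/(\sum_i e^{c\mathbf{u}_i^\top\mathbf{u}}\alpha_i)$). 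After simplification this should take the form of an expectation of $e^{c\mathbf{u}_1^\top\mathbf{u}}$ times a weighted combination of $(\mathbf{u}_1-\mathbf{u})^\top(\mathbf{u}_j-\mathbf{u})$ terms divided by the cube of the normalizer.

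The key idea, as flagged in the Section \ref{sec:LowRank} proof sketch, is the \textbf{four-assignment symmetrization}. Fix all $\mathbf{u}_j$ for $j\notin\{1,i'\}$ where $i'$ is a fixed index with $\alpha_{i'}<\alpha_1$ (which exists by hypothesis $\alpha_1>\min_i\alpha_i$), and write the conditional expectation over the i.i.d. pair $(\mathbf{u}_1,\mathbf{u}_{i'})$ as an integral over $(\mathbf{y}_1,\mathbf{y}_2)\in\mathbb{S}^{k-1}\times\mathbb{S}^{k-1}$ against the product uniform measure. Because $(\mathbf{u}_1,\mathbf{u}_{i'})$ is exchangeable, the integrand can be symmetrized over the four assignments $(\mathbf{u}_1,\mathbf{u}_{i'})\in\{(\mathbf{y}_1,\mathbf{y}_2),(\mathbf{y}_2,\mathbf{y}_1),(\mathbf{y}_1,\mathbf{y}_1),(\mathbf{y}_2,\mathbf{y}_2)\}$; note $\partial/\partial\alpha_1$ breaks the $1\leftrightarrow i'$ symmetry, so the four summands are genuinely different. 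I would then show that for \emph{every} choice of $\mathbf{y}_1,\mathbf{y}_2$ and every admissible configuration of the frozen $\mathbf{u}_j$'s, the sum of the four corresponding values of the integrand of $\partial\Phi/\partial\alpha_1$ is nonnegative, and strictly positive on a set of positive measure (e.g.\ when $\mathbf{y}_1^\top\mathbf{u}\ne\mathbf{y}_2^\top\mathbf{u}$). This ``pointwise over assignments'' positivity is the crux: individual assignments (the ``bad'' ones, where $\mathbf{u}_1$ sits at a point more aligned with $\mathbf{u}$ than $\mathbf{u}_{i'}$) can contribute negatively, but the gain from the complementary assignment where the roles are swapped, together with the two diagonal assignments, always dominates --- this is exactly where $\alpha_1>\alpha_{i'}$ is used, since it makes the swapped assignment carry more weight in the direction favoring smaller $\alpha_1$.

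The main obstacle will be establishing that four-term algebraic inequality. After clearing the common (positive) denominators, it reduces to showing a polynomial/exponential inequality of the schematic shape $g(\mathbf{y}_1,\mathbf{y}_2)\,e^{c(\mathbf{y}_1+\mathbf{y}_2)^\top\mathbf{u}} + \text{(swap)} + g(\mathbf{y}_1,\mathbf{y}_1)\,e^{2c\mathbf{y}_1^\top\mathbf{u}} + g(\mathbf{y}_2,\mathbf{y}_2)\,e^{2c\mathbf{y}_2^\top\mathbf{u}} \ge 0$, where the $g$'s encode the $(\mathbf{u}_i-\mathbf{u})^\top(\mathbf{u}_j-\mathbf{u})$ inner products and the denominator contributions. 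The plan is to reparametrize via $s_1=\mathbf{y}_1^\top\mathbf{u}$, $s_2=\mathbf{y}_2^\top\mathbf{u}$ and the remaining frozen quantities $T:=\sum_{j\notin\{1,i'\}}e^{c\mathbf{u}_j^\top\mathbf{u}}\alpha_j$ and $\mathbf{w}:=\sum_{j\notin\{1,i'\}}(\mathbf{u}_j-\mathbf{u})e^{c\mathbf{u}_j^\top\mathbf{u}}\alpha_j$, so that the inequality becomes a two-variable ($s_1,s_2\in[-1,1]$) claim parametrized monotonically by $\alpha_1>\alpha_{i'}>0$ and $T\ge 0$; then argue by a convexity/monotonicity argument in $(s_1,s_2)$ or by checking the sign directly after grouping the $(\mathbf{y}_1,\mathbf{y}_2)\leftrightarrow(\mathbf{y}_2,\mathbf{y}_1)$ pair against the diagonal pair, much as the companion Lemma \ref{lem:low-rank-noise} handles the $\tilde g_{a,b}p_ap_b+\tilde g_{b,a}p_ap_b+\tilde g_{a,a}p_a^2+\tilde g_{b,b}p_b^2>0$ inequality cited in the sketch. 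I expect that once the $(\mathbf{u}_1,\mathbf{u}_{i'})$-exchange structure is set up correctly, the residual inequality will factor into something manifestly nonnegative (a square, or a product of a positive exponential weight with $(s_1-s_2)^2$ times a positive coefficient), but verifying that factorization and confirming the strictness --- hence that $Q=0$ is forced --- is the part requiring real care.
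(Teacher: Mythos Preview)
Your high-level instinct --- differentiate under the expectation, then prove positivity by symmetrizing over a pair of latent $\mathbf{u}$'s --- matches the paper. But two concrete pieces of the plan will not go through as written.

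\textbf{First, fixing a single $i'$ is not enough.} After differentiating, the paper obtains
\[
\frac{\partial H_{\text{signal}}}{\partial\alpha_1}
=2\sum_{i=2}^{n}\sum_{j=1}^{n} S_{i,j},
\qquad
S_{i,j}=\alpha_i\alpha_j\,\mathbb{E}\!\left[\frac{(\mathbf{u}_1-\mathbf{u}_i)^\top(\mathbf{u}_j-\mathbf{u})\,e^{c(\mathbf{u}_1+\mathbf{u}_i+\mathbf{u}_j)^\top\mathbf{u}}}{(\sum_{i'} e^{c\mathbf{u}_{i'}^\top\mathbf{u}}\alpha_{i'})^3}\right],
\]
and shows each $S_{i,j}$ (or the pair $S_{i,1}+S_{i,i}$) is $\geq 0$ by symmetrizing over $(\mathbf{u}_1,\mathbf{u}_i)$ \emph{for that particular $i$}. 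If instead you freeze all $\mathbf{u}_j$ for $j\notin\{1,i'\}$ and average only over $(\mathbf{u}_1,\mathbf{u}_{i'})$, the terms with $i\neq i'$ still depend on the frozen $\mathbf{u}_i$ asymmetrically (through $\mathbf{u}_1-\mathbf{u}_i$), and there is no reason their conditional expectation is nonnegative for arbitrary frozen values. The per-$i$ decomposition is what makes the symmetrization clean.

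\textbf{Second, the ``four-assignment'' symmetrization is misplaced here.} Exchangeability of the i.i.d.\ pair $(\mathbf{u}_1,\mathbf{u}_i)$ only gives you the swap $(\mathbf{y}_1,\mathbf{y}_2)\leftrightarrow(\mathbf{y}_2,\mathbf{y}_1)$; the diagonal assignments $(\mathbf{y}_1,\mathbf{y}_1),(\mathbf{y}_2,\mathbf{y}_2)$ are not equidistributed with the off-diagonal ones, so you cannot freely add them to the integrand. The four-term device with weights $p_ap_b,p_ap_b,p_a^2,p_b^2$ that you cite is what Lemma~\ref{lem:low-rank-noise} (the noise case, $n=2$) uses --- not this lemma. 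The signal proof uses only the two-swap, handled in two cases: for $j\notin\{1,i\}$, projecting onto the $\mathbf{u}$-direction kills the perpendicular parts and the swapped sum $f_{a,b}+f_{b,a}$ is immediately $\geq 0$ since $\alpha_1>\alpha_i$ forces the two denominators to order correctly; for $j\in\{1,i\}$, the paper combines $S_{i,1}+S_{i,i}$, introduces $\gamma_{a,b}=1-ab+a-b$, and splits the swapped sum into $T_1+T_2$. There is no clean square or factorization: $T_1>0$ comes from $\gamma_{a,b},\gamma_{b,a}\geq 0$ not vanishing simultaneously, and $T_2\geq 0$ requires proving $b>a\iff e^{ca}\gamma_{b,a}-e^{cb}\gamma_{a,b}>0$, which is exactly where the hypothesis $c\in(0,2]$ enters (via monotonicity of $q(x)=e^{cx}(1+cx-c)$). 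Your plan does not yet account for where $c\leq 2$ is used, and the hoped-for ``manifestly nonnegative'' factorization does not occur.
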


\begin{proof}
We first compute $ \frac{\partial H_{\text{signal}}(\mathbf{u},\boldsymbol{\alpha})}{\partial \alpha_1}$. 
Using the linearity of the expectation and the quotient rule we obtain:
\begin{align}
    &\frac{\partial H_{\text{signal}}(\mathbf{u},\boldsymbol{\alpha})}{\partial \alpha_1}\nonumber \\
    &=    \mathbb{E}_{ \{\mathbf{u}_i\}_{i\in [n]} } \left[ \frac{\partial }{\partial \alpha_1}   \frac{\sum_{i=1}^n \sum_{j=1}^n  (\mathbf{u}_i - \mathbf{u})^\top  ( \mathbf{u}_{j} - \mathbf{u}) e^{c \mathbf{u}_i^\top \mathbf{u}  + c \mathbf{u}_{j}^\top \mathbf{u}}\alpha_i \alpha_j  }{(\sum_{i=1}^n e^{c \mathbf{u}_i^\top \mathbf{u} }\alpha_i  )^2 }\right] \nonumber \\
    &=  2\mathbb{E}_{ \{\mathbf{u}_i\}_{i} } \left[ \frac{(\sum_{i=1}^n e^{c \mathbf{u}_i^\top \mathbf{u} }\alpha_i  )^2 \left(\sum_{j=2}^n  (\mathbf{u}_1 \!-\! \mathbf{u})^\top  ( \mathbf{u}_{j} \!-\! \mathbf{u}) e^{c \mathbf{u}_1^\top \mathbf{u}  + c \mathbf{u}_{j}^\top \mathbf{u}}\alpha_j  +   \|\mathbf{u}_1\! -\! \mathbf{u}\|_2^2 e^{2 c \mathbf{u}_1^\top \mathbf{u} }\alpha_1 \right) }{(\sum_{i=1}^n e^{c \mathbf{u}_i^\top \mathbf{u} }\alpha_i  )^4 }\right] \nonumber \\
    &\;\; \;\;- 2\mathbb{E}_{ \{\mathbf{u}_i\}_{i} } \left[ \frac{(\sum_{i=1}^n \sum_{j=1}^n  (\mathbf{u}_i - \mathbf{u})^\top  ( \mathbf{u}_{j} - \mathbf{u}) e^{c \mathbf{u}_i^\top \mathbf{u}  + c \mathbf{u}_{j}^\top \mathbf{u}}\alpha_i \alpha_j  ) (\sum_{i=1}^n e^{c \mathbf{u}_i^\top \mathbf{u} }\alpha_i  ) e^{c \mathbf{u}_1^\top \mathbf{u} } }{(\sum_{i=1}^n e^{c \mathbf{u}_i^\top \mathbf{u} }\alpha_i  )^4 }\right] \nonumber\\
    &=  2\mathbb{E}_{ \{\mathbf{u}_i\}_{i} } \left[ \frac{(\sum_{i=1}^n e^{c \mathbf{u}_i^\top \mathbf{u} }\alpha_i  ) \left(\sum_{j=1}^n  (\mathbf{u}_1 - \mathbf{u})^\top  ( \mathbf{u}_{j} - \mathbf{u}) e^{c \mathbf{u}_1^\top \mathbf{u}  + c \mathbf{u}_{j}^\top \mathbf{u}}\alpha_j  \right) }{(\sum_{i'=1}^n e^{c \mathbf{u}_{i'}^\top \mathbf{u} }\alpha_{i'} )^3 }\right] \nonumber \\
    &\;\; \; \; - 2\mathbb{E}_{ \{\mathbf{u}_i\}_{i} } \left[ \frac{(\sum_{i=1}^n \sum_{j=1}^n  (\mathbf{u}_i - \mathbf{u})^\top  ( \mathbf{u}_{j} - \mathbf{u}) e^{c \mathbf{u}_i^\top \mathbf{u}  + c \mathbf{u}_{j}^\top \mathbf{u}}\alpha_i \alpha_j  ) e^{c \mathbf{u}_1^\top \mathbf{u} } }{(\sum_{i'=1}^n e^{c \mathbf{u}_{i'}^\top \mathbf{u} }\alpha_{i'} )^3 }\right] \nonumber \\
    &= 2\sum_{i=2}^n \sum_{j=1}^n S_{i,j} \label{per}
\end{align}
where $$ S_{i,j}\coloneqq \alpha_i \alpha_j \mathbb{E}_{ \{\mathbf{u}_{i'}\}_{i'\in [n]} } \left[ \frac{  (\mathbf{u}_1 - \mathbf{u}_i)^\top  ( \mathbf{u}_{j} - \mathbf{u}) e^{c (\mathbf{u}_1^\top \mathbf{u} + \mathbf{u}_i^\top \mathbf{u}   +  \mathbf{u}_{j}^\top \mathbf{u}) }    }{(\sum_{i'=1}^n e^{c \mathbf{u}_{i'}^\top \mathbf{u} }\alpha_{i'} )^3 }\right]. $$
Note that terms with $i=1$ do not appear in \eqref{per}. 
We analyze $ S_{i,1}+S_{i,i}$ and each $S_{i,j}$, $j \notin \{1,i\}$ separately, and will ultimately show that each of these terms is positive. We start with the latter case as it is easier to handle.
For $j \notin \{1,i\}$, we have 
\begin{align}
   S_{i,j} &= \alpha_i \alpha_j\mathbb{E}_{ \{\mathbf{u}_{i'}\}_{i'\in [n]} } \left[ \frac{(\mathbf{u}_1 - \mathbf{u}_i)^\top  ( \mathbf{u}_{j} - \mathbf{u})  e^{c (\mathbf{u}_1^\top \mathbf{u} + \mathbf{u}_i^\top \mathbf{u}   +  \mathbf{u}_{j}^\top \mathbf{u}) }   }{(\sum_{i'=1}^n e^{c \mathbf{u}_{i'}^\top \mathbf{u} }\alpha_{i'} )^3 }\right] \nonumber \\
    &= \alpha_i\alpha_j \mathbb{E}_{ \{\mathbf{u}_{i'}\}_{i'\in [n]} } \left[ \frac{(\mathbf{u}_1 - \mathbf{u}_i)^\top \mathbf{u}\mathbf{u}^\top ( \mathbf{u}_{j} - \mathbf{u})  e^{c (\mathbf{u}_1^\top \mathbf{u} + \mathbf{u}_i^\top \mathbf{u}   +  \mathbf{u}_{j}^\top \mathbf{u}) }   }{(\sum_{i'=1}^n e^{c \mathbf{u}_{i'}^\top \mathbf{u} }\alpha_{i'} )^3 }\right] \nonumber \\
    &\quad +\alpha_i\alpha_j \underbrace{\mathbb{E}_{ \{\mathbf{u}_{i'}\}_{i'\in [n]} } \left[ \frac{\mathbf{u}_1^\top (\mathbf{I}_k - \mathbf{uu}^\top) ( \mathbf{u}_{j} - \mathbf{u})  e^{c (\mathbf{u}_1^\top \mathbf{u} + \mathbf{u}_i^\top \mathbf{u}   +  \mathbf{u}_{j}^\top \mathbf{u}) }   }{(\sum_{i'=1}^n e^{c \mathbf{u}_{i'}^\top \mathbf{u} }\alpha_{i'} )^3 }\right]}_{=0} \nonumber \\
    &\quad -\alpha_i\alpha_j \underbrace{\mathbb{E}_{ \{\mathbf{u}_{i'}\}_{i'\in [n]} } \left[ \frac{\mathbf{u}_i^\top (\mathbf{I}_k - \mathbf{uu}^\top) ( \mathbf{u}_{j} - \mathbf{u})  e^{c (\mathbf{u}_1^\top \mathbf{u} + \mathbf{u}_i^\top \mathbf{u}   +  \mathbf{u}_{j}^\top \mathbf{u}) }   }{(\sum_{i'=1}^n e^{c \mathbf{u}_{i'}^\top \mathbf{u} }\alpha_{i'} )^3 }\right]}_{=0} \label{19} \\
    &= \alpha_i\alpha_j\mathbb{E}_{ \{\mathbf{u}_{i'}\}_{i'\in [n]} } \left[ \frac{(\mathbf{u}_1^\top \mathbf{u} - \mathbf{u}_i^\top \mathbf{u})(\mathbf{u}_{j}^\top \mathbf{u} - 1)  e^{c (\mathbf{u}_1^\top \mathbf{u} + \mathbf{u}_i^\top \mathbf{u}   +  \mathbf{u}_{j}^\top \mathbf{u}) }   }{(\sum_{i'=1}^n e^{c \mathbf{u}_{i'}^\top \mathbf{u} }\alpha_{i'} )^3 }\right] \nonumber
\end{align}
where the latter two terms in \eqref{19} are zero by the same argument as in \eqref{prp}: flipping the component of either $\mathbf{u}_1$ or $\mathbf{u}_i$ perpendicular to $\mathbf{u}$ does not change any of the values in any exponent, and each flip occurs with equal probability. Next, note that if $\alpha_i=\alpha_1$, 
\begin{align}
    \mathbb{E}_{ \{\mathbf{u}_{i'}\}_{i'\in [n]} } \left[ \frac{\mathbf{u}_1^\top \mathbf{u}(\mathbf{u}_{j}^\top \mathbf{u} - 1)  e^{c (\mathbf{u}_1^\top \mathbf{u} + \mathbf{u}_i^\top \mathbf{u}   +  \mathbf{u}_{j}^\top \mathbf{u}) }   }{(\sum_{i'=1}^n e^{c \mathbf{u}_{i'}^\top \mathbf{u} }\alpha_{i'} )^3 }\right] = \mathbb{E}_{ \{\mathbf{u}_{i'}\}_{i'\in [n]} } \left[ \frac{\mathbf{u}_i^\top \mathbf{u}(\mathbf{u}_{j}^\top \mathbf{u} - 1)  e^{c (\mathbf{u}_1^\top \mathbf{u} + \mathbf{u}_i^\top \mathbf{u}   +  \mathbf{u}_{j}^\top \mathbf{u}) }   }{(\sum_{i'=1}^n e^{c \mathbf{u}_{i'}^\top \mathbf{u} }\alpha_{i'} )^3 }\right] \nonumber
\end{align}
thus $S_{i,j}=0$. Otherwise, $\alpha_i<\alpha_1$ by definition of $\alpha_1$, and there must be some such $\alpha_i$, since if not,  there would be some $ c' \in \mathbb{R}_+$ such that $\boldsymbol{\alpha} = c' \boldsymbol{\alpha}^*$. For the case $\alpha_i<\alpha_1$, we use a symmetry argument to show that $S_{i,j}>0$.

First we define additional notations. Let $\bar{U}_{1,i}\coloneqq \{\mathbf{u}_{i'}\}_{i'\in [n]\setminus\{1,i\}}$, and for any $(a,b)\in [-1,1]^2$, define
$$f_{a,b}(\bar{U}_{1,i}) \coloneqq \frac{(a - b)(\mathbf{u}_{j}^\top \mathbf{u} - 1)  e^{c (a + b   +  \mathbf{u}_{j}^\top \mathbf{u}) }   }{(e^{c a }\alpha_{1}+ e^{c b }\alpha_{i} + \sum_{i'\neq 1,i} e^{c \mathbf{u}_{i'}^\top \mathbf{u} }\alpha_{i'})^3 }.$$

In particular, for any $a\in [-1,1]$, define $p_a \coloneqq \mathbb{P}_{\mathbf{u}_1}[\mathbf{u}_1^\top \mathbf{u}=a]$. Since $\mathbf{u}_1$ and $\mathbf{u}_i$ are i.i.d., we have $\mathbb{P}_{\mathbf{u}_1,\mathbf{u}_i}[\mathbf{u}_1^\top \mathbf{u}=a, \mathbf{u}_i^\top \mathbf{u}=b] = \mathbb{P}_{\mathbf{u}_1,\mathbf{u}_i}[\mathbf{u}_1^\top \mathbf{u}=b, \mathbf{u}_i^\top \mathbf{u}=a] = p_a p_b$ for any $(a,b)\in [-1,1]^2$ 
Thus, by the law of total expectation we have 
\begin{align}
    S_{i,j}&= \alpha_i \alpha_j \mathbb{E}_{\bar{U}_{1,i}} \left[ \int_{-1}^1\int_{-1}^1 f_{a,b}(\bar{U}_{1,i})p_a p_b\;da \; db \right]\nonumber \\
    &= \frac{\alpha_i \alpha_j}{2} \mathbb{E}_{\bar{U}_{1,i}} \left[ \int_{-1}^1\int_{-1}^1 (f_{a,b}(\bar{U}_{1,i}) + f_{b,a} (\bar{U}_{1,i}) )p_a p_b\;da \; db \right]
\end{align}
Next we show that for any instance of $a,b$ and $\bar{U}_{1,i}$, $f_{a,b}(\bar{U}_{1,i}) + f_{b,a} (\bar{U}_{1,i})$ is positive.
 We have:
\begin{align}
    &f_{a,b}(\bar{U}_{1,i}) + f_{b,a} (\bar{U}_{1,i}) \nonumber \\
    &= (a-b)(\mathbf{u}_{j}^\top \mathbf{u} - 1)  e^{c (a + b   +  \mathbf{u}_{j}^\top \mathbf{u}) }  \nonumber \\
    &\quad \quad  \times \left(\frac{1}{(e^{c a }\alpha_{1}+ e^{c b }\alpha_{i} + \sum_{i'\neq 1,i} e^{c \mathbf{u}_{i'}^\top \mathbf{u} }\alpha_{i'})^3 } - \frac{1 }{(e^{c b }\alpha_{1}+ e^{c a }\alpha_{i} + \sum_{i'\neq 1,i} e^{c \mathbf{u}_{i'}^\top \mathbf{u} }\alpha_{i'})^3 }\right)\nonumber \\
    & \geq 0 \nonumber
\end{align}
with equality only if $a=b$ or $\mathbf{u}_{j} = \mathbf{u}$, since $\mathbf{u}_{j}^\top \mathbf{u} \leq 1$ with equality only if $\mathbf{u}_{j}=\mathbf{u}$, and
\begin{align}
    a > b \iff (e^{c a }\alpha_{1}+ e^{c b }\alpha_{i} + \sum_{i'\neq 1,i} e^{c \mathbf{u}_{i'}^\top \mathbf{u} }\alpha_{i'})^3 > (e^{c b }\alpha_{1}+ e^{c a }\alpha_{i} + \sum_{i'\neq 1,i} e^{c \mathbf{u}_{i'}^\top \mathbf{u} }\alpha_{i'})^3 
\end{align}
due to $\alpha_1 >\alpha_i$ and  $\alpha_{i'}>0$ for all $i'$. So we have $S_{i,j}>0$.

Next we analyze $ S_{i,1}+S_{i,i}$ . 
In these cases we cannot immediately drop the components of $\mathbf{u}_1$ and $\mathbf{u}_i$ that are perpendicular to $\mathbf{u}$. We have:
\begin{align}
   S_{i,1}+S_{i,i} &=  \alpha_i \alpha_1 \mathbb{E}_{ \{\mathbf{u}_{i'}\}_{i'\in [n]} } \left[ \frac{(\mathbf{u}_1 - \mathbf{u}_i)^\top  ( \mathbf{u}_1 - \mathbf{u})  e^{c (2 \mathbf{u}_1^\top \mathbf{u} + \mathbf{u}_i^\top \mathbf{u}  ) }   }{(\sum_{i'=1}^n e^{c \mathbf{u}_{i'}^\top \mathbf{u} }\alpha_{i'} )^3 }\right] \nonumber \\
    &\quad +  \alpha_i^2 \mathbb{E}_{ \{\mathbf{u}_{i'}\}_{i'\in [n]} } \left[ \frac{(\mathbf{u}_1 - \mathbf{u}_i)^\top  ( \mathbf{u}_i - \mathbf{u})  e^{c ( \mathbf{u}_1^\top \mathbf{u} + 2\mathbf{u}_i^\top \mathbf{u}  ) }   }{(\sum_{i'=1}^n e^{c \mathbf{u}_{i'}^\top \mathbf{u} }\alpha_{i'} )^3 }\right]\nonumber \\
    &=   \alpha_i \alpha_1 \mathbb{E}_{ \{\mathbf{u}_{i'}\}_{i'\in [n]} } \left[ \frac{(1 - \mathbf{u}_i^\top \mathbf{u}_1 - \mathbf{u}_1^\top \mathbf{u} + \mathbf{u}_i^\top \mathbf{u}) e^{c (2 \mathbf{u}_1^\top \mathbf{u} + \mathbf{u}_i^\top \mathbf{u}  ) }   }{(\sum_{i'=1}^n e^{c \mathbf{u}_{i'}^\top \mathbf{u} }\alpha_{i'} )^3 }\right] \nonumber \\
    &\quad  +\alpha_i^2 \mathbb{E}_{ \{\mathbf{u}_{i'}\}_{i'\in [n]} } \left[ \frac{( \mathbf{u}_i^\top \mathbf{u}_1 - 1 - \mathbf{u}_1^\top \mathbf{u} + \mathbf{u}_i^\top \mathbf{u}) e^{c (\mathbf{u}_1^\top \mathbf{u} + 2\mathbf{u}_i^\top \mathbf{u}  ) }   }{(\sum_{i'=1}^n e^{c \mathbf{u}_{i'}^\top \mathbf{u} }\alpha_{i'} )^3 }\right] \nonumber \\
    &= \alpha_i  {\mathbb{E}_{ \{\mathbf{u}_{i'}\}_{i'\in [n]} } \left[ \frac{(\mathbf{u}_i^\top \mathbf{u} - \mathbf{u}_1^\top \mathbf{u} ) e^{c (\mathbf{u}_1^\top \mathbf{u} + \mathbf{u}_i^\top \mathbf{u}  ) } ( e^{c \mathbf{u}_1^\top \mathbf{u} } \alpha_1 + e^{c \mathbf{u}_i^\top \mathbf{u} }\alpha_i )  }{(\sum_{i'=1}^n e^{c \mathbf{u}_{i'}^\top \mathbf{u} }\alpha_{i'} )^3 }\right]} \nonumber \\
    &\quad +  \alpha_i {\mathbb{E}_{ \{\mathbf{u}_{i'}\}_{i'\in [n]} } \left[ \frac{(1- \mathbf{u}_i^\top \mathbf{u}_1  ) e^{c (\mathbf{u}_1^\top \mathbf{u} + \mathbf{u}_i^\top \mathbf{u}  ) } ( e^{c \mathbf{u}_1^\top \mathbf{u} }\alpha_1 - e^{c \mathbf{u}_i^\top \mathbf{u} }\alpha_i )  }{(\sum_{i'=1}^n e^{c \mathbf{u}_{i'}^\top \mathbf{u} }\alpha_{i'} )^3 }\right]}\nonumber  
    \end{align}
    Now we can split $ \mathbf{u}_i^\top \mathbf{u}_1 $ into the product of the components of $\mathbf{u}_i$, $\mathbf{u}_1$ in the direction $\mathbf{u}$ and the product of their components in the perpendicular subspace as before. Doing so yields
    \begin{align}
   S_{i,1}+S_{i,i}  &=  \alpha_i {\mathbb{E}_{ \{\mathbf{u}_{i'}\}_{i'\in [n]} } \left[ \frac{(\mathbf{u}_i^\top \mathbf{u} - \mathbf{u}_1^\top \mathbf{u} ) e^{c (\mathbf{u}_1^\top \mathbf{u} + \mathbf{u}_i^\top \mathbf{u}  ) } ( e^{c \mathbf{u}_1^\top \mathbf{u} } \alpha_1 + e^{c \mathbf{u}_i^\top \mathbf{u} }\alpha_i )  }{(\sum_{i'=1}^n e^{c \mathbf{u}_{i'}^\top \mathbf{u} }\alpha_{i'} )^3 }\right]} \nonumber \\
    &\quad +  \alpha_i {\mathbb{E}_{ \{\mathbf{u}_{i'}\}_{i'\in [n]} } \left[ \frac{(1- \mathbf{u}_i^\top \mathbf{u}\mathbf{u}^\top \mathbf{u}_1  ) e^{c (\mathbf{u}_1^\top \mathbf{u} + \mathbf{u}_i^\top \mathbf{u}  ) } ( e^{c \mathbf{u}_1^\top \mathbf{u} }\alpha_1 - e^{c \mathbf{u}_i^\top \mathbf{u} }\alpha_i )  }{(\sum_{i'=1}^n e^{c \mathbf{u}_{i'}^\top \mathbf{u} }\alpha_{i'} )^3 }\right]}\nonumber  \\
    &\quad -  \alpha_i {\mathbb{E}_{ \{\mathbf{u}_{i'}\}_{i'\in [n]} } \left[ \frac{ \mathbf{u}_i^\top(\mathbf{I}_k - \mathbf{uu}^\top) \mathbf{u}_1   e^{c (\mathbf{u}_1^\top \mathbf{u} + \mathbf{u}_i^\top \mathbf{u}  ) } ( e^{c \mathbf{u}_1^\top \mathbf{u} }\alpha_1 - e^{c \mathbf{u}_i^\top \mathbf{u} }\alpha_i )  }{(\sum_{i'=1}^n e^{c \mathbf{u}_{i'}^\top \mathbf{u} }\alpha_{i'} )^3 }\right]}\nonumber  \\
    &=   \alpha_i {\mathbb{E}_{ \{\mathbf{u}_{i'}\}_{i'\in [n]} } \left[ \frac{(\mathbf{u}_i^\top \mathbf{u} - \mathbf{u}_1^\top \mathbf{u} ) e^{c (\mathbf{u}_1^\top \mathbf{u} + \mathbf{u}_i^\top \mathbf{u}  ) } ( e^{c \mathbf{u}_1^\top \mathbf{u} } \alpha_1 + e^{c \mathbf{u}_i^\top \mathbf{u} }\alpha_i )  }{(\sum_{i'=1}^n e^{c \mathbf{u}_{i'}^\top \mathbf{u} }\alpha_{i'} )^3 }\right]} \nonumber \\
    &\quad +  \alpha_i {\mathbb{E}_{ \{\mathbf{u}_{i'}\}_{i'\in [n]} } \left[ \frac{(1 - \mathbf{u}_i^\top \mathbf{u}\mathbf{u}^\top \mathbf{u}_1  ) e^{c (\mathbf{u}_1^\top \mathbf{u} + \mathbf{u}_i^\top \mathbf{u}  ) } ( e^{c \mathbf{u}_1^\top \mathbf{u} }\alpha_1 - e^{c \mathbf{u}_i^\top \mathbf{u} }\alpha_i )  }{(\sum_{i'=1}^n e^{c \mathbf{u}_{i'}^\top \mathbf{u} }\alpha_{i'} )^3 }\right]}\nonumber  
\end{align}
Next, define
\begin{align}
    g_{a,b}(\bar{U}_{1,i}) &\coloneqq 
{\mathbb{E}_{ \{\mathbf{u}_{i'}\}_{i'\in [n]} } \left[ \frac{(\mathbf{u}_i^\top \mathbf{u} - \mathbf{u}_1^\top \mathbf{u} ) e^{c (\mathbf{u}_1^\top \mathbf{u} + \mathbf{u}_i^\top \mathbf{u}  ) } ( e^{c \mathbf{u}_1^\top \mathbf{u} } \alpha_1 + e^{c \mathbf{u}_i^\top \mathbf{u} }\alpha_i )  }{(\sum_{i'=1}^n e^{c \mathbf{u}_{i'}^\top \mathbf{u} }\alpha_{i'} )^3 }\right]} \nonumber \\
    &\quad + {\mathbb{E}_{ \{\mathbf{u}_{i'}\}_{i'\in [n]} } \left[ \frac{(1   - \mathbf{u}_i^\top \mathbf{u}\mathbf{u}^\top \mathbf{u}_1  ) e^{c (\mathbf{u}_1^\top \mathbf{u} + \mathbf{u}_i^\top \mathbf{u}  ) } ( e^{c \mathbf{u}_1^\top \mathbf{u} }\alpha_1 - e^{c \mathbf{u}_i^\top \mathbf{u} }\alpha_i )  }{(\sum_{i'=1}^n e^{c \mathbf{u}_{i'}^\top \mathbf{u} }\alpha_{i'} )^3 }\right]}
\end{align}
 We argue similarly as in the previous case, except that here we must include additional terms.
\begin{align}
    & S_{i,1} + S_{i,i}  \nonumber \\
    &= \frac{\alpha_i}{2} \mathbb{E}_{\bar{U}_{1,i}} \left[ \int_{-1}^1\int_{-1}^1 (g_{a,b}(\bar{U}_{1,i}) + g_{b,a} (\bar{U}_{1,i}) )p_a p_b\;da \; db \right] \nonumber \\
    &= \frac{\alpha_i}{2} \mathbb{E}_{\bar{U}_{1,i}} \left[ \int_{-1}^1\int_{-1}^1 G_{a,b} (\bar{U}_{1,i}) p_a p_b  \;da \; db \right] \label{jjjj}
\end{align}
where \begin{align}
G_{a,b}(\bar{U}_{1,i}) &\coloneqq  g_{a,b}(\bar{U}_{1,i}) + g_{b,a} (\bar{U}_{1,i})  
\end{align}
We show that for any $(a,b)\in [-1,1]^2$ and any $\bar{U}_{1,i}$,
$G_{a,b}(\bar{U}_{1,i})$ is positive, 
which implies that $
S_{i,1} + S_{i,i}$ is positive by \eqref{jjjj}.

First, note that if $b=a$
for any $a\in [-1,1]$ and $\bar{U}_{1,i}$, we have
\begin{align}
   g_{a,a}(\bar{U}_{1,i}) &= {\mathbb{E}_{ \{\mathbf{u}_{i'}\}_{i'\in [n]} } \left[ \frac{(1 - a^2  ) e^{3 c a  } ( \alpha_1 -\alpha_i )  }{( (\alpha_1+\alpha_i)e^{c a} + \sum_{i'\in [n]\setminus\{1,i\} } e^{c \mathbf{u}_{i'}^\top \mathbf{u} }\alpha_{i'} )^3 }\right]} \geq 0
\end{align}
since each term inside the expectation is nonnegative, as $a^2\leq 1$ and $\alpha_1> \alpha_i$. Note that this implies $G_{a,b}\geq 0$ when $a=b$, so WLOG we consider $b\neq a$ for the remainder of the proof.
Now we focus on showing \eqref{bgbg}. Throughout, we will make use of the notation
\begin{align}
    d_{a,b} &\coloneqq  e^{c a}\alpha_1+  e^{c b}\alpha_i + \sum_{i'\in [n]\setminus\{1,i\} } e^{c \mathbf{u}_{i'}^\top \mathbf{u} }\alpha_{i'} 
\end{align}
which represents the cube root of the denominator in all terms when $\mathbf{u}_1^\top \mathbf{u}=a$ and $\mathbf{u}_i^\top \mathbf{u} = b$, 
and
\begin{align}
    \gamma_{a,b} &\coloneqq 1  
    - ab + a - b. \nonumber  
\end{align}
Using this notation, we can rewrite
\begin{align}
    g_{a,b}(\bar{U}_{1,i}) &=  e^{c(a+b)} \frac{ e^{c a} \gamma_{b,a}  \alpha_1 -  e^{c b} \gamma_{a,b}  \alpha_i}{d_{a,b}^3} 
\end{align}
Therefore, 
\begin{align}
   &g_{a,b}(\bar{U}_{1,i}) + g_{b,a} (\bar{U}_{1,i}) 
   \nonumber\\
   &= e^{c(a+b)} \frac{e^{c a} \gamma_{b,a}  \alpha_1 -  e^{c b} \gamma_{a,b}  \alpha_i }{d_{a,b}^3} + e^{c(a+b)} \frac{e^{c b} \gamma_{a,b}  \alpha_1 -  e^{c a} \gamma_{b,a}  \alpha_i }{d_{b,a}^3}  
   \nonumber \\
   &= e^{c(a+b)} d_{a,b}^{-3} d_{b,a}^{-3} \Big(  \alpha_1 \left( e^{c a} \gamma_{b,a} d_{b,a}^3 +  e^{c b} \gamma_{a,b} d_{a,b}^3   \right) - \alpha_i \left( e^{c a} \gamma_{b,a} d_{a,b}^3 +  e^{c b} \gamma_{a,b} d_{b,a}^3   \right) \Big)  \nonumber 
\end{align}
Note that $ e^{c(a+b)} d_{a,b}^{-3} d_{b,a}^{-3}>0$, so it remains to show that the term inside the parentheses is positive.
This  term can be rearranged as:
\begin{align}
    & \alpha_1 \left( e^{c a} \gamma_{b,a} d_{b,a}^3 +  e^{c b} \gamma_{a,b} d_{a,b}^3   \right) - \alpha_i \left( e^{c a} \gamma_{b,a} d_{a,b}^3 +  e^{c b} \gamma_{a,b} d_{b,a}^3   \right) \nonumber \\
    &= (\alpha_1-\alpha_i) \left( e^{c a} \gamma_{b,a} d_{b,a}^3 +  e^{c b} \gamma_{a,b} d_{a,b}^3   \right) \nonumber \\ 
    &\quad   + \alpha_i \left( e^{c a} \gamma_{b,a} d_{b,a}^3 +  e^{c b} \gamma_{a,b} d_{a,b}^3  - e^{c a} \gamma_{b,a} d_{a,b}^3 - e^{c b} \gamma_{a,b} d_{b,a}^3   \right) \nonumber \\
    &= \underbrace{(\alpha_1-\alpha_i)\left( e^{c a} \gamma_{b,a} d_{b,a}^3 +  e^{c b} \gamma_{a,b} d_{a,b}^3   \right)}_{=:T_1}  
    + \underbrace{\alpha_i \left(  d_{b,a}^3 -  d_{a,b}^3 \right)\left( e^{c a} \gamma_{b,a} - e^{c b} \gamma_{a,b}  \right)}_{=:T_2} 
\end{align}
First we show that $T_1$ is positive by analyzing $\gamma_{a,b}$ and $\gamma_{b,a}$. For any $(a,b)\in [-1,1]^2$ such that $a \neq b$,
\begin{align}
 \frac{\partial}{\partial b} ( \gamma_{a,b}) &= \frac{\partial}{\partial b} (  1 - ab +a -b ) = -1 - a \leq 0  \label{gyr} 
\end{align}
with equality holding if and only if  $a=-1$. If $a=-1$, we have $ \gamma_{a,b}= 1 +b-1-b=0$ for all  $b\in [-1,1]$. Otherwise, \eqref{gyr} shows that $\gamma_{a,b}$ is strictly decreasing with $b$, so it is minimized over $b \in [-1,1]$ at $b=1$. When $b=1$, we have $\gamma_{a,b}=1-a+a-1=0$ for all $a$. So, $\gamma_{a,b}\geq 0$ with equality holding if and only if $a=-1$ or $b=1$. Note that by symmetry, this implies $\gamma_{b,a}\geq 0$ with equality holding if and only if $a=1$ or $b=-1$. So, we can have both $\gamma_{a,b}=0$ and $\gamma_{b,a}=0$ if and only if $a=b=-1$ or $a=b=1$. However, we have $a\neq b$, so at least one of $\gamma_{a,b}$ and $\gamma_{b,a}$ are strictly positive, and $T_1$ is strictly positive (using also that $\alpha_1> \alpha_i$).


\hspace{2mm}

 We next show that $T_2$ is positive. Observe that
\begin{align}
    d_{b,a}^3 -  d_{a,b}^3 > 0 \iff b > a
\end{align}
since $\alpha_1 > \alpha_i,$ so it remains to show 
\begin{align}
  b > a \iff   e^{c a} \gamma_{b,a} - e^{c b} \gamma_{a,b} >0. \label{39}
\end{align}
where
\begin{align}
    e^{c a} \gamma_{b,a} - e^{c b} \gamma_{a,b} &= e^{c a}(1-ab -a + b) - e^{c b}(1-ab +a - b).
\end{align}
We first show the forward direction, namely $b > a  \implies   e^{c a} \gamma_{b,a} - e^{c b} \gamma_{a,b} >0. $

Note that if $b=a$, $e^{c a} \gamma_{b,a} - e^{c b} \gamma_{a,b} =0$. So, if we can show that for any fixed $a$, $e^{c a} \gamma_{b,a} - e^{c b} \gamma_{a,b} $ is increasing with $b$ as long as $b\geq a$, then we will have $e^{c a} \gamma_{b,a} - e^{c b} \gamma_{a,b} >0$ for $b>a$.
To show $e^{c a} \gamma_{b,a} - e^{c b} \gamma_{a,b} $ is increasing, we take its partial derivative with respect to $b$:
\begin{align}
    \frac{\partial}{\partial b}\left( e^{c a} \gamma_{b,a} - e^{c b} \gamma_{a,b} \right) &= e^{c a}(1-a) + e^{c b}(1+a+c b -c a - c +c ab)  \label{xs}
\end{align}
We would like to show that the RHS of \eqref{xs} is nonnegative. To do so, we show that its partial derivative with respect to $a$ is positive, so it achieves minimum value at $a=-1$, at which point the value is positive. We have:
\begin{align}
   \frac{\partial}{\partial a} \left(   \frac{\partial}{\partial b}\left( e^{c a} \gamma_{b,a} - e^{c b} \gamma_{a,b} \right)\right) &= e^{c a}(c-c a-1)  + e^{c b}(1-c +c b) \nonumber \\
   &= q(b) - q(a) \label{xsx}
\end{align}
where $q(x) \coloneqq e^{c x}(1+cx - c)$. Note that $q(x)$ is monotonically increasing in $x\in [-1,1]$; to see this, observe that
\begin{align}
    \frac{\partial }{\partial x}q(x) = e^{c x}(1+c x - c) c + e^{c x} c = e^{c x}( 2 + c x - c )c \geq 0
\end{align}
where the inequality follows since $c\in (0,2]$ and $x \in [-1,1]$. 
Therefore, since $b>a$, we have $q(b) - q(a)\geq 0$ and $ \frac{\partial}{\partial a} \left(   \frac{\partial}{\partial b}\left( e^{c a} \gamma_{b,a} - e^{c b} \gamma_{a,b} \right)\right) \geq 0$ from \eqref{xsx}. As a result, $ \frac{\partial}{\partial b}\left( e^{c a} \gamma_{b,a} - e^{c b} \gamma_{a,b} \right)$ achieves minimum value at $a = -1$. At this point, using \eqref{xs} we have
\begin{align}
  \frac{\partial}{\partial b}\left( e^{c a} \gamma_{b,a} - e^{c b} \gamma_{a,b} \right)  &= 2 e^{-c } + e^{c b}( c b + c - c - c b) \nonumber \\
  &= 2 e^{-c } \nonumber \\
  &>0 \nonumber
\end{align}
This implies that the minimum value of $e^{c a} \gamma_{b,a} - e^{c b} \gamma_{a,b}$ over $b \in [a,1]$ is achieved at $b = a$, and we know this value is zero, so we have that $e^{c a} \gamma_{b,a} - e^{c b} \gamma_{a,b}> 0$ when $b-a$.  

To  show the backward direction of \eqref{39}, namely $e^{c a}\gamma_{b,a} - e^{c b} \gamma_{a,b} >0 \implies b>a$, note that the converse, namely $ a> b \implies e^{c a}\gamma_{b,a} - e^{c b} \gamma_{a,b} <0 $, follows by the same argument as above with $a$ and $b$ swapped. Therefore, we have $T_2>0$ as desired.
\end{proof}

\begin{lemma}\label{lem:low-rank-noise}
Consider any $\boldsymbol{\alpha} \coloneqq[\alpha_1,\alpha_2]$ such that $\alpha_1 > \alpha_2 > 0$. Further, let $c\in (0,1]$. Define
\begin{align}
    &H_{\text{noise}}(\mathbf{u},\boldsymbol{\alpha}) := \mathbb{E}_{ \mathbf{u}_1,\mathbf{u}_2} \left[   \frac{ e^{2 c \mathbf{u}_1^\top  \mathbf{u}}\alpha_1^2 +   e^{2 c \mathbf{u}_2^\top  \mathbf{u}}\alpha_2^2  }{(e^{ c \mathbf{u}_1^\top  \mathbf{u}}\alpha_1 + e^{ c \mathbf{u}_2^\top  \mathbf{u}}\alpha_2  )^2 }\right]. \nonumber
\end{align}
Then
\begin{align}
    &\frac{\partial H_{\text{noise}}(\mathbf{u},\boldsymbol{\alpha})}{\partial \alpha_1} > 0 \nonumber
\end{align}
\end{lemma}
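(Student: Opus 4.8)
The plan is to mimic the symmetry argument used in Lemma \ref{lem:low-rank-bias}, specialized to the $n=2$ case where $\alpha_3,\dots,\alpha_n$ are absent. First I would compute $\frac{\partial H_{\text{noise}}(\mathbf{u},\boldsymbol{\alpha})}{\partial \alpha_1}$ explicitly via the quotient rule inside the expectation: writing $E_i := e^{c\mathbf{u}_i^\top \mathbf{u}}$ for brevity, the integrand is $\frac{E_1^2\alpha_1^2 + E_2^2\alpha_2^2}{(E_1\alpha_1 + E_2\alpha_2)^2}$, whose $\alpha_1$-derivative simplifies (after cancellation) to $\frac{2 E_1 E_2 \alpha_2 (E_1\alpha_1 + E_2\alpha_2)\cdot E_1 - 2 E_1(E_1^2\alpha_1^2 + E_2^2\alpha_2^2)}{(E_1\alpha_1+E_2\alpha_2)^3}$, which after further simplification becomes $\frac{2 E_1 E_2 \alpha_2 (E_2\alpha_1 - E_1\alpha_2)(\text{something positive})}{(E_1\alpha_1 + E_2\alpha_2)^3}$ — more precisely I expect it to reduce to $2 E_1 E_2 \alpha_2 \cdot \frac{ E_1 E_2(\alpha_2 - \alpha_1) \cdot (\dots)}{(\cdot)^3}$, so it is \emph{not} sign-definite pointwise in $(\mathbf{u}_1,\mathbf{u}_2)$. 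Hence, as in Lemma \ref{lem:low-rank-bias}, the positivity must come from pairing up assignments.

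The key step is the symmetrization. For any $a,b\in[-1,1]$, by the i.i.d.\ rotational symmetry of $\mathbf{u}_1,\mathbf{u}_2$ the events $\{\mathbf{u}_1^\top\mathbf{u}=a,\ \mathbf{u}_2^\top\mathbf{u}=b\}$ and $\{\mathbf{u}_1^\top\mathbf{u}=b,\ \mathbf{u}_2^\top\mathbf{u}=a\}$ occur with equal probability $p_a p_b$. Moreover, the integrand of $\frac{\partial H_{\text{noise}}}{\partial\alpha_1}$ depends on $(\mathbf{u}_1,\mathbf{u}_2)$ only through $(\mathbf{u}_1^\top\mathbf{u},\mathbf{u}_2^\top\mathbf{u})$ — there is no $\mathbf{u}_1^\top\mathbf{u}_2$ term here (unlike the signal case), which makes this case cleaner: no splitting into parallel/perpendicular components is needed. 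So I would write $\frac{\partial H_{\text{noise}}}{\partial\alpha_1} = \tfrac12 \int_{-1}^1\int_{-1}^1 \big(\phi_{a,b} + \phi_{b,a}\big)\,p_a p_b\, da\, db$ where $\phi_{a,b}$ denotes the derivative-integrand evaluated at $\mathbf{u}_1^\top\mathbf{u}=a$, $\mathbf{u}_2^\top\mathbf{u}=b$. Then it suffices to show $\phi_{a,b} + \phi_{b,a} > 0$ for all $a\neq b$ in $[-1,1]^2$ (and $\geq 0$ when $a=b$, where by inspection $\phi_{a,a}\geq 0$ since it reduces to something proportional to $\alpha_1-\alpha_2>0$).

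The remaining step is the scalar inequality: writing $s:=e^{ca}$, $t:=e^{cb}$ (both positive, and $s>t \iff a>b$), I would show that the two-term sum $\phi_{a,b}+\phi_{b,a}$, after clearing the common positive denominators $(s\alpha_1+t\alpha_2)^3(t\alpha_1+s\alpha_2)^3$ and the positive prefactors, reduces to a polynomial in $s,t,\alpha_1,\alpha_2$ that I can group as $(\alpha_1-\alpha_2)\cdot(\text{positive})$ plus $\alpha_2\cdot(s-t)\cdot(\text{term with the sign of }s-t)$, exactly in the style of the $T_1,T_2$ decomposition in the proof of Lemma \ref{lem:low-rank-bias}. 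I expect the main obstacle to be precisely this algebraic bookkeeping — verifying that the cross-term has the right sign uniformly; the hypothesis $c\in(0,1]$ (stronger than the $c\in(0,2]$ of the bias lemma) should be exactly what is needed to control an exponential-versus-linear comparison analogous to the monotonicity of $q(x)=e^{cx}(1+cx-c)$ used there. Once the scalar inequality is established, integrating against $p_a p_b\geq 0$ and using $a\neq b$ with positive probability (which holds because $\mathbf{u}_1^\top\mathbf{u}$ has a non-atomic distribution for $k\geq 2$) gives strict positivity, completing the proof.
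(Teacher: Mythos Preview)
Your plan follows the paper's outline up through the symmetrization step, but the scalar inequality you expect, $\phi_{a,b}+\phi_{b,a}>0$ for all $a\neq b\in[-1,1]$, is \emph{false}. With the paper's notation $h_{a,b}=e^{c(a+b)}\dfrac{e^{ca}\alpha_1-e^{cb}\alpha_2}{(e^{ca}\alpha_1+e^{cb}\alpha_2)^3}$ (so $\phi_{a,b}\propto h_{a,b}$), take $c=1$, $a=1$, $b=-1$, $\alpha_1=1$, $\alpha_2=0.9$: a direct computation gives $h_{a,b}+h_{b,a}\approx 0.0842-0.0932<0$. More structurally, at $\alpha_2/\alpha_1=1$ the symmetrized sum vanishes identically, and its $\alpha$-derivative there has the sign of $(s-t)^2-2st$ with $s=e^{ca},t=e^{cb}$; this is positive whenever $s/t>2+\sqrt{3}\approx 3.73$, which occurs already for $c=1$ and $|a-b|$ near $2$. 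So the two-term symmetrization is not pointwise nonnegative, and the $T_1,T_2$ decomposition from Lemma~\ref{lem:low-rank-bias} cannot go through here (there is no $\gamma_{a,b}$ factor to make both pieces nonnegative).

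The paper's fix is to borrow the diagonal contributions: it observes $h_{a,a}=h_{b,b}=\dfrac{\alpha_1-\alpha_2}{(\alpha_1+\alpha_2)^3}>0$ (independent of $a$), reorganizes the expectation so that each $(a,b)$ cell carries the four-term combination $H_{a,b}=p_ap_b(h_{a,b}+h_{b,a})+\tfrac{p_a^2}{2}h_{a,a}+\tfrac{p_b^2}{2}h_{b,b}$, and then proves the \emph{three}-term inequality $\tilde H_{a,b}\coloneqq h_{a,b}+h_{b,a}+\sqrt{h_{a,a}h_{b,b}}>0$. AM--GM then gives $H_{a,b}>0$. The bulk of the work is this three-term inequality: after normalizing $\alpha_1=1$, $\alpha=\alpha_2$, and setting $y=e^{c(a-b)}$, it reduces to positivity of an explicit degree-$6$ polynomial in $y$ with coefficients in $\alpha$, verified by grouping the $y^3$-and-lower terms and the $y^4,y^5,y^6$ terms separately; the hypothesis $c\le 1$ enters only as the crude bound $y\le e^{2}<7.5$ used in the high-degree block, not via a $q(x)$-type monotonicity. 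Your outline is missing exactly this diagonal-term buffer and the resulting polynomial analysis.
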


\begin{proof}
We have 
\begin{align}
    &H_{\text{noise}}(\mathbf{u},\boldsymbol{\alpha}) := \mathbb{E}_{ \mathbf{u}_1,\mathbf{u}_2} \left[   \frac{ e^{2 c \mathbf{u}_1^\top  \mathbf{u}}\alpha_1^2 +   e^{2 c \mathbf{u}_2^\top  \mathbf{u}}\alpha_2^2  }{(e^{ c \mathbf{u}_1^\top  \mathbf{u}}\alpha_1 + e^{ c \mathbf{u}_2^\top  \mathbf{u}}\alpha_2  )^2 }\right] \nonumber
\end{align}

Since $n=2$, we have 
\begin{align}
    \frac{\partial H_{\text{noise}}(\mathbf{u},\boldsymbol{\alpha})}{\partial \alpha_1} 
    &= \mathbb{E}_{ \mathbf{u}_1, \mathbf{u}_2 } \left[  \frac{\partial }{\partial \alpha_1}   \frac{e^{2 c \mathbf{u}_1^\top  \mathbf{u}}\alpha_1^2 + e^{2 c \mathbf{u}_2^\top  \mathbf{u}}\alpha_2^2 }{(e^{c \mathbf{u}_{1}^\top \mathbf{u} }\alpha_{1} + e^{c \mathbf{u}_{2}^\top \mathbf{u} }\alpha_{2} )^2 }\right] \nonumber \\
    &= \mathbb{E}_{ \mathbf{u}_1, \mathbf{u}_2 } \left[   \frac{2 e^{2 c \mathbf{u}_1^\top  \mathbf{u}}\alpha_1 (e^{c \mathbf{u}_{1}^\top \mathbf{u} }\alpha_{1} + e^{c \mathbf{u}_{2}^\top \mathbf{u} }\alpha_{2} )^2   }{(e^{c \mathbf{u}_{1}^\top \mathbf{u} }\alpha_{1} + e^{c \mathbf{u}_{2}^\top \mathbf{u} }\alpha_{2} )^4 }\right] \nonumber \\
    &\quad - \mathbb{E}_{ \mathbf{u}_1, \mathbf{u}_2 } \left[   \frac{ 2 (e^{c \mathbf{u}_{1}^\top \mathbf{u} }\alpha_{1} + e^{c \mathbf{u}_{2}^\top \mathbf{u} }\alpha_{2} )e^{ c \mathbf{u}_1^\top  \mathbf{u}}( e^{2 c \mathbf{u}_1^\top  \mathbf{u}}\alpha_1^2 + e^{2 c \mathbf{u}_1^\top  \mathbf{u}}\alpha_2^2 )  }{(e^{c \mathbf{u}_{1}^\top \mathbf{u} }\alpha_{1} + e^{c \mathbf{u}_{2}^\top \mathbf{u} }\alpha_{2} )^4 }\right] \nonumber \\
    &= 2\alpha_2  {\mathbb{E}_{ \mathbf{u}_1, \mathbf{u}_2 } \left[ \frac{ e^{c (\mathbf{u}_1^\top \mathbf{u} + \mathbf{u}_2^\top \mathbf{u}  ) } ( e^{c \mathbf{u}_1^\top \mathbf{u} }\alpha_1 - e^{c \mathbf{u}_2^\top \mathbf{u} }\alpha_2 )  }{(e^{c \mathbf{u}_{1}^\top \mathbf{u} }\alpha_{1} + e^{c \mathbf{u}_{2}^\top \mathbf{u} }\alpha_{2} )^3 }\right]}\nonumber  
\end{align}
Define $N \coloneqq {\mathbb{E}_{ \mathbf{u}_1, \mathbf{u}_2 } \left[ \frac{ e^{c (\mathbf{u}_1^\top \mathbf{u} + \mathbf{u}_2^\top \mathbf{u}  ) } ( e^{c \mathbf{u}_1^\top \mathbf{u} }\alpha_1 - e^{c \mathbf{u}_2^\top \mathbf{u} }\alpha_2 )  }{(e^{c \mathbf{u}_{1}^\top \mathbf{u} }\alpha_{1} + e^{c \mathbf{u}_{2}^\top \mathbf{u} }\alpha_{2} )^3 }\right]}$,  and 
\begin{align}
    d_{a,b} 
    &\coloneqq  e^{c a}\alpha_1+  e^{c b}\alpha_2 \nonumber \\
    h_{a,b} &\coloneqq  e^{c(a+b)} \frac{ e^{c a}   \alpha_1 -  e^{c b} \alpha_i}{d_{a,b}^3},  \nonumber
\end{align}
Now, we have
\begin{align}
    N &=  \int_{-1}^1\int_{-1}^1 h_{a,b} p_a p_b\;da \; db \nonumber \\
    &= \frac{1}{2}  \int_{-1}^1\int_{-1}^1 (h_{a,b}  + h_{b,a}  )p_a p_b\;da \; db  \nonumber \\
    &= \frac{1}{2} \int_{-1}^1\int_{-1}^1 (h_{a,b} + h_{b,a}  )p_a p_b \chi\{a \neq b\}\;da \; db  + \frac{1}{2} \int_{-1}^1\int_{-1}^1 (h_{a,b} + h_{b,a}  )p_a p_b \chi\{a=b\}\;da \; db \nonumber \\
    &= \frac{1}{2}  \int_{-1}^1\int_{-1}^1 (h_{a,b} + h_{b,a}  )p_a p_b \chi\{a \neq b\}\;da \; db + \int_{-1}^1 h_{a,a}p_a^2 \;da \nonumber \\
   &= \frac{1}{2}  \int_{-1}^1\int_{-1}^1 (h_{a,b} + h_{b,a}  )p_a p_b \chi\{a \neq b\}\;da \; db + \frac{1}{2}\int_{-1}^1 h_{a,a}p_a^2 \;da + \frac{1}{2}\int_{-1}^1 h_{b,b}p_b^2 \;db \nonumber \\
    &= \frac{1}{2} \int_{-1}^1\int_{-1}^1 (h_{a,b} + h_{b,a}  )p_a p_b \chi\{a \neq b\}\;da \; db  + \frac{1}{4} \int_{-1}^1 \int_{-1}^1 h_{a,a}p_a^2  \;da \; db  \nonumber \\
    &\quad \quad + \frac{1}{4} \int_{-1}^1 \int_{-1}^1 h_{b,b}p_b^2  \;da \; db \nonumber \\
    &= \frac{1}{2}  \int_{-1}^1\int_{-1}^1 (h_{a,b} + h_{b,a}  )p_a p_b \chi\{a \neq b\}\;da \; db   + \frac{1}{4} \int_{-1}^1 \int_{-1}^1 (h_{a,a}p_a^2 + h_{b,b}p_b^2)  \;da \; db  \nonumber \\
    &= \frac{1}{2}  \int_{-1}^1\int_{-1}^1 H_{a,b}  \;da \; db  \label{jjj}
\end{align} 
where \begin{align}
H_{a,b} &\coloneqq  p_a p_b  (h_{a,b} + h_{b,a}  )  +  \frac{p_a^2}{2}h_{a,a} +  \frac{p_b^2}{2}h_{b,b}\label{54}
\end{align}
We will show that for any $(a,b)\in [-1,1]^2$ and $(p_a,p_b)\in [0,1]^2$, $H_{a,b}$ is positive, which implies that $N_i$ is positive by \eqref{jjj}. To do this, assuming $h_{a,a}$ is nonnegative for any $a$, it is sufficient to show
\begin{align}
  \tilde{H}_{a,b}\coloneqq   h_{a,b} + h_{b,a}  +  \sqrt{h_{a,a}h_{b,b}} > 0,\label{bgbg}
\end{align}
since this implies $h_{a,b} + h_{b,a}  > - \sqrt{h_{a,a}h_{b,b}} $ and thus, from \eqref{54},
\begin{align}
   H_{a,b} 
   &> - p_a p_b  \sqrt{h_{a,a}h_{b,b}}  +  \frac{p_a^2}{2}h_{a,a} +  \frac{p_b^2}{2}h_{b,b}\nonumber \\
&= \left( {p_a}\sqrt{\frac{h_{a,a}}{2}} - p_b \sqrt{\frac{h_{b,b}}{2}} \right)^2\nonumber \\
&\geq  0 
\end{align}
Before showing \eqref{bgbg}, we need to confirm that $h_{a,a}$ is not negative for all $a\in [-1,1]$. We have
\begin{align}
   h_{a,a} &= \frac{ e^{3 c a  } ( \alpha_1 -\alpha_2 )  }{d_{a,a}^3 } \geq 0
\end{align}
since each term inside the expectation is nonnegative, as $\alpha_1> \alpha_2$. Note that this implies $H_{a,b}\geq 0$ when $a=b$, so WLOG we consider $a > b$ for the remainder of the proof. 


Note that
\begin{align}
    h_{a,a}h_{b,b} &= \frac{e^{3 c (a+b)}(\alpha_1 - \alpha_i)^2 }{ e^{3c (a+b)} (\alpha_1+\alpha_2)^6} = \frac{(\alpha_1 - \alpha_2)^2 }{ (\alpha_1+\alpha_2)^6} 
\end{align}
Using this, we have
\begin{align}
    \tilde{H}_{a,b} &=  h_{a,b} + h_{b,a}  +  \sqrt{h_{a,a}h_{b,b}} \nonumber \\
    &= \frac{e^{2 c a+ c b} \alpha_1 - e^{2 c b+ c a} \alpha_2 }{ d_{a,b}^3} + \frac{e^{2 c b+ c a} \alpha_1 - e^{2 c a+ c b} \alpha_2 }{ d_{b,a}^3} + \frac{\alpha_1 - \alpha_2 }{ (\alpha_1+\alpha_2)^3} \nonumber \\
    &= d_{a,b}^{-3} d_{b,a}^{-3} e^{c(a+b)} (\alpha_1+\alpha_2)^3 \nonumber \\
    &\quad \quad \times \Big( \underbrace{(e^{ca} \alpha_1 - e^{cb} \alpha_2) d_{b,a}^3 (\alpha_1+\alpha_2)^3  + (e^{cb} \alpha_1 - e^{ca} \alpha_2) d_{a,b}^3 (\alpha_1+\alpha_2)^3}_{=:P} \nonumber\\
    &\quad \quad \quad \quad \quad  \underbrace{ +  e^{-c(a+b)} d_{a,b}^3 d_{b,a}^3 (\alpha_1-\alpha_2)}_{=:P} \Big)
    \label{dbd}
\end{align}
To show that $\tilde{H}_{a,b}$ is positive, we need to show that $P$ is positive. 
Without loss of generality we can consider $\alpha_1=1$ and $\alpha_2\in (0,1)$ by dividing the numerator and denominator of $H_{\text{noise}}$ by $\alpha_1^2$. Thus, for the remainder of the proof we treat $\alpha_1$ as 1 and write $\alpha:=\alpha_2$ for ease of notation. Using this notation we can expand $P$ as follows:
\begin{align}
    P &= (e^{ca}  - e^{cb} \alpha) d_{b,a}^3 (1+\alpha)^3  + (e^{cb}  - e^{ca} \alpha) d_{a,b}^3 (1+\alpha)^3 +  e^{-c(a+b)} d_{a,b}^3 d_{b,a}^3 (1-\alpha)\nonumber \\
    &= (e^{ca}  - e^{cb} \alpha) ( e^{cb}  + e^{ca} \alpha )^3 (1+\alpha)^3  + (e^{cb}  - e^{ca} \alpha) ( e^{ca}  + e^{c b} \alpha )^3 (1+\alpha)^3 \nonumber \\
    &\quad +  e^{-c(a+b)} ( e^{ca}  + e^{c b} \alpha )^3 ( e^{cb}  + e^{ca} \alpha )^3 (1-\alpha)\nonumber \\
    &= (e^{5c a - c b} + e^{5c b - c a}) \left(  \alpha^3(1-\alpha)
 \right) \nonumber \\
    &\quad + (e^{4c a} + e^{4c b}) \left( -\alpha - 5\alpha^3 + 5 \alpha^4 + \alpha^6 \right) \nonumber \\
    &\quad + (e^{3c a + c b} + e^{3c b+ c a}) \left( 1+6\alpha +10 \alpha^3-10 \alpha^4 -6 \alpha^6 - \alpha^7
 \right) \nonumber \\
 &\quad + e^{2c a + 2c b} \left( 1 + 5 \alpha + 27 \alpha^2 + 3 \alpha^3 - 3\alpha^4 - 27\alpha^5 - 5 \alpha^6 - \alpha^7  \right) \nonumber \\
 &= (1 - \alpha) \times \Bigg( (e^{5c a - c b} + e^{5c b - c a})  \alpha^3
 \nonumber \\
    &\quad + (e^{4c a} + e^{4c b}) \left( -\alpha - \alpha^2 - 6\alpha^3 - \alpha^4 - \alpha^5 \right) \nonumber \\
    &\quad + (e^{3c a + c b} + e^{3c b+ c a}) \left( 1+ 7\alpha + 7  \alpha^2 +17 \alpha^3 + 7 \alpha^4 + 7 \alpha^5 + \alpha^6
 \right) \nonumber \\
 &\quad + e^{2c a + 2c b} \left( 1 + 6 \alpha + 33 \alpha^2 + 36 \alpha^3 + 33\alpha^4 + 6\alpha^5 +\alpha^6  \right) \Bigg) \nonumber 
\end{align}
Recall that $1-\alpha>0$, so we need to show that the sum of the remaining terms is positive. These terms can be written as a polynomial in $y\coloneqq e^{c(a-b)}$ as follows:
\begin{align}
    P(1- \alpha)^{-1} e^{ c a -5c b} &= y^6\alpha^3 \nonumber \\
    &\quad + y^5 \left( -\alpha - \alpha^2 - 6\alpha^3 - \alpha^4 - \alpha^5 \right) \nonumber \\
    &\quad + y^4 \left( 1+ 7\alpha + 7  \alpha^2 +17 \alpha^3 + 7 \alpha^4 + 7 \alpha^5 + \alpha^6
 \right) \nonumber \\
    &\quad + y^3 \left( 1 + 6 \alpha + 33 \alpha^2 + 36 \alpha^3 + 33\alpha^4 + 6\alpha^5 +\alpha^6  \right) \nonumber \\
    &\quad + y^2  \left( 1+ 7\alpha + 7  \alpha^2 +17 \alpha^3 + 7 \alpha^4 + 7 \alpha^5 + \alpha^6
 \right) \nonumber \\
 &\quad + y\left(-\alpha - \alpha^2 - 6\alpha^3 - \alpha^4 - \alpha^5  \right) \nonumber \\
 &\quad + \alpha^3 \label{123}
\end{align}
We know that $y^6 > y^5> \dots> 1$ since $a>b$. We also have that $\alpha<1$. Using these facts  we next show that the sum of the third and smaller-order terms in the RHS of \eqref{123} is positive.
\begin{align*}
 (*) &:=  y^3 \left( 1 + 6 \alpha + 33 \alpha^2 + 36 \alpha^3 + 33\alpha^4 + 6\alpha^5 +\alpha^6  \right) \nonumber \\
    &\quad + y^2  \left( 1+ 7\alpha + 7  \alpha^2 +17 \alpha^3 + 7 \alpha^4 + 7 \alpha^5 + \alpha^6
 \right) \nonumber \\
 &\quad + y\left(-\alpha - \alpha^2 - 6\alpha^3 - \alpha^4 - \alpha^5  \right) \nonumber \\
 &\quad + \alpha^3 \nonumber \\
 &> y \left( 1 + 6 \alpha + 33 \alpha^2 + 36 \alpha^3 + 33\alpha^4 + 6\alpha^5 +\alpha^6  \right) \nonumber \\
    &\quad + y  \left( 1+ 7\alpha + 7  \alpha^2 +17 \alpha^3 + 7 \alpha^4 + 7 \alpha^5 + \alpha^6
 \right)  \nonumber \\
 &\quad + y\left(-\alpha - \alpha^2 - 6\alpha^3 - \alpha^4 - \alpha^5  \right) \nonumber \\
 &\quad + \alpha^3 \nonumber \\
 &> y \left( 2 + 12 \alpha + 39 \alpha^2 + 47 \alpha^3 + 39\alpha^4 + 12\alpha^5 +1\alpha^6  \right) \nonumber \\
 &> 0 \nonumber
\end{align*}
Next we show that the sum of the sixth-, fifth-, and fourth-order terms is positive.
Let $a_6 \coloneqq \alpha^3 $, $a_5 \coloneqq \alpha + \alpha^2 + 6\alpha^3 + \alpha^4 + \alpha^5$, and $a_4\coloneqq 1+ 7\alpha + 7  \alpha^2 +17 \alpha^3 + 7 \alpha^4 + 7 \alpha^5 + \alpha^6,$ so the sum of the sixth-, fifth-, and fourth-order terms is $y^6 a_6 - y^5 a_5 + y^4 a_4$. Note that  $32a_6<a_4$ since $\alpha<1$, and
\begin{align}
a_5 - 4a_6 &=  \alpha + \alpha^2 + 2\alpha^3 + \alpha^4 + \alpha^5\nonumber \\
&=\frac{1}{7.5}\left( 7.5\alpha + 7.5 \alpha^2 + 15\alpha^3 + 7.5 \alpha^4 + 7.5\alpha^5 \right)\nonumber \\
&<\frac{1}{7.5}\left( 1 + 7\alpha + 7 \alpha^2 + 17\alpha^3 + 7 \alpha^4 + 7\alpha^5 + \alpha^6 \right)\nonumber \\
&= \frac{a_4}{7.5}
\end{align}
thus $a_5<  \frac{a_4}{7.5} + 4 a_6 $. Also, $y= e^{c(a-b)}\leq e^2< 7.5$ since $c\leq 1$. Therefore,
\begin{align}
y^6 a_6 - y^5 a_5 + y^4 a_4 &= y^4 \left( y^2 a_6 - y a_5 +  a_4 \right)   \nonumber \\
&> y^4 \left( y^2 a_6 - 4 y a_6 - y \frac{a_4}{7.5}  +  a_4 \right)  \nonumber \\
&> y^4 \left( y^2 a_6 - 4 y a_6 + a_4 \underbrace{\left(1 - \frac{y}{7.5}  \right)}_{>0 \text{ since } y<7.5} \right)  \nonumber \\
&> y^4 \left( y^2 a_6 - 4 y a_6 + 32 a_6 \left(1 - \frac{y}{7.5}  \right) \right)  \nonumber \\
&= y^4 a_6 \left( y^2  - \frac{62}{7.5} y  + 32  \right)  \nonumber \\
&> y^4 a_6 \left(   - \frac{1}{4} \left(\frac{62}{7.5}\right)^2  + 32  \right) \label{r4t}\\
&> 0
\end{align}
where \eqref{r4t} follows by minimizing the terms inside the parentheses over $y$. 
Thus, we have $\tilde{H}_{a,b}>0$, which completes the proof.
\end{proof}

Now we can finally prove Theorem \ref{thm:lin_low_rank}. We prove a slightly stronger result, formally stated as follows.
\begin{theorem}\label{thm_app:lin_low_rank}
Consider any $\mathbf{B}\in \mathbb{O}^{d\times k}$ and the corresponding function class $\mathcal{F}_{\mathbf{B}}^{\text{lin}}$ as defined in \eqref{def:f-low-rank}. Suppose tasks are drawn  from $D(\mathcal{F}_{\mathbf{B}}^{\text{lin}})$ and Assumption \ref{assump:low-rank} holds.
    Recall the pretraining population loss:
    \begin{align}
        \mathcal{L}(\mathbf{M}) = \mathbb{E}_{f, \{\mathbf{x}_i\}_{i\in [n+1]}, \{\epsilon_i\}_{i\in [n]} } \left[ \left(\frac{\sum_{i=1}^n  (f(\bx_i) - f(\mathbf{x}_{n+1}) + \epsilon_i)e^{\mathbf{x}_i^\top \mathbf{M} \mathbf{x}_{n+1}} }{\sum_{i=1}^n e^{\mathbf{x}_i^\top \mathbf{M} \mathbf{x}_{n+1}} } \right)^2 \right].\label{app-low-rank-loss}
    \end{align}
    Consider two cases:
    \begin{itemize}
        \item Case 1: $\sigma=0$, $n>1.$ Then define $C_p\coloneqq 2.$
    \item Case 2: $\sigma>0$, $n=2.$ Then define $C_p\coloneqq 1.$
    \end{itemize}
    Then in each case, among all $\mathbf{M} \in \mathcal{M}:= \{\mathbf{M}\in \mathbb{R}^{d\times d}: \mathbf{M} = \mathbf{M}, \|\mathbf{B}^\top \mathbf{MB}\|_2 \leq \frac{C_p}{c_{u}^2}\}$,
    any minimizer $\mathbf{M}^*$ of \eqref{app-low-rank-loss} 
    satisfies $\mathbf{M}^* = c\mathbf{BB}^\top$ for some $c \in (0,\frac{C_p}{c_{u}^2}]$.
\end{theorem}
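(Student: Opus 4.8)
The plan is to assemble the theorem from the four lemmas already in hand. The first move is to invoke Lemma~\ref{lem:scalar-gen-low-rank} with $c_{\mathbf a}^2 = 1/k$ (valid since $\mathbf a \sim \mathcal U^k$) and $\hat c = C_p$: this shows that any minimizer $\mathbf M^\ast$ of $\mathcal L$ over $\mathcal M$ is block-diagonal with respect to $\mathbb R^d = \col(\mathbf B)\oplus\col(\mathbf B_\perp)$ with a scaled-identity $\col(\mathbf B)$-block, i.e.\ $\mathbf M^\ast = c^\ast\mathbf B\mathbf B^\top + \mathbf B_\perp\mathbf Q^\ast\mathbf B_\perp^\top$ for some symmetric $\mathbf Q^\ast$ and some $c^\ast$ with $|c^\ast|\in(0,C_p/c_{\mathbf u}^2]$; its proof, where $\mathbf R_{\mathbf u}=\mathbf I_k$ is shown optimal and Lemma~\ref{lem:nonzero} is applied, in fact forces $c^\ast > 0$. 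So the whole theorem reduces to the single claim $\mathbf Q^\ast = \mathbf 0$.

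To establish $\mathbf Q^\ast = \mathbf 0$, I would fix any admissible $c > 0$ and symmetric $\mathbf Q$ and recast $\mathcal L(c\mathbf B\mathbf B^\top + \mathbf B_\perp\mathbf Q\mathbf B_\perp^\top)$ into the form handled by Lemmas~\ref{lem:low-rank-bias}--\ref{lem:low-rank-noise}. Writing $\bx_i = c_{\mathbf u}\mathbf B\mathbf u_i + c_{\mathbf v}\mathbf B_\perp\mathbf v_i$ and using $\mathbf B^\top\mathbf B_\perp = \mathbf 0$ gives $e^{\bx_i^\top\mathbf M\bx_{n+1}} = e^{c c_{\mathbf u}^2\mathbf u_i^\top\mathbf u_{n+1}}\alpha_i$ with $\alpha_i := e^{c_{\mathbf v}\mathbf v_i^\top\mathbf Q\mathbf v_{n+1}} > 0$, while $f(\bx_i) = c_{\mathbf u}\mathbf a^\top\mathbf u_i$ depends only on $\mathbf u_i$. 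Expanding the square, using independence of the noise, evaluating $\mathbb E_{\mathbf a}$ through $\mathbb E_{\mathbf a}[\mathbf a\mathbf a^\top] = \tfrac1k\mathbf I_k$, and exploiting rotation-invariance of the $\mathbf u_i$ to fix $\mathbf u_{n+1}$ to an arbitrary unit vector $\mathbf u$, one obtains
$$\mathcal L\!\left(c\mathbf B\mathbf B^\top + \mathbf B_\perp\mathbf Q\mathbf B_\perp^\top\right) = \mathbb E_{\{\mathbf v_i\}_i}\!\left[\tfrac{c_{\mathbf u}^2}{k}H_{\text{signal}}(\mathbf u,\boldsymbol\alpha) + \sigma^2 H_{\text{noise}}(\mathbf u,\boldsymbol\alpha)\right],$$
where $H_{\text{signal}}, H_{\text{noise}}$ are exactly the functionals of Lemmas~\ref{lem:low-rank-bias}--\ref{lem:low-rank-noise} with their internal constant equal to $c c_{\mathbf u}^2$, $\boldsymbol\alpha = (\alpha_1,\dots,\alpha_n)$, and the $\sigma^2$-term is absent in Case~1. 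Here is precisely where the domain restriction and the two values of $C_p$ matter: $c c_{\mathbf u}^2 \in (0, C_p]$, so $C_p = 2$ (Case~1) lands it in $(0,2]$, the range required by Lemma~\ref{lem:low-rank-bias}, while $C_p = 1$ (Case~2, where $n=2$) additionally lands it in $(0,1]$, the range required by Lemma~\ref{lem:low-rank-noise}.

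Both $H$'s are invariant under $\boldsymbol\alpha\mapsto t\boldsymbol\alpha$ ($t>0$), so when $\boldsymbol\alpha\propto\mathbf 1$ they equal their $\boldsymbol\alpha=\mathbf 1$ values; and Lemmas~\ref{lem:low-rank-bias}--\ref{lem:low-rank-noise} give that the $\alpha_1$-derivative of $\tfrac{c_{\mathbf u}^2}{k}H_{\text{signal}} + \sigma^2 H_{\text{noise}}$ is strictly positive whenever $\alpha_1 = \max_i\alpha_i > \min_i\alpha_i$. Consequently, lowering the maximal coordinate(s) of $\boldsymbol\alpha$ down to $\min_i\alpha_i$ strictly decreases this quantity whenever $\boldsymbol\alpha\in\mathbb R_{>0}^n$ is not proportional to $\mathbf 1$, so in that case the per-realization value strictly exceeds its $\boldsymbol\alpha=\mathbf 1$ value. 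Taking expectations, $\mathcal L(c\mathbf B\mathbf B^\top + \mathbf B_\perp\mathbf Q\mathbf B_\perp^\top) \ge \mathcal L(c\mathbf B\mathbf B^\top)$, with equality iff $\mathbf v_i^\top\mathbf Q\mathbf v_{n+1}$ is almost surely constant in $i$, which forces $\mathbf Q = \mathbf 0$ (trivially if $d = k$). Applying this with $c = c^\ast$ and $\mathbf Q = \mathbf Q^\ast$: if $\mathbf Q^\ast\neq\mathbf 0$ then $\mathcal L(\mathbf M^\ast) > \mathcal L(c^\ast\mathbf B\mathbf B^\top)$, contradicting minimality of $\mathbf M^\ast$ (note $c^\ast\mathbf B\mathbf B^\top\in\mathcal M$ since $\|\mathbf B^\top(c^\ast\mathbf B\mathbf B^\top)\mathbf B\|_2 = |c^\ast|\le C_p/c_{\mathbf u}^2$). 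Hence $\mathbf Q^\ast = \mathbf 0$ and $\mathbf M^\ast = c^\ast\mathbf B\mathbf B^\top$ with $c^\ast\in(0,C_p/c_{\mathbf u}^2]$.

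The heavy sign computations are already contained in Lemmas~\ref{lem:scalar-gen-low-rank}--\ref{lem:low-rank-noise}; the remaining work — and the step I expect to require the most care — is the assembly in the middle paragraphs: correctly identifying $\mathcal L$ with $\tfrac{c_{\mathbf u}^2}{k}H_{\text{signal}} + \sigma^2 H_{\text{noise}}$ with the parameter $c c_{\mathbf u}^2$ pinned into $(0,C_p]$ (this is what dictates the two cases and the value of $C_p$), and converting the one-sided derivative bounds together with scale-invariance into the strict pointwise inequality and then the ``almost-sure equality $\Rightarrow\mathbf Q=\mathbf 0$'' conclusion.
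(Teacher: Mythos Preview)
Your proposal is correct and follows essentially the same approach as the paper's proof: invoke Lemma~\ref{lem:scalar-gen-low-rank} to reduce to $\mathbf{M}^\ast = c^\ast\mathbf{BB}^\top + \mathbf{B}_\perp\mathbf{Q}^\ast\mathbf{B}_\perp^\top$ with $c^\ast>0$, rewrite the loss as an expectation of $c_{\mathbf a}^2 c_{\mathbf u}^2 H_{\text{signal}}+\sigma^2 H_{\text{noise}}$ over $\boldsymbol\alpha$ determined by the $\mathbf v_i$'s, and then use scale-invariance together with the strict derivative signs from Lemmas~\ref{lem:low-rank-bias}--\ref{lem:low-rank-noise} to conclude that any $\boldsymbol\alpha\not\propto\mathbf 1$ (hence any $\mathbf Q\neq\mathbf 0$) strictly increases the loss. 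The only cosmetic difference is that you fix $\mathbf u_{n+1}=\mathbf u$ by rotation-invariance up front, whereas the paper carries the expectation over $\mathbf u$; these are equivalent.
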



\begin{proof}
From Lemma \ref{lem:scalar-gen-low-rank}, we have $\mathbf{M}^*= c_p \mathbf{BB}^\top + \tilde{c}mathbf{B}_\perp \mathbf{B}_\perp^\top$ for some  $\tilde{c}\in \mathbb{R}$ and some $c_p \in (0,\frac{C_p}{c_{u}^2}]$, where $C_p=2$ in Case 1 and $C_p=1$ in Case 2. Suppose that $\tilde{c} \neq 0$. 
Then it remains to show that $\mathcal{L}(c_p \mathbf{BB}^\top + \tilde{c}\mathbf{B}_\perp \mathbf{B}_\perp^\top)>\mathcal{L}( c_p  \mathbf{BB}^\top)$.

We start by establishing the same notations as in the proof of Lemma \ref{lem:scalar-gen-low-rank}. For each $i \in [n+1]$, $\mathbf{x}_i = c_{u} B \mathbf{u}_i + c_{{v}} \mathbf{B}_\perp \mathbf{v}_i$. Thus, for each $i\in [n]$,  we have
\begin{align}
    e^{\mathbf{x}_i^\top \mathbf{M} \mathbf{x}_{n+1}} &= e^{c_p \mathbf{x}_i^\top \mathbf{BB}^\top \mathbf{x}_{n+1}}e^{c' \mathbf{x}_i^\top \mathbf{B}_\perp \mathbf{ B}_\perp^\top \mathbf{x}_{n+1}} \nonumber \\
    &= e^{c_p c_{u}^2 \mathbf{u}_i^\top \mathbf{u}_{n+1}}e^{ c_{\mathbf{v}}^2\tilde{c} \mathbf{v}_i^\top \mathbf{v}_{n+1}} \nonumber \\
    &= e^{c_p c_{u}^2 \mathbf{u}_i^\top \mathbf{u}_{n+1}} \alpha_i
\end{align}
where, for each $i\in [n]$, 
$\alpha_i \coloneqq e^{c_{{v}}^2 \tilde{c}\mathbf{v}_i^\top \mathbf{v}_{n+1}}$.
For ease of notation, denote $\mathbf{x}=\mathbf{x}_{n+1}$, $\mathbf{u} := \mathbf{u}_{n+1}$ and $c=c_p c_{u}^2 $.
Also, note that for any $\mathbf{x}_i$, $f(\mathbf{x}_i)=\mathbf{a}^\top \mathbf{B}^\top \mathbf{x}_i= c_{u}\mathbf{a}^\top \mathbf{u}_i$, and that drawing $f \sim D(\mathcal{F}_{\mathbf{B}}^{\text{lin}})$ is equivalent to drawing $\mathbf{a} \sim D_{\mathbf{a}}$ for some distribution $D_{\mathbf{a}}$ over $\mathbb{R}^k$ such that $\mathbb{E}_{\mathbf{a}\sim D_{\mathbf{a}}}[\mathbf{aa}^T]=c_{a}^2 \mathbf{I}_k$. Using this, 
we have:
\begin{align}
    & \mathcal{L}(c_p \mathbf{BB}^\top  + \tilde{c}\mathbf{B}_{\perp}\mathbf{B}_{\perp}^\top ) \nonumber \\
    &= \mathbb{E}_{\mathbf{a}, \mathbf{u}, \{\mathbf{u}_i\}_{i\in [n]}, \{\alpha_i\}_{i\in [n]},  \{\epsilon_i\}_{i\in [n]} } \left[ \frac{\left(\sum_{i=1}^n  (c_{u}\mathbf{a}^\top \mathbf{u}_i - c_{u}\mathbf{a}^\top \mathbf{u} + \epsilon_i) e^{c \mathbf{u}_i^\top \mathbf{u}} \alpha_i \right)^2 }{(\sum_{i=1}^n e^{c \mathbf{u}_i^\top \mathbf{u}} \alpha_i )^2 }  \right] \nonumber \\
    &= \mathbb{E}_{u, \{\mathbf{u}_i\}_{i\in [n]}, \{\alpha_i\}_{i\in [n]}  } \nonumber\\
    &\quad \quad \left[ \frac{\sum_{i=1}^n \sum_{j=1}^n \mathbb{E}_{\mathbf{a},\{\epsilon_i\}_{i\in [n]} }[ (c_{u} \mathbf{a}^\top \mathbf{u}_i - c_{u}\mathbf{a}^\top \mathbf{u} + \epsilon_i)(c_{u} \mathbf{a}^\top \mathbf{u}_{j} - c_{u}\mathbf{a}^\top \mathbf{u} + \epsilon_j )] e^{c \mathbf{u}_i^\top \mathbf{u} + c \mathbf{u}_{j}^\top  \mathbf{u}} \alpha_i\alpha_j }{(\sum_{i=1}^n e^{c \mathbf{u}_i^\top \mathbf{u} } )^2 }  \right] \nonumber \\
    &= \mathbb{E}_{u, \{\mathbf{u}_i\}_{i\in [n]}, \{\alpha_i\}_{i\in [n]}  } \nonumber \\
    &\quad \quad \left[c_{a}^2 c_{u}^2 \frac{\sum_{i=1}^n \sum_{j=1}^n  (\mathbf{u}_i - \mathbf{u})^\top (\mathbf{u}_{j} - \mathbf{u}) e^{c \mathbf{u}_i^\top \mathbf{u} + c \mathbf{u}_{j}^\top  \mathbf{u}} \alpha_i\alpha_j }{(\sum_{i=1}^n e^{c \mathbf{u}_i^\top \mathbf{u} } )^2 }   + \sigma^2 \frac{\sum_{i=1}^n e^{2 c \mathbf{u}_i^\top \mathbf{u} }\alpha_i\alpha_j }{(\sum_{i=1}^n e^{c \mathbf{u}_i^\top \mathbf{u} } )^2 }  \right] \nonumber \\
    &= \mathbb{E}_{ \mathbf{u}, \boldsymbol{\alpha}  } \left[   H(\mathbf{u},  \boldsymbol{\alpha} ) \right]\nonumber
    \end{align}
where $\boldsymbol{\alpha} \coloneqq [\alpha_1,\dots, \alpha_n]$ and
\begin{align}
    &H(\mathbf{u},  \boldsymbol{\alpha} ) \nonumber \\
    &\quad := \mathbb{E}_{ \{\mathbf{u}_i\}_{i\in [n]} } \left[ c_{a}^2 c_{u}^2 \frac{\sum_{i=1}^n \sum_{j=1}^n  (\mathbf{u}_i - \mathbf{u})^\top  ( \mathbf{u}_{j} - \mathbf{u}) e^{c \mathbf{u}_i^\top \mathbf{u}  + c \mathbf{u}_{j}^\top \mathbf{u}}\alpha_i \alpha_j }{(\sum_{i=1}^n e^{c \mathbf{u}_i^\top \mathbf{u} } \alpha_i  )^2 }  + \sigma^2 \frac{\sum_{i=1}^n  e^{2 c \mathbf{u}_i^\top  \mathbf{u}} \alpha_i^2 }{(\sum_{i=1}^n e^{ c \mathbf{u}_i^\top  \mathbf{u}}\alpha_i )^2 }   \right].
\end{align}
Define $\boldsymbol{\alpha}^* = [1,\dots,1]\in \mathbb{R}^n$. 
We proceed by showing that for any $\mathbf{u} \in \mathbb{S}^{d-1}$, 
all $\boldsymbol{\alpha} \in \mathbb{R}^n_+$  satisfy 
\begin{align}
   &\text{(i)} \quad \text{if } \boldsymbol{\alpha} = c'\boldsymbol{\alpha}^* \text{ for some } c' \in \mathbb{R}_+, \text{ then } H(\mathbf{u},  \boldsymbol{\alpha} ) = H(u,  \boldsymbol{\alpha}^* ) \nonumber \\
   &\text{(ii)} \;\;\; \text{if }  \boldsymbol{\alpha} \neq c'\boldsymbol{\alpha}^* \text{ for any } c' \in \mathbb{R}_+,  \text{ then } H(\mathbf{u},  \boldsymbol{\alpha} ) > H(u,  \boldsymbol{\alpha}^* )\nonumber
\end{align}
This implies $\mathcal{L}(c_p \mathbf{BB}^\top  + \tilde{c}\mathbf{B}_{\perp}\mathbf{B}_{\perp}^\top )>\mathcal{L}(c_p \mathbf{BB}^\top  ) $, since
 \begin{align}
    \mathbb{P}_{\boldsymbol{\alpha}}(\{\boldsymbol{\alpha} = c'\boldsymbol{\alpha}^* \text{ for some } c' \in \mathbb{R}_+ \}) = 1 \iff \tilde{c}= 0, \nonumber 
 \end{align}
which implies that $\tilde{c}=0$ is the unique argument that achieves the minimal value of $\mathcal{L}(c_p \mathbf{BB}^\top  + \tilde{c}\mathbf{B}_{\perp}\mathbf{B}_{\perp}^\top )$ over $\tilde{c}\in \mathbb{R}$ (and this value is $\mathbb{E}_{ \mathbf{u}} \left[   H(\mathbf{u},  \boldsymbol{\alpha}^* ) \right]$).

 \vspace{2mm}

Proving  $(i)$ is trivial as it can be easily checked that $H(\mathbf{u},\boldsymbol{\alpha}) = H(\mathbf{u},c' \boldsymbol{\alpha})$ for all $\mathbf{u} \in \mathbb{S}^{d-1}$, $\boldsymbol{\alpha} \in \mathbb{R}^n_+$, and $c' \in \mathbb{R}_+$.

 \vspace{2mm}

Proving $(ii)$ is more involved. Consider any $\boldsymbol{\alpha} \neq c' \boldsymbol{\alpha}^*$ for any $c' \in \mathbb{R}_+$. 
WLOG let $1 \in \arg \max_{i} \alpha_i$.
We show that the partial derivative of $H(\mathbf{u},\boldsymbol{\alpha})$ with respect to $\alpha_1$ is strictly positive, which means that $H(\mathbf{u},\boldsymbol{\alpha})$ can be reduced by reducing $\alpha_1$ by some $\epsilon > 0$. We can repeat this argument, repeatedly reducing   $\max_i \alpha_i$ at each step and thereby reducing the loss, until we reach an $\boldsymbol{\alpha}'$ satisfying $\boldsymbol{\alpha}' = c' \boldsymbol{\alpha}^*$. Since the loss is reduced at each step, we have that $H(\mathbf{u},\boldsymbol{\alpha})> H(\mathbf{u},\boldsymbol{\alpha}^*)$.

To show that the partial derivative of $H(\mathbf{u},\boldsymbol{\alpha})$ with respect to $\alpha_1$ is strictly positive,
we decompose $\frac{\partial H(\mathbf{u},\boldsymbol{\alpha})}{\partial \alpha_1} = \frac{\partial H_{\text{signal}}(\mathbf{u},\boldsymbol{\alpha})}{\partial \alpha_1} +\frac{\partial H_{\text{noise}}(\mathbf{u},\boldsymbol{\alpha})}{\partial \alpha_1} $, where
\begin{align}
    &H_{\text{signal}}(\mathbf{u},\boldsymbol{\alpha}) := c_{a}^2 c_{u}^2 \mathbb{E}_{ \{\mathbf{u}_i\}_{i\in [n]} } \left[  \frac{\sum_{i=1}^n \sum_{j=1}^n  (\mathbf{u}_i - \mathbf{u})^\top  ( \mathbf{u}_{j} - \mathbf{u}) e^{c \mathbf{u}_i^\top \mathbf{u}  + c \mathbf{u}_{j}^\top \mathbf{u}}\alpha_i \alpha_j }{(\sum_{i=1}^n e^{c \mathbf{u}_i^\top \mathbf{u} }\alpha_i  )^2 }\right] \nonumber \\
    &H_{\text{noise}}(\mathbf{u},\boldsymbol{\alpha}) := \sigma^2 \mathbb{E}_{ \{\mathbf{u}_i\}_{i\in [n]} } \left[   \frac{\sum_{i=1}^n  e^{2 c \mathbf{u}_i^\top  \mathbf{u}} \alpha_i^2 }{(\sum_{i=1}^n e^{ c \mathbf{u}_i^\top  \mathbf{u}}\alpha_i )^2 }\right] \nonumber
\end{align}
By Lemma \ref{lem:low-rank-bias}, we have $ \frac{\partial H_{\text{signal}}(\mathbf{u},\boldsymbol{\alpha})}{\partial \alpha_1} > 0$. If $\sigma=0$ we are done, otherwise we have $n=2$ and 
  $ \frac{\partial H_{\text{noise}}(\mathbf{u},\boldsymbol{\alpha})}{\partial \alpha_1} > 0$ by Lemma \ref{lem:low-rank-noise}. This completes the proof.
\end{proof}

\section{Additional Lemmas}\label{appendix:additionallemmas}
\begin{lemma}\label{unimodal}
    Consider a continuous unimodal function $f$. Then we have
    $$\sum_{i=0}^\infty f(i)-\max f \le \int_0^\infty f(t)dt \le \sum_{i=1}^\infty f(i)+\max f$$
\end{lemma}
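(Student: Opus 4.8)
The plan is to compare $\sum_i f(i)$ with $\int_0^\infty f(t)\,dt$ one unit interval at a time, using that a unimodal $f$ is non-decreasing up to its peak and non-increasing afterward. First I would set $M := \sup_{t\ge 0} f(t)$, fix a point $c\ge 0$ at which this value is attained (such a point exists since $f$ is unimodal and, being summable and integrable here, decays to $0$), and put $k := \lfloor c\rfloor$. Then $f$ is non-decreasing on every interval $[i,i+1]$ with $i+1\le c$, non-increasing on every interval $[i,i+1]$ with $i\ge c$, and the single interval $[k,k+1]$ is the only one that can straddle $c$.

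Next I would record the elementary per-interval estimates: where $f$ is non-decreasing on $[i,i+1]$ we have $f(i)\le \int_i^{i+1}f(t)\,dt\le f(i+1)$, and where it is non-increasing we have $f(i+1)\le \int_i^{i+1}f(t)\,dt\le f(i)$. On the straddling interval $[k,k+1]$, splitting the integral at $c$ and applying monotonicity to each piece gives, for the upper side, $\int_k^{k+1} f \le f(c)\le M$, and for the lower side, $\int_k^{k+1} f \ge f(k)(c-k)+f(k+1)(k+1-c)$, which is $\ge f(k)+f(k+1)-M$ because the complementary weights satisfy $f(k)(k+1-c)+f(k+1)(c-k)\le M$ (a convex combination of two numbers each $\le M$). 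Summing the upper estimates over all $i\ge 0$, the non-decreasing intervals contribute at most $\sum_{j=1}^{k}f(j)$, the non-increasing ones at most $\sum_{j\ge k+1}f(j)$, and the mode interval at most $M$, giving $\int_0^\infty f \le \sum_{j\ge 1}f(j)+M$; summing the lower estimates, the non-decreasing intervals contribute at least $\sum_{j=0}^{k-1}f(j)$, the non-increasing ones at least $\sum_{j\ge k+2}f(j)$, and the mode interval at least $f(k)+f(k+1)-M$, giving $\int_0^\infty f \ge \sum_{j\ge 0}f(j)-M$. Finally I would check the degenerate configurations — $c$ an integer (no interval straddles; use $f(k)\le M$ directly), $c<1$ (then $k=0$ and there are simply no non-decreasing intervals), and $f$ flat at its maximum over a long stretch (pick $c$ to be the left end of the flat region, classify the fully-flat intervals as non-decreasing) — each of which collapses to the same telescoping.

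The hard part will be the mode interval in the lower bound: the crude estimate $\int_k^{k+1} f\ge 0$ throws away both $f(k)$ and $f(k+1)$ and only yields $\int_0^\infty f\ge \sum_j f(j)-2M$, so the convex-combination refinement obtained by splitting at $c$ is what is needed to recover the sharp single-$M$ slack. Everything else is routine telescoping of the two monotone interval comparisons.
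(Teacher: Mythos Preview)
Your proof is correct and takes essentially the same approach as the paper: split at the mode, use the monotone integral comparison on each side, and bound the contribution of the single straddling interval by $\max f$. The only minor bookkeeping difference is in the lower bound on the integral: the paper arranges the index shifts so that the integrals on the two sides overlap on $[\lfloor T\rfloor,\lceil T\rceil]$ (double-counting that piece of the integral, which is at most $\max f$), whereas you instead extract $f(k)+f(k+1)$ from the mode interval via the convex-combination estimate $\int_k^{k+1}f\ge f(k)+f(k+1)-M$. Both devices recover the single-$M$ slack, so your remark that the convex-combination refinement ``is what is needed'' is slightly too strong --- the paper's overlap trick is an equally short alternative.
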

\begin{proof}
    Let $T$ denote the point that achieves the maximum of $f$. Then we know that $f(t) \ge f(\left \lfloor t\right \rfloor)$ for $t < T$, while $f(t)\ge f(\lceil t\rceil)$ for $t > T$. This means $\int_{i-1}^{i}f(t)dt\le f(i)\le \int_{i}^{i+1}f(t)dt$ for $t\le \lfloor T\rfloor$ and $\int_{i-1}^{i}f(t)dt\ge f(i)\ge \int_{i}^{i+1}f(t)dt$ for $t\ge \lceil T\rceil$
    So 
    \begin{align*}
        \sum_{i=0}^\infty f(i)
        &= \sum_{i=0}^{\lfloor T\rfloor}f(i) + \sum_{i=\lceil T\rceil}^\infty f(i)\\
        &\le \sum_{i=0}^{\lfloor T\rfloor} \int_i^{i+1}f(t) dt + \sum_{\lceil T\rceil}^\infty \int_{i-1}^i f(t)dt\\
        &\le \sum_{i=0}^{\infty} \int_i^{i+1}f(t) dt + \int_{\lfloor T\rfloor}^{\lceil T\rceil} f(t)dt\\
        &\le \int_0^{\infty}f(t) dt + \max f
    \end{align*}
    Similarly we have
    \begin{align*}
        \sum_{i=1}^\infty f(i)
        &= \sum_{i=1}^{\lfloor T\rfloor}f(i) + \sum_{i=\lceil T\rceil}^\infty f(i)\\
        &\le \sum_{i=1}^{\lfloor T\rfloor} \int_{i-1}^{i}f(t) dt + \sum_{\lceil T\rceil}^\infty \int_{i}^{i+1} f(t)dt\\
        &\le \sum_{i=1}^{\infty} \int_{i-1}^{i}f(t) dt - \int_{\lfloor T\rfloor}^{\lceil T\rceil} f(t)dt\\
        &\le \int_0^{\infty}f(t) dt - \max f
    \end{align*}
\end{proof}
\begin{lemma}\label{lem:HardyLittlewood}
    If $f$ and $g$ are nonnegative measurable real functions, then 
    $$\int f(x)g(x) dx\le \int f^*(x)g^*(x)dx$$ where $f^*, g^*$ are the symmetric decreasing rearrangements of $f$ and $g$.
\end{lemma}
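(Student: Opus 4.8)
The statement is the classical Hardy–Littlewood rearrangement inequality, and the plan is to prove it via the layer-cake (distribution-function) representation combined with the single structural property of symmetric decreasing rearrangements that we actually need. First I would fix the normalization: for a nonnegative measurable $h$ vanishing at infinity, $h^*$ is the unique nonnegative, radially symmetric function, nonincreasing in $|x|$, that is equimeasurable with $h$, i.e. $|\{h^* > s\}| = |\{h > s\}|$ for all $s > 0$, where $|\cdot|$ denotes Lebesgue measure. The crucial consequence is that each superlevel set $\{h^* > s\}$ is (up to null sets) a ball centered at the origin whose radius is determined by $|\{h > s\}|$.

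Next, I would write the layer-cake identity $h(x) = \int_0^\infty \mathbbm{1}\{h(x) > s\}\, ds$, apply it to both $f$ and $g$, multiply, and use Tonelli's theorem (legitimate since every integrand is nonnegative) to obtain
\[
\int f(x)g(x)\, dx = \int_0^\infty \int_0^\infty \bigl| \{f > s\} \cap \{g > t\} \bigr|\, ds\, dt ,
\]
and likewise $\int f^*(x) g^*(x)\, dx = \int_0^\infty \int_0^\infty | \{f^* > s\} \cap \{g^* > t\} |\, ds\, dt$. The proof then reduces to the pointwise (in $(s,t)$) comparison of the two integrands. On one hand, trivially $|\{f > s\} \cap \{g > t\}| \le \min\bigl(|\{f > s\}|, |\{g > t\}|\bigr)$. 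On the other hand, since $\{f^* > s\}$ and $\{g^* > t\}$ are both balls centered at the origin, one is contained in the other, so $|\{f^* > s\} \cap \{g^* > t\}| = \min\bigl(|\{f^* > s\}|, |\{g^* > t\}|\bigr) = \min\bigl(|\{f > s\}|, |\{g > t\}|\bigr)$ by equimeasurability. Chaining these gives $|\{f > s\} \cap \{g > t\}| \le |\{f^* > s\} \cap \{g^* > t\}|$ for every $s,t > 0$; integrating over $(0,\infty)^2$ yields the inequality.

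The argument involves no hard analysis — the ``obstacle'' is purely bookkeeping: one should note the inequality is only of interest when the right-hand side is finite (otherwise nothing to prove), check that the layer-cake identity and the Tonelli interchange apply under the stated measurability/nonnegativity hypotheses, and be careful that the superlevel-set descriptions of $f^*$, $g^*$ hold up to sets of measure zero, which does not affect the integrals. The entire content of the proof is the two-line comparison of intersection measures, made possible by the nestedness of concentric balls; everything else is routine.
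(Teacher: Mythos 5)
Your proof is correct. The paper does not actually prove this lemma at all; its ``proof'' is a pointer to \cite{lieb2001analysis} and \cite{hardy1952inequalities}, and the layer-cake argument you give is precisely the standard proof found in those references (it is Theorem 3.4 in Lieb and Loss): write $f(x)=\int_0^\infty \mathbbm{1}\{f(x)>s\}\,ds$ and $g(x)=\int_0^\infty \mathbbm{1}\{g(x)>t\}\,dt$, interchange integrals by Tonelli, bound $|\{f>s\}\cap\{g>t\}|$ by $\min(|\{f>s\}|,|\{g>t\}|)$, and observe that for the rearrangements the superlevel sets are concentric balls, hence nested, so the intersection measure \emph{equals} that minimum, which matches the original one by equimeasurability. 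So rather than taking a different route, you have supplied the proof the paper outsources. Two small bookkeeping points you may wish to make explicit if this were written out in full: (i) the construction of $f^*$ requires $|\{f>s\}|<\infty$ for $s>0$ (or the convention $\{f^*>s\}=\mathbb{R}^d$ when the measure is infinite, under which the nestedness argument still goes through and the inequality is vacuous unless the right-hand side is finite, as you note); and (ii) equimeasurability $|\{f^*>s\}|=|\{f>s\}|$ for \emph{every} $s>0$, including levels where the distribution function jumps, follows from the standard definition $f^*(x)=\int_0^\infty \mathbbm{1}\{|\{f>t\}|>\omega_d|x|^d\}\,dt$; both are routine and do not affect the validity of your argument.
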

\begin{proof}
    Please see \cite{lieb2001analysis} or \cite{hardy1952inequalities}.
\end{proof}

\begin{lemma}\label{lem:rearrangment}
    Suppose $\{a_i\}, \{b_i\}$ are sorted the same way, $a_i > a_j\iff b_i > b_j$. Then we have $$\frac{\sum a_i^2}{\left( \sum a_i\right)^2} < \frac{\sum a_i^2b_i^2}{\left( \sum a_ib_i\right)^2}.$$
\end{lemma}
\begin{proof}
    Cross multiplying and expanding, we have 
    \begin{align*}
    &(\sum a_i^2)(\sum a_ib_i)^2 < (\sum a_i^2b_i^2)(\sum a_i)^2\\
    &\iff \sum_{i, j, k}a_ib_ia_jb_ja_k^2 < \sum_{i, j, k} a_i^2b_i^2a_ja_k\\
    &\iff \frac{1}{3}\sum_{i, j, k} a_ib_ia_jb_ja_k^2 + a_jb_ja_kb_ka_i^2 + a_kb_ka_ib_ia_j^2 <\frac{1}{3}\sum_{i, j, k} a_i^2b_i^2a_ja_k + a_j^2b_j^2a_ka_i + a_k^2b_k^2a_ia_j \\
    &\iff \frac{1}{3}\sum_{i, j, k} a_i^2b_i^2a_ja_k + a_j^2b_j^2a_ka_i + a_k^2b_k^2a_ia_j -(a_ib_ia_jb_ja_k^2 + a_jb_ja_kb_ka_i^2 + a_kb_ka_ib_ia_j^2) > 0\\
    &\iff \frac{1}{3}\sum_{i, j, k} a_ia_ja_k\left(a_ib_i^2 + a_jb_j^2 + a_kb_k^2 - a_ib_jb_k - a_jb_kb_i - a_kb_ib_j\right) > 0
    \end{align*}
    The last of which follows from the rearrangement inequality \citep{hardy1952inequalities}.
\end{proof}
\section{Additional Experiments and Details}  \label{appendix:sims}

All experiments were run in Google Colab in a CPU runtime.
We used a random seed of 0 in all cases. All training was executed in PyTorch with the Adam optimizer. We tuned learning rates in $\{10^{-3},10^{-2},10^{-1}\}$ separately for linear and softmax attention, and  we initialized $\mathbf{M}_K$ and $\mathbf{M}_Q$ by setting each to $0.001\mathbf{I}_d$, and tie the weights of $\mathbf{M}_K$ and $\mathbf{M}_Q$ to speed up training. 

\textbf{Figure \ref{fig:scaling}.} The upper row depicts our functions, which increase in Lipschitzness from left to right. The black curve depicts the ground truth, while the gray dots depict the noisy training samples. The shaded region represents the attention window. The middle row depicts the attention weights for softmax and linear attention. We remark that the softmax is able to adapt to the Lipschitzness while linear is not. The bottom row depicts the ICL error as a function of the context length $n$ for Linear and ReLU pretraining using Linear and Softmax attention. That is, at each iteration, a context is drawn from a non-linear regression (defined below) consisting of a randomly phase shifted cosine function. The ICL task is to predict the function value at a randomly chosen query on the unit circle. Each point in the plot depicts the ICL error of a pretrained attention unit (using softmax (blue) or linear (orange) activation) at the end of $15000$ iterations with learning rate $10^{-3}$. We use $d = 2$ and a distribution $D(\F_{\nu, \text{hills}})$. Here we define
$$\F_{\nu, \text{hills}} = \{\nu\cos\left(\theta-b\right)\}$$ and a distribution $D(\F_{\nu, \text{hills}})$ is induced by drawing $b$ uniformly from $[-\pi, \pi]$. We use $\nu=0,1.5,6$ for the left, middle and right plots in the bottom row, respectively.


\textbf{Figures \ref{fig:L}, \ref{fig:sigma-n}, \ref{fig:transfer_experiment}.} 
In all cases, we use an exponentially decaying learning rate schedule with factor 0.999. In Figures \ref{fig:L} and \ref{fig:transfer_experiment} we use initial learning rate 0.1 and in Figure \ref{fig:sigma-n} we use an initial learning rate 0.01. Moreover, in all cases besides those with varying $n$ in Figure \ref{fig:sigma-n}, we compute gradients with respect to the ICL loss evaluated on $N \coloneqq \lfloor{\sqrt{n}} \rfloor$ query samples per task (that is, each context input to the attention unit has $n+N$ samples, of which $n$ are labeled, and the other $N$ labels are inferred).
When $n$ varies in Figure \ref{fig:sigma-n}, we use $N=1$.
In Figure \ref{fig:transfer_experiment} we show smoothed test ICL errors with smoothing rate 0.01. 

\textbf{Figure \ref{fig:low-rank}.} We randomly generate $\mathbf{B}$ on each trial by first sampling each element of $\hat{\mathbf{B}}$ i.i.d. from the standard normal distribution, then take its QR decomposition to obtain $\mathbf{B}.$ To draw the covariates, we draw a random matrix $\tilde{\mathbf{J}}\in \mathbb{R}^{d\times d}$ by sampling each element i.i.d. from the standard normal distribution. Then, we compute $\mathbf{J}= (\tilde{\mathbf{J}}^\top \tilde{\mathbf{J}})^{1/2}$. Then we draw $\tilde{\bx}_i \sim \mathcal{N}(\mathbf{0}_d,\mathbf{I}_d)$ and set $\bx_i = \frac{\mathbf{J}\tilde{\bx}_i}{\|\mathbf{J}\tilde{\bx}_i\|}$.

\begin{figure*}[t]
\centering
\includegraphics[width=0.99\textwidth]{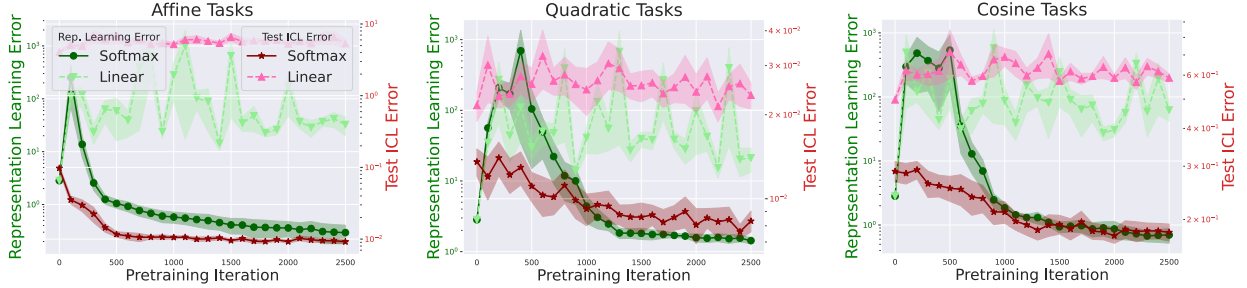}
    \caption{Representation learning error $(\rho(\mathbf{M},\mathbf{B}))$ and test ICL error (mean squared error) during pretraining softmax and linear attention on tasks  from \textbf{Left:} $\mathcal{F}^{\text{aff}}_{\mathbf{B}}$, \textbf{Center:} $\mathcal{F}^{\text{2}}_{\mathbf{B}}$ , and \textbf{Right:} $\mathcal{F}^{\text{cos}}_{\mathbf{B}}$. } \label{fig:low-rank}
\end{figure*}

\end{document}